\DeclareMathOperator{\softmax}{softmax}
\DeclareMathOperator{\attn}{attn}
\DeclareMathOperator{\hop}{hop}
\DeclareMathOperator{\cyc}{cyc}
\renewcommand{\S}{\mathcal{S}}
\titlespacing*{\section}{0pt}{6pt}{0pt}
\titlespacing*{\subsection}{0pt}{6pt}{0pt}
\DeclareMathOperator{\E}{\mathbb{E}}
\newcommand{\R}{\mathbb{R}}
\newcommand{\1}{\mathbf{1}}
\DeclareMathOperator{\diag}{diag}
\DeclareMathOperator{\poly}{poly}
\newtheorem{lemma}{Lemma}
\newtheorem{corollary}{Corollary}
\newtheorem{theorem}{Theorem}
\newtheorem{definition}{Definition}
\theoremstyle{remark}
\newtheorem{remark}{Remark}
\titlespacing*{\paragraph}{0pt}{3.2pt plus 1pt minus 1pt}{1em}
\newcommand*\samethanks[1][\value{footnote}]{\footnotemark[#1]}
\title{Learning Compositional Functions with Transformers from Easy-to-Hard Data}
\author{
{Zixuan Wang}\thanks{Equal contribution. \vspace{-2.5mm}}$^{~\,,1}$\!,\,~ 
{Eshaan Nichani}\samethanks[1]$^{~\,,1}$\!,\,~ 
{Alberto Bietti}$^{2}$\!,\,~
{Alex Damian}$^{1}$\!,\\
\vspace{-1.5mm}
{Daniel Hsu}$^{3}$\!,\,~
{Jason D. Lee}$^{1}$\!,\,~ 
{Denny Wu}$^{2,4}$
\\
\vspace{4mm}
\normalsize{$^{1}$Princeton University},\,\,\, 
\normalsize{$^{2}$Flatiron Institute},\,\,\,
\normalsize{$^{3}$Columbia University},\,\,\,
\normalsize{$^{4}$New York University}
\vspace{1.5mm}
\\
}
\begin{document}

\maketitle 

\vspace{-3mm}

\begin{abstract}%
Transformer-based language models have demonstrated impressive capabilities across a range of complex reasoning tasks. Prior theoretical work exploring the expressive power of transformers has shown that they can efficiently perform multi-step reasoning tasks involving parallelizable computations. 
However, the learnability of such constructions, particularly the conditions on the data distribution that enable efficient learning via gradient-based optimization, remains an open question.
Towards answering this question, in this work we study the learnability of the \emph{$k$-fold composition} task, which requires computing an interleaved composition of $k$ input permutations and $k$ hidden permutations, and can be expressed by a transformer with $O(\log k)$ layers.
On the negative front, we prove a Statistical Query (SQ) lower bound showing that any SQ learner that makes only polynomially-many queries to an SQ oracle for the $k$-fold composition task distribution must have sample size exponential in $k$, thus establishing a statistical-computational gap.
On the other hand, we show that this function class can be efficiently learned, with runtime and sample complexity polynomial in $k$, by gradient descent on an $O(\log k)$-depth transformer via two different curriculum learning strategies: one in which data consists of $k'$-fold composition functions with $k' \le k$ presented in increasing difficulty, and another in which all such data is presented simultaneously.
Our work sheds light on the necessity and sufficiency of having both easy and hard examples in the data distribution for transformers to learn complex compositional tasks.
\end{abstract}

\allowdisplaybreaks

\section{Introduction}

Large language models based on transformers have demonstrated promising capabilities in complex reasoning tasks that require combining multiple intermediate steps \citep{nye2021show,wei2022chain,lewkowycz2022solving,lanchantin2024learning,yao2024tree}. Recent theoretical works have proven that transformers can \textit{express} various sequential/compositional reasoning algorithms \citep{liu2023transformers,merrill2023expresssive,li2024chain,feng2024towards,sanford2024understanding}. However, representational power does not entail statistical or optimization efficiency. In fact, empirical studies have shown that elaborate training procedures -- such as curriculum or process supervision \citep{uesato2022solving,lightman2023let,dziri2023faith,bachmann2024pitfalls,deng2024explicit} -- are often required for models to acquire strong reasoning capabilities. 
This highlights the need for a deeper theoretical understanding of the optimization and sample efficiency of transformers on compositional reasoning tasks.

Our starting point is the recent work of \citet{sanford2024transformers}, which examined the expressivity of transformers on a specific synthetic reasoning task called the ``$k$-hop induction head'', which involves composing 
$k$ steps of pointer-following given in the context to predict the correct answer.
This function is a generalization of the induction head mechanism identified by \citet{olsson2022context} ($k=1$), and intuitively, the difficulty of computing the function increases with~$k$.
\citet{sanford2024transformers} showed that a transformer with 
$\Theta(\log k)$ layers can efficiently represent this task and that this depth is necessary, conditional on a well-known conjecture from the \textit{Massively Parallel Computation} literature~\citep{im2023massively}.
Furthermore, the authors empirically observed that gradient-based learning is challenging unless some form of curriculum (i.e., including lower-hop data in the training process) is introduced.

The $k$-hop task (for $k\geq2$) is closely related to the function composition tasks studied by \citet{peng2024on} and \citet{chen2024theoretical}, which were motivated by certain natural language understanding tasks
(e.g., finding an ancestor of a person in a genealogy), as well 
as ``multi-hop'' reasoning problems studied extensively in the natural language literature, even before transformers were introduced~\citep[e.g.,][]{weston2014memory,weston2015towards,sukhbaatar2015end}.
An example of a 2-hop reasoning problem due to~\citet{weston2014memory} is as follows: \emph{John plays football. The football game is on Sunday. On what day does John play?} 
Here, a model must learn to compose the relationships $(\text{John} \rightarrow \text{Football}, \text{Football} \rightarrow \text{Sunday})$ present in the context. 

More challenging compositional reasoning tasks can also require composing information present in the context (i.e "contextual knowledge") with global "parametric knowledge" not provided in the context~\citep{cheng2024understanding, yang2024large}. Consider instead the prompt: \emph{John plays quarterback. The football game is on Sunday. On what day does John play?} Now, the model must compose the contextual knowledge $(\text{John} \rightarrow \text{Quarterback}, \text{Football} \rightarrow \text{Sunday})$ given in the prompt, with the parametric knowledge $(\text{Quarterback} \rightarrow \text{Football})$ which cannot be inferred from context alone.

\paragraph{Our contributions.} We analyze the complexity of training a (deep) transformer model using SGD to solve a task involving  $k$-hop compositional reasoning, which we refer to as the $k$-\textit{fold composition} task. The task requires outputting an element of~$[N]$ after applying an interleaved product of $k$ \textit{in-context} permutations on $N$ elements and $k$ \textit{hidden parametric} permutations, and can be viewed as an instance of $k$-hop prediction combining contextual and hidden parametric knowledge. We aim to establish sample complexity upper bounds for gradient-based learning as well as computational lower bounds for this function class. More specifically, our contributions are as follows:



\begin{enumerate}[leftmargin=*,topsep=0mm,itemsep=0.5mm]
    \item \textbf{$k$-fold composition task.} In \Cref{sec:task} we introduce the $k$-fold composition task, which requires computing an interleaved composition of $k$ input permutations and $k$ hidden permutations on $N$ elements. In \Cref{thm:construction} we prove that this task is expressible via an $O(\log k)$ depth transformer with embedding dimension $d = \tilde O(Nk)$.
    \item \textbf{Lower bound.} In \Cref{sec:lb}, we establish a statistical query (SQ) lower bound showing that either $N^{\Omega(k)}$ queries or a tolerance of $\tau \le N^{-\Omega(k)}$ is required to learn the $k$-fold composition task when trained on samples only from the $k$-fold functions. Using the standard $\tau \approx n^{-1/2}$ heuristic, this implies that any SQ learner (which can model gradient descent on neural networks) must either have sample size or runtime exponential in $k$.
    \item \textbf{Gradient-based learning.} On the other hand, in Section~\ref{sec:ub} we show that a transformer \emph{can} efficiently learn the task when training on easy-to-hard data consisting of $k'$-fold functions for $k' \le k$. In \Cref{thm:main}, we show that if the transformer is presented with a curriculum of $2^\ell$-fold data for increasing values of $\ell$, then gradient descent can learn the $k$-fold task with $\poly(N,k)$ samples, which removes the exponential dependence on $k$ in the SQ lower bound. In \Cref{thm:mix-data-main}, we show that this efficient learning guarantee also applies to simultaneously training on a mixture of the $2^\ell$-fold data.
\end{enumerate}


\subsection{Related Work: Transformer Theory}
\paragraph{Expressivity of transformers.} As already alluded to previously, many prior works have connected the computational power of transformers to models of parallel computation, including circuit models~\citep{liu2023transformers,merrill2023parallelism} and massively parallel computation~\citep{sanford2024transformers,sanford2024understanding}.
This stands in contrast to sequential neural architectures such as recurrent neural networks, which are unable to efficiently represent certain parallel computations that transformers can~\citep{sanford2023representational, bhattamishra2024separations,jelassi2024repeat}. As for negative results on the expressivity of compositional tasks, \citet{peng2024on}~showed the composition of two functions cannot be efficiently represented by one-layer transformers (even in an average-case sense), and \citet{chen2024theoretical}~showed that, for any constant $L$, compositions of $L$ functions cannot be efficiently represented by $L$-layer decoder-only transformers. Since we consider cross-attention, and not decoder-only, transformers, the lower bound of~\citet{chen2024theoretical} does not apply to our setting.

\paragraph{Optimization guarantees for transformers.} A number of prior works have studied the gradient descent dynamics of transformers for various synthetic tasks~\citep{jelassi2022vision,li2023transformers,bietti2023birth,tian2023scan,zhang2023trained, huang2023context,nichani2024transformers, nichani2024understanding,renlearning,wang2024transformers,chen2024unveiling,chen2024training, huang2025transformers}. These works, however, only focus on one or two-layer transformers. While existing gradient flow or landscape results do exist for deeper transformers \citep{ahn2024transformers,gao2024global}, we are the first to provide an end-to-end optimization and statistical guarantee for a deep transformer.

Among existing optimization guarantees for transformers, to our knowledge, the only setting that considers a compositional structure is the parity problem \citep{kim2024transformers,wen2024sparse}. That said, while the task decomposition (i.e., curriculum) for parity exhibits a hierarchical structure, the target function itself is not inherently compositional and can be represented by a shallow network; \citet{abbe2023provable,panigrahi2024progressive} have theoretically studied the benefit of curriculum learning for learning parities with shallow neural networks. Our goal is to analyze the gradient-based learning of a more challenging compositional task that fundamentally requires a deeper transformer architecture.

\subsection{Related Work: Compositional Tasks}

Our $k$-fold composition task is most similar to the $k$-hop induction head task in \citet{sanford2024transformers}, which takes as input a sequence of $T$ tokens $X \in \Sigma^T$, and requires outputting a $k$-fold composition of a certain "hop" function defined as an in-context map from $[T] \rightarrow [T]$ by a similar mechanism to the induction head~\citep{olsson2022context}. The permutation composition task also computes such a composition, but where each hop is specified explicitly (from $(i, j) \in [k] \times [N]$ to $(i-1, \sigma_i(j))$) rather than needing to be learned in context. 
The $k$-hop induction head task is closely related to the well-studied pointer-chasing problem \citep{PAPADIMITRIOU1984260,nisancommunication}, for which communication complexity lower bounds have been established in various settings~\citep[e.g.,][]{yehudayoff2020pointer,assadi2021graph}. Furthermore, our $k$-fold composition task is an instance of computing an interleaved group product, a problem also studied in communication complexity~\citep{gowers2019interleaved}. However, in neither setting has the question of learnability been investigated.

Our task is also related to the semiautomaton simulation task of \citet{liu2023transformers}, which takes as input a sequence of elements $(g_1, \dots, g_T)$ from some semigroup, and outputs their product $g_Tg_{T-1}\cdots g_2g_1$. Indeed, \citet{liu2023transformers} showed that there exists an $O(\log T)$ depth transformer which can solve the task. Our $k$-fold composition task can be viewed as a special case of this task for the symmetric group $G = S_N$. However, our approach differs in that we encode each permutation $\sigma_i$ as $N$ tokens $(\sigma_i(1), \dots, \sigma_i(N))$ rather than a single token, and consider the learning of a function class formed by interleaving the hidden permutations $(\pi_1, \dots, \pi_k)$ into the product.

Another relevant compositional problem is ``planning in path-star graphs'' introduced by~\citet{bachmann2024pitfalls} to show the limitation of learning with next-token prediction. Their task involves finding a path to a final node in a star graph, when the edges of the graph are given in the context. The first node in the path is then essentially a $k$-hop prediction starting from the final node, where~$k$ is the length of the paths. The authors discussed the difficulty of this $k$-hop prediction, and the benefits of first predicting intermediate steps in the path, which is related to our curriculum and mixture strategies, but no explicit expressivity or optimization guarantees were given. 
Finally, the hardness of learning the $k$-fold function class is related to the locality barrier studied in \citet{abbe2024far}, where the authors established a computational lower bound for a particular cycle task that exhibits compositional structure, using the permutation invariance of the learning algorithm. 


\section{Preliminaries}

\textbf{Notation.} Let $S_N$ denote the set of permutations on $N$ elements, and for two permutations $\pi, \sigma \in S_N$, let $\pi \circ \sigma$ denote their composition. For integer $d$ and index $i \in [d]$, let $e_{d, i} \in \mathbb{R}^d$ denote the $d$-dimensional one-hot vector with a 1 in the $i$th coordinate. We use $\Tilde{O},\Tilde{\Omega}$ to hide $\poly\log (kN)$ factors,
and we use $f\lesssim g$ (or $f=O(g)$, $g=\Omega(f)$) when $f \leq Cg$ for an absolute constant $C>0$.

\subsection{The $k$-fold Composition Task}
\label{sec:task}

Let $\pi := (\pi_1, \dots, \pi_k) \in \qty(S_N)^k$ be a tuple of $k$ hidden permutations. The \emph{$k$-fold composition} task takes as input $(\sigma, x) \in \mathcal{X} := (S_N)^k \times [N]$, where $\sigma = (\sigma_1, \dots, \sigma_k)$ is a tuple of $k$ permutations and $x$ is an index in $[N]$. The task $f_\pi : \mathcal{X} \rightarrow [N]$ is defined as
\begin{align*}
    f_\pi(\sigma, x) := (\sigma_k \circ \pi_k \circ \sigma_{k-1} \circ \pi_{k-1} \circ \cdots \circ \sigma_1 \circ \pi_1)(x).
\end{align*}
Our target function class is $\mathcal{F} = \{f_\pi : \pi \in (S_N)^k\}$, and the goal is to learn $\mathcal{F}$ with respect to the uniform distribution over $\mathcal{X}$. See \Cref{fig:diagram} for illustration.


We will also consider an extension of the task called the \emph{cyclic} $k$-fold composition task. Here, the input space is $\mathcal{X}^{\cyc} := (S_N)^k \times [k] \times [N]$, and the target $f_\pi^{\cyc} : \mathcal{X}^{\cyc} \rightarrow [N]$ is defined by
\begin{align*}
    f_\pi^{\cyc}(\sigma, i, x) := (\sigma_{i + k - 1} \circ \pi_{i + k - 1} \circ \sigma_{i + k-2} \circ \pi_{i + k-2} \circ \cdots \circ \sigma_{i+1} \circ \pi_{i+1} \circ \sigma_i \circ \pi_i)(x),
\end{align*}
where the indices of permutation are taken modulo $k$. We define $\mathcal{F}^{\cyc} = \{f^{\cyc}_\pi : \pi \in (S_N)^k\}$.

\begin{remark}
As mentioned in the Introduction, the target function is defined as an interleaved composition of $k$ contextual permutations $\sigma$ (e.g., $\text{John} \rightarrow \text{Quarterback}$) and $k$ hidden permutations $\pi$ not given in the input (e.g., $\text{Quarterback} \rightarrow \text{Football}$) which may represent ``parametric'' or ``in-weights'' knowledge \cite{chan2022data,yang2024large,cheng2024understanding}. The contextual permutation is a standard feature in the $k$-hop reasoning task \cite{sanford2024transformers}, whereas the parametric permutation enables us to define a function class to study the statistical hardness \cite{10.1145/293347.293351} of compositional tasks. 
\end{remark}

\begin{figure}[t]
    \centering
    \includegraphics[width=0.9\textwidth]{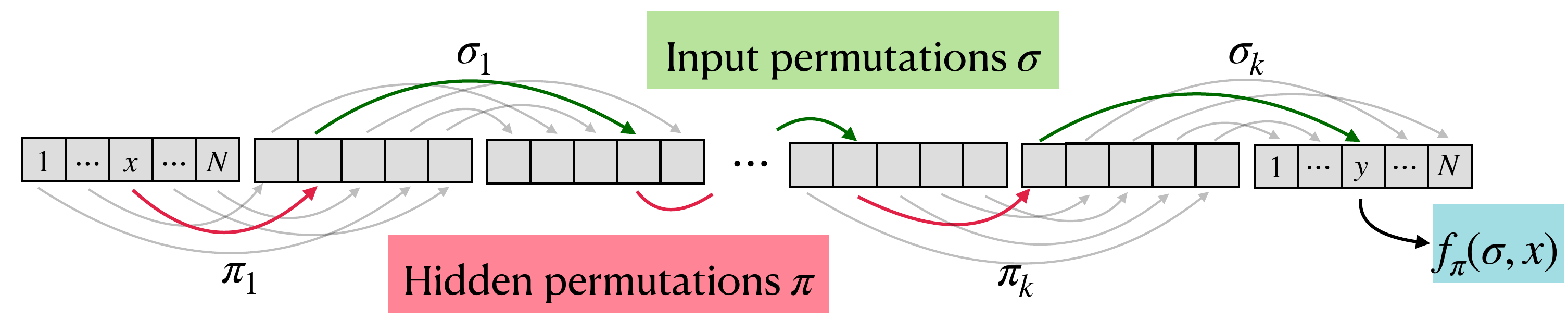}
    \caption{\small \textbf{$k$-fold composition task} -- red arrows represent the hidden permutation $\pi_i$ and green arrows denote input permutations $\sigma_i$. Given an input $(\sigma,x)$, $f_\pi(\cdot,\cdot)$ composes $2k$ permutations to output $f_\pi(\sigma,x)$.}
    \label{fig:diagram}
\end{figure}

\subsection{Transformer Architecture}
Our learner is an $L$-layer transformer. Transformers take as input a length $T$ sequence of $d$ dimensional embedding vectors $X = \begin{bmatrix} x_1, \dots, x_T \end{bmatrix} \in \mathbb{R}^{d \times T}$. We will restrict ourselves to attention-only transformers, where each layer is a \emph{self-attention head}, defined as follows:
\begin{definition}[Self-attention head]
    For a vector $v \in \mathbb{R}^k$, define the element-wise softmax operator $\mathcal{S}:\mathbb{R}^k \rightarrow \mathbb{R}^k$ by $\mathcal{S}(v)_i = \exp(v_i)/\sum_{j=1}^k \exp(v_j)$. The self-attention head is a mapping $\attn(~\cdot~; W_{KQ}, W_{OV}) : \mathbb{R}^{d \times T} \rightarrow \mathbb{R}^{d \times T}$ parameterized by the key-query matrix $W_{KQ} \in \mathbb{R}^{d \times d}$ and output-value matrix $W_{OV} \in \mathbb{R}^{d \times d}$, which operates on a sequence of embeddings $X \in \mathbb{R}^{d \times T}$ as
    \begin{align*}
        \attn(X; W_{KQ}, W_{OV}) = X + W_{OV}X\S(X^\top W_{KQ} X),
    \end{align*}
    where the softmax function $\S$ is applied column-wise.
\end{definition}

A multi-layer transformer composes multiple self-attention heads in series. For simplicity, we consider transformers with a single head per layer.

\begin{definition}[Attention-only transformer]\label{def:transformer}
Let $L$ be the depth and $d$ be the embedding dimension. For $\ell \in [L]$, let $W_{KQ}^{(\ell)}, W_{OV}^{(\ell)}$ be the key-query and output-value matrices in the $\ell$-th layer, respectively. Let $\theta := \{(W_{KQ}^{(\ell)}, W_{OV}^{(\ell)})\}_{\ell \in [L]}$ denote the aggregate parameter vector. A transformer $\mathrm{TF}_\theta : \mathbb{R}^{d \times T} \rightarrow \mathbb{R}^{d \times T}$ operates on an input sequence of embeddings $X \in \mathbb{R}^{d \times T}$ as follows:
\begin{align*}
    X^{(0)} &= X,\\
    X^{(\ell)} &= \attn(X^{(\ell - 1)}; W_{KQ}^{(\ell)}, W_{OV}^{(\ell)}), \quad i \in [L]\\
    \mathrm{TF}_\theta(X) &= X^{(L)}.
\end{align*}
\end{definition}

\paragraph{Embedding and Decoding.}  We embed the input $\sigma$ as follows. Let $\phi : [k] \times [N] \times [N] \rightarrow \mathbb{R}^d$ be some embedding function; then, the input to the transformer is the length $T = kN$ sequence
\begin{align*}
    X(\sigma) = \{ \phi(i, j, \sigma_i(j))\}_{i \in [k], j \in [N]},
\end{align*}
where the columns of $X(\sigma)$ are indexed by the tuples $(i, j) \in [k] \times [N]$. 

The output of $\mathrm{TF}_\theta(X(\sigma))$ must also be decoded to a prediction in $[N]$ as follows. Let $\Psi \in \mathbb{R}^{d \times N}$, be the readout layer. The predictions of the learner for the permutation composition and the cyclic permutation tasks are given by
\begin{align*}
    \hat f(\sigma, x) = (\Psi^\top \mathrm{TF}_\theta(X(\sigma))_{(1, x)} \in \mathbb{R}^N, \qand \hat f(\sigma, i, x) = (\Psi^\top \mathrm{TF}_\theta(X(\sigma))_{(i, x)} \in \mathbb{R}^N
\end{align*}
respectively.

\section{Transformer Construction}
While the $k$-fold composition task requires composing the $2k$ permutations, we show that there exists a transformer with $O(\log k)$ layers that solves the task, given by the following theorem.

\begin{theorem}\label{thm:construction}
    Assume that $k$ is a power of two. There exists an embedding function $\phi$ with $d = kN(3 + \log_2k)$ such that, for any $\pi \in (S_N)^k$, there exists an $L = \log_2 k + 1$ layer transformer which can exactly express the $k$-fold composition task, i.e
    \begin{align*}
        (\Psi^\top\mathrm{TF}_\theta(X(\sigma)))_{(1,j)} = e_{N, f_\pi(\sigma, x)} \quad \text{for all}~(\sigma, x) \in \mathcal{X}.
    \end{align*}
\end{theorem}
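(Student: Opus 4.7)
The plan is to give an explicit construction via \emph{parallel pointer jumping}, the standard tool for computing iterated group products in logarithmic depth, used for instance by Sanford et al.\ for the $k$-hop induction head. Set $\tau_i := \sigma_i\circ\pi_i$, so that $f_\pi(\sigma,x)=(\tau_k\circ\tau_{k-1}\circ\cdots\circ\tau_1)(x)$. The construction has one preprocessing layer that computes the combined one-step maps $\tau_i$ at every position, followed by $\log_2 k$ ``doubling'' layers, each of which doubles the length of the composition stored at each position. After the final layer, position $(1,x)$ holds $(\tau_k\circ\cdots\circ\tau_1)(x)=f_\pi(\sigma,x)$.

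For the embedding I would take $\phi(i,j,\sigma_i(j))\in\mathbb{R}^d$ to be the direct sum of three fixed $kN$-dimensional blocks holding the tensor-product one-hot encodings
\begin{align*}
e_{k,i}\otimes e_{N,j},\qquad e_{k,i}\otimes e_{N,\sigma_i(j)},\qquad e_{k,i}\otimes e_{N,\pi_i(j)},
\end{align*}
(the third is legitimate since $\pi_i$ is fixed and part of the construction), together with $\log_2 k$ workspace blocks of size $kN$ initialized to zero, matching the claimed $d=kN(3+\log_2 k)$. The first attention layer realizes $(i,j)\mapsto\tau_i(j)$: choosing $W_{KQ}^{(1)}$ so that the query at $(i,j)$ is the hidden block $e_{k,i}\otimes e_{N,\pi_i(j)}$ and the key at $(i',j')$ is the position block $e_{k,i'}\otimes e_{N,j'}$, the unique maximizer of the key--query score is $(i,\pi_i(j))$, whose value block is $e_{k,i}\otimes e_{N,\sigma_i(\pi_i(j))}=e_{k,i}\otimes e_{N,\tau_i(j)}$, which $W_{OV}^{(1)}$ copies into workspace slot~$1$. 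For $\ell\ge 1$, I would prove by induction that the $(\ell+1)$-th layer writes
\begin{align*}
\mu_i^{(\ell)}(j)\;:=\;(\tau_{i+2^{\ell}-1}\circ\cdots\circ\tau_{i+1}\circ\tau_i)(j)
\end{align*}
into workspace slot $\ell$ at every admissible position $(i,j)$ (with $i+2^{\ell}-1\le k$), using the identity $\mu_i^{(\ell)}(j)=\mu_{i+2^{\ell-1}}^{(\ell-1)}\!\bigl(\mu_i^{(\ell-1)}(j)\bigr)$: the query at $(i,j)$ is shaped into $e_{k,i+2^{\ell-1}}\otimes e_{N,\mu_i^{(\ell-1)}(j)}$ by combining a $2^{\ell-1}$-shift on the position block with the previous workspace slot, it matches the position key at $(i+2^{\ell-1},\mu_i^{(\ell-1)}(j))$, and $W_{OV}^{(\ell+1)}$ copies that position's workspace slot $\ell-1$ into slot $\ell$ at $(i,j)$. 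Taking $i=1$ and $\ell=\log_2 k$ yields $\mu_1^{(\log_2 k)}(x)=f_\pi(\sigma,x)$ in the final workspace block at position $(1,x)$, so a readout $\Psi$ that projects this block onto its $e_{N,\cdot}$-coordinates outputs $e_{N,f_\pi(\sigma,x)}$ as desired.

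The main obstacle is that the above construction requires ``hard'' pointer lookups, whereas softmax attention produces only convex combinations. This is handled in the standard way by scaling $W_{KQ}^{(\ell)}$ by an arbitrarily large factor, so that softmax becomes arbitrarily close to (and, in the limit, exactly) a hard max; the tensor-product indicator embeddings make this particularly clean because at every lookup exactly one key--query inner product equals $1$ and all others equal $0$, so a uniform scaling suffices. The remainder is bookkeeping: choosing $W_{KQ}^{(\ell)}$ and $W_{OV}^{(\ell)}$ to be block-structured across the $3+\log_2 k$ blocks so that each layer reads and writes only the intended slots, and observing that the residual connections preserve the position block and previously filled workspace slots across all layers.
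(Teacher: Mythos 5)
Your overall plan (one layer to compute $\tau_i=\sigma_i\circ\pi_i$, then $\log_2 k$ pointer-doubling layers, with tensor-product one-hot blocks and a $\beta\to\infty$ hard-attention limit) is exactly the paper's construction. However, there is one genuine gap: you place $e_{k,i}\otimes e_{N,\pi_i(j)}$ into the embedding, justifying it by saying $\pi_i$ ``is fixed and part of the construction.'' The theorem's quantifier order is the opposite: there exists a \emph{single} embedding $\phi$ such that \emph{for every} $\pi\in(S_N)^k$ there is a transformer expressing $f_\pi$. So $\phi$ must be chosen before, and independently of, $\pi$; only the weights $\theta$ may depend on $\pi$. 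This is not cosmetic here — the paper's discussion section makes the fixed-$\phi$ requirement the whole reason the embedding dimension must be $\poly(Nk)$ (if $\phi$ may depend on $f_\pi$, one can even encode $\hop_i^1(\sigma,j)$ directly and get $d=\tilde O(1)$, a much weaker and different statement). The paper's fix is to keep only the two input blocks $e_{k,i}\otimes e_{N,j}$ and $e_{k,i}\otimes e_{N,\sigma_i(j)}$ in $\phi$ and to encode $\pi$ in the first layer's key–query matrix, e.g.\ $W_{KQ}^{(1)}=\beta\sum_{i,j}\bigl(e_{k,i}\otimes e_{N,\pi_i(j)}\bigr)\bigl(e_{k,i}\otimes e_{N,j}\bigr)^\top$ in the appropriate block, so that position $(i,j)$ attends to $(i,\pi_i(j))$ and copies that column's $\sigma$-block.

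A second, related problem is the dimension bookkeeping: once you spend a third $kN$-block on $\pi$, the budget $d=kN(3+\log_2 k)$ leaves only $\log_2 k$ workspace slots, but your $\log_2 k+1$ layers each need to write into a fresh slot (one for $\tau_i$, then one for each $\mu^{(1)},\dots,\mu^{(\log_2 k)}$); superposing two one-hots in a reused slot would break the subsequent key lookup. Moving $\pi$ into $W_{KQ}^{(1)}$ resolves this simultaneously, since two input blocks plus $\log_2 k+1$ workspace blocks exactly fill $kN(3+\log_2 k)$. Finally, be careful with the phrase ``combining a $2^{\ell-1}$-shift on the position block with the previous workspace slot'': a query cannot be formed by a bilinear combination of two residual-stream blocks; as in the paper, the shift must act linearly on the workspace one-hot $e_{k,\cdot}\otimes e_{N,\cdot}$ itself (with the stored $k$-index tracked consistently, since pure copying by $W_{OV}$ stores the attended position's index, not $i$). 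These last points are fixable bookkeeping, but the $\pi$-dependence of $\phi$ must be removed for the proof to establish the stated theorem.
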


\begin{figure}[t]
    \centering
    \includegraphics[width=0.92\textwidth]{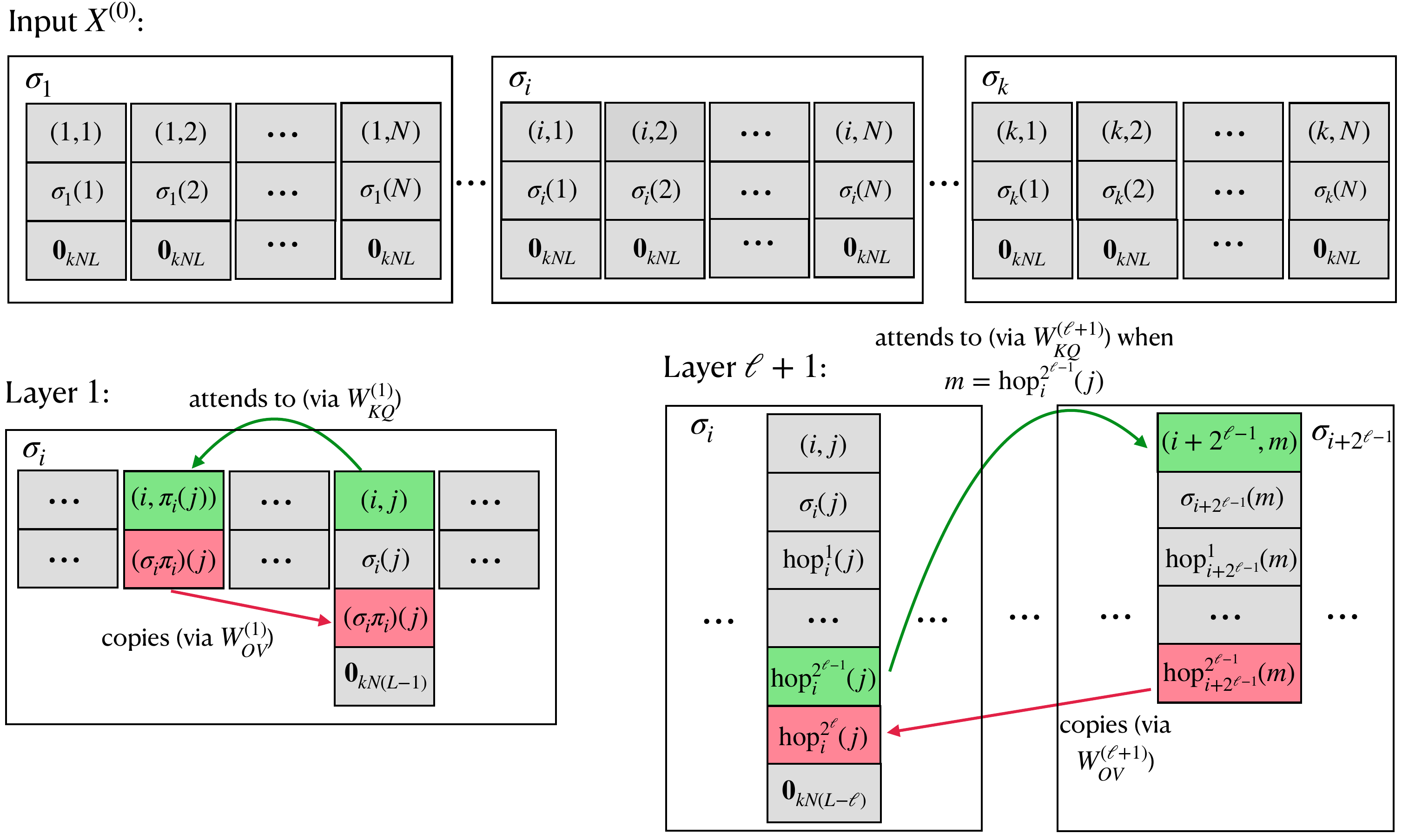}
    \caption{\small Illustration the format of input $X^{(0)}$ and the attention pattern in \Cref{thm:construction}.}
    \label{fig:construction}
\end{figure}

\paragraph{Proof Sketch.} The proof proceeds similarly to the constructions in \citet{sanford2024transformers, liu2023transformers}. For notational convenience, define the permutation $\hop_i^r(\sigma, \cdot) \in S_N$ by 
\begin{align*}
    \hop_i^r(\sigma,\cdot) := \sigma_{i + r - 1} \circ \pi_{i + r-1} \circ \cdots \circ \sigma_{i + 1} \circ \pi_{i + 1} \circ \sigma_{i} \circ \pi_{i}.
\end{align*}
We will consider the embedding
\begin{align*}
    \phi(i, j, \sigma_i(j)) = \begin{bmatrix}
        E(i, j) \\
        E(i, \sigma_i(j))\\
        0_{kNL}
    \end{bmatrix} \in \mathbb{R}^{kN(L+2)},~~\text{where}~~ E(i, j) := e_{k, i} \otimes e_{N, j} \in \mathbb{R}^{kN}.
\end{align*}
The first layer of the transformer encodes the hidden permutation $\pi$. The key-query matrix $W_{KQ}^{(1)}$ is set so that the $(i,j)$ position attends to the $(i, \pi_i(j))$ position. The value matrix $W_{OV}^{(1)}$ then copies the second block of $X^{(0)}_{(i, \pi_i(j))}$, which encodes $\hop_i^1(\sigma, j)$, to the residual stream of $(i, j)$ (see Figure \ref{fig:construction}, left). As such, $X^{(1)}_{(i,j)}$ now contains $\hop_i^1(\sigma, j)$.

The remainder of the construction proceeds recursively. Let us assume that the output of the $\ell$-th layer has computed $\hop_i^{2^{\ell - 1}}(\sigma,\cdot)$; in particular that $X^{(\ell)}$ is of the form
    \begin{align*}
        (X^{(\ell)})_{(i, j)} &= \Bigg[E(i,j)^\top, ~~ E(i,\sigma_i(j))^\top, ~~ E(i, \hop_i^1(j))^\top, ~~
        E(i+1, \hop_i^2(j))^\top,~~
        E(i+3, \hop_i^4(j))^\top,\\
        & \qquad
        \cdots~~
        E(i + 2^{\ell-1} - 1, \hop_i^{2^{\ell - 1}}(j))^\top,~~
        0_{kN(L - \ell)}^\top \Bigg]^\top.
    \end{align*}
The $(\ell + 1)$th layer composes the quantities $\hop_i^{2^{\ell - 1}}(\sigma,\cdot)$ and $\hop_{i + 2^{\ell - 1}}^{2^{\ell - 1}}(\sigma,\cdot)$ to obtain $\hop_i^{2^{\ell}}(\sigma,\cdot)$. To do so, $W_{KQ}^{(\ell+1)}$ is first set so that the $(i,j)$ position attends to the $(i + 2^{\ell-1}, \hop_i^{2^{\ell-1}}(j))$ position (by comparing the green blocks in Figure \ref{fig:construction}, right). Then, the value matrix $W_{OV}^{(\ell+1)}$ copies the last nonzero block (the red block in Figure \ref{fig:construction}, right) from $X^{(\ell)}_{(i, \hop_i^{2^{\ell-1}}(j))}$ to the residual stream of $(i, j).$ The $(\ell + 1)$th layer thus computes $\hop_{i + 2^{\ell-1}}^{2^{\ell-1}}(\sigma, \hop_i^{2^{\ell-1}}(\sigma, j)) = \hop_i^{2^\ell}(\sigma, j)$ as desired.

Altogether $\log_2k + 1$ layers suffice to compute $f_\pi(\sigma, x) = \hop_1^{k}(\sigma, x).$ The complete proof of \Cref{thm:construction} is contained in \Cref{sec:construction-proofs}. In \Cref{sec:m ll k task}, we consider a modification of the $k$-fold composition task with $m \ll k$ hidden permutations, and show that an embedding dimension of $d = \tilde\Theta(mN)$ suffices.

\section{Statistical Query Lower Bound}
\label{sec:lb}

We have shown that for any $f_\pi \in \mathcal{F}$, there exists a transformer with $O(\log k)$ layers and embedding dimension $\poly(Nk)$ which can exactly express $f_\pi$. On the contrary, we will now show that in order to learn $f_\pi$, a learner must use either compute or sample size exponential in $k$.
Formally, we prove a statistical query (SQ) lower bound for learning $\mathcal{F}$~\citep{10.1145/293347.293351}.
Many learning algorithms, including gradient descent, can be understood in the SQ model, and thus SQ complexity is
a useful proxy for
the complexity of learning via gradient descent; we discuss this further in~\Cref{sec:lb-proofs}.

Under the SQ framework, the learner can interact with the target function $f_\pi$ by specifying a query $g: \mathcal{X} \times [N] \rightarrow \mathbb{R}$ and tolerance level $\tau$; the SQ oracle then returns any response $\hat q$
satisfying\footnote{For ease of exposition, we focus here on the regular $k$-fold composition task and function class $\mathcal{F}$, and defer the cyclic variant to the appendix. \vspace{-2mm}} $\abs{\hat q - \mathbb{E}\qty[g(\sigma, x, f_\pi(\sigma, x))]} \le \tau$. We further assume without loss of generality that $g$ satisfies the normalization $\sum_{y \in [N]}\mathbb{E}_{\sigma, x}[g(\sigma, x, y)^2] = 1$ and $\mathbb{E}_{\sigma}\qty[g(\sigma, x, y)] = 0$ (see \Cref{sec:lb-proofs} for discussion on this choice of normalization).
After making some number of queries, the learner outputs a predictor $\hat f : \mathcal{X} \rightarrow [N]$, which incurs the 0-1 loss $L(\hat f) := \mathbb{P}_{\sigma, x}\qty(\hat f(\sigma, x) \neq f_\pi(\sigma, x))$. Our lower bound against SQ learners is given by the following theorem:
\begin{theorem}\label{thm:SQ}
    Any SQ learner for the function class $\mathcal{F}$ or $\mathcal{F}^{\cyc}$ must either make $q \ge N^{\Omega(k)}$ queries or use a tolerance $\tau \le N^{-\Omega(k)}$ to output a predictor $\hat f$ with loss $L(\hat f) = O(1)$.
\end{theorem}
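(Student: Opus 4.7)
The plan is to apply the standard statistical query dimension framework \citep{10.1145/293347.293351}: for uniformly random $\pi \sim \mathrm{Unif}((S_N)^k)$ and any query $g$ satisfying the stated normalization, let $T_\pi(g) := \mathbb{E}_{\sigma, x}[g(\sigma, x, f_\pi(\sigma, x))]$ denote the ``signal'' of $g$ on target $\pi$. The heart of the argument is to establish
\[
\mathbb{E}_\pi[T_\pi(g)^2] \le N^{-\Omega(k)}.
\]
Granting this, Markov's inequality shows that only an $N^{-\Omega(k)}/\tau^2$ fraction of $\pi$'s can produce $|T_\pi(g)| > \tau$ on any fixed query; taking a union bound over the $q$ adaptive queries and combining with the standard reduction from non-trivial SQ learning to distinguishing the target $\pi$ forces $q/\tau^2 \ge N^{\Omega(k)}$, which yields \Cref{thm:SQ}.

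To control the second moment, expand the square and interchange expectations:
\[
\mathbb{E}_\pi[T_\pi(g)^2] = \mathbb{E}_{\sigma_1, x_1, \sigma_2, x_2}\!\left[\sum_{y_1, y_2} g(\sigma_1, x_1, y_1)\, g(\sigma_2, x_2, y_2)\, D(y_1, y_2)\right],
\]
where $D(y_1, y_2) := \mathbb{P}_\pi(f_\pi(\sigma_1, x_1) = y_1,\ f_\pi(\sigma_2, x_2) = y_2)$ is the joint law of two outputs under a random hidden permutation. Since the full product $\rho_\pi(\sigma) := \sigma_k \pi_k \cdots \sigma_1 \pi_1$ is uniform on $S_N$ whenever any single $\pi_i$ is uniform, each marginal of $D$ is uniform on $[N]$. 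Writing $D = 1/N^2 + \epsilon$, the $1/N^2$ contribution vanishes after averaging over $\sigma_1, \sigma_2$ by the centering condition $\mathbb{E}_\sigma[g(\sigma, x, y)] = 0$. Two applications of Cauchy-Schwarz combined with the normalization $\sum_y \mathbb{E}_{\sigma, x}[g^2] = 1$ reduce the task to
\[
\left(\mathbb{E}_\pi[T_\pi(g)^2]\right)^2 \le \mathbb{E}_{\sigma_1, x_1, \sigma_2, x_2}\!\left[\sum_{y_1, y_2}\bigl(D(y_1, y_2) - 1/N^2\bigr)^2\right],
\]
i.e., a $\chi^2$-type closeness of the joint output law to uniform on $[N]^2$.

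The core structural step is to show the right-hand side is $N^{-\Omega(k)}$. The key observation is that the pair of partial outputs along the two computations, $(a_i, b_i) := (\rho_\pi(\sigma_1, i)(x_1), \rho_\pi(\sigma_2, i)(x_2))$, evolves as a Markov chain on $[N]^2$ driven by the i.i.d.\ uniform hidden permutations $\pi_1, \ldots, \pi_k$. From any off-diagonal state $a_{i-1} \neq b_{i-1}$, applying $\pi_i$ sends the pair to a uniformly random pair of distinct elements, and the deterministic post-multiplication by $(\sigma_i^{(1)}, \sigma_i^{(2)})$ yields a uniform distribution on a subset of $[N]^2$ of size $N(N-1)$; the on-diagonal states mix on their own smaller support analogously. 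One then argues that each layer contracts the $L^2$ distance to uniform on $[N]^2$ by a factor of $1 - \Omega(1)$, so that $k$ layers yield the target $N^{-\Omega(k)}$ bound.

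The main obstacle is proving this per-layer contraction uniformly, particularly for atypical $(\sigma_1, \sigma_2)$ where the two input tuples agree at many coordinates (so the ``differences'' $\Delta_i := (\sigma_{i}^{(1)})^{-1} \sigma_i^{(2)}$ are close to identity at many layers and the chain barely moves from the diagonal). The cleanest approach we anticipate is to decompose test functions on $[N]^2$ into $S_N$-irreducible components (under the diagonal action) and apply character-based bounds to show each non-trivial irreducible is damped by a constant factor under each uniform permutation layer; an elementary alternative is a direct $L^2$ induction on the two-state (on-diagonal/off-diagonal) chain above. The cyclic class $\mathcal{F}^{\cyc}$ follows by essentially the same analysis, with the extra $i \in [k]$ coordinate only cyclically relabeling which hidden permutations appear in the product.
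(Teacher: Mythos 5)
Your route is genuinely different from the paper's: the paper derives the exact pairwise formula of \Cref{lem:near-orthogonal-permutation}, builds an explicit nearly orthogonal subfamily (\Cref{lem:near-orth-set}), and runs the standard query-elimination count, whereas you bound the average squared correlation $\mathbb{E}_\pi[T_\pi(g)^2]$ of a single query against a uniformly random target and finish with Markov plus a union bound over the (fixed, zero-answer) transcript. This average-case route is viable, and in fact it is the same computation in disguise: your $\chi^2$ quantity satisfies $\mathbb{E}_{\mathrm{inputs}}\sum_{y_1,y_2}(D-1/N^2)^2=\mathbb{E}_{\pi,\pi'}\big[\langle f_\pi,f_{\pi'}\rangle^2\big]+\tfrac{2}{N}\mathbb{E}\big[\langle f_\pi,f_{\pi'}\rangle\big]$ for i.i.d.\ uniform $\pi,\pi'$, which by \Cref{lem:near-orthogonal-permutation} equals $N^{-2}(N-1)^{-(2k-2)}$, so your Cauchy--Schwarz step gives $\mathbb{E}_\pi[T_\pi(g)^2]\le N^{-1}(N-1)^{-(k-1)}$, exactly what you need.

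The genuine gap is quantitative and sits in your pivotal contraction claim. A per-layer contraction of $1-\Omega(1)$ yields only $2^{-\Omega(k)}$ after $k$ layers, which would prove ``$2^{\Omega(k)}$ queries or $\tau\le 2^{-\Omega(k)}$'', strictly weaker than \Cref{thm:SQ}; you need a per-layer factor of order $1/N$. Your two-state chain actually delivers this exactly: writing $p_t$ for the probability the coupled pair is on the diagonal and $F_t$ for the number of fixed points of $(\sigma^{(2)}_t)^{-1}\sigma^{(1)}_t$, one gets $p_t-1/N=(p_{t-1}-1/N)\,(F_t-1)/(N-1)$ and $\sum_{y_1,y_2}(D-1/N^2)^2=(p_{k-1}-1/N)^2/(N-1)$; since the $F_t$ are independent across layers with $\mathbb{E}[(F_t-1)^2]=1$, the expectation over inputs factorizes into the required $N^{-\Omega(k)}$. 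This also shows that the ``main obstacle'' you flag---proving contraction uniformly over atypical $(\sigma_1,\sigma_2)$---is both unattainable (when $(\sigma^{(2)}_t)^{-1}\sigma^{(1)}_t=\mathrm{id}$ there is no contraction at layer $t$, and one step of averaging over $\pi_t$ is a projection onto functions of $\mathbf{1}\{u=v\}$, not a uniform $1-\Omega(1)$ damping) and unnecessary: the bound must be run in expectation over the independent inputs, where bad layers are absorbed by the second moment. Two smaller points: for $\mathcal{F}^{\cyc}$ the coupled-chain picture does not directly apply when the two sampled start indices differ (the two computations no longer share a fresh hidden permutation at each step), so pass instead through $\mathbb{E}_{\mathrm{inputs}}\sum_{y_1,y_2}D^2=\mathbb{E}_{\pi,\pi'}\big[\mathbb{P}_{\sigma,i,x}(f^{\cyc}_\pi=f^{\cyc}_{\pi'})^2\big]$, which reduces to the same pairwise quantity; and the endgame (a fixed predictor agrees with a random $f_\pi$ with probability exactly $1/N$ on average, so surviving targets force $\Omega(1)$ loss) should be spelled out rather than cited as ``standard''.
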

\begin{remark}
Using the standard $\tau \approx n^{-1/2}$ concentration heuristic, where $n$ is the sample size, \Cref{thm:SQ} implies that either runtime (at least the number of queries) or sample size $n$ must be $\geq N^{\Omega(k)}$; this exponential dependence on $k$ implies a large statistical-computational gap under the SQ class, since information theoretically $\tilde{\Theta}(Nk)$ samples are sufficient to learn this target function.
\end{remark}


The proof of \Cref{thm:SQ} requires constructing a subset of $\mathcal{F}$ of nearly orthogonal functions. It turns out that the SQ model induces the following inner product between functions $f_\pi, f_\rho \in \mathcal{F}$.
\begin{equation}
    \langle f_\pi, f_\rho \rangle := \mathbb{P}_{\sigma, x}\qty[f_\pi(\sigma, x) = f_\rho(\sigma, x)] - 1/N. \label{eq:inner-product}
\end{equation}


One can interpret this inner product as a covariance between the random variables $f_\pi(\sigma, x)$ and $f_\rho(\sigma, x)$. An intermediate step towards the proof is 
\Cref{lem:near-orth-set},
where we construct a large set of functions with a small pairwise correlation under the inner product \eqref{eq:inner-product}. The complete proof of \Cref{thm:SQ} is presented in \Cref{sec:lb-proofs}.

\section{Upper Bound for Gradient-based Learning}
\label{sec:ub}
Theorem \ref{thm:SQ} implies that any SQ learner with polynomial compute and access only to the target labels must use $N^{\Omega(k)}$ samples in order to learn a $k$-fold composition function $f_\pi$. 
We now show that with the aid of a specifically chosen \emph{curriculum}, it is possible to train a $O(\log k)$ depth transformer to learn $f_\pi$ with only $\poly(Nk)$ samples. Curriculum learning \citep{elman1993learning,bengiocurriculum} is the process of training a model on data with increasing difficulty, the benefit of which has been demonstrated in various empirical and theoretical works; see Section \ref{sec:discussion} for further discussion.

Our curriculum learning strategy is summarized as follows. Recall that for $i, r \in [k]$, we have defined the permutation $\hop_i^r(\sigma,\cdot) \in S_N$ by $\hop_i^r(\sigma,\cdot) = \sigma_{i + r - 1} \circ \pi_{i + r-1} \circ \cdots \circ \sigma_{i + 1} \circ \pi_{i + 1} \circ \sigma_{i} \circ \pi_{i}$. Intuitively, the difficulty of the task scales with $k$, and thus we expect it to be much easier for a transformer to learn from samples of $\hop_i^r(\sigma,\cdot)$, for much smaller $r \le k$. Our construction in Theorem \ref{thm:construction} also motivates a natural curriculum, as the model computes $f_\pi$ recursively with the output of the $\ell$-th layer being $\hop_i^{2^\ell - 1}(\sigma,\cdot)$. We thus consider a curriculum where in the $\ell$-th stage the model receives samples of $\hop_i^{2^{\ell - 1}}(\sigma,\cdot)$. We begin by describing this strategy and the gradient-based training algorithm in detail and show that it can learn $f_\pi$ in $\poly(N,k)$ samples.

\subsection{Training Procedure}
Our training algorithm is $L$-stage gradient descent with $L = \log k + 1$ on the cross entropy loss using the transformer architecture defined in \Cref{def:transformer}. We apply an easy-to-hard curriculum for training: for each stage $\ell$, we sample $M$ input sequences $\{\sigma^{(m)}\}_{m \in [M]}$, and for each sequence sample a query position $(i_m,j_m)$ and receive as label its $2^{\ell-1}$th hop $\hop_{i_m}^{2^{\ell-1}}(\sigma^{(m)}, j_m)$. The empirical training loss of stage $\ell$ is thus
\begin{equation}
\label{eqn: cross entropy loss for the stage l}
        \mathcal{L}^{(\ell)}(\theta) = -\frac{1}{M}\sum_{m=1}^M\qty[\sum_{s'\in[N]}\mathbf{1}\{s'=\hop^{2^{\ell-1}}_{i_m}(\sigma^{(m)}, j_m)\}\log(\S(\Psi_\ell^\top \mathrm{TF}_\theta(X_m)_{(i_m,j_m)})_{s'})],
\end{equation}
where we denote $X_m = X(\sigma^{(m)})$. The key-query matrices are initialized at $W_{KQ}^{(\ell)}(0) = 0_{d \times d}$, and the readout/unembedding layers are initialized at $\Psi_{\ell}(0)=\beta_0 e_{L+2,\ell+2}\otimes \mathbf{1}_{k}\otimes I_{N\times N}$ for initialization scale $\beta_0>0$. We fix the value matrix for each layer as $W_{OV}^{(\ell)}=e_{L+2,\ell+2}e_{L+2,\ell+1}^\top\otimes I_{kN\times kN}$ to match the sparsity pattern in the construction. See Section~\ref{sec:discussion} for a discussion on learning value matrices.

We define the parameter $\theta := (\theta_{KQ},\theta_\Psi)$, where $\theta_{KQ}=(W_{KQ}^{(1)},\cdots,W_{KQ}^{(L)})$ and $\theta_\Psi=(\Psi_1,\cdots,\Psi_{L})$.
Within each stage, our algorithm first takes one step of gradient descent on all key-query matrices $\theta_{KQ}$. We then take one gradient step on the readout layer $\theta_\Psi$.
Pseudocode is given in \Cref{alg:training_alg}.
\begin{algorithm}[t]
    \caption{Training Algorithm (Curriculum)}\label{alg:training_alg}
    \begin{algorithmic}
        \State \textbf{Input:} {initialization size $\beta_0$; learning rate $\eta$}
        
        \State {Initialize $W^{(\ell)}_{KQ}(0) = 0_{d \times d}, \Psi^{(\ell)}(0)=\beta_0 e_{L+2,\ell+2}\otimes \mathbf{1}_{k}\otimes I_{N\times N}$, $\ell\in[L]$}
        
        \For{$t=1,\cdots,L$}
        \State Use loss $\mathcal{L}^{(t)}(\theta(t-1))$.\Comment{Stage $t$: train on $\hop^{2^{t-1}}$}
        
        \State $\theta_{KQ}(t) \leftarrow \theta_{KQ}(t-1) - \eta\nabla_{\theta_{KQ}}\mathcal{L}^{(t)}(\theta{(t-1)})$ \Comment{Train the key-query matrices $\theta_{KQ}$}
        
        \State $\theta' \leftarrow (\theta_{KQ}(t),\theta_\Psi(t-1))$
        
        \State $\theta_\Psi(t) \leftarrow \theta_\Psi(t-1) - \eta\nabla_{\theta_\Psi}\mathcal{L}^{(t)}(\theta')$ \Comment{Train the readout layer $\theta_\Psi$}
        
        \State $\theta(t) \leftarrow (\theta_{KQ}(t),\theta_\Psi(t))$
        
        \EndFor
    \State \textbf{Output:} $\hat{\theta}={\theta(L)}$.
    \end{algorithmic}
\end{algorithm}

\subsection{Main Theorem}
Our main theorem is that $\hat{\theta}$, the output of the $L$ stage curriculum training in \Cref{alg:training_alg}, successfully learns all the $2^\ell$-hop functions for $\ell\le L$. 

\begin{theorem}[Guarantee for \Cref{alg:training_alg}] 
\label{thm:main}
Assume $k=2^{L-1}$, $M\ge \Tilde{\Omega}(k^4N^6)$, $0<\epsilon\le \tilde{O}(\frac{1}{k^2N^3})$, and $\eta \ge \tilde\Omega(\frac{k^2N^3}{\beta_0}\log\frac{1}{\epsilon})$.Then, with high probability the final output $\hat\theta$ of \Cref{alg:training_alg} satisfies that, over any draw of the input permutation $\sigma$ and the query index $(i,j)$, for any $\ell\in[L]$,
$$\sup_{\sigma,(i,j)}\norm{\S(\Psi_\ell^\top \mathrm{TF}_{\hat{\theta}}(X(\sigma))_{(i,j)})-e_{\hop^{2^{\ell-1}}_i(j)}}_\infty\le \epsilon.$$
In particular, for $\ell = L$, the output of the transformer $\hat f(\sigma, i, j) = \Psi_L^\top \mathrm{TF}_{\hat{\theta}}(X(\sigma))_{(i,j)}$ approximates the $k$-fold composition task $f^{\cyc}_\pi(\sigma, i, j)$.
\end{theorem}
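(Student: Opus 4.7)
My plan is to proceed by induction on the stage number $\ell \in [L]$, with the inductive hypothesis that after stage $\ell$ of \Cref{alg:training_alg}, the stated $\epsilon$-approximation holds simultaneously for every $\ell' \le \ell$. Throughout, I would track a structural invariant on the hidden state $X^{(\ell)}$: for every $\ell' \le \ell$, block $\ell' + 2$ at position $(i, j)$ approximately encodes $E(i + 2^{\ell' - 1} - 1,\, \hop_i^{2^{\ell' - 1}}(j))$, matching the content of layer $\ell'$ in the construction of \Cref{thm:construction}. The high-level strategy mirrors that construction: stage $\ell$ must train layer $\ell$ to implement the ``hop-doubling'' attention---mapping the query at $(i, j)$ to key position $(i + 2^{\ell-2},\, \hop_i^{2^{\ell-2}}(j))$ for $\ell \ge 2$, or to $(i, \pi_i(j))$ when $\ell = 1$---and train $\Psi_\ell$ to decode block $\ell + 2$ as a sharp one-hot prediction.

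Within stage $\ell$, I would analyze the two gradient steps in sequence. At the start of the stage, $W_{KQ}^{(\ell)} = 0$ makes layer $\ell$'s attention uniform, so block $\ell + 2$ is a uniform average of the inductively correct encodings in block $\ell + 1$; combined with $\Psi_\ell$ of scale $\beta_0$, the predicted distribution is within $\tilde O(\beta_0)$ of uniform. The first step updates $W_{KQ}^{(\ell)}$: I would back-propagate the cross-entropy residual through $\Psi_\ell$, the residual path, and the softmax Jacobian $\diag(p) - p p^\top$ at $p = \mathbf{1}_{kN}/(kN)$, then show that the expected gradient (over uniform $\sigma \in (S_N)^k$ and uniform query $(i, j)$) collapses to an outer-product matrix aligned with the correct key-query direction. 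Matrix-Bernstein concentration with $M = \tilde\Omega(k^4 N^6)$ samples controls empirical fluctuations, and the aggressive step size $\eta = \tilde\Omega(k^2 N^3 / \beta_0)$ is chosen so that one step makes the post-step attention within $\tilde O(\epsilon)$ of a one-hot pointer at the correct target. The second step updates $\Psi_\ell$: since block $\ell + 2$ of the final layer-$L$ output now approximately equals $e_{k, i + 2^{\ell - 1} - 1} \otimes e_{N, \hop_i^{2^{\ell-1}}(j)}$ and $\Psi_\ell(0)$ already selects block $\ell + 2$ with the correct columnar structure, one gradient step pushes the target logit to $\Omega(\eta/(kN))$, which exceeds $\log(N/\epsilon)$ by the choice of $\eta$ and yields an $\epsilon$-sharp softmax.

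I must also control the concurrent stage-$\ell$ updates to $W_{KQ}^{(\ell')}$ for $\ell' \ne \ell$. For $\ell' > \ell$, the block-sparse structure of $\{W_{OV}^{(\ell')}\}$ (each value matrix writes to a strictly higher block than $\ell + 2$) implies that $\mathcal{L}^{(\ell)}$ is independent of $W_{KQ}^{(\ell')}$, so those gradients vanish and the matrices remain at $0$. For $\ell' < \ell$, the gradient factors through $\Psi_\ell$ at its small initialization, giving magnitude $O(\beta_0)$; moreover, by the same expectation argument as at stage $\ell'$, this gradient points in the \emph{same} structured direction that was reinforced during stage $\ell'$---since computing $\hop^{2^{\ell-1}}$ still requires that layer $\ell'$ produce $\hop^{2^{\ell'-1}}$ correctly---so the additional step only strengthens rather than deteriorates the previously-learned attention.

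The main technical obstacle will be the clean evaluation of $\mathbb{E}[\nabla_{W_{KQ}^{(\ell)}} \mathcal{L}^{(\ell)}]$ at the start of stage $\ell$, when the previous layers produce only \emph{approximately} correct hop encodings. I will need to show that, after averaging over uniform $\sigma \in (S_N)^k$ and uniform $(i, j)$, the gradient decomposes into a dominant signal---an outer product encoding the correct key-query direction---plus cross-terms that either cancel by the symmetry of the uniform distribution on $(S_N)^k$ or are suppressed by factors of $1/(kN)$ (from the flat initial softmax Jacobian) and $\epsilon$ (from the inductive error of earlier layers). The tensor-product structure $E(i, j) = e_{k, i} \otimes e_{N, j}$ is essential here, as it makes the signal and the various noise components nearly mutually orthogonal and allows the per-stage error to compound only polynomially---rather than exponentially---across the $L = \log_2 k + 1$ stages.
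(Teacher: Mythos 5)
Your proposal follows essentially the same route as the paper's proof: a stage-wise induction that tracks the ideal hidden state of \Cref{thm:construction}, a one-step population-gradient computation plus concentration showing that the stage-$\ell$ key-query matrix acquires the hop-doubling attention pattern, a second step that saturates the readout $\Psi_\ell$, and a perturbation argument propagating the per-stage non-saturation error linearly across the $O(\log k)$ layers. Your handling of the untrained layers $\ell' > \ell$ (exactly zero gradient from the block structure of the value/readout matrices) and your signal-plus-cancellation decomposition of the population gradient over uniform $\sigma$ also match the paper.

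The genuine gap is in your treatment of the already-trained layers $\ell' < \ell$ during stage $\ell$. You bound the stray gradient $\nabla_{W_{KQ}^{(\ell')}}\mathcal{L}^{(\ell)}$ by $O(\beta_0)$ (because it factors through $\Psi_\ell$ at its small initialization) and then assert, without justification, that it points in the same direction learned at stage $\ell'$. Neither half suffices. The step size is $\eta = \tilde\Omega(k^2N^3/\beta_0)$, chosen precisely so that a gradient whose relevant separation is of order $\beta_0/(k^2N^3)$ saturates the attention in a single step; an unstructured gradient of magnitude $O(\beta_0)$ on layer $\ell'$ would therefore produce an update of size $\tilde O(k^2N^3)$, dwarfing the learned entries of $W_{KQ}^{(\ell')}$ (which are of order $N\log\frac{kN}{\epsilon}$) and potentially destroying the saturation your induction relies on. The alignment claim is also not delivered by ``the same expectation argument as at stage $\ell'$'': the stage-$\ell$ gradient with respect to $W_{KQ}^{(\ell')}$ passes through the Jacobian of layer $\ell'$'s now-saturated softmax and through all intermediate layers, and its population direction is not the stage-$\ell'$ signal. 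What actually closes this step in the paper (\Cref{lemma: gradient upper bound for previous layers with curriculum}) is the saturation itself: since layer $\ell'$'s attention is within $\epsilon$ of one-hot, its softmax Jacobian has norm $O(\epsilon)$, whence $\|\nabla_{W_{KQ}^{(\ell')}}\mathcal{L}^{(\ell)}\| \lesssim \beta_0 k (kNL)^{3/2}\epsilon$, and only then is $\eta$ times this negligible relative to the trained parameter scale (this is also where the smallness condition on $\epsilon$ is consumed). Your argument needs this, or an equivalent quantitative mechanism; a magnitude-$O(\beta_0)$ bound plus an unproven alignment heuristic would not survive the aggressive step size.
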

Note that the $\mathrm{poly}(N,k)$ sample complexity with curriculum learning represents an significant improvement over the exponential dependence on $k$ in the SQ lower bound (Theorem~\ref{thm:SQ}). 
We provide a proof sketch in the next section with the complete proof deferred to \Cref{appendix:upper_bound}. For clarity, we denote $X:=X(\sigma),\hop_i^r(\sigma,j):=\hop_i^r(j)$ when $\sigma$ is clear from context.

\subsection{Proof Sketch}
\subsubsection{Stage 1: Learning the Hidden Permutation ($1$-hop)}\label{subsec:stage_1}
The first step of the proof is to show that during the first stage of training, the first attention layer $W_{KQ}^{(1)}$ learns the hidden permutations $\pi_i$ for all $i\in[k]$. The proof strategy is to first analyze the population dynamics, and then upper bound the sample noise by concentration.

We begin by decomposing the transformer output in the following summation:
{\small\begin{align*}\mathrm{TF}_\theta(X)=X^{(0)}+\sum_{\ell=1}^{L}W^{(\ell)}_{OV}X^{(\ell-1)} \S({X^{(\ell-1)}}^\top W^{(\ell)}_{KQ}X^{(\ell-1)}).
\end{align*}}

By our choice of initialization of the $\ell$-th readout layer $\Psi_\ell$ and value matrix $W^{(\ell)}_{OV}$, we observe that ${\Psi_{\ell'}^\top W_{OV}^{(\ell)}}$ is non-zero if and only if $\ell=\ell'$. This means the final relevant output for stage $\ell$ is the $\ell$-th layer $W^{(\ell)}_{OV}X^{(\ell-1)} \S({X^{(\ell-1)}}^\top W^{(\ell)}_{KQ}X^{(\ell-1)}),$ the gradient for the $\ell$-th layer $W^{(\ell)}_{KQ}$ is non-zero only in stage $\ge\ell$, and the gradient for a layer $W_{KQ}^{(\ell)}$ is also close to zero after being trained. We thus only consider the gradient of $W_{KQ}^{(\ell)}$ with respect to the loss $\mathcal{L}^{(\ell)}$ and upper bound the error term after each stage. In the first stage, the only updated key-query matrix is $W_{KQ}^{(1)}$. 

Now we show that after a large step of gradient descent,  the key-query matrix $W_{KQ}^{(1)}$ encodes the hidden permutations $\pi$, and the token in the $(i,j)$ position attends to the token in the $(i, \pi_i(j))$ position. Define $\mathcal{L}^{(\ell)}_{\mathcal{D}}:=\E[\mathcal{L}^{(\ell)}]$ as the population loss. The population gradient for $W_{KQ}^{(1)}$ can be computed as follows: 
{\small\begin{align*}
    \nabla_{W_{KQ}^{(1)}} \mathcal{L}_{\mathcal{D}}^{(1)} =-\E_{\sigma,(i,j)}\qty[\sum_{s'\in[N]}\qty(\mathbf{1}\{s'=\hop_i^{1}(j)\}-\S(\Psi_1^\top \mathrm{TF}_{(i,j)})_{s'}) XJ^{(1)} X^\top (\Psi_1^\top W_{OV}^{(1)})_{s'}^\top {X}_{(i,j)}^\top],
\end{align*}}

where we know that since $W_{KQ}^{(1)}$ is zero-initialized, $\S(\Psi_1^\top \mathrm{TF}_{(i,j)})_{s'}\!=\!\frac{1}{N}$ and the Jacobian term is $J^{(1)}\!=\!\frac{1}{kN}\qty(I_{kN}\!-\! \frac{1}{kN}\1_{kN}\1_{kN}^\top)$. Recall that the input embedding is $X_{(i,j)}  = {\scriptsize\begin{bmatrix}
        e_{k,i}\otimes e_{N,j} \\
        e_{k,i}\otimes e_{N,\sigma_i(j)}\\
        0_{kNL}
\end{bmatrix}}$. Since the readout layer is $(\Psi_{1}^\top W_{OV}^{(1)})_{s'}^\top=\beta_0 e_{L+2,2}\otimes \mathbf{1}_k \otimes e_{N,s'}$, one observes that $XX^\top(\Psi_{1}^\top W_{OV}^{(1)})_{s'}^\top = \beta_0 \sum_{p=1}^k X_{(p, s')}$. Finally, substituting $s' = \hop_i^1(j) = \sigma_i\pi_i(j)$, and dealing with cancellation in the mean centering term, one can simplify and expand the gradient into block matrices with dimension $\R^{kN\times kN}$ as follows:
{\small\begin{align*}
    \nabla_{W_{KQ}^{(1)}} \mathcal{L}_{\mathcal{D}}^{(1)}=-\frac{\beta_0}{kN}\E\qty[\qty(\begin{bmatrix}
        \sum_{p=1}^k e_{k,p}\otimes e_{N,\sigma_p^{-1}\sigma_i\pi_i(j)}\\
        \sum_{p=1}^k e_{k,p}\otimes e_{N,\sigma_i\pi_i(j)}\\
        0_{kNL}
    \end{bmatrix}-\frac1N\begin{bmatrix}
        \mathbf{1}_{kN}\\
        \mathbf{1}_{kN}\\
        0_{kNL}
    \end{bmatrix}) \begin{bmatrix}
        e_{k,i}\otimes e_{N,j} \\
        e_{k,i}\otimes e_{N,\sigma_i(j)}\\
        0_{kNL}
    \end{bmatrix}^\top]
\end{align*}}

Let $A_1$ denote the top left $kN \times kN$ block of $W_{KQ}^{(1)}$. To attend to $(i,\pi_i(j))$, our construction in Theorem \ref{thm:construction} uses $A_1$ to map $X_{(i,j)}$ to $X_{(i,\pi_i(j))}$; in particular, $A_1$ takes the first block in $X_{(i,j)}$, i.e. $e_{k,i}\otimes e_{N,j}$, and maps it to $e_{k,i}\otimes e_{N,\pi_i(j)}$. Computing the gradient of the population loss with respect to $A_1$, we observe that
{\small\begin{align*}
        \nabla_{A_1} \mathcal{L}^{(1)}_{\mathcal{D}} = -\frac{\beta_0}{kN}\E_{\sigma,(i,j)}\qty[\qty(
        \sum_{p=1}^k e_{k,p}\otimes e_{N,\sigma_p^{-1}\sigma_i\pi_i(j)}
        -\frac1N\mathbf{1}_{kN}) 
        (e_{k,i}\otimes e_{N,j})^\top].
\end{align*}}\\
Conditioning on the query position $(i,j)$, we observe that $\sigma_p$ is independent of the query embedding $X_{(i,j)}$ when $p\not=i$.
The terms with independent permutation $\sigma_p^{-1}$ will have expectation $\frac{1}{N}\1_{kN}$ and cancel with the mean-centering term. Therefore, the only remaining term is 
$e_{k,i}\otimes e_{N,\pi_i(j)}$, 
Taking expectation over all $(i,j)$, the gradient with respect to $A_1$ is thus {\small
$$\nabla_{A_1} \mathcal{L}^{(1)}=-\frac{\beta_0}{k^2N^2}\sum_{i=1}^k\sum_{j=1}^N \underbrace{(e_{k,i}\otimes e_{N,\pi_i(j)}) (e_{k,i}\otimes e_{N,j})^\top}_{\text{Maps }(i,j)\to(i,\pi_i(j))}+\frac{\beta_0}{k^2N^3}\sum_{i=1}^k\underbrace{(e_{k,i}\otimes \mathbf{1}_{N})(e_{k,i}\otimes \mathbf
{1}_{N})^\top}_{\text{Mean-centering term}}.$$}We thus see that $A_1$ encodes all the hidden permutations $\pi_i$ for all $i \in [k]$,


Moreover, we show that the gradient of the $A_1$ block dominates the gradient of the other blocks of $W_{KQ}^{(1)}$. Altogether, after concentrating the finite-sample gradient, we see that with high probability $\S(X^\top W_{KQ}^{(1)}X_{(i,j)})_{(i,\pi_i(j))}\approx 1$ and thus the first layer approximately outputs the 1-hop embedding
$$W^{(1)}_{OV}X^{(0)} \S({X^{(0)}}^\top W^{(1)}_{KQ}X^{(0)})_{(i,j)}\approx e_{L+2,3}\otimes e_{k,i}\otimes e_{N,\sigma_i\pi_i(j)}.$$
Next, after the second gradient step on the readout layer $\Psi_1$, the readout layer grows large in the direction $I_N-\frac{1}{N}\1_N\1_N^\top$ and approximately outputs the one-hot vector of the correct hop $e_{N,\hop_i^1(j)}=e_{N,\sigma_i\pi_i(j)}$ after the output softmax layer. 



\subsubsection{Stage $\ell$ ($2\le \ell\le 1+\log_2k$): Learning the $2^{\ell-1}$-hop}
\label{subsec:stage_ell}
Next, we inductively show that in the $\ell$-th stage, the $\ell$-th layer learns to compute the $2^{\ell-1}$-hop. In particular, we show that for a query $(i,j)\in[k]\times [N]$, the key-query matrix learns to compose $\hop_i^{2^{\ell-2}}$ and $\hop^{2^{\ell-2}}_{i+2^{\ell-2}}$ computed in the previous layer. 

In \Cref{lemma: empirical gradient of the second stage} ($\ell=2$) and \Cref{lemma: empirical gradient of stage ell} ($\ell\ge 3$), we show that a one-step gradient update on $W_{KQ}^{(\ell)}$ approximately matches the constructed $\ell$-th layer key-query matrix in \Cref{thm:construction}.
After the gradient descent step, in the $\ell$-th layer the $(i, j)$ position correctly attends to the $(i+2^{\ell-2},\hop_{i}^{2^{\ell-2}}(j))$ position and extracts the $2^{\ell-2}$-hop, thus computing the desired output $e_{N,\hop_i^{2^{\ell-1}}(j)}$.

Finally, we upper bound the total error from all $\ell$ stages altogether. We show inductively that the accumulation of errors in the actual intermediate sequences $\hat{X}^{(\ell-1)}$ (resulting from the finite-sample gradient and finite learning rate) grows linearly with depth, i.e. $\|\hat{X}^{(\ell)}-X^{(\ell)}\|_\infty\le \ell\epsilon'$, where $\epsilon'$ is the single stage error. For $\epsilon'$ sufficiently small, the output of the transformer is indeed close to the desired solution. To conclude the proof of \Cref{thm:main}, we finally train $\Psi_L$. As in Sec. \ref{subsec:stage_1}, $\Psi_L$ grows large and the transformer outputs $e_{N,\hop_i^{2^{\ell-1}}(j)}=e_{N,\hop^k_i(j)}$, as desired.

\subsection{Implicit Curriculum via Data Mixture}
In practice, designing a curriculum involving multiple stages with tasks of increasing difficulty can be quite challenging. As such, practitioners often rely on training data consisting of mixtures of various tasks with different skills and difficulties~\citep{xie2024doremi, liu2024regmix, dubey2024llama}. We thus consider an arguably simpler training scheme, where instead of stage-wise learning with a curriculum, we directly train on a \textit{mixture of easy-to-hard data} simultaneously. Consider the following objective for the mixture problem, which is the empirical loss summed over tasks with varying hops:
\begin{equation}\small
\mathcal{L}^{\mathrm{M}}(\theta) := \sum_{\ell=1}^L \mathcal{L}^{(\ell)}(\theta)= -\frac{1}{M}\sum_{\ell=1}^L\sum_{m=1}^M\Bigg[\sum_{s'\in[N]}\mathbf{1}\{s'\!=\!\hop^{2^{\ell-1}}_{i_m}(\sigma^{(m)}, j_m)\}\log(\S(\Psi_\ell^\top \mathrm{TF}_\theta(X_m)_{(i_m,j_m)})_{s'})\Bigg]
\end{equation}

We remark that this objective
is equivalent, at the population level, to predicting multiple hops for each input sequence at the same time, which relates to \textit{multi-token prediction}, a technique that has recently been found to be effective in practice~\citep{bachmann2024pitfalls,gloeckle2024better,liu2024deepseek}. Pseudocode for the mixed training algorithm is presented in \Cref{alg:training_alg_mix}.

\begin{algorithm}[t]
    \caption{Training Algorithm (Data Mixture)}\label{alg:training_alg_mix}
    \begin{algorithmic}
        \State{\textbf{Input:} initialization size $\beta_0$; learning rate $\eta$}
        \State Initialize $W^{(\ell)}_{KQ}(0) = 0_{d \times d}, \Psi^{(\ell)}(0)=\beta_0 e_{L+2,\ell+2}\otimes \mathbf{1}_{k}\otimes I_{N\times N}$, $\ell\in[L]$
        
        \For{$t=1,\dots,L$} \Comment{Train on mixture of all $2^{\ell-1}$ hops}        
        
        \State $\theta_{KQ}(t) \leftarrow \theta_{KQ}(t-1) - \eta\nabla_{\theta_{KQ}}\mathcal{L}^{\mathrm{M}}(\theta{(t-1)})$ \Comment{Train the key-query matrices $\theta_{KQ}$}
        \State $\theta(t) \leftarrow (\theta_{KQ}(t),\theta_\Psi(0))$
        \EndFor        
        \State $\theta_\Psi(L) \leftarrow \theta_\Psi(0) - \eta\nabla_{\theta_\Psi}\mathcal{L}^{\mathrm{M}}(\theta(L))$ \Comment{Train the readout layer $\theta_\Psi$}
        \State $\theta(L) \leftarrow (\theta_{KQ}(L),\theta_\Psi(L))$
    \State \textbf{Output:} $\hat{\theta}={\theta(L)}$.
    \end{algorithmic}
\end{algorithm}


The following theorem shows that training on a mixture of easy-to-hard data also enables a transformer to learn the $k$-fold composition task by inducing an equivalent curriculum as \Cref{alg:training_alg}.

\begin{restatable}[Guarantee for mixed data training]{theorem}{mixed}\label{thm:mix-data-main}
    Assume $k=2^{L-1}$, $M\ge \Tilde{\Omega}(k^4N^6)$ and $\eta \ge \tilde\Omega(\frac{k^2N^3}{\beta_0}\log\frac{1}{\epsilon}).$ For sufficiently small $\epsilon>0$, with high probability the final output $\hat\theta$ of \Cref{alg:training_alg_mix} satisfies that over any draw of input permutations $\sigma$ and query index $(i,j)$, for any $\ell\in[L]$,
$$\sup_{\sigma,(i,j)}\norm{\S(\Psi_\ell^\top \mathrm{TF}_{\hat{\theta}}(X(\sigma))_{(i,j)})-e_{\hop^{2^{\ell-1}}_i(j)}}_\infty\le \epsilon.$$
In particular, the transformer approximates the $k$-fold composition task when $\ell=L$.
\end{restatable}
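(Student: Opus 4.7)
The plan is to show that training on the mixture loss in Algorithm~\ref{alg:training_alg_mix} implicitly induces the same curriculum as Algorithm~\ref{alg:training_alg}, reducing the analysis to that of Theorem~\ref{thm:main}. The crucial structural observation is that the parameterization satisfies $\Psi_\ell^\top W_{OV}^{(\ell')}\neq 0$ only when $\ell=\ell'$, so $\mathcal{L}^{(\ell)}$ depends on the key-query matrix $W_{KQ}^{(\ell')}$ only for $\ell'\le\ell$. Consequently, the mixture gradient has a ``lower-triangular'' coupling,
\begin{equation*}
\nabla_{W_{KQ}^{(\ell)}}\mathcal{L}^{\mathrm{M}} \;=\; \sum_{\ell'\geq \ell}\nabla_{W_{KQ}^{(\ell)}}\mathcal{L}^{(\ell')},
\end{equation*}
and the key task is to show that at each iteration the diagonal term dominates while the off-diagonal terms are negligible, so that the dynamics collapse onto the stage-wise dynamics already analyzed for Algorithm~\ref{alg:training_alg}.

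I would argue by induction on the iteration index $t$. The inductive hypothesis is that after $t$ steps of Algorithm~\ref{alg:training_alg_mix}, the matrices $W_{KQ}^{(1)},\dots,W_{KQ}^{(t)}$ are close (in operator norm) to the targets from Theorem~\ref{thm:construction}, while $W_{KQ}^{(\ell)}$ for $\ell>t$ remains within $O(\epsilon')$ of zero. This implies that the attentions in layers $1,\dots,t$ are saturated (near one-hot) on the correct positions, so $X^{(\ell)}$ correctly encodes the $2^{\ell-1}$-hop; whereas for $\ell>t$ the later-block entries of $X^{(\ell)}$ remain position-constant due to the near-uniform attention. To push the induction from $t$ to $t+1$, I decompose the gradient layer-by-layer. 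For $\ell=t+1$, the diagonal term $\nabla_{W_{KQ}^{(t+1)}}\mathcal{L}^{(t+1)}$ coincides with the stage-$(t+1)$ curriculum gradient analyzed in \Cref{lemma: empirical gradient of the second stage} and \Cref{lemma: empirical gradient of stage ell}, since the lower layers now supply the correct $2^{t-1}$-hop signal; the off-diagonal terms $\nabla_{W_{KQ}^{(t+1)}}\mathcal{L}^{(\ell')}$ with $\ell'>t+1$ backpropagate through layers with near-uniform attention, producing position-constant block signals that are annihilated by the softmax Jacobian $J^{(\ell)}=\frac{1}{kN}(I-\frac{1}{kN}\mathbf{1}\mathbf{1}^\top)$. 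For $\ell\le t$, the diagonal term $\nabla_{W_{KQ}^{(\ell)}}\mathcal{L}^{(\ell)}$ is small because the already-saturated attention makes the corresponding softmax Jacobian small, and the cross terms are bounded by a combination of Jacobian annihilation and attention saturation. For $\ell>t+1$ every term is $O(\epsilon')$ by the same arguments. The net effect of iteration $t+1$ is thus to move only $W_{KQ}^{(t+1)}$ meaningfully, exactly mirroring stage $t+1$ of the curriculum.

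After $L$ iterations the collection $\{W_{KQ}^{(\ell)}\}$ realizes the construction of Theorem~\ref{thm:construction}, and the final gradient step on $\theta_\Psi$ sharpens each readout $\Psi_\ell$ toward the one-hot vector $e_{N,\hop^{2^{\ell-1}}_i(j)}$ exactly as in the final step of Algorithm~\ref{alg:training_alg}'s analysis; standard concentration of the empirical gradient around its population value (identical to the argument used for Theorem~\ref{thm:main}) yields the sample requirement $M\ge\tilde{\Omega}(k^4N^6)$. The main technical obstacle is controlling the cross gradients $\nabla_{W_{KQ}^{(\ell)}}\mathcal{L}^{(\ell')}$ for $\ell<\ell'$ at intermediate iterations: the Jacobian-annihilation argument is clean when all the layers strictly between $\ell$ and $\ell'$ have uniform attention, but once some of them have been correctly trained (so their attention is sharp and non-uniform), one must carefully perturb the chain-rule expansion and use attention saturation to show that the signal propagated through these already-learned layers remains too small to disturb $W_{KQ}^{(\ell)}$ beyond the $O(\epsilon')$ inductive tolerance. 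This perturbation bookkeeping is the new ingredient not present in the curriculum proof.
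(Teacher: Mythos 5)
Your proposal follows essentially the same route as the paper's proof: the mixture gradient is reduced to an implicit curriculum by showing that for still-uniform later layers the position-constant attention output is annihilated by the centered softmax Jacobian, while already-trained layers are frozen by softmax saturation, with the cross-term perturbation bookkeeping as the remaining technical work. The one quantitative point your sketch leaves implicit—and which the paper's bookkeeping makes essential—is that the spurious updates to not-yet-trained layers are amplified by the large step size and compound geometrically across the $L$ stages, so the per-stage saturation error must be driven down to roughly $\epsilon/(kNL)^{\Theta(L)}$ (achieved by a $\log k$-larger learning rate) rather than held at a fixed uniform $O(\epsilon')$ tolerance.
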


The proof of \Cref{thm:mix-data-main} is deferred to \Cref{appendix:mixture}. The high level proof idea is as follows.
Consider an ``idealized'' population gradient dynamics with all the attention outputs either one-hot or uniform. The key observation is that if all layers starting from the $t$-th layer are zero, i.e. $W_{KQ}^{(\ell)}=0$ for $\ell\ge t$, the gradients of all later layers $\nabla_{W_{KQ}^{(\ell)}}\mathcal{L}^{\mathrm{M}}=0$ for $\ell \ge t+1$. This is because if the layer $W_{KQ}^{(\ell-1)}$ is zero, the output of the softmax will be the uniform distribution $\frac{1}{kN}\1_{kN}$, which contains no signal. This will cancel out with the mean-centering term in the Jacobian, leading to zero gradient. We can therefore show that in the idealized population dynamics, the $\ell$-th layer is learned in the $\ell$-th gradient step, thus mimicking \Cref{alg:training_alg}.

\section{Experiments}\label{sec:simulations}

In this section, we provide empirical support for the conclusions of \Cref{thm:main} and \Cref{thm:mix-data-main}. In the leftmost plot of \Cref{fig:experiments}, we consider training a 5 layer transformer with architecture and initialization exactly matching that of \Cref{thm:main} with $k=16, N=5$. Our curriculum training procedure exactly follows that of the \Cref{alg:training_alg}: for $\ell \in [5]$, during the $\ell$-th stage we train on $2^{\ell-1}$ hop data, and take a single gradient step on $W_{KQ}^{(\ell)}$ followed by a single gradient step on $\Psi^{(\ell)}$. We observe that the model's loss on $2^{\ell-1}$-hop examples decreases to zero after the $\ell$-th stage, and thus after the final stage the model has perfectly learned the 16-fold composition. In the middle pane, we train a 4 layer transformer with architecture, initialization, and training procedure exactly matching that of the mixed training algorithm (\Cref{alg:training_alg_mix}), with $k = 8, N = 5$. We observe that for $\ell \in [4]$, the model's loss on the $2^{\ell-1}$-hop examples decreases, and at the end of training,\footnote{In \Cref{alg:training_alg_mix}, the first 4 steps are on the $W^{(\ell)}_{KQ}$ matrices, and the remainder of the steps are on the readout matrices $\Psi_{\ell}$, and thus the loss can only be decreased to 0 once the readout matrices are trained.} the model has perfectly learned the 8-fold composition.

In the rightmost plot, we consider training a standard transformer on the 4-fold composition task. To more closely match standard language modeling tasks, each token in the sequence is simply the element $\sigma_i(j) \in [N]$, and the desired hop number is prepended to the beginning of the sequence. We use Adam \cite{kingma2014adam} as the optimizer and train a standard encoder transformer with learned embeddings, MLPs and layer normalization. For the training distribution, we first train the transformer with uniformly mixed $M=10^5$ examples from $\{1,2,4\}$-fold composition data, and then train another transformer with only 4-fold compositions. We observe a similar phenomenology as predicted by \Cref{thm:main} -- if we only train on the 4-fold composition data, then the transformer is unable to learn, yet training with a curriculum learning strategy helps the model to correctly learn the 4-fold task.

\begin{figure}
\centering
    \includegraphics[width=0.33\textwidth]{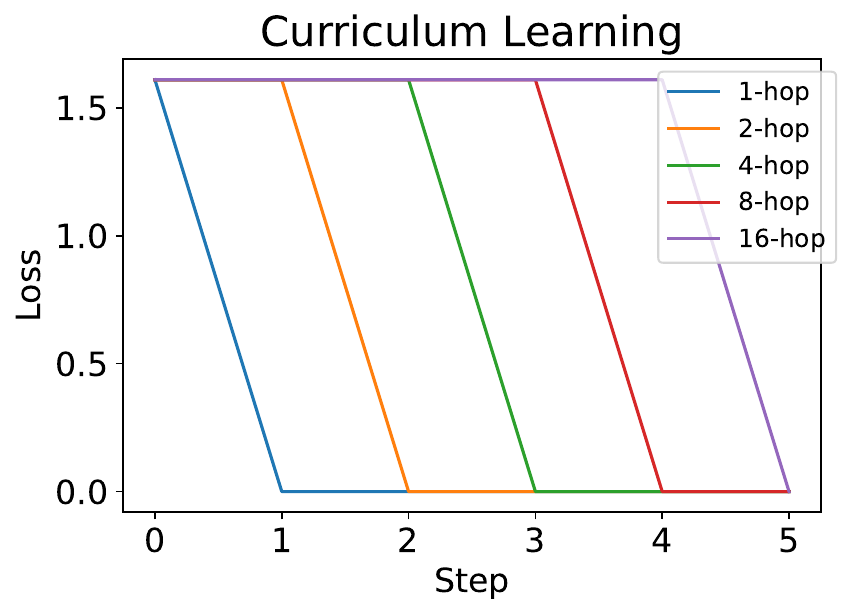}
    \includegraphics[width=0.33\textwidth]{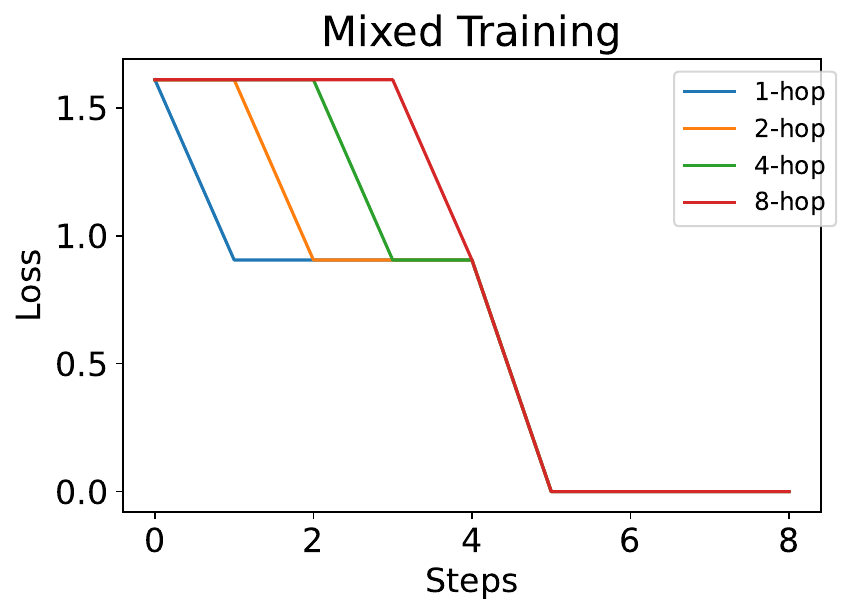}
    \includegraphics[width=0.32\textwidth]{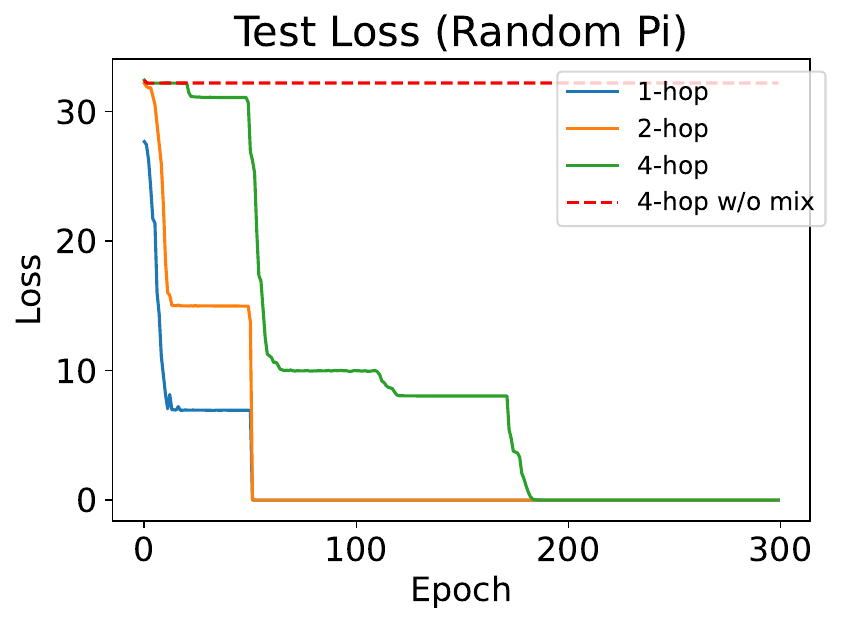}
    \caption{\small \textbf{Left:} Curriculum learning (Algorithm \ref{alg:training_alg}). \textbf{Middle:} Learning with data mixture (Algorithm \ref{alg:training_alg_mix}).
    \textbf{Right:} Comparison between training with and without mixed data on standard encoder transformer.}
\label{fig:experiments}
\end{figure}

\section{Discussion and Future Directions}\label{sec:discussion}

\paragraph{Connection to $k$-sparse parity.} 
Theorem \ref{thm:SQ} implies a statistical-to-computational gap under the SQ framework: a covering number argument on the function class $\mathcal{F}$ yields an information-theoretic sample complexity of $kN\log N$, whereas an SQ learner with polynomial compute requires $N^{\Omega(k)}$ samples. 
This mirrors the case for $k$-sparse parities over $d$ variables, where the SQ complexity $q/\tau^{2}\sim d^{\Omega(k)}$ is also statistically suboptimal. 
Heuristically speaking, both problems exhibit a ``global'' structure in which intermediate steps (partial solutions) are uncorrelated with the true function, and hence a learner using only correlational information pays a complexity exponential in $k$. For more discussion on similar computational lower bounds see \citet[Appendix A.4]{abbe2024far}. 

For the parity problem, recent works have shown that the statistical and optimization complexity can be improved by introducing a {curriculum} \citep{abbe2023provable,panigrahi2024progressive}. More relevant to our results, \cite{kim2024transformers,wen2024sparse} showed that by decomposing the problem into subtasks of intermediate parities and employing process supervision (i.e., forcing the model to predict the intermediate steps), a single-layer transformer can learn $k$-parity with $\text{poly}(d,k)$ samples, thereby avoiding exponential dependence in $k$. At a high level, such task decomposition resembles our curriculum learning objective, which constructs a ``staircase'' \citep{abbe2021staircase,abbe2022merged} to guide gradient-based learning. 
Note that while parity can be represented by a single-layer transformer, it is believed that for tasks similar to our $k$-fold composition such as the $k$-hop induction head, $\log k$ layers are necessary for parameter-efficient representation \citep{sanford2024transformers}.

\paragraph{On the embedding dimension and model size.} \citet{sanford2024transformers} show an advantage of transformers over recurrent neural networks (RNNs) for expressing the $k$-hop induction head task. Via a reduction to a communication complexity lower bound for the pointer-chasing problem, an RNN requires either depth at least $k$ or width at least $N/\poly(k)$ to solve the $k$-hop task. In contrast, the $O(\log k)$ depth transformer construction succeeds with embedding dimension $d = O(1)$. Similarly, the $O(\log k)$-depth construction for the automata simulation task in \citet{liu2023transformers} requires embedding dimension and MLP width of $\poly(N)$, independent of $k$. In comparison, our construction in \Cref{thm:construction} appears suboptimal in that the embedding dimension is $\poly(Nk)$. However, since we require the embedding $\phi$ of the transformer to be fixed (that is, independent, of the target function $f_\pi$),
the size of the representation is $Ld^2p$, where $p$ is the bit precision. The representation size must be at least the log packing number, which is $\Theta(kN\log N)$ (\Cref{cor:packing}).
As such, an embedding dimension of $d = \poly(Nk)$ is necessary when $p = \tilde O(1)$. On the other hand, if we allow for $\phi$ to be trainable (i.e., depend on the target $f_\pi$), then we can encode $\hop_i^1(\sigma, j)$ using $\phi$ to yield a valid construction with embedding dimension $d = \tilde O(1)$, for example, by leveraging rotary embeddings similar to~\cite{sanford2023representational}.
We leave understanding whether the learned model can be distilled into a smaller dimensional model, or whether a smaller model with a trainable embedding map can still learn the task, to future work.

\paragraph{Learning the value matrix.} 
Our analysis of learning dynamics in Section~\ref{sec:ub} assumes that the value matrix~$W_{OV}^{(\ell)}$ at each layer is fixed to the desired block-sparse matrix with an identity block that matches our construction in Theorem~\ref{thm:construction}.
We note that while this simplifies the problem, it does not encode any information about the hidden permutations~$\pi$ in the target~$f_\pi$.
In practice, these matrices are not fixed at the identity, and we show in Appendix~\ref{appx:value_matrix} that their population gradient at zero initialization vanishes under our data model. This suggests that other ingredients are needed for successfully learning the value matrices, such as exploiting a non-zero initialization or changing the data distribution. We leave a precise study of such learning dynamics to future work.

\paragraph{Generalization to $k\not=2^\ell$.} Our analysis assumes for simplicity that $k$ is a power of two. When $k$ is not a power of two, the construction in \Cref{thm:construction} does not work, as keeping only the $2^{\ell}$-hops is insufficient.
For general $k$, we believe that a similar analysis applies provided
a larger embedding dimension $\Theta(k^2N)$ to encode all $\ell$-hops with $\ell\in[k]$.

\bigskip

\subsection*{Acknowledgments}

This collaboration began during the ``Modern Paradigms in Generalization'' and ``Special Year on Large Language Models and Transformers, Part 1'' programs at the Simons Institute for the Theory of Computing, Berkeley in 2024.
DH acknowledges support from the ONR under grant N00014-24-1-2700. JDL acknowledges support of the NSF CCF 2002272, NSF IIS 2107304, NSF CIF 2212262, ONR Young Investigator Award, and NSF CAREER Award 2144994.

\bigskip

\bibliography{main}

\newpage
\tableofcontents

\appendix
\newpage
\crefalias{section}{appendix} 

\allowdisplaybreaks

\section{Constructions}\label{sec:construction-proofs}
\subsection{Construction for $k$-fold Composition}
\begin{proof}[Proof of \Cref{thm:construction}]
    Choose the embedding function as follows:
    \begin{align*}
        \phi(i, j, \sigma_i(j)) = \begin{bmatrix} e_{k, i} \otimes e_{N, j} \\ e_{k, i} \otimes e_{N, \sigma_i(j)} \\ 0_{kN L } 
        \end{bmatrix}
    \end{align*}
    The first layer encodes the hidden permutation $\pi$ in the key-query matrix $W_{KQ}^{(1)}$. In particular, set 
    \begin{align*}
        W_{KQ}^{(1)} = \beta_0\sum_{i \in [k], j \in [N]} \begin{bmatrix} e_{k, i} \otimes e_{N, \pi_i(j)} \\ 0_{kN (L+1) } 
        \end{bmatrix} \begin{bmatrix} e_{k, i} \otimes e_{N, j} \\ 0_{kN (L+1) } 
        \end{bmatrix}^\top
    \end{align*}
    for large constant $\beta_0$. As such, the pre-attention weights from the position $(i, j)$ are given by
    \begin{align*}
        {X^{(0)}}^\top W_{KQ}^{(1)}X^{(0)}_{(i,j)} = \beta_0 e_{k, i} \otimes e_{N, \pi_i(j)}.
    \end{align*}
    Taking $\beta_0 \rightarrow \infty$, the attention weight from $(i, j)$ position concentrates on the $(i, \pi_i(j))$ position, so
    \begin{align*}
        \qty(X^{(0)}\mathcal{S}\qty({X^{(0)}}^\top W_{KQ}^{(1)}X^{(0)}))_{(i,j)} = \begin{bmatrix} e_{k, i} \otimes e_{N, \pi_i(j)} \\ e_{k, i} \otimes e_{N, \sigma_i(\pi_i(j))} \\ 0_{kN L } 
        \end{bmatrix}
    \end{align*}
    Finally, setting the output value matrix to be $W_{OV}^{(1)} = (e_{L + 2, 3}e_{L + 2, 2}^\top) \otimes I_{kN \times kN}$ 
    yields
    \begin{align*}
        (X^{(1)})_{(i, j)} = \begin{bmatrix} e_{k, i} \otimes e_{N, j} \\ e_{k, i} \otimes e_{N, \sigma_i(j)} \\ e_{k, i} \otimes e_{N, \sigma_i(\pi_i(j))} \\ 0_{kN(L - 1)} 
        \end{bmatrix}
    \end{align*}
    The remainder of the construction proceeds inductively. For simplicity of notation, let us define the permutation $\hop_i^r$ by 
    \begin{align*}
        \hop_i^r := \sigma_{i + r - 1} \circ \pi_{i + r-1} \circ \cdots \circ \sigma_{i + 1} \circ \pi_{i + 1} \circ \sigma_{i} \circ \pi_{i},
    \end{align*}
    where the dependence on $\sigma$ and $\pi$ is implicit.
    
    We will prove by induction that the output of the $\ell$th layer of transformer satisfies, for $i \in [k + 1 - 2^{\ell - 1}]$ and $j \in [N]$,
    \begin{align*}
        (X^{(\ell)})_{(i, j)} = \begin{bmatrix} e_{k, i} \otimes e_{N, j} \\ e_{k, i} \otimes e_{N, \sigma_i(j)} \\ e_{k, i} \otimes e_{N, \hop_i^1(j)} \\
        e_{k, i + 1} \otimes e_{N, \hop_i^2(j)} \\
        e_{k, i+ 3} \otimes e_{N, \hop_i^4(j)} \\
        \vdots\\
        e_{k, i + 2^{\ell-1} - 1} \otimes e_{N, \hop_i^{2^{\ell - 1}}(j)} \\
        0_{kN(L - \ell)} \end{bmatrix}.
    \end{align*}
    We have already shown that this is indeed satisfied for $\ell = 1$. 
    
    Assume that the inductive hypothesis holds for some $\ell$. Set the key-query matrix of layer $\ell + 1$ to be 
    \begin{align*}
        W_{KQ}^{(\ell + 1)} = \beta_{\ell + 1}\sum_{a = 1}^k\sum_{b \in [N]} \begin{bmatrix} e_{k, a + 2^{\ell-1}} \otimes e_{N, b} \\ 0_{kN (L + 1) } 
        \end{bmatrix} \begin{bmatrix} 0_{kN(\ell + 1)} \\ e_{k, a+ 2^{\ell-1}-1} \otimes e_{N, b} \\ 0_{kN (L - \ell) } \end{bmatrix}^\top.
    \end{align*}
    Here when $a+2^{\ell-1}>k$, we denote $e_{k,a+2^{\ell-1}}:=e_{k,a+2^{\ell-1}-k}$, i.e. the indices of the permutation or embedding are taken modulo $k$.
    Consider some position $(i, j)\in [k]\times [N]$. The pre-attention weights from position $(i, j)$ are given by
    \begin{align*}
        {X^{(\ell)}}^\top W_{KQ}^{(\ell + 1)}X^{(\ell)}_{(i,j)} = \beta_{\ell + 1} \cdot e_{k, i + 2^{\ell-1}} \otimes e_{N, \hop_{i}^{2^\ell - 1}(j)}.
    \end{align*}
    Taking $\beta_{\ell + 1} \rightarrow \infty$, the attention weight from the $(i, j)$ position will concentrate on the $(i + 2^{\ell - 1}, \hop_i^{2^{\ell - 1}}(j))$ position. Setting the value matrix to be $W_{OV}^{(\ell + 1)} = e_{L+2, \ell + 3} e_{L+2, \ell + 2}^\top \otimes I_{kN \times kN}$ ensures that
    \begin{align*}
        \qty(W^{(\ell + 1)}_{OV}X^{(\ell)}\S({X^{(\ell)}}^\top W^{(\ell + 1)}_{KQ} X^{(\ell)}))_{i,j} &= \begin{bmatrix}
            0_{kN(\ell + 2)} \\
            e_{k, i + 2^{\ell-1} + 2^{\ell - 1} - 1} \otimes e_{N, \hop_{i + 2^{\ell-1}}^{2^{\ell - 1}}\qty(\hop_i^{2^{\ell - 1}}(j))}\\
            0_{kN(L - \ell - 1)} \\
        \end{bmatrix} \\
        &= \begin{bmatrix}
            0_{kN(\ell + 2)} \\
            e_{k, i + 2^{\ell} - 1} \otimes e_{N, \hop_i^{2^{\ell}}(j)}\\
            0_{kN(L - \ell - 1)} \\
        \end{bmatrix}
    \end{align*}
    Therefore $X^{(\ell + 1)}_{(i, j)}$ satisfies
    \begin{align*}
                (X^{(\ell + 1)})_{(i, j)} = \begin{bmatrix} e_{k, i} \otimes e_{N, j} \\ e_{k, i} \otimes e_{N, \sigma_i(j)} \\ e_{k, i} \otimes e_{N, \hop_i^1(j)} \\
        e_{k, i + 1} \otimes e_{N, \hop_i^2(j)} \\
        e_{k, i+ 3} \otimes e_{N, \hop_i^4(j)} \\
        \vdots\\
        e_{k, i + 2^{\ell-1} - 1} \otimes e_{N, \hop_i^{2^{\ell}}(j)} \\
        0_{kN(L - \ell - 1)} \end{bmatrix},
    \end{align*}
    as desired. Therefore by induction, the claim holds for $\ell = L-1$ and thus

        \begin{align*}
                (X^{(L)})_{(1, j)} = \begin{bmatrix} e_{k, 1} \otimes e_{N, j} \\ e_{k, 1} \otimes e_{N, \sigma_1(j)} \\ e_{k, 1} \otimes e_{N, \hop_1^1(j)} \\
        e_{k, 2} \otimes e_{N, \hop_1^2(j)} \\
        e_{k, 4} \otimes e_{N, \hop_1^4(j)} \\
        \vdots\\
        e_{k, k} \otimes e_{N, \hop_1^{k}(j)} \end{bmatrix}
    \end{align*}
    To conclude, set the readout layer $\Psi$ to be
    \begin{align*}
        \Psi^\top = \begin{bmatrix}
            0_{N \times kN(L+1)}, & e_{k, k} \otimes I_{N \times N}.
        \end{bmatrix}
    \end{align*}
    The output of the transformer then satisfies
    \begin{align*}
        \qty(\Psi^\top \mathrm{TF}_\theta(X))_{(1, j)} = e_{N, \hop_1^{k}(j)} = e_{N, f_\pi(\sigma, j)},
    \end{align*}
    as desired.
\end{proof}

\subsection{The $m$-sparse $k$-fold Composition}\label{sec:m ll k task}
The $k$-fold composition task requires composing $k$ hidden permutations with $k$ input permutations. In practice, multi-step reasoning tasks may contain more contextual knowledge than parametric knowledge. We thus introduce a variant of the $k$-fold composition task, called the \emph{$m$-sparse $k$-fold composition}, which requires composing $m$ hidden permutations with $k$ input permutations for $m \le k$.

Recall that the $k$-fold composition task is defined as $f_\pi : \mathcal{X} \rightarrow [N]$:
\begin{align*}
    f_\pi(\sigma, x) := (\sigma_k \circ \pi_k \circ \sigma_{k-1} \circ \pi_{k-1} \circ \cdots \circ \sigma_1 \circ \pi_1)(x).
\end{align*}
Our $m$-sparse $k$-fold composition target function class is defined as $$\mathcal{F}_m := \{f_\pi : \pi \in (S_N)^k, \sum_{i=1}^k\mathbf{1}\{\pi_i=\text{Id}\}\ge k-m\},$$ i.e those $f_\pi$ where at most $m$ of the $\pi$ are not the identity. The goal is to learn $\mathcal{F}_m$ with respect to the uniform distribution over $\mathcal{X}$. 
We similarly define the cyclic task $\mathcal{F}_m^{\cyc} = \{f_\pi^{\cyc} : \pi \in (S_N)^k, \sum_{i=1}^k\mathbf{1}\{\pi_i=\text{Id}\}\ge k-m\}$.

The following construction shows that the $m$-sparse $k$-fold composition is expressible by a $\Theta(\log k)$-depth transformer with embedding dimension $d = \tilde \Theta(mN)$.

\begin{theorem}\label{thm:construction_for_m_sparse}
    Assume that $k$ is a power of two. There exists an embedding function $\phi$ with $d = (m+2)N(3 + \log_2k)$ such that, for any $\pi \in (S_N)^k$ containing at most $m$ non-identity hidden permutations, there exists an $L = \log_2 k + 1$ layer transformer which can exactly express the permutation composition function, i.e
    \begin{align*}
        (\Psi^\top\mathrm{TF}_\theta(X(\sigma)))_{(1,j)} = e_{N, f_\pi(\sigma, x)} \quad \text{for all}~(\sigma, x) \in \mathcal{X}.
    \end{align*}
\end{theorem}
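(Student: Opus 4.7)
The plan is to mimic the recursive doubling construction of Theorem \ref{thm:construction}, shrinking each residual-stream block from dimension $kN$ to $(m+2)N$ by exploiting the sparsity of $\pi$. Write $I = \{i \in [k] : \pi_i \neq \mathrm{Id}\}$, so $|I| \le m$, and note that whenever $i \notin I$ the ``1-hop'' $\hop_i^1$ collapses to $\sigma_i$, which is already available in the token embedding; consequently, only the $|I| \le m$ non-trivial hidden permutations carry any new content that the transformer has to propagate.

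First, I would define a fixed embedding $\phi$ of dimension $(m+2)N(3 + \log_2 k)$ as a concatenation of $L + 2 = 3 + \log_2 k$ blocks, each of dimension $(m+2)N$. In each block the factor $N$ continues to hold the one-hot value in $[N]$, while the factor $m+2$ replaces the $k$-dimensional one-hot position label $e_{k,i}$ by a compressed $(m+2)$-dimensional label --- for example a binary/bit-packed encoding of $i$ spread across $m+2$ coordinates or a small-range universal-hash-like encoding. The encoding is fixed independently of $\pi$, but designed to be rich enough that the $\pi$-dependent key-query matrices can later pick out any prescribed target position.

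Next, I would set up the $L = 1 + \log_2 k$ layers by induction on $\ell$, exactly parallel to the proof of Theorem \ref{thm:construction}. The first key-query matrix $W_{KQ}^{(1)}$ encodes the non-identity $\pi_i$'s so that $(i, j)$ attends to $(i, \pi_i(j))$ (which is $(i, j)$ itself for $i \notin I$), and $W_{OV}^{(1)}$ copies $e_{N, \sigma_i(\pi_i(j))} = e_{N, \hop_i^1(j)}$ into the first hop slot together with the compressed position label for $i$. For $\ell \ge 2$, the $\ell$-th layer attends from $(i, j)$ to $(i + 2^{\ell-2}, \hop_i^{2^{\ell-2}}(j))$ using the $(\ell-1)$-st hop slot, and $W_{OV}^{(\ell)}$ writes $e_{N, \hop_i^{2^{\ell-1}}(j)}$ into the next slot. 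After $L$ layers, position $(1, x)$ stores $e_{N, \hop_1^k(x)} = e_{N, f_\pi(\sigma, x)}$, which an appropriate readout $\Psi$ extracts.

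The main obstacle is showing that attention still concentrates on exactly one correct target despite the compressed position label, a property which was automatic in Theorem \ref{thm:construction} thanks to the $k$-dim one-hot. I would exploit two features to resolve this: first, the $N$-dimensional value portion $e_{N, j}$ is left uncompressed, so tokens agreeing on the compressed position label can still be separated in the key-query inner product through $j$; second, at each layer the attention only needs to distinguish one target from at most $m+2$ competing ``active'' positions (those in $I$ together with a constant number of boundary positions generated by the doubling), so a $\pi$-dependent key-query matrix projecting onto the image of this small active set suffices to place all score on the intended target after a suitable scaling of the softmax. Once this routing property is verified, the inductive hop computation of Theorem \ref{thm:construction} transfers verbatim, giving exact expression of $f_\pi$ at position $(1, x)$ with embedding dimension $(m+2)N(3 + \log_2 k)$ as claimed.
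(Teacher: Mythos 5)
Your high-level skeleton matches the paper's (keep the $N$-dimensional one-hot content, compress the $k$-dimensional positional one-hot down to $m+2$ coordinates, and reuse the recursive doubling of Theorem~\ref{thm:construction}), but the one step you flag as "the main obstacle" is exactly the step you have not solved, and the devices you sketch for it would not work. The compressed position code has to support two things: (a) in layer 1, a single key-query matrix must apply up to $m$ \emph{arbitrary} permutations $\pi_{i}$, each conditioned on the token sitting in block $i$, while leaving identity blocks attending to themselves; and (b) in every later layer $\ell+1$, a single bilinear form must, \emph{simultaneously for every query block} $i$, place its unique maximum on the key at block $i+2^{\ell-1}$ holding the value $\hop_i^{2^{\ell-1}}(j)$. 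For (b) your claim that the target only needs to be distinguished from "at most $m+2$ competing active positions" is false: every one of the $k$ blocks stores a permutation value in its hop slot, so roughly one key per block matches the queried $N$-value $\hop_i^{2^{\ell-1}}(j)$, and these $\approx k$ competitors can only be separated through the $(m+2)$-dimensional position code. Sparsity of $\pi$ helps only in layer 1; it does nothing for routing in layers $2,\dots,L$. A bit-packed binary label cannot realize the required "shift by $2^{\ell-1}$" comparison through a bilinear score (addition with carries is not bilinear), a hash-like code gives no uniform-in-$i$ argmax guarantee, and "projecting onto the image of the small active set" is not meaningful when all $k$ position labels live in an $(m+2)$-dimensional space and hence are far from linearly independent.

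The paper resolves both points with a specific structure you would need to supply: the $m+2$ positional coordinates are split into a $2$-dimensional sinusoidal code $(\cos(2\pi i/k),\sin(2\pi i/k))$ plus $m$ one-hot coordinates reserved for the non-identity blocks $\mathcal{M}$. The $m$ one-hot coordinates let $W_{KQ}^{(1)}$ attach the map $e_{N,j}\mapsto e_{N,\pi_{i_y}(j)}$ to block $i_y$ with a large weight $\beta_2$, while a smaller $\beta_1$ term on the sinusoidal part handles all identity blocks at once; for $\ell\ge 2$ the key-query matrix is a rotation acting on the sinusoidal coordinates, so the positional part of the score is $\cos\bigl(2\pi(i'-i-2^{\ell-1})/k\bigr)$, which is uniquely maximized at $i'=i+2^{\ell-1}$ for every query $i$, and saturating the softmax ($\beta_{\ell+1}\to\infty$) then kills the $O(1/k^2)$-margin competitors. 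Without this (or an equivalent relative-shift mechanism expressible as a fixed bilinear form on a low-dimensional code), the inductive step of Theorem~\ref{thm:construction} does not transfer, so as written your proposal has a genuine gap at its central claim. A secondary point: you insist the embedding be independent of $\pi$, but then your layer-1 construction must condition arbitrary $\pi_i$'s on compressed labels of arbitrary subsets of blocks, which is precisely what the paper's dedicated $m$ one-hot coordinates (chosen with knowledge of which blocks are non-identity) are for; you would need to explain how a fixed $(m+2)$-dimensional code achieves this for every possible sparse $\pi$.
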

\begin{proof}[Proof of \Cref{thm:construction_for_m_sparse}]
    Suppose the index set of non-identity hidden permutation $\pi_i$ is $\mathcal{M}=\{i_1,...,i_m\}$. Choose the embedding function as follows:
    \begin{align*}
        \phi(i, j, \sigma_i(j)) = \begin{bmatrix} p_{i} \otimes e_{N, j} \\ p_{i} \otimes e_{N, \sigma_i(j)} \\ 0_{kN L } 
        \end{bmatrix}
    \end{align*}
    where the positional encoding for the block position combines sinusoidal and one-hot positional encoding. We further denote $\hat{p}_i = (\cos\qty(\frac{2\pi i}{k}),
        \sin \qty(\frac{2\pi i}{k}))^\top\in \R^2$ as the sinusoidal embedding. $$p_{i} := \begin{bmatrix}
        0_{m}\\
        \hat{p}_i
    \end{bmatrix}=\begin{bmatrix}
        0_{m}\\
        \cos\qty(\frac{2\pi i}{k})\\
        \sin \qty(\frac{2\pi i}{k})
    \end{bmatrix}\in \R^{m+2}\text{ if }i\not\in \mathcal{M},\quad  p_{i_j} :=\begin{bmatrix}
        0_{m}\\
        \hat{p}_{i_y}
    \end{bmatrix}= \begin{bmatrix}
        e_{m,j}\\
        \cos\qty(\frac{2\pi i_j}{k})\\
        \sin \qty(\frac{2\pi i_j}{k})
    \end{bmatrix}\in \R^{m+2}\text{ if }i_j\in \mathcal{M}$$
    
    The first layer encodes the hidden permutation $\pi$ in the key-query matrix $W_{KQ}^{(1)}$. In particular, set
    \begin{align*}
        W_{KQ}^{(1)} = \begin{bmatrix}
            A_1&0\\
            0&0
        \end{bmatrix}, A_1\in \R^{(m+2)N\times (m+2)N}
    \end{align*}
    where the top-left block is
    \begin{align*}
        A_1 = \beta_1 \begin{bmatrix}
            0_{m\times m}&0\\
            0&I_{2}
        \end{bmatrix}\otimes I_{N}+\beta_2 \sum_{i_y\in\mathcal{M},j\in[N]}\begin{bmatrix}
            e_{m,y}\\
            0
        \end{bmatrix} \begin{bmatrix}
            e_{m,y}\\
            0
        \end{bmatrix}^\top\otimes e_{N,\pi_{i_y}(j)}e_{N,j}^\top
    \end{align*}
    for large constant $\beta_2\gg \beta_1\gg 1$. As such, the pre-attention weights from the position $(i, j)$ are given by
    \begin{align*}
        {X^{(0)}}^\top W_{KQ}^{(1)}X^{(0)}_{(i,j)} = \beta_1 \sum_{i'=1}^k \cos\qty(\frac{2\pi(i'-i)}{k})e_{k,i} \otimes e_{N, j}\text{ if $i\notin \mathcal{M}$ }.
    \end{align*}
    \begin{align*}
        {X^{(0)}}^\top W_{KQ}^{(1)}X^{(0)}_{(i_y,j)} = \beta_1 \sum_{i'=1}^k \cos\qty(\frac{2\pi(i'-i)}{k})e_{k,i} \otimes e_{N, j} + \beta_2 e_{k,i_y}\otimes e_{N,\pi_{i_y}(j)}\text{ if $i_y\in \mathcal{M}$ }.
    \end{align*}
    Taking $\beta_2/\beta_1, \beta_1 \rightarrow \infty$, the attention weight from $(i, j)$ position concentrates on the $(i, \pi_i(j))$ position (either $i\in \mathcal{M}$ or not), so
    \begin{align*}
        \qty(X^{(0)}\mathcal{S}\qty({X^{(0)}}^\top W_{KQ}^{(1)}X^{(0)}))_{(i,j)} = \begin{bmatrix} p_{i} \otimes e_{N, \pi_i(j)} \\ p_{i} \otimes e_{N, \sigma_i(\pi_i(j))} \\ 0_{(m+2)N L } 
        \end{bmatrix}
    \end{align*}
    Finally, setting the output value matrix to be $W_{OV}^{(1)} = (e_{L + 2, 3}e_{L + 2, 2}^\top) \otimes I_{(m+2)N \times (m+2)N}$ 
    yields
    \begin{align*}
        (X^{(1)})_{(i, j)} = \begin{bmatrix} p_{i} \otimes e_{N, j} \\ p_{i} \otimes e_{N, \sigma_i(j)} \\ p_{i} \otimes e_{N, \sigma_i(\pi_i(j))} \\ 0_{(m+2)N(L - 1)} 
        \end{bmatrix}
    \end{align*}
    The remainder of the construction proceeds inductively. For simplicity of notation, let us define the permutation $\hop_i^r$ by 
    \begin{align*}
        \hop_i^r := \sigma_{i + r - 1} \circ \pi_{i + r-1} \circ \cdots \circ \sigma_{i + 1} \circ \pi_{i + 1} \circ \sigma_{i} \circ \pi_{i},
    \end{align*}
    where the dependence on $\sigma$ and $\pi$ is implicit.
    
    We will prove by induction that the output of the $\ell$th layer of transformer satisfies, for $i \in [k + 1 - 2^{\ell - 1}]$ and $j \in [N]$,
    \begin{align*}
        (X^{(\ell)})_{(i, j)} = \begin{bmatrix} p_{i} \otimes e_{N, j} \\ p_{i} \otimes e_{N, \sigma_i(j)} \\ p_{i} \otimes e_{N, \hop_i^1(j)} \\
        p_{i + 1} \otimes e_{N, \hop_i^2(j)} \\
        p_{i+ 3} \otimes e_{N, \hop_i^4(j)} \\
        \vdots\\
        p_{i + 2^{\ell-1} - 1} \otimes e_{N, \hop_i^{2^{\ell - 1}}(j)} \\
        0_{(m+2)N(L - \ell)} \end{bmatrix}.
    \end{align*}
    We have already shown that this is indeed satisfied for $\ell = 1$. 
    
    Assume that the inductive hypothesis holds for some $\ell$. Set the key-query matrix of layer $\ell + 1$ to
    \begin{align*}
        W_{KQ}^{(\ell + 1)} = \beta_{\ell + 1}\begin{bmatrix}
            0_{(m+2)N \times (m+2)N(\ell+1)}&A^{(\ell+1)}&0_{(m+2)N \times (m+2)N(L-\ell)}\\
            0_{(m+2)N(L+1) \times (m+2)N(\ell+1)}&0_{(m+2)N \times (m+2)N}&0_{(m+2)N(L+1)\times (m+2)N(L-\ell)}
        \end{bmatrix}
    \end{align*}
    where the block $A^{(\ell+1)}$ should be 
    $$A^{(\ell+1)} = \begin{bmatrix}
        0_{m\times m}&0_m&0_m\\
        0_m^\top&\cos\frac{2\pi}{k}&-\sin\frac{2\pi}{k}\\
        0_m^\top&\sin\frac{2\pi}{k}&\cos\frac{2\pi}{k}
    \end{bmatrix}\otimes I_{N}$$
    Consider some position $(i, j)\in [k]\times [N]$. The pre-attention weights from position $(i, j)$ are given by
    \begin{align*}
        {X^{(\ell)}}^\top W_{KQ}^{(\ell + 1)}X^{(\ell)}_{(i,j)} = \beta_{\ell + 1} \cdot\sum_{i'=1}^k \cos\qty(\frac{2\pi(i'-i-2^{\ell-1})}{k}) e_{k,i' + 2^{\ell-1}} \otimes e_{N, \hop_{i}^{2^\ell - 1}(j)}.
    \end{align*}
    Taking $\beta_{\ell + 1} \rightarrow \infty$, the attention weight from the $(i, j)$ position will concentrate on the $(i + 2^{\ell - 1}, \hop_i^{2^{\ell - 1}}(j))$ position. Setting the value matrix to be $W_{OV}^{(\ell + 1)} = e_{L+2, \ell + 3} e_{L+2, \ell + 2}^\top \otimes I_{(m+2)N \times (m+2)N}$ ensures that
    \begin{align*}
        \qty(W^{(\ell + 1)}_{OV}X^{(\ell)}\S({X^{(\ell)}}^\top W^{(\ell + 1)}_{KQ} X^{(\ell)}))_{i,j} &= \begin{bmatrix}
            0_{(m+2)N(\ell + 2)} \\
            p_{i + 2^{\ell-1} + 2^{\ell - 1} - 1} \otimes e_{N, \hop_{i + 2^{\ell-1}}^{2^{\ell - 1}}\qty(\hop_i^{2^{\ell - 1}}(j))}\\
            0_{(m+2)N(L - \ell - 1)} \\
        \end{bmatrix} \\
        &= \begin{bmatrix}
            0_{(m+2)N(\ell + 2)} \\
            p_{i + 2^{\ell} - 1} \otimes e_{N, \hop_i^{2^{\ell}}(j)}\\
            0_{(m+2)N(L - \ell - 1)} \\
        \end{bmatrix}
    \end{align*}
    Therefore $X^{(\ell + 1)}_{(i, j)}$ satisfies
    \begin{align*}
                (X^{(\ell + 1)})_{(i, j)} = \begin{bmatrix} p_{i} \otimes e_{N, j} \\ p_{i} \otimes e_{N, \sigma_i(j)} \\ p_{i} \otimes e_{N, \hop_i^1(j)} \\
        p_{i + 1} \otimes e_{N, \hop_i^2(j)} \\
        p_{i+ 3} \otimes e_{N, \hop_i^4(j)} \\
        \vdots\\
        p_{i + 2^{\ell-1} - 1} \otimes e_{N, \hop_i^{2^{\ell}}(j)} \\
        0_{kN(L - \ell - 1)} \end{bmatrix},
    \end{align*}
    as desired. Therefore by induction, the claim holds for $\ell = L-1$ and thus

        \begin{align*}
                (X^{(L)})_{(1, j)} = \begin{bmatrix} p_{1} \otimes e_{N, j} \\ p_{1} \otimes e_{N, \sigma_1(j)} \\ p_{1} \otimes e_{N, \hop_1^1(j)} \\
        p_{2} \otimes e_{N, \hop_1^2(j)} \\
        p_{4} \otimes e_{N, \hop_1^4(j)} \\
        \vdots\\
        p_{k} \otimes e_{N, \hop_1^{k}(j)} \end{bmatrix}
    \end{align*}
    To conclude, set the readout layer $\Psi$ to be
    \begin{align*}
        \Psi^\top = \begin{bmatrix}
            0_{N \times (m+2)N(L+1)}, & p_{k} \otimes I_{N \times N}.
        \end{bmatrix}
    \end{align*}
    The output of the transformer then satisfies
    \begin{align*}
        \qty(\Psi^\top \mathrm{TF}_\theta(X))_{(1, j)} = e_{N, \hop_1^{k}(j)} = e_{N, f_\pi(\sigma, j)},
    \end{align*}
    as desired.
\end{proof}

\section{SQ Lower Bound}\label{sec:lb-proofs}

\paragraph{Connection to Gradient Descent.} Consider a neural network $F_\theta : \mathcal{X} \rightarrow \mathbb{R}^{N}$ which outputs the logits for predicting $f_\pi(\sigma, x)$. The standard cross entropy loss is given by
\begin{align*}
    L(F_\theta) = \mathbb{E}_{\sigma, x} \qty[ \sum_{m \in [N]} \mathbf{1}(f_\pi(\sigma, x) = m) \log \S(F_\theta(\sigma, x))_m],
\end{align*}
and thus the gradient descent update on the population loss is
\begin{align*}
 \theta' \leftarrow \theta - \eta \mathbb{E}_{\sigma, x}\qty[\sum_{m \in [N]}\mathbf{1}(f_\pi(\sigma, x) = m)\nabla_\theta F_\theta(\sigma, x)_m] + \eta\mathbb{E}_{\sigma, x}\qty[\frac{\sum_{m \in [N]}\exp\qty(F_\theta(\sigma, x)_{m})\nabla_\theta F_\theta(\sigma, x)_{m}}{\sum_{m \in [N]}\exp\qty(F_\theta(\sigma, x)_{m})}].
\end{align*}
The first term is exactly a (vector-valued) SQ query with $g(\sigma, x, y) = \nabla_\theta F_\theta(\sigma, x)_{y}$, while the second normalization term is independent of the unknown target function $f_\pi$. We remark that this connection is still heuristic, due to the mismatch between i.i.d noise for gradient descent and adversarial noise in the SQ framework.

\paragraph{On the normalization.} The zero mean assumption is indeed without loss of generality, since for any query $g$, the mean-centered query $\bar g(\sigma, x, y) = g(\sigma, x, y) - \mathbb{E}_{\sigma'}[g(\sigma', x, y)]$ satisfies
\begin{align*}
    \mathbb{E}_{\sigma, x}\qty[\bar g(\sigma, x, f_\pi(\sigma, x))] &= \mathbb{E}_{\sigma, x}\qty[g(\sigma, x, f_\pi(\sigma, x))] - \mathbb{E}_{\sigma, x, \sigma'}\qty[g(\sigma', x, f_\pi(\sigma, x))]\\
    &= \mathbb{E}_{\sigma, x}\qty[g(\sigma, x, f_\pi(\sigma, x))] - \frac{1}{N^2}\sum_{x, y \in [N]}\mathbb{E}_{\sigma}\qty[g(\sigma, x, y)],
\end{align*}
The last term is independent of the hidden permutation $\pi$, and thus the queries $g$ and $\bar g$ reveal the same information about $\pi$.

\paragraph{Notation.} For $\pi \in (S_N)^k$, we will let $f_\pi(\sigma)$ denote the permutation $f_\pi(\sigma) = \sigma_k \circ \pi_k \circ \cdots \circ \sigma_1 \circ \pi_1$. We will additionally overload notation, to let $f_\pi(\sigma)$ refer to the corresponding $N \times N$ permutation matrix. Finally, for two matrices $A, B \in \mathbb{R}^{N \times N}$, we let $\langle A, B \rangle = \Tr(A^\top B)$ denote the standard matrix inner product.

Our first goal is to construct a large subset of $\mathcal{F}$, which are nearly orthogonal under the inner product
\begin{align*}
    \langle f_\pi, f_\rho \rangle := \mathbb{P}_{\sigma, x}\qty(f_\pi(\sigma, x) = f_\rho(\sigma, x)) - \frac{1}{N} = \frac{1}{N}\qty(\mathbb{E}_\sigma\qty[\langle f_\pi (\sigma), f_\rho(\sigma) \rangle] - 1).
\end{align*}
We first derive an explicit formula for the inner product between two permutation composition functions.

\begin{lemma}[Inner product between functions]\label{lem:near-orthogonal-permutation}
For two permutation composition functions $f_\pi, f_\rho \in \mathcal{F}$,
\begin{align*}
    \langle f_\pi, f_\rho \rangle =  \frac{1}{N(N-1)^{k-1}}\prod_{i=1}^k\qty(\langle \pi_k, \rho_k \rangle - 1),
\end{align*}
where $\langle \pi_k, \rho_k \rangle = \sum_{i \in [N]} \mathbf{1}(\pi_k(i) = \rho_k(i))$.
\end{lemma}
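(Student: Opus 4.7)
The plan is to reduce $\E_\sigma[\langle f_\pi(\sigma), f_\rho(\sigma) \rangle]$ to an explicit telescoping recursion in $k$ by integrating out one $\sigma_i$ at a time via a standard second-moment identity for random permutation matrices. Writing $c_i := \langle \pi_i, \rho_i \rangle$, it suffices to compute $T_k := \E_\sigma[\Tr(f_\pi(\sigma)^\top f_\rho(\sigma))]$ and use the hinted relation $\langle f_\pi, f_\rho\rangle = (T_k - 1)/N$.

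First I would apply the cyclic property of trace and the identity $\sigma_k^\top \sigma_k = I$ to cancel the outer pair, yielding
\[
\Tr(f_\pi(\sigma)^\top f_\rho(\sigma)) \;=\; \Tr\!\left(\pi_1^\top \sigma_1^\top \cdots \pi_{k-1}^\top \sigma_{k-1}^\top \pi_k^\top \rho_k\, \sigma_{k-1} \rho_{k-1} \cdots \sigma_1 \rho_1\right),
\]
so that $\sigma_{k-1}$ now appears only as a conjugation sandwich $\sigma_{k-1}^\top(\pi_k^\top \rho_k)\sigma_{k-1}$. The analytic ingredient is the identity
\[
\E_\sigma[\sigma^\top M \sigma] \;=\; \frac{\Tr M}{N}\,I \;+\; \frac{\1^\top M\1 - \Tr M}{N(N-1)}\,(J-I), \qquad J := \1\1^\top,
\]
which follows from $(\sigma^\top M \sigma)_{ij} = M_{\sigma(i),\sigma(j)}$ and the uniformity of $(\sigma(i),\sigma(j))$ over equal or distinct pairs.

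Applied to $M = \pi_k^\top \rho_k$, which is a permutation matrix (so $\Tr M = c_k$ and $\1^\top M \1 = N$), this gives $\E_{\sigma_{k-1}}[\sigma_{k-1}^\top \pi_k^\top \rho_k \sigma_{k-1}] = a_k I + \beta_k J$ with $a_k := (c_k-1)/(N-1)$ and $\beta_k := (N-c_k)/(N(N-1))$. Splitting on the $I$ versus $J$ terms, the $I$-contribution simply deletes the $k$th layer and reproduces the same computation with $k$ replaced by $k-1$, contributing $a_k T_{k-1}$. The $J$-contribution is where the main observation kicks in: because $P\1 = \1$ and $\1^\top P = \1^\top$ for every permutation matrix $P$, the inserted $J$ is absorbed by all remaining factors $\pi_i, \sigma_i, \rho_i$ on both sides, so the trace collapses to $\Tr J = N$, contributing $\beta_k N = (N - c_k)/(N-1) = 1 - a_k$, independent of $\pi_{1:k-1}, \rho_{1:k-1}$.

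Combining yields the one-line recursion $T_k = a_k T_{k-1} + (1 - a_k)$, or equivalently $T_k - 1 = a_k (T_{k-1} - 1)$. The base case $T_0 = \Tr I = N$ then iterates to $T_k - 1 = (N-1)\prod_{i=1}^k a_i = (N-1)^{-(k-1)} \prod_{i=1}^k (c_i - 1)$, and dividing by $N$ gives the claimed identity. The step requiring the most care is the $J$-collapse argument, which relies crucially on the bi-stochasticity of permutation matrices to make the $J$-term independent of the remaining inputs; once this is isolated the rest is straightforward algebra.
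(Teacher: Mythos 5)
Your proof is correct and uses essentially the same argument as the paper: both rest on the second moments of a uniform permutation matrix to integrate out one $\sigma_i$ at a time, producing the telescoping recursion $T_k - 1 = \frac{\langle \pi, \rho\rangle - 1}{N-1}\,(T_{k-1}-1)$. The only differences are cosmetic — you peel from the outer end (cancelling $\sigma_k$ and conjugating $\pi_k^\top\rho_k$ by $\sigma_{k-1}$, with the $J$-term absorbed by bi-stochasticity) whereas the paper peels from the inner end via $\E_{\sigma_1}\qty[\Tr(A\sigma_1 B\sigma_1^\top)]$ with $B=\pi_1\rho_1^\top$, but the underlying computation is identical.
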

\begin{proof}
    Let us overload notation, so that for a permutation $\pi_i$, we also let $\pi_i$ be the $N \times N$ permutation matrix. We see that
    \begin{align*}
        \langle f_\pi, f_\rho \rangle = \frac{1}{N}\qty(\mathbb{E}_\sigma\qty[\Tr(\sigma_k\pi_k\cdots\sigma_1\pi_1\rho_1^\top \sigma_1^\top \cdots \rho_k^\top \sigma_k^\top)] - 1)
    \end{align*}

    Let us first compute this expectation with respect to $\sigma_1$. 
    
    Define $A:= \rho_2^\top\sigma_2^\top \cdots \rho_k^\top \sigma_k^\top \sigma_k \pi_k \cdots \sigma_2\pi_2, B = \pi_1 \rho_1^\top$. We have that
    \begin{align*}
        \Tr(A\sigma_1 B \sigma_1^\top) = \sum_{a, b, c, d \in [N]} A_{ab}(\sigma_1)_{bc}B_{cd}(\sigma_1)_{ad}.
    \end{align*}
    Since $\sigma_1$ sampled uniformly at random from $S_N$, we have that
    \begin{align*}
        \mathbb{E}_{\sigma_1}\qty[(\sigma_1)_{bc}(\sigma_1)_{ad}] = \begin{cases}
            \frac{1}{N} & a = b, c = d \\
            0 & a = b, c \neq d \\
            0 & a \neq b, c = d \\
            \frac{1}{N(N-1)} & a \neq b, c \neq d
        \end{cases},
    \end{align*}
    where the last equality is because
    \begin{align*}
        \mathbb{E}_{\sigma_1}\qty[(\sigma_1)_{bc}(\sigma_1)_{ad}] &= \mathbb{P}(\sigma_1(a) = d, \sigma(b) = c)\\ &= \mathbb{P}(\sigma_1(a) = d)\cdot \mathbb{P}(\sigma_1(b) = c \mid \sigma_1(a) = d)\\
        &= \frac{1}{N(N-1)}.
    \end{align*}
    Altogether,
    \begin{align*}&
        \mathbb{E}_{\sigma_1}\qty[\Tr(A\sigma_1 B \sigma_1^\top)] = \frac{1}{N}\sum_{a, c}A_{aa}B_{cc} + \frac{1}{N(N-1)}\sum_{a \neq b, c \neq d} A_{ab}B_{cd}\\
        &= \frac{1}{N}\Tr(A)\Tr(B) + \frac{1}{N(N-1)}\qty(\sum_{a,b}A_{ab} - \Tr(A))(\sum_{c,d}B_{c,d} - \Tr(B))\\
        &= \frac{1}{N}\Tr(A)\Tr(B) + \frac{1}{N(N-1)}\qty(N - \Tr(A))(N - \Tr(B))\\
        &= \frac{(\Tr(A) - 1)(\Tr(B) - 1)}{N-1} + 1.
    \end{align*}
    where the second-to-last equality uses the property that $A, B$ are both permutation matrices, and thus $\sum_{a,b}A_{ab} = \sum_{c,d}B_{c,d} = N$. Plugging everything back in, we see that
    \begin{align*}
        &\mathbb{E}_{\sigma_1, \dots, \sigma_k}\qty[\Tr\qty(\sigma_k\pi_k\cdots\sigma_1\pi_1\rho_1^\top \sigma_1^\top \cdots \rho_k^\top \sigma_k^\top)] - 1\\
        &= \mathbb{E}_{\sigma_1, \dots, \sigma_k}\qty[\Tr(A\sigma_1 B \sigma_1^\top)] - 1\\
        &= \mathbb{E}_{\sigma_2, \dots, \sigma_k}\qty[\frac{(\Tr(A) - 1)(\Tr(B) - 1)}{N-1}]\\
        &= \frac{\langle \pi_1, \rho_1 \rangle - 1}{N-1}\cdot\qty(\mathbb{E}_{\sigma_2, \dots, \sigma_k}\qty[\Tr(A)] - 1)\\
        &= \frac{\langle \pi_1, \rho_1 \rangle - 1}{N-1}\cdot\qty(\mathbb{E}_{\sigma_2, \dots, \sigma_k}\qty[\Tr(\sigma_k\pi_k\cdots\sigma_2\pi_2\rho_2^\top \sigma_2^\top \cdots \rho_k^\top \sigma_k^\top)] - 1).
    \end{align*}
    Computing this quantity recursively, we get
    \begin{align*}
        &\mathbb{E}_{\sigma_1, \dots, \sigma_k}\qty[\Tr\qty(\sigma_k\pi_k\cdots\sigma_1\pi_1\rho_1^\top \sigma_1^\top \cdots \rho_k^\top \sigma_k^\top)] - 1\\
        &= \frac{\langle \pi_1, \rho_1 \rangle - 1}{N-1} \cdots \frac{\langle \pi_k, \rho_k \rangle - 1}{N-1} \cdot \qty(\Tr(I) - 1)\\
        &= \frac{1}{(N-1)^{k-1}}\prod_{i=1}^k\qty(\langle \pi_k, \rho_k \rangle - 1),
    \end{align*}
    and thus
    \begin{align*}
        \langle f_\pi, f_\rho \rangle = \frac{1}{N}\qty(\mathbb{E}_{\sigma_1, \dots, \sigma_k}\qty[\Tr\qty(\sigma_k\pi_k\cdots\sigma_1\pi_1\rho_1^\top \sigma_1^\top \cdots \rho_k^\top \sigma_k^\top)] - 1)\\ = \frac{1}{N(N-1)^{k-1}}\prod_{i=1}^k\qty(\langle \pi_k, \rho_k \rangle - 1).
    \end{align*}
\end{proof}

\begin{remark}For two cyclic permutation functions $f_\pi^{\cyc}, f_\rho^{\cyc}$, we define the inner product by
\begin{align*}
    \langle f_\pi^{\cyc}, f_\rho^{\cyc} \rangle := \mathbb{P}_{\sigma, i, x}\qty(f_\pi(\sigma, i, x) = f_\rho(\sigma, i, x)) - \frac{1}{N} = \frac{1}{k}\sum_{i=1}^k\qty(\frac{1}{N}\E_\sigma[\langle f^{(i)}_\pi(\sigma), f^{(i)}_\rho(\sigma)] - \frac{1}{N}),
\end{align*}
where we have overloaded notation to let $f^{(i)}(\sigma)$ denote the permutation $\sigma_{i + k - 1}\circ \pi_{i + k - 1} \circ \cdots \circ \sigma_i \circ \pi_i$, so that $f^{(i)}(\sigma)(x) = f(\sigma, i, x)$. By Lemma \ref{lem:near-orthogonal-permutation},
\begin{align*}
    \frac{1}{N}\E_\sigma[\langle f^{(i)}_\pi(\sigma), f^{(i)}_\rho(\sigma)\rangle] - \frac{1}{N} = \frac{1}{N(N-1)^{k-1}}\prod_{i=1}^k\qty(\langle \pi_k, \rho_k\rangle - 1)
\end{align*}
independent of the index $i$, and therefore $\langle f_\pi, f_\rho \rangle = \langle f_\pi^{\cyc}, f_\rho^{\cyc}\rangle$.
\end{remark}

We will next construct a subset of $\mathcal{F}$ of nearly orthogonal functions. By the above remark, this also corresponds to a subset of $\mathcal{F}^{\cyc}$ of nearly orthogonal functions.
\begin{lemma}[Nearly orthogonal subset]\label{lem:near-orth-set}
    Pick any $r \in [N]$. There exists a subset $\mathcal{F}_r \subset \mathcal{F}$ such that $\abs{\mathcal{F}_r} \ge (r!/4)^{k/2}$, and for any $f, f' \in \mathcal{F}_r$ with $f \neq f'$, $\abs{\langle f, f' \rangle} \le (2r/N)^{k/2}$.
\end{lemma}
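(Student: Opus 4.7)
\emph{Plan.} I prove the lemma by a two-step probabilistic construction. First I build a ``code-like'' subset $P \subset S_N$ of size $\gtrsim r!$ whose elements have small pairwise agreement, $\langle \pi, \rho \rangle \le r$ for all distinct $\pi, \rho \in P$. Second, I use $P$ as an alphabet and pack $k$-tuples from $P^k$ into a code $C$ of minimum Hamming distance greater than $k/2$. Then \Cref{lem:near-orthogonal-permutation} translates the Hamming-distance lower bound on $C$ into small inner products between the associated $f_\pi$'s, giving $\mathcal{F}_r = \{f_\pi : \pi \in C\}$.

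\emph{Building the alphabet $P$.} The key input is the classical tail bound $\mathbb{P}(\mathrm{fix}(\sigma) \ge s) \le 1/s!$ for uniform $\sigma \in S_N$, which follows from $\mathbb{E}[\binom{\mathrm{fix}(\sigma)}{s}] = 1/s!$ by counting $s$-subsets of fixed points. Since $\langle \pi, \rho \rangle = \mathrm{fix}(\pi \rho^{-1})$ and $\pi \rho^{-1}$ is uniform when $\pi, \rho$ are independent uniforms, drawing $M = (r+1)!$ i.i.d.~uniform permutations produces at most $\binom{M}{2}/(r+1)! \le M/2$ ``bad'' pairs (those with $\langle \pi, \rho\rangle \ge r+1$) in expectation. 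Deleting one endpoint of each bad pair yields $P \subset S_N$ with $|P| \ge (r+1)!/2$ and $\langle \pi, \rho \rangle \le r$ for all distinct $\pi, \rho \in P$.

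\emph{Packing tuples and bounding correlations.} Apply the same deletion argument to $P^k$: draw $M' \asymp (|P|/c)^{k/2}$ i.i.d.~uniform tuples for a suitable absolute constant $c$. For two independent uniform tuples the number of coordinate-agreements is $\mathrm{Bin}(k, 1/|P|)$, so a union bound gives $\mathbb{P}(\ge k/2 \text{ agreements}) \le 2^k/|P|^{k/2} = (4/|P|)^{k/2}$. With $c$ large enough the expected number of bad pairs is $\le M'/2$, and deletion produces $C \subset P^k$ with $|C| \ge (r!/4)^{k/2}$ in which any two distinct tuples differ in more than $k/2$ coordinates. For distinct $\pi, \rho \in C$ with $s < k/2$ agreements, \Cref{lem:near-orthogonal-permutation} and the alphabet property give
\[
|\langle f_\pi, f_\rho \rangle| \le \frac{(N-1)^s\, r^{k-s}}{N(N-1)^{k-1}} = \frac{r^{k-s}}{N(N-1)^{k-s-1}},
\]
which is increasing in $s$ when $r < N - 1$, hence maximized at $s = k/2 - 1$. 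Using $N - 1 \ge N/2$ yields $|\langle f_\pi, f_\rho\rangle| \le (r/N)(2r/N)^{k/2} \le (2r/N)^{k/2}$.

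\emph{Main obstacle.} The delicate step is constant tracking: the factor $1/4$ in $(r!/4)^{k/2}$ must absorb the compound loss from two independent rounds of probabilistic deletion together with the Hamming-ball volume estimate $\sum_{i \le k/2}\binom{k}{i}(|P|-1)^i \le (2|P|)^{k/2}$. Minor adjustments of the sampling sizes may be needed in edge regimes (very small $r$ or $k$); one may also assume $r \le N/2$ without loss of generality, since for larger $r$ the target bound $(2r/N)^{k/2}$ is vacuous. Overall, however, the structure of the argument is straightforward once the two deletion steps are set up correctly.
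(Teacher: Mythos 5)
Your proposal is correct and follows the same architecture as the paper's proof: both construct a set of roughly $r!$ permutations with pairwise agreement at most $r$, then a code in its $k$-fold product in which distinct tuples agree on fewer than (at most) $k/2$ coordinates, and conclude via the inner-product formula of Lemma~\ref{lem:near-orthogonal-permutation}, arriving at the same $(2r/N)^{k/2}$ and $(r!/4)^{k/2}$ bounds. The only difference is that you obtain the two packings by random sampling plus deletion (using $\mathbb{P}(\mathrm{fix}(\sigma)\ge s)\le 1/s!$ and a binomial agreement tail) rather than the paper's greedy constructions via derangement and Hamming-ball counting, and your constant tracking indeed closes (the regimes $r\le 2$ or very small $k$ are harmless since there $(r!/4)^{k/2}\le 1$, and $r\le N/2$ may be assumed as you note), so this substitution is immaterial.
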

\begin{proof}
    Let $\Pi_r$ be a maximal packing of $S_N$, such that for any $\pi, \pi' \in \Pi_r$, $\langle \pi, \pi' \rangle < r$. 
    
    For a fixed permutation $\pi$, let us first count the number of permutations satisfying $\langle \pi, \pi' \rangle = i$. There are $\binom{N}{i}$ choices for which $i$ elements are the same. Then, there are $D_{S-i}$ ways to assign the rest of the permutation, where $D_n$ counts the number of derangements on $n$ elements. Therefore the number of such $\pi'$ is $\binom{N}{i}D_{N-i} \le  \frac{N!}{i!(N-i)!}\cdot \qty(\frac{(N-i)!}{e} + 1) = \frac{N!}{e\cdot i!} + \binom{N}{i}$. In total, the number of permutations that agree with $\pi$ on more than $r$ elements is at most
    \begin{align*}
        \sum_{i = r + 1}^N\qty(\frac{N!}{e\cdot i!} + \binom{N}{i}) = \frac{N!}{e \cdot r!}\sum_{i=r + 1}\qty(\frac{r!}{i!} + \frac{e r!}{i!(N-i)!}) \le \frac{N!}{e\cdot r!}\sum_{i=r+1}^N\frac{1}{(i-r)!} \le \frac{N!}{r!}.
    \end{align*}
    Therefore if we construct $\Pi_r$ greedily, we get that $M := \abs{\Pi_r} \ge r!$.

    We next construct a maximal packing $\mathcal{I}$ of $[M]^k$, such that for any $\alpha, \alpha' \in \mathcal{I}$, $\alpha$ and $\alpha'$ agree on at most $k/2$ coordinates. For any fixed $\alpha \in \mathcal{I}$, the number of tuples $\alpha' \in \mathcal{I}$ which agree with $\alpha$ on at least $k/2$ coordinates is
    \begin{align*}
        \sum_{j \le k/2} \binom{k}{j}(M-1)^j \le (M-1)^{k/2}2^k.
    \end{align*}
    Therefore if we construct $\mathcal{I}$ greedily, we get $\abs{\mathcal{I}} \ge M^{k/2}2^{-k}$.

    We now construct the subset $\mathcal{F}_r$. Let the elements of $\Pi_r$ be $\pi^{(1)}, \pi^{(2)}, \dots, \pi^{(M)}$ in some order. Define $\mathcal{F}_r := \{f_{(\pi^{(\alpha_1)}, \dots, \pi^{(\alpha_k)})} :  \alpha \in \mathcal{I}\}$. Since any two elements of $\mathcal{I}$ differ in at least $k/2$ coordinates, and any two permutations in $\Pi_r$ have at most $r$ common values, for any $f, f' \in \mathcal{F}_r$, by \Cref{lem:near-orthogonal-permutation} we have 
    \begin{align*}
        \abs{\langle f, f' \rangle} \le \frac{1}{N(N-1)^{k-1}}(N-1)^{k/2}(r-1)^{k/2} \le \qty(\frac{r-1}{N-1})^{k/2} \le \qty(\frac{2r}{N})^{k/2}.
    \end{align*}
    Finally, we see that $\abs{\mathcal{F}_r} = \abs{\mathcal{I}} \ge (M/4)^{k/2} = (r!/4)^{k/2}$.
\end{proof}

An immediate corollary of \Cref{lem:near-orth-set} is a bound on the packing number of $\mathcal{F}$.

\begin{corollary}[Packing number]\label{cor:packing}
    There exists a subset $\mathcal{N} \subset \mathcal{F}$ of size $\log \abs{\mathcal{N}} \gtrsim kN\log N $ such that, for any $f \neq f' \in \mathcal{N},$ $\norm{f - f'}^2 \ge \frac12$.
\end{corollary}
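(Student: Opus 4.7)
The plan is to extract $\mathcal{N}$ directly from the family $\mathcal{F}_r$ produced by \Cref{lem:near-orth-set} by choosing $r$ to be a constant fraction of $N$. The first ingredient is to identify the Hilbert-space geometry behind the inner product in \eqref{eq:inner-product}: if we identify each $f \in \mathcal{F}$ with the mean-centered one-hot feature map $\psi_f(\sigma,x) = e_{N,f(\sigma,x)} - (1/N)\mathbf{1}_N$ in $L^2(\mathcal{X};\mathbb{R}^N)$, then a direct computation gives $\langle \psi_f,\psi_{f'}\rangle = \mathbb{P}_{\sigma,x}(f(\sigma,x)=f'(\sigma,x))-1/N$, matching \eqref{eq:inner-product}. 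In particular $\|f\|^2 = 1-1/N$ and
\begin{equation*}
\|f-f'\|^2 \;=\; 2(1-1/N) - 2\langle f,f'\rangle,
\end{equation*}
so a lower bound on $\|f-f'\|^2$ reduces to an upper bound on $\langle f,f'\rangle$.

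Next I would invoke \Cref{lem:near-orth-set} with $r = \lfloor N/8 \rfloor$ (subsuming any edge cases with small $N$ or $k$ into the implicit constant in $\gtrsim$). This yields a subfamily $\mathcal{F}_r \subset \mathcal{F}$ with $|\mathcal{F}_r| \ge (r!/4)^{k/2}$ and pairwise correlations bounded by $(2r/N)^{k/2} \le 4^{-k/2} \le 1/4$. Plugging into the identity above gives, for any distinct $f,f'\in\mathcal{F}_r$,
\begin{equation*}
\|f-f'\|^2 \;\ge\; 2 - 2/N - 1/2 \;\ge\; 1/2,
\end{equation*}
as soon as $N \ge 4$, establishing the distance lower bound required by the corollary.

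Finally I would bound $\log|\mathcal{F}_r|$ via Stirling: for $r = \lfloor N/8 \rfloor$ one has $\log(r!/4) \ge r\log r - r - \log 4 \gtrsim N\log N$, hence
\begin{equation*}
\log|\mathcal{F}_r| \;\ge\; \tfrac{k}{2}\log(r!/4) \;\gtrsim\; kN\log N.
\end{equation*}
Setting $\mathcal{N} := \mathcal{F}_r$ gives the desired packing. There is no real obstacle here beyond choosing the constant $r/N$ small enough to force $(2r/N)^{k/2} \le 1/4$ while keeping $r = \Omega(N)$ so that Stirling produces the $kN\log N$ rate; both constraints are easily satisfied by any $r = cN$ with $c < 1/2$, and $c = 1/8$ suffices.
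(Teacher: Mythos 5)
Your proposal is correct and follows essentially the same route as the paper: invoke \Cref{lem:near-orth-set} with $r = N/8$, convert the $(2r/N)^{k/2} \le 1/4$ correlation bound into $\norm{f-f'}^2 \ge 1/2$ via $\norm{f}^2 = 1-1/N$, and apply Stirling to get $\log\abs{\mathcal{F}_r} \ge \tfrac{k}{2}\log(r!/4) \gtrsim kN\log N$. Your explicit identification of the inner product with the mean-centered one-hot feature map is a nice clarification but not a different argument.
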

\begin{proof}
    Since $\norm{f}^2 = 1 - 1/N$ for all $f \in \mathcal{F}$, $\abs{\langle f, f' \rangle} \le \varepsilon$ implies that $\norm{f - f'}^2 \ge 2 - 2\varepsilon - 2/N \ge 1 - 2\varepsilon$. Select $\mathcal{N} = \mathcal{F}_r$ for $r = N/8$. Then $\abs{\langle f, f' \rangle} \le (1/4)^{k/2} \le \frac14$, so $\norm{f - f'}^2 \ge 1/2$. Finally, we have that $\log \mathcal{N} \ge k/2\cdot\log((N/8)!/4) \gtrsim kN\log N$, as desired.
\end{proof}

We can now use standard SQ arguments to conclude the proof of \Cref{thm:SQ}. The following follows the proof of Lemma 2 in \citet{damian2022neural} and Theorem 2 in \citet{szorenyi2009characterizing}.

\begin{proof}[Proof of \Cref{thm:SQ}]
We will begin by showing the lower bound for the regular permutation composition task $\mathcal{F}$, and later show how the proof can be adapted to yield the same lower bound for the cyclic task $\mathcal{F}^{\cyc}$.

We will show that there exist two functions $f, f' \in \mathcal{F}_r$ such that $\abs{\mathbb{E}_{\sigma,x}[g_l(\sigma, x, f(\sigma, x))]} \le \tau$, $\abs{\mathbb{E}_{\sigma,x}[g_i(\sigma, x, f'(\sigma, x))]} \le \tau$ for each query $g_l$ made by the learner. As such, an adversary can respond with 0 to each query, and the learner will be unable to distinguish between $f$ and $f'$. The learner must then incur a loss of at least
\begin{align*}
    \max\qty(\mathbb{P}_{\sigma, x}(\hat f(\sigma, x) \neq f(\sigma, x)), \mathbb{P}_{\sigma, x}(\hat f(\sigma, x) \neq f'(\sigma, x))) &\ge \frac12 \mathbb{P}_{\sigma, x}(f'(\sigma, x) \neq f(\sigma, x))\\
    &= \frac12\qty(1 - \frac1N - \langle f, f' \rangle)\\
    &= \Omega(1)
\end{align*}
on either $f$ or $f'$.

For the $i$th query $g_l$, define
\begin{align*}
    \mathcal{A}^+_l &:= \{f \in \mathcal{F}_r :  \mathbb{E}_{\sigma,x}[g_i(\sigma, x, f(\sigma, x))] \ge \tau\}\\
    \mathcal{A}^-_l &:= \{f \in \mathcal{F}_r :  \mathbb{E}_{\sigma,x}[g_i(\sigma, x, f(\sigma, x))] \le -\tau\}.
\end{align*}
Let us overload notation to let $g_l : (S_N)^k \rightarrow \mathbb{R}^{N \times N}$ be the mapping with $g_l(\sigma)_{x, y} := g_l(\sigma, x, y)$. Each query can be written as
\begin{align*}
     \mathbb{E}_{\sigma,x}[g_l(\sigma, x, f(\sigma, x))] = \frac{1}{N}\mathbb{E}_\sigma\qty[\langle g_l(\sigma), f(\sigma) \rangle],
\end{align*}
and thus we have that
\begin{align*}
    \abs{\mathcal{A}^+_l}^2 \tau^2 &\le \qty(\sum_{f \in \mathcal{A}^+_l} \mathbb{E}_{\sigma,x}[g_l(\sigma, x, f(\sigma, x))])\\
    &\le \qty(\frac{1}{N}\sum_{f \in \mathcal{A}^+_l} \mathbb{E}_\sigma\qty[\langle g_l(\sigma), f(\sigma) \rangle])^2\\
    &= \frac{1}{N^2}\mathbb{E}_\sigma\qty[\big\langle g_l(\sigma), \sum_{f \in \mathcal{A}^+_l}f(\sigma) \big\rangle]^2\\
    &= \frac{1}{N^2}\mathbb{E}_\sigma\qty[\big\langle g_l(\sigma), \sum_{f \in \mathcal{A}^+_l}\qty(f(\sigma) - \frac{1}{N}1_N1_N^\top \big\rangle)]^2,
\end{align*}
where the last line uses the fact that $g_l$ are mean zero. By Cauchy-Schwarz, we have
\begin{align*}
    \abs{\mathcal{A}^+_l}^2 \tau^2 &\le \frac{1}{N^2}\mathbb{E}_\sigma\norm{g_l(\sigma)}^2_F \cdot \mathbb{E}_\sigma\norm{\sum_{f \in \mathcal{A}^+_l}\qty(f(\sigma) - \frac{1}{N}1_N1_N^\top \big\rangle)}^2_F\\
    &\le \frac{1}{N}\sum_{f, f' \in \mathcal{A}^+_l} \mathbb{E}_\sigma\qty[\langle f(\sigma) - \frac{1}{N}1_N1_N^\top, f'(\sigma) - \frac{1}{N}1_N1_N^\top\rangle]\\
    &= \frac{1}{N}\sum_{f, f' \in \mathcal{A}^+_l} (\mathbb{E}_\sigma \qty[\langle f(\sigma), f'(\sigma) \rangle] - 1)\\
    &= \sum_{f, f' \in \mathcal{A}^+_l} \langle f, f' \rangle \\
    &\le\abs{\mathcal{A}^+_l} + (2r/N)^{k/2}(\abs{\mathcal{A}^+_l}^2 - \abs{\mathcal{A}^+_l}),
\end{align*}
where the second inequality uses $\E_\sigma\norm{g_l(\sigma)}_F^2 = \sum_{x,y \in [N]}\E_\sigma[g(\sigma, x, y)^2] = N$ from our choice of normalization. Altogether,
\begin{align*}
    \abs{\mathcal{A}^+_l} \le \frac{1}{\tau^2 - (2r/N)^{k/2}}.
\end{align*}
Similarly, $\abs{\mathcal{A}^-_l} \le \frac{1}{\tau^2 - (2r/N)^{k/2}}$. As such, the $i$th query can eliminate at most $\frac{2}{\tau^2 - (2r/N)^{k/2}}$ elements of $\mathcal{F}_r$, and thus the learner must make at least
\begin{align*}
    \abs{\mathcal{F}_r}\cdot\frac{\tau^2 - (2r/N)^{k/2}}{2} \ge (r!/4)^{k/2}\cdot\frac{\tau^2 - (2r/N)^{k/2}}{2}
\end{align*}
queries to recover $f$. Therefore we must either have tolerance $\tau^2 \le 2(2r/N)^{k/2}$, or make at least $\frac12(r!/4)^{k/2}\cdot (2r/N)^{k/2}$ queries. Choosing $r = \log N$, we see that we must have tolerance $$\tau^2 \le 2(2 \log N/N)^{-k/2} \lesssim \frac{\log^{k/2}N}{N^{k/2}},$$ or make at least $$\frac12\qty(\frac{(\log N)! \log N }{2N})^{k/2} \gtrsim N^{k/2 \cdot \log N}$$
queries.

We now consider the function class $\mathcal{F}^{\cyc}$. Let $\mathcal{F}^{\cyc}_r$ be the nearly orthogonal subset of $\mathcal{F}^{\cyc}$ constructed from Lemma \ref{lem:near-orth-set}. The queries are now of the form $g : \mathcal{X}^{\cyc} \times [N] \rightarrow \mathbb{R}$; to a query $g$ the SQ oracle returns a response $\hat q$ with $\abs{\hat q - \E_{\sigma, x, i}[g(\sigma, x, i, f(\sigma, x, i))]} \le \tau$. We assume that $\sum_{y \in N}\E_{\sigma, i, x}[g(\sigma, i, x, y)^2] = 1$, and $\E_\sigma[g(\sigma, i, x, y)] = 0$ for all $(i, x, y) \in [k] \times [N] \times [N]$. 

The proof proceeds identically to the non-cyclic case. On the $l$th query $g_l$, define
\begin{align*}
    \mathcal{A}^+_l &:= \{f \in \mathcal{F}^{\cyc}_r :  \mathbb{E}_{\sigma,x,i}[g_l(\sigma, x, i, f(\sigma, x, i))] \ge \tau\}\\
    \mathcal{A}^-_l &:= \{f \in \mathcal{F}^{\cyc}_r :  \mathbb{E}_{\sigma,x,i}[g_l(\sigma, x, i, f(\sigma, x, i))] \le -\tau\}.
\end{align*}
We will overload notation to let $g_l^{(i)}:(S_N)^k \rightarrow \mathbb{R}^{N \times N}$ be defined as $g_l^{(i)}(\sigma)_{x,y} := g_l(\sigma, x, i, y)$. We then see each query is of the form
\begin{align*}
    \mathbb{E}_{\sigma,x,i}[g_l(\sigma, x, i, f(\sigma, x, i))] = \frac{1}{Nk}\sum_{i=1}^k \mathbb{E}\qty[\langle g_l^{(i)}(\sigma), f^{(i)}(\sigma) \rangle].
\end{align*}
Then we can analogously bound
\begin{align*}
    \abs{\mathcal{A}^+_l}^2 \tau^2 &\le \qty(\sum_{f \in \mathcal{A}^+_l} \mathbb{E}_{\sigma,i, x}[g_l(\sigma, i, x, f(\sigma, i, x))])\\
    &\le \frac{1}{N^2k^2}\mathbb{E}_\sigma\qty[\sum_{i=1}^k\big\langle g_l^{(i)}(\sigma), \sum_{f \in \mathcal{A}^+_l}f^{(i)}(\sigma) \big\rangle]^2\\
    &= \frac{1}{N^2k^2}\mathbb{E}_\sigma\qty[\sum_{i=1}^k\big\langle g_l^{(i)}(\sigma), \sum_{f \in \mathcal{A}^+_l}\qty(f^{(i)}(\sigma) - \frac{1}{N}1_N1_N^\top )\big\rangle]^2,
\end{align*}
and thus by Cauchy-Schwarz
\begin{align*}
    \abs{\mathcal{A}^+_l}^2 \tau^2 \le N^{-2}k^{-2}\qty(\mathbb{E}_\sigma\qty[\sum_{i=1}^k \norm{g_l^{(i)}(\sigma)}_F^2]) \cdot \qty(\mathbb{E}_\sigma\qty[\sum_{i=1}^k \norm{\sum_{f \in \mathcal{A}^+_l}\qty(f^{(i)}(\sigma) - \frac{1}{N}1_N1_N^\top )}_F^2]).
\end{align*}
We see that
\begin{align*}
    \mathbb{E}_\sigma\qty[\sum_{i=1}^k \norm{\sum_{f \in \mathcal{A}^+_l}\qty(f^{(i)}(\sigma) - \frac{1}{N}1_N1_N^\top )}_F^2] &= \sum_{i=1}^k \sum_{f, f' \in \mathcal{A}^+_l}\E_\sigma\qty[\Big\langle f^{(i)}(\sigma) - \frac{1}{N}1_N1_N^\top, {f'}^{(i)}(\sigma) - \frac{1}{N}1_N1_N^\top \Big \rangle]\\
    &= \sum_{f, f' \in \mathcal{A}^+_l}\sum_{i=1}^k\qty(\E_\sigma[\langle f^{(i)}(\sigma), {f'}^{(i)}(\sigma)\rangle - 1])\\
    &= Nk \sum_{f, f' \in \mathcal{A}^+_l}\langle f, f' \rangle.
\end{align*}
Additionally,
\begin{align*}
    \mathbb{E}_\sigma\qty[\sum_{i=1}^k \norm{g_l^{(i)}(\sigma)}_F^2] = Nk\sum_y \E_{\sigma, i, x}[g_l^{(i)}(\sigma, i, x, y)^2] \le Nk.
\end{align*}
Combining these together, we have established $\abs{\mathcal{A}^+_l}^2 \tau^2 \le \sum_{f, f' \in \mathcal{A}^+_l}\langle f, f' \rangle$, an identical result as in the lower bound in the non-cyclic case. The remainder of the proof proceeds analogously.
\end{proof}

\section{Upper Bound: Analyzing the Gradient Dynamics}
\label{appendix:upper_bound}
Throughout this section, we simply denote $X:=X(\sigma),X_m:=X(\sigma^{(m)}),\hop_i^r(\sigma,j):=\hop_i^r(j)$ when $\sigma$ is clear from context for clarity. Note all the $\hop$ functions depend on the input $\sigma$.
\subsection{Architecture}
We use $L$-layer non-causal self-attention layers to learn this task, where $L = 1+\log_2k$. Define $X^{(0)}:=X$. We will let $X^{(\ell)}$ denote the output of the $\ell$th layer of the ground truth transformer which exactly computes the $k$-fold composition, defined in \Cref{thm:construction}. In particular, if $\{({W^{*,(\ell)}_{OV}}, W^{*,(\ell)}_{KQ})\}_{\ell \in [L]}$ are the weights from \Cref{thm:construction}, then
\begin{align*}
    X^{(\ell)}={}&X^{(\ell-1)}+f^{(\ell)}(X^{(\ell-1)}) \\
   f^{(\ell)}(X^{(\ell-1)}):={}& W^{*,(\ell)}_{OV}X^{(\ell-1)} \S({X^{(\ell-1)}}^\top W^{*,(\ell)}_{KQ}X^{(\ell-1)}).
\end{align*}
We refer to these $X^{(\ell)}$ as the \emph{ideal inputs} to each layer. Moreover, given some parameter vector $\theta := \{({W^{(\ell)}_{OV}}, W^{(\ell)}_{KQ})\}_{\ell \in [L]}$, define $\hat X^{(\ell)}$ to be the output of the $\ell$th layer of the transformer with parameters $\theta$ (where $\hat X^{(0)} = X^{(0)} = X$):
\begin{align*}
    {\hat X}^{(\ell)}={}&{\hat X}^{(\ell-1)}+f^{(\ell)}({\hat X}^{(\ell-1)}) \\
   f^{(\ell)}({\hat X}^{(\ell-1)}):={}& W^{(\ell)}_{OV}{\hat X}^{(\ell-1)} \S({{\hat X}^{(\ell-1)\top}} W^{(\ell)}_{KQ}{\hat X}^{(\ell-1)}).
\end{align*}
Throughout this section, we will consider $\theta$ to be an intermediate of \Cref{alg:training_alg}, and note that $\hat X^{(\ell)}$ depends implicitly on the stage of the layerwise training algorithm. We further assume the value matrices $W^{(\ell)}_{OV}$ are fixed to their ground truth values $W^{*,(\ell)}_{OV}$, which only updates the $\ell$th hop's block. In the end, we unembed the final output and predict the hop using a readout layer $\Psi_\ell$ for the $2^{\ell-1}$-th hop. We define the final output as $\mathrm{TF}_\theta(X):=\hat X^{(L)}.$

The population loss function for the $2^{\ell-1}$-th hop is the cross-entropy loss
\begin{equation}
        \mathcal{L}_{\mathcal{D}}^{(\ell)}(\theta) = -\E_{\sigma_1,\sigma_2,\cdots, \sigma_k,(i,j)}\qty[\sum_{s'\in[N]}\mathbf{1}\{s'=\hop^{2^{\ell-1}}_i(j)\}\log(\S(\Psi_\ell^\top \mathrm{TF}_\theta(X)_{(i,j)})_{s'})]
\end{equation}
where $\theta=\qty(W_{KQ}^{(1)},W_{KQ}^{(2)},\cdots, W_{KQ}^{(L)}, \Psi_1,\cdots,\Psi_L)$ are the trainable parameters.
For clarity, we ignore the subscripts of the expectation. If the loss is based on finite samples, we define 
\begin{equation}
        \mathcal{L}^{(\ell)}(\theta) = -\frac{1}{M}\sum_{m=1}^M\qty[\sum_{s'\in[N]}\mathbf{1}\{s'=\hop^{2^{\ell-1}}_i(j)\}\log(\S(\Psi_\ell^\top \mathrm{TF}_\theta(X_m)_{(i,j)})_{s'})]
\end{equation}

\subsection{Gradient Computation}
\label{subsec:grad_computation}
Formally, the model is initialized at $W_{KQ}^{(\ell)}(0) = 0_{d \times d}$ for all key-query matrices, and the readout/unembedding layer $\Psi_\ell(0)=\beta_0 e_{L+2,\ell+2}\otimes \mathbf{1}_{k}\otimes I_{N\times N}$ for small initialization scale $\beta_0$. We fix the value matrix for each layer as $W_{OV}^{(\ell)}=e_{L+2,\ell+2}e_{L+2,\ell+1}^\top\otimes I_{kN\times kN}$.

We expand the transformer into:
$$\mathrm{TF}_\theta(X)=X^{(0)}+\sum_{l=1}^{L}f^{(\ell)}(\hat X^{(\ell-1)}) $$
By the initialization of $\Psi_\ell$ and $W^{(\ell)}_{OV}$ and the definition of the embedding, we have $${(\Psi_{\ell'}^\top W_{OV}^{(\ell)})_{s'}}^\top=\beta_0\delta_{\ell\ell'} \cdot e_{L+2,\ell+1}\otimes \mathbf{1}_k \otimes e_{N,s'}.$$
where $\delta_{ij}=1$ iff $i=j$. That means the projected output $\Psi_\ell^\top f^{(\ell)}$ is non-zero only in stage $\ell$ when training on loss $\mathcal{L}^{(\ell)}$. Thus, we only need to consider the gradient of $W_{KQ}^{(\ell')}$ ($\ell'\le \ell$) in stage $\ell$ with the loss $\mathcal{L}^{(\ell)}$. 

\paragraph{Remark.} Note that $\nabla_{W_{KQ}^{(\ell')}}\mathcal{L}^{(\ell)}$ is \textbf{not exactly zero} for $\ell'<\ell$, because $X^{(\ell-1)}$ depends on $W^{(\ell')}_{KQ}$. However, since the $\ell'$th layer is already trained when $\ell'<\ell$, the softmax of that layer saturates. The Jacobian of the gradient is thus close to zero, making the update also close to zero and preserving the trained parameter $W^{(\ell')}_{KQ}$. It is formally proved in \Cref{appendix: gradient upper bound for trained layers}. For simplicity, we only consider the main parts of the update in each stage $\ell$, i.e. $\nabla_{W_{KQ}^{(\ell)}}\mathcal{L}^{(\ell)}$, in the following argument.

\paragraph{Population dynamics.} The population gradient of the loss of stage $\ell$ becomes:
\begin{align*}
    \nabla \mathcal{L}^{(\ell)}_{\mathcal{D}} = -\E\qty[\sum_{s'\in[N]}\qty(\mathbf{1}\{s'=\hop^{2^{\ell-1}}_i(j)\}-\S(\Psi_\ell^\top f^{(\ell)}(\hat X^{(\ell-1)})_{(i,j)})_{s'}) \nabla (\Psi_\ell^\top f^{(\ell)}(\hat X^{(\ell-1)})_{(i,j)})_{s'}]
\end{align*}
The model differential is 
    \begin{align*}
        \mathrm{d}f^{(\ell)}_{(i,j)}=  W^{(\ell)}_{OV}\hat X^{(\ell-1)}J^{(\ell)}(X, i, j)\hat X^{(\ell-1),\top}\mathrm{d} W_{KQ}^{(\ell)}\hat X^{(\ell-1)}_{(i,j)} + \mathrm{d}W^{(\ell)}_{OV}\hat X^{(\ell-1)} \S({\hat X^{(\ell-1)}}W^{(\ell)}_{KQ}X^{(\ell-1)\top}_{(i,j)}),
    \end{align*}
    where 
    \begin{align*}
    J^{(\ell)}(X, i, j)&:=(\diag(\S^{(\ell)}(X, i, j))-\S^{(\ell)}(X, i, j)(\S^{(\ell)}(X, i, j))^\top)\\
    \S^{(\ell)}(X, i, j)&:=\S({\hat X^{(\ell-1)\top}} W^{(\ell)}_{KQ}X^{(\ell-1)}_{(i,j)}).
    \end{align*}
The gradient of the loss $\mathcal{L}^{(\ell)}$ with respect to $W_{KQ}^{(\ell)}$ is
\begin{align*}
    &\nabla_{W_{KQ}^{(\ell)}} \mathcal{L}_{\mathcal{D}}^{(\ell)}(\theta)\\
    &=-\E\qty[\sum_{s'\in[N]}\qty(\mathbf{1}\{s'=\hop^{2^{\ell-1}}_i(j)\}-\S(\Psi_\ell^\top f^{(\ell)}(\hat X^{(\ell-1)})_{(i,j)})_{s'}) \nabla (\Psi_\ell^\top f^{(\ell)}(\hat X^{(\ell-1)})_{(i,j)})_{s'}]\\
    &=-\E\left[\sum_{s'\in[N]}\qty(\mathbf{1}\{s'=\hop_i^{2^{\ell-1}}(j)\}-\S(\Psi_\ell^\top f^{(\ell)}(\hat X^{(\ell-1)})_{(i,j)})_{s'}) \hat X^{(\ell-1)}J^{(\ell)}(X, i, j)\right. \\
    &\left.\quad \quad \quad \hat X^{(\ell-1)\top} {(\Psi_\ell^\top W_{OV}^{(\ell)})_{s'}}^\top {\hat X}_{(i,j)}^{(\ell-1)\top}\right]
\end{align*}
The gradient with respect to $\Psi_{\ell}$ is
\begin{align*}
    \nabla_{\Psi_{\ell}} \mathcal{L}^{(\ell)}_{\mathcal{D}}(\theta) &=-\E\qty[\sum_{s'\in[N]}\qty(\mathbf{1}\{s'=\hop^{2^{\ell-1}}_i(j)\}-\S(\Psi_\ell^\top f^{(\ell)}(\hat X^{(\ell-1)})_{(i,j)})_{s'}) \nabla_{\Psi_{\ell}} (\Psi_\ell^\top f^{(\ell)}(\hat X^{(\ell-1)})_{(i,j)})_{s'}]\\
    &=-\E\qty[\qty(e_{N,\hop^{2^{\ell-1}}_i(j)}-\S(\Psi_\ell^\top f^{(\ell)}(\hat X^{(\ell-1)})_{(i,j)})) (f^{(\ell)}(\hat X^{(\ell-1)})_{(i,j)})^\top].
\end{align*}

Throughout the proof in the following sections, we will want to compute "exact gradients", where we assume that $\hat X^{(\ell')} = X^{(\ell')}$ for all $\ell' < \ell$. 

Finally, we note that the initial output probabilities (after the $\softmax(\cdot)$) for the ideal input can be computed as
\begin{align*}
\mathcal{P}^{(\ell)}_{s'}(X,i,j):={}&\S(\Psi_\ell^\top f^{(\ell)}(X^{(\ell-1)})_{s'})\\
={}&\S(\Psi_\ell^\top (X^{(\ell-1)}+ W^{(\ell)}_{OV}X^{(\ell-1)} \S({X^{(\ell-1)}}^\top W^{(\ell)}_{KQ}X_{(i,j)}^{(\ell-1)})))_{s'}\\
={}&\S\qty(0+\frac{\beta_0}{kN} \cdot (e_{L+2,\ell+1}\otimes \mathbf{1}_k \otimes I_{N\times N})^\top X^{(\ell-1)}\mathbf{1}_{kN})_{s'}
\end{align*}
Given that $X^{(\ell-1)}$ are ideal inputs for each layer $\ell$, $(e_{L+2,\ell+1}\otimes \mathbf{1}_k \otimes I_{N\times N})^\top X^{(\ell-1)}\mathbf{1}_{kN}=k\1_N.$ That means the output probability $\mathcal{P}_{s'}(X,i,j)$ is $\frac1N$. We note that when the input indices $(i,j)$ are clear from context (e.g. when the single query $(i,j)$ is given), we ignore the $(X,i,j)$ in both probability vectors $\mathcal{P}^{(\ell)}(X,i,j)$ and $\S^{(\ell)}(X,i,j)$ as well as the Jacobian $J^{(\ell)}(X, i, j)$.

\subsection{Proof of Main Theorem}
Here we restate the main theorem.
\begin{theorem}[Guarantee for \Cref{alg:training_alg}] Assume $M\ge \Tilde{\Omega}(k^4N^6)$ and $\eta \ge \tilde\Omega(\frac{k^2N^3}{\beta_0}\log\frac{1}{\epsilon}).$ For any $\epsilon$ satisfying $0<\epsilon \log\frac{1}{\epsilon}\le \tilde{O}(\frac{1}{k^6N^6})$ with probability 0.99, the final output $\hat\theta$ of \Cref{alg:training_alg} satisfies that over any draw of the input permutation $\sigma$ and the query index $(i,j)$,
$$\sup_{\sigma,(i,j)}\norm{\S(\Psi_L^\top \mathrm{TF}_{\hat{\theta}}(X(\sigma))_{(i,j)})-e_{\hop^k_i(j)}}_\infty \le \epsilon$$
\end{theorem}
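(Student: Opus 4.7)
The plan is to prove Theorem~\ref{thm:main} by induction on the stage index $\ell \in [L]$, maintaining the invariant that after stage $\ell$, the transformer's intermediate representation $\hat X^{(\ell)}$ is uniformly $\epsilon_\ell$-close to the ideal representation $X^{(\ell)}$ from the construction in Theorem~\ref{thm:construction}, and that $\Psi_\ell^\top \mathrm{TF}_{\theta}(X)_{(i,j)}$ approximates $e_{N,\hop_i^{2^{\ell-1}}(j)}$ after softmax to within the stated $\epsilon$. Each inductive step decomposes into three subgoals: (i) identify the \emph{population} gradient of $\mathcal{L}_{\mathcal{D}}^{(\ell)}$ with respect to $W_{KQ}^{(\ell)}$ evaluated at the idealized inputs $X^{(\ell-1)}$, (ii) argue that the finite-sample gradient with $M$ samples concentrates around the population gradient using a union bound over the $(kN)^2$ coordinates plus a boundedness/subGaussianity argument, and (iii) show that the gradient on already-trained layers $W_{KQ}^{(\ell')}$ with $\ell'<\ell$ is negligible so that the invariants from previous stages are preserved.

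For Stage~1, I would follow the sketch in Section~\ref{subsec:stage_1}: since $W_{KQ}^{(1)}(0)=0$, the Jacobian $J^{(1)}=\frac{1}{kN}(I_{kN}-\frac{1}{kN}\mathbf{1}\mathbf{1}^\top)$ is explicit, and after working out the block structure of $X X^\top (\Psi_1^\top W_{OV}^{(1)})_{s'}^\top$ and substituting $s'=\hop_i^1(j)=\sigma_i\pi_i(j)$, conditioning on the query $(i,j)$ and taking expectations over the independent $\sigma_p$ for $p\neq i$ kills everything except an $e_{k,i}\otimes e_{N,\pi_i(j)}$ term in the top-left $kN\times kN$ block $A_1$. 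So the population gradient in the $A_1$ block is, up to constants, a sum over $(i,j)$ of rank-one matrices mapping $e_{k,i}\otimes e_{N,j}\mapsto e_{k,i}\otimes e_{N,\pi_i(j)}$, matching the construction. One then verifies that the gradient in the other blocks is of strictly smaller magnitude (by the same cancellation argument), and that after a single large step with learning rate $\eta\gtrsim\frac{k^2N^3}{\beta_0}\log(1/\epsilon)$, the resulting key-query matrix produces a softmax attention that puts mass $\ge 1-\epsilon'$ on the correct $(i,\pi_i(j))$ position for a suitable $\epsilon'=\tilde{O}(1/(k^2N^3))$. The gradient step on $\Psi_1$ then grows the readout in the direction $I_N-\frac{1}{N}\mathbf{1}\mathbf{1}^\top$, yielding the desired one-hot output after softmax.

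For the inductive Stage~$\ell$ with $\ell\ge 2$, I would invoke Lemmas~\ref{lemma: empirical gradient of the second stage} and \ref{lemma: empirical gradient of stage ell} (referenced in the sketch) to compute $\nabla_{W_{KQ}^{(\ell)}}\mathcal{L}_{\mathcal{D}}^{(\ell)}$ evaluated at the \emph{ideal} $X^{(\ell-1)}$. The same symmetry and independence argument as in Stage~1 should isolate, in the block aligned with position $(\ell+1)$ of the residual stream, a sum of rank-one matrices that send the $(i,j)$ token to attend to $(i+2^{\ell-2},\hop_i^{2^{\ell-2}}(j))$, which is exactly $W_{KQ}^{*,(\ell)}$ from the construction. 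Three technical items then need care. First, the gradient must actually be computed at $\hat X^{(\ell-1)}$, not $X^{(\ell-1)}$: I would bound the perturbation $\|\hat X^{(\ell-1)}-X^{(\ell-1)}\|_\infty$ by the inductive hypothesis and propagate it through the gradient formula using Lipschitzness of softmax. Second, the gradients on the already-trained layers $W_{KQ}^{(\ell')}$ must be shown to be $O(\epsilon)$, exploiting that $J^{(\ell')}$ saturates to nearly zero once $\S^{(\ell')}$ is near one-hot (this is the remark in Section~\ref{subsec:grad_computation}). Third, one needs the additive error bound $\|\hat X^{(\ell)}-X^{(\ell)}\|_\infty\le \ell\epsilon'$ carried through the layers, which follows by combining the per-stage error with the fact that each subsequent softmax attention is Lipschitz with constant $O(1)$ in the regime where it is well-saturated.

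The main obstacle will be the combined handling of finite-sample noise and error propagation at deep stages. Concretely, I need the finite-sample concentration error (scaling like $\sqrt{(kN)^2/M}$ up to bounded-factor constants) together with the accumulated input perturbation $\ell\epsilon'$ to be dominated by the signal in the correct coordinate of the gradient, which has magnitude $\Theta(\beta_0/(k^2N^2))$. This is exactly why the sample complexity $M\gtrsim \tilde\Omega(k^4N^6)$ and the smallness requirement $\epsilon\lesssim \tilde O(1/(k^2N^3))$ appear; tracing where each $k$ and $N$ factor comes from, and handling the $L=1+\log_2 k$ layers of propagation with polylogarithmic slack, is the delicate bookkeeping that ties the proof together. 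Once these bounds are in place, applying the invariants at $\ell=L$ and then executing the final $\Psi_L$ gradient step (analogous to Stage~1's $\Psi_1$ step) yields $\sup_{\sigma,(i,j)}\|\S(\Psi_L^\top \mathrm{TF}_{\hat\theta}(X(\sigma))_{(i,j)})-e_{\hop_i^k(j)}\|_\infty\le\epsilon$, completing the proof.
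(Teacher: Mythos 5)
Your proposal is correct and follows essentially the same route as the paper's proof: a stagewise induction in which the population gradient at idealized inputs recovers the constructed key-query matrix, finite-sample noise is controlled by entrywise concentration, gradients on already-trained layers are suppressed by softmax saturation, the perturbation $\|\hat X^{(\ell)}-X^{(\ell)}\|_\infty$ accumulates linearly in depth, and a final large step on $\Psi_L$ yields the stated guarantee. The remaining work is only the bookkeeping you already identify (tracking the $\poly(k,N)$ factors and the large-$\eta$ updates on trained layers), which the paper carries out in its stagewise lemmas.
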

\begin{proof} We provide an outline of the proof in this section.

\textbf{Stage 1.} We first prove that in \textbf{stage 1}, the first layer $W_{KQ}^{(1)}$ learns all the hidden permutations.

Using \Cref{lemma: empirical gradient of the first stage}, after the first step gradient we have for any $(i,j)$, the softmax probability satisfies 
    $$\S^{(1)}_{(i,\pi_i(j))}:=\S(X^{(0)}W_{KQ}^{(1)}X^{(0)}_{(i,j)}) \ge 1-\frac{1}{2}\epsilon.$$
We can then calculate the intermediate sequence $\hat{X}^{(1)}$:
\begin{align*}
    \hat{X}^{(1)} = X^{(0)} + W_{OV}^{(1)}X^{(0)}\S^{(1)} = X^{(1)} + W_{OV}^{(1)}X^{(0)}{(\S^{(1)}-\S^{(1)}_{\mathrm{ideal}})}.
\end{align*}
where $\S_{\mathrm{ideal}}$ is the ideal one-hot softmax attention score without the non-satuaration error. 
The ideal input sequence for the second layer should be
\begin{align*}
    X^{(1)} = \begin{pNiceArray}{ccc|c|ccc}
 e_{k,1}\otimes e_{N,1}&\cdots&e_{k,1}\otimes e_{N,N}&\cdots&e_{k,k}\otimes e_{N,1}&\cdots&e_{k,1}\otimes e_{N,N}\\
e_{k,1}\otimes e_{N,\sigma_1(1)}&\cdots& e_{k,1}\otimes e_{N,\sigma_1(N)}&\cdots&e_{k,k}\otimes e_{N,\sigma_k(1)}&\cdots&e_{k,k}\otimes e_{N,\sigma_k(N)}\\
e_{k,1}\otimes e_{N,\sigma_1\pi_1(1)}&\cdots& e_{k,1}\otimes e_{N,\sigma_1\pi_1(N)}&\cdots&e_{k,k}\otimes e_{N,\sigma_k\pi_k(1)}&\cdots&e_{k,k}\otimes e_{N,\sigma_k\pi_k(N)}\\
0_{(L-1)kN} &\cdots&0_{(L-1)kN}&\cdots&0_{(L-1)kN}&\cdots&0_{(L-1)kN}
\end{pNiceArray}
\end{align*}
By \Cref{lemma: perturbation for first stage}, $\|X^{(1)}-\hat{X}^{(1)}\|_\infty\le \epsilon$. By \Cref{lemma: readout for first layer}, we prove that after a large step of GD on $\Psi_1$ with $M\ge \tilde\Omega(k^4N^6)$ 
, there is a probability at least $1-0.01/L$ that the transformer learns the 1-hop with $\epsilon$ error, thus completing the proof for stage 1.

\textbf{Stage 2.} By \Cref{lemma: empirical gradient of the second stage} we have after the second stage, for any $(i,j)$ 
$$\S^{(2)}_{(i+1,\hop^1_i(j))}:=\S\qty((\hat{X}^{(1)})^\top W_{KQ}^{(2)}\hat{X}^{(1)}_{(i,j)}) \ge 1-\frac{1}{2}\epsilon.$$
The ideal input sequence for the third layer should be
\begin{align*}
X^{(2)} = \begin{pNiceArray}{ccc|c|ccc}
 e_{k,1}\otimes e_{N,1}&\cdots&e_{k,1}\otimes e_{N,N}&\cdots&e_{k,k}\otimes e_{N,1}&\cdots&e_{k,1}\otimes e_{N,N}\\
e_{k,1}\otimes e_{N,\sigma_1(1)}&\cdots& e_{k,1}\otimes e_{N,\sigma_1(N)}&\cdots&e_{k,k}\otimes e_{N,\sigma_k(1)}&\cdots&e_{k,k}\otimes e_{N,\sigma_k(N)}\\
e_{k,1}\otimes e_{N,\sigma_1\pi_1(1)}&\cdots& e_{k,1}\otimes e_{N,\sigma_1\pi_1(N)}&\cdots&e_{k,k}\otimes e_{N,\sigma_k\pi_k(1)}&\cdots&e_{k,k}\otimes e_{N,\sigma_k\pi_k(N)}\\
e_{k,2}\otimes e_{N,\hop^2_1(1)}&\cdots& e_{k,2}\otimes e_{N,\hop^2_1(1)}&\cdots&e_{k,1}\otimes e_{N,\hop^2_1(1)}&\cdots&e_{k,1}\otimes e_{N,\hop^2_k(N)}\\
0_{(L-2)kN} &\cdots&0_{(L-2)kN}&\cdots&0_{(L-1)kN}&\cdots&0_{(L-2)kN}
\end{pNiceArray}
\end{align*}
By \Cref{lemma: perturbation for second stage}, $\|X^{(2)}-\hat{X}^{(2)}\|_\infty\le 2\epsilon$. By \Cref{lemma: readout for second layer}, we prove that with another large step of GD with $M\ge \tilde{\Omega}(k^4N^6)$, there is a probability at least $1-0.01/L$ the transformer learns the 2-hop with $\epsilon$ error. Thus we finish the proof for stage 2.

\textbf{Stage $\ell$.} By \Cref{lemma: empirical gradient of stage ell}, after training the key-query matrix for layer $\ell$, we have 
$$\S^{(\ell)}_{(i+2^{\ell-2},\hop^{2^{\ell-2}}_i(j))}:=\S((\hat{X}^{(\ell-1)})^\top W_{KQ}^{(\ell)}\hat{X}^{(\ell-1)}_{(i,j)})_{(i+2^{\ell-2},\hop^{2^{\ell-2}}_i(j))} \ge 1-\frac{1}{2}\epsilon.$$
The ideal input for each column $(i,j)$ should be
\begin{align*}
    X^{(\ell)}_{(i,j)} = \begin{bmatrix}
        e_{k,i}\otimes e_{N,j}\\
        e_{k,i}\otimes e_{N,\sigma_i(j)}\\
        e_{k,i}\otimes e_{N,\hop^1_i(j)}\\
        e_{k,i+1}\otimes e_{N,\hop^2_i(j)}\\
        \vdots\\
        e_{k,i+2^{\ell-1}-1}\otimes e_{N,\hop^{2^{\ell-1}}_i(j)}\\
        0_{(L-\ell)kN}
    \end{bmatrix}   
\end{align*}
By \Cref{lemma: perturbation for stage ell}, $\|X^{(\ell)}-\hat{X}^{(\ell)}\|_\infty\le \ell\epsilon$. By \Cref{lemma: readout for ellth layer}, we prove that with another large step of GD with $M\ge \tilde\Omega(k^4N^6)$, there is a probability at least $1-0.01/L$ the transformer learns the $2^{\ell-1}$-hop with $\epsilon$ error. When $\ell = L$, the result implies that the transformer learns the $k$-fold composition task. The failure probability is upper bounded by 0.99 using union bound. Thus, we complete the proof.

\end{proof}

In the following sections, we provide detailed proofs for the supplementary lemmas for each stage.

\subsection{Stage 1: Learning the Hidden Permutations $\pi_i$}
We first consider the gradient of the first layer. We show that during the first stage of training, $W_{KQ}^{(1)}$ learns all the hidden permutations $\pi_i$. As a result, the first layer attention predicts the correct one-hop $\hop^1_i(j)=\sigma_i\pi_i(j)$ for all $(i,j)$. 

\begin{lemma}[Empirical gradient of $W^{(1)}_{KQ}$ learns all $\pi_i(\cdot)$ (Stage 1)]
\label{lemma: empirical gradient of the first stage} Let $W^{(\ell)}_{KQ}(0)=0_{d\times d}$ for all layers $l\in[L]$. After one-step of gradient descent on the first stage finite sample loss $\mathcal{L}^{(1)}$ with $M$ training sequences and learning rate $\eta$, satisfying  $\beta_0\le 1,$
    $M\gtrsim k^2N^4d^2\log^2\frac{d}{\delta},\eta \gtrsim \frac{k^2N^3\log \frac{kN}{\epsilon}}{\beta_0}$, then with probability $1-\delta$ we have for any $(i,j)\in[k]\times [N]$, $$\S^{(1)}_{(i,\pi_i(j))}:=\S(X^{(0)}W_{KQ}^{(1)}X^{(0)}_{(i,j)})_{(i,\pi_i(j))} \ge 1-\frac{1}{2}\epsilon.$$
    Furthermore, if we pick $\eta \gtrsim \frac{Ck^2N^3\log k\log \frac{kN}{\epsilon}}{\beta_0}$, we have
    $\S^{(1)}_{(i,\pi_i(j))}\ge 1-\frac{\epsilon}{2(kNL)^{CL}}$ for any absolute constant $C$.
\end{lemma}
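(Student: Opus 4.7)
The plan is to compute the population gradient $\nabla_{W_{KQ}^{(1)}}\mathcal{L}^{(1)}_{\mathcal{D}}$ in closed form at the zero initialization, isolate the single dominant ``signal'' term that encodes every hidden permutation $\pi_i$ into the top-left $kN\times kN$ block of $W_{KQ}^{(1)}$, and then verify that the signal survives the other three blocks and the sampling fluctuation after one gradient step, so that the softmax at every query $(i,j)$ concentrates on key $(i,\pi_i(j))$.

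First, I substitute $W_{KQ}^{(1)}(0)=0$ into the general gradient formula of Section~\ref{subsec:grad_computation}: $\S^{(1)}=\frac{1}{kN}\1$, $J^{(1)}=\frac{1}{kN}(I-\frac{1}{kN}\1\1^\top)$, $\mathcal{P}^{(1)}_{s'}=1/N$, and $(\Psi_1^\top W_{OV}^{(1)})_{s'}^\top=\beta_0\,e_{L+2,2}\otimes\1_k\otimes e_{N,s'}$. The $\mathcal{P}^{(1)}$ centering annihilates the rank-one part of the Jacobian and the result collapses to the outer-product form displayed in the paper's sketch. I then decompose this $d\times d$ matrix into its four nonzero $kN\times kN$ blocks $A_{11},A_{12},A_{21},A_{22}$ and evaluate each by conditioning on $(i,j)$ and integrating $\sigma$. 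In $A_{11}$, the summands with $p\neq i$ carry an independent $\sigma_p^{-1}$ that averages to $\frac{1}{N}\1_N$ and is killed by the per-block centering, leaving the signal $\sum_{i,j}(e_{k,i}\otimes e_{N,\pi_i(j)})(e_{k,i}\otimes e_{N,j})^\top$ of magnitude $\beta_0/(k^2N^2)$. A parallel calculation gives $A_{21}=0$ (via $\E_{\sigma_i}[e_{N,\sigma_i\pi_i(j)}]=\frac{1}{N}\1$) and $A_{12}=0$ (the surviving $p=i$ term averages to $0$ when $j$ is integrated out). For $A_{22}$ both factors involve $\sigma_i$; a direct computation gives $\E_{\sigma_i}[(e_{N,\sigma_i\pi_i(j)}-\frac{1}{N}\1)e_{N,\sigma_i(j)}^\top]=\frac{1}{N^2(N-1)}\1\1^\top-\frac{1}{N(N-1)}I$, which contributes a $\sigma$-dependent logit perturbation of size only $O(\beta_0/(k^2N^3))$---a full factor of $N$ below the $A_{11}$ signal.

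With $W_{KQ}^{(1)}(1)=-\eta\,\nabla\mathcal{L}^{(1)}_{\mathcal{D}}$, the logit $\ell_{(i',j')}=X_{(i',j')}^\top W_{KQ}^{(1)}(1) X_{(i,j)}$ then takes the clean form $\frac{\eta\beta_0}{k^2N^2}\1\{i'=i\}(\1\{j'=\pi_i(j)\}-\frac{1}{N})$ up to an $A_{22}$ perturbation of size $O(\eta\beta_0/(k^2N^3))$, so the margin between the target key $(i,\pi_i(j))$ and every competitor is at least $\frac{\eta\beta_0}{k^2N^2}(1-O(1/N))$. Standard softmax concentration over the $kN$ keys yields $\S^{(1)}_{(i,\pi_i(j))}\ge 1-\epsilon/2$ once the margin exceeds $\log(kN/\epsilon)$, which is the lemma's hypothesis on $\eta$. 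The strengthened version $1-\epsilon/(2(kNL)^{CL})$ follows by inflating $\eta$ by the factor $CL\log(kNL)$ supplied in the second part of the statement.

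Finally, the empirical gradient differs from the population gradient by an average of bounded matrix-valued random variables with entries of size $O(\beta_0/(kN))$. An entrywise Hoeffding bound plus a union bound over the $\le d^2$ nonzero entries of $W_{KQ}^{(1)}$ gives $\|\nabla\mathcal{L}^{(1)}-\nabla\mathcal{L}^{(1)}_{\mathcal{D}}\|_\infty\lesssim \frac{\beta_0}{kN}\sqrt{\log(d/\delta)/M}$ with probability $\ge 1-\delta$, and choosing $M\gtrsim k^2N^4d^2\log^2(d/\delta)$ makes the resulting per-logit fluctuation strictly smaller than the margin above, uniformly in $(i,j,\sigma)$ after an additional union bound. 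The main obstacle is the $A_{22}$ block: unlike the others it is neither zero nor block-diagonal in the $k$-index, so \emph{a priori} it could contaminate the logits with $i'\neq i$. Verifying that its structure reduces to a combination of $\1\1^\top$ and $I$ whose off-signal contribution is a factor $1/N$ below the $A_{11}$ signal is the only nontrivial content of the argument; the remainder is a routine softmax-margin computation paired with an off-the-shelf matrix concentration bound.
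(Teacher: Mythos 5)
Your overall route is the same as the paper's (population gradient at zero initialization, block decomposition, cancellation of the independent-$\sigma_p$ terms against the mean-centering, then a margin-plus-Hoeffding argument), but there is a genuine gap in your treatment of the $A_{22}$ block (the paper's $A_4$), and it is exactly the step you dismiss as a routine factor-$1/N$ perturbation. Your formula $\mathbb{E}_{\sigma_i}\big[(e_{N,\sigma_i\pi_i(j)}-\tfrac1N\mathbf{1}_N)e_{N,\sigma_i(j)}^\top\big]=\tfrac{1}{N^2(N-1)}\mathbf{1}\mathbf{1}^\top-\tfrac{1}{N(N-1)}I$ is only valid when $\pi_i(j)\neq j$; at a fixed point $\pi_i(j)=j$ the expectation is $\tfrac1N I-\tfrac{1}{N^2}\mathbf{1}\mathbf{1}^\top$, whose diagonal part is a factor $N$ larger. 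Averaging over $j$ produces the term $\tfrac{f(\pi_i)-1}{N(N-1)}I_N$ (with $f(\pi_i)$ the number of fixed points of $\pi_i$) that appears in the paper's $\mathbb{E}[A_4\,|\,i]$. Consequently, after one step the keys $(p,j')$ with $\sigma_p(j')=\sigma_{i}(j)$ receive a logit boost of order $\tfrac{\eta\beta_0}{kN}\cdot\tfrac{f(\pi_i)-1}{kN^2}$, which for permutations with many fixed points (e.g.\ $f(\pi_i)=N-2$, or the identity) is of the \emph{same} order as the $A_{11}$ signal at the correct key, not $O(\eta\beta_0/(k^2N^3))$ as you claim. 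The worst-case separation between the correct key $(i,\pi_i(j))$ and the strongest boosted competitor is therefore only $\Theta\!\big(\tfrac{\eta\beta_0}{k^2N^3}\big)$, a factor $N$ smaller than your stated margin $\tfrac{\eta\beta_0}{k^2N^2}(1-O(1/N))$; this is precisely why the lemma requires $\eta\gtrsim \tfrac{k^2N^3}{\beta_0}\log\tfrac{kN}{\epsilon}$ rather than the $k^2N^2$ scaling your computation would yield. As written, your argument would ``prove'' saturation under a learning rate that is too small by a factor of $N$, which fails, e.g., when some $\pi_i$ has $N-2$ fixed points.

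The fix is the case analysis the paper performs: compute the full logit profile including the $A_4$ contribution, distinguish $f(\pi_i)=N$ (where the correct key itself also receives the boost, since $\sigma_i\pi_i(j)=\sigma_i(j)$) from $f(\pi_i)\le N-2$ (where the boosted key $(i,j)$ is a genuine competitor), and verify a uniform separation of at least $\tfrac{1}{kN^2}$ in the normalized units, after which your Hoeffding/union-bound concentration step and the softmax-margin conclusion go through essentially as you wrote them (the sample-noise and strengthened-$\eta$ parts of your proposal match the paper).
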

\begin{proof}
We first compute the population gradient, and then do the finite sample analysis to bound the noise of the empirical gradient. For simplicity, we ignore the superscipt $^{(0)}$ of the input sequence and simply denote the input sequence as $X$ in the following calculation when it is clear from context.

Recall the gradient on the population loss is
\begin{align*}
    \nabla_{W_{OV}^{(1)}} \mathcal{L}_{\mathcal{D}}^{(1)} = -\E\qty[\sum_{s'\in[N]}\qty(\mathbf{1}\{s'=\sigma_i\pi_i(j)\}-\mathcal{P}^{(1)}_{s'}) XJ^{(1)} X^\top {\qty(\Psi_1^\top W_{OV}^{(1)})_{s'}}^\top {X_{(i,j)}}^\top]
\end{align*}
Since the input for the first layer is exactly the ideal input, we have the output probability
\begin{align*}
\mathcal{P}^{(1)}_{s'}:={}&\S(\Psi_1^\top f^{(1)}(X^{(0)})_{(i,j)})_{s'}\\
={}&\S(\Psi_1^\top (X^{(0)}+ W^{(1)}_{OV}X^{(0)} \S({X^{(0)}}^\top W^{(1)}_{KQ}X_{(i,j)}^{(0)})))_{s'}\\
={}&\S\qty(0+\frac{\beta_0}{kN} \cdot (e_{L+2,2}\otimes \mathbf{1}_k \otimes I_{N\times N})^\top X^{(0)}\mathbf{1}_{kN})_{s'}\\
={}&\S\qty(\frac{\beta_0}{N} \1_N)_{s'}=\frac{1}N.
\end{align*}
So the gradient becomes
\begin{align*}
    \nabla_{W^{(1)}_{KQ}} \mathcal{L}^{(1)}_{\mathcal{D}}(\theta) = -\E\qty[\sum_{s'\in[N]}\qty(\mathbf{1}\{s'=\sigma_i\pi_i(j)\}-\frac{1}{N}) XJ^{(1)} X^\top {\qty(\Psi_1^\top W_{OV}^{(1)})_{s'}}^\top X_{(i,j)}^\top]
\end{align*}

We first notice that the normalization term vanishes due to the Jacobian: 
\begin{align*}
    \E\qty[\sum_{s'\in[N]}\frac1N XJ X^\top {\qty(\Psi_1^\top W_{OV}^{(1)})_{s'}}^\top{X_{(i,j)}}^\top] = \E\qty[\frac1N XJ \mathbf{1}_{kN} {X_{(i,j)}}^\top]=0.
\end{align*}

Therefore, the idealized gradient equals to the signal term:
\begin{align*}
    \nabla_{W^{(1)}_{KQ}} \mathcal{L}^{(1)}_{\mathcal{D}}(\theta) &= -\E\qty[\sum_{s'\in[N]}\mathbf{1}\{s'=\sigma_i\pi_i(v)\} XJ X^\top {\qty(\Psi_1^\top W_{OV}^{(1)})_{s'}}^\top X_{(i,j)}^\top]\\
    &= -\frac{1}{kN}\E\qty[X\qty(I_{kN\times kN}-\frac{1}{kN}\mathbf{1}\mathbf{1}^\top) X^\top {\qty(\Psi_1^\top W_{OV}^{(1)})_{\sigma_{i}\pi_i(v)}}^\top X_{(i,j)}^\top].
\end{align*}

Now the input sequence in this stage is
\begin{align*}
    X^{(0)} = \begin{pNiceArray}{ccc|c|ccc}
e_{k,1}\otimes e_{N,1}&\cdots&e_{k,1}\otimes e_{N,N}&\cdots&e_{k,k}\otimes e_{N,1}&\cdots&e_{k,1}\otimes e_{N,N}\\
e_{k,1}\otimes e_{N,\sigma_1(1)}&\cdots& e_{k,1}\otimes e_{N,\sigma_1(N)}&\cdots&e_{k,k}\otimes e_{N,\sigma_k(1)}&\cdots&e_{k,k}\otimes e_{N,\sigma_k(N)}\\
0 &\cdots&0&\cdots&0&\cdots&0\\
\end{pNiceArray}
\end{align*}
The following term indicates which coordinate in each block corresponds to the hop label $\sigma_i\pi_i(j)$:
\begin{align*}
     &X^\top {\qty(\Psi_1^\top W_{OV}^{(1)})_{\sigma_{i}\pi_i(j)}}^\top\\ ={}& \begin{pNiceArray}{cc|c|ccc}
e_{k,1}\otimes e_{N,1}&\cdots&\cdots&e_{k,k}\otimes e_{N,1}&\cdots&e_{k,1}\otimes e_{N,N}\\
e_{k,1}\otimes e_{N,\sigma_1(1)}&\cdots&\cdots&e_{k,k}\otimes e_{N,\sigma_k(1)}&\cdots&e_{k,k}\otimes e_{N,\sigma_k(N)}\\
0 &\cdots&\cdots&0&\cdots&0\\
\end{pNiceArray}^\top \begin{bmatrix}
    0_{kN}\\
    \beta_0\1_{k}\otimes e_{N,\sigma_i(\pi_i(j))}\\
    0_{kNL}
\end{bmatrix}\\
     ={}&\beta_0\begin{pNiceArray}{c|c|c|c}
e_{N,\sigma_1^{-1}\sigma_i\pi_i(j)}^\top&e_{N,\sigma_2^{-1}\sigma_i\pi_i(j)}^\top&\cdots&e_{N,\sigma_k^{-1}\sigma_i\pi_i(j)}^\top
\end{pNiceArray}^\top
\end{align*}
Note that $\sigma_i$ is a permutation for each block $j\in [k]$, so there is only one position mapping to $\sigma_i\pi_i(j)$. Since permutation is invertible, the coordinate is $\sigma_j^{-1}\sigma_i\pi_i(j)$. 

With this expression, we can further simplify the gradient:
\begin{align*}
    \nabla \mathcal{L}_{\mathcal{D}}^{(1)}(\theta)
    &= -\frac{1}{kN}\E\qty[X\qty(I_{kN\times kN}-\frac{1}{kN}\mathbf{1}\mathbf{1}^\top) X^\top {\qty(\Psi_1^\top W_{OV}^{(1)})_{\sigma_{i}\pi_i(j)}}^\top {X_{(i,j)}}^\top]\\
    &= -\frac{\beta_0}{kN}\E\qty[X\qty(I_{kN\times kN}-\frac{1}{kN}\mathbf{1}\mathbf{1}^\top) \begin{pNiceArray}{c|c|c}
     e_{N,\sigma_1^{-1}\sigma_i\pi_i(j)}^\top&\cdots&e_{N,\sigma_k^{-1}\sigma_i\pi_i(j)}^\top\end{pNiceArray}^\top {X_{(i,j)}}^\top]\\
     &=-\frac{\beta_0}{kN}\E\qty[X\qty(\begin{pNiceArray}{c|c|c}
     e_{N,\sigma_1^{-1}\sigma_i\pi_i(j)}^\top&\cdots&e_{N,\sigma_k^{-1}\sigma_i\pi_i(j)}^\top\end{pNiceArray}^\top - \frac
    1N \mathbf{1}) {X_{(i,j)}}^\top]\\
    &=-\frac{\beta_0}{kN}\E\qty[\qty(\begin{bmatrix}
        \sum_{p=1}^k e_{k,p}\otimes e_{N,\sigma_p^{-1}\sigma_i\pi_i(j)}\\
        \sum_{p=1}^k e_{k,p}\otimes e_{N,\sigma_i\pi_i(j)}\\
        0_{kNL}
    \end{bmatrix}-\frac1N\begin{bmatrix}
        \mathbf{1}_{kN}\\
        \mathbf{1}_{kN}\\
        0_{kNL}
    \end{bmatrix}) \begin{bmatrix}
        e_{k,i}\otimes e_{N,j} \\
        e_{k,i}\otimes e_{N,\sigma_i(j)}\\
        0_{kNL}
    \end{bmatrix}^\top]\\
    &:=-\frac{\beta_0}{kN} \E\begin{bmatrix}
        A_1&A_2&0_{kN\times kNL}\\
        A_3&A_4&0_{kN\times kNL}\\
        0_{kNL\times kN}&0_{kNL\times kN}&0_{kNL\times kNL}
    \end{bmatrix}\tag{$A_1,\cdots, A_4\in \R^{kN\times kN}$.}
\end{align*}
Suppose $i$ is given. Since $\sigma_p$ are independent of $(i,j)$ and $\sigma_i$ for $p\not=i$, the expectation
$$\E\qty[(e_{k,p}\otimes e_{N,\sigma_p^{-1}\sigma_i\pi_i(j)} )\begin{bmatrix}
        e_{k,i}\otimes e_{N,v}\\
        e_{k,i}\otimes e_{N,\sigma_i(v)}
    \end{bmatrix}^\top\bigg|i,p\not=i] = \frac{1}{N^2}(e_{k,p}\otimes \mathbf{1}_{N})( e_{k,i}\otimes \mathbf
{1}_{N})^\top$$
and cancels with the normalization term introduced by the jacobian.

Meanwhile, the $i$-th block recovers the adajency matrix of the permutation $\pi_i$: 
$$\E\qty[(e_{k,i}\otimes e_{N,\sigma_i^{-1}\sigma_i\pi_i(j)})\begin{bmatrix}
        e_{k,i}\otimes e_{N,j}\\
        e_{k,i}\otimes e_{N,\sigma_i(j)}
    \end{bmatrix}^\top\bigg|i] = \frac1N\sum_{j=1}^N ( e_{k,i}\otimes e_{N,\pi_i(j)}) ( e_{k,i}\otimes e_{N,j})^\top$$
Therefore, we have the first row of the block matrices
$$\E\qty[A_1|i]=\frac1N\sum_{j=1}^N (e_{k,i}\otimes e_{N,\pi_i(j)}) (e_{k,i}\otimes e_{N,j})^\top-\frac{1}{N^2}(e_{k,i}\otimes \mathbf{1}_{N})(e_{k,i}\otimes \mathbf
{1}_{N})^\top, \ \E[A_2|i]=0.$$
The overall expectation of $A_1$ is
$$\E[A_1] = \frac1{kN}\sum_{i=1}^k\sum_{j=1}^N (e_{k,i}\otimes e_{N,\pi_i(j)}) (e_{k,i}\otimes e_{N,j})^\top-\frac{1}{kN^2}\sum_{i=1}^k(e_{k,i}\otimes \mathbf{1}_{N})(e_{k,i}\otimes \mathbf
{1}_{N})^\top.$$
Similarly, the first expectation of the second row is $\E[A_3]=0$. For $A_4$,
$$\E[A_4|i]=\sum_{p=1}^k\E[(e_{k,p}\otimes e_{N,\sigma_i\pi_i(j)})(e_{k,i}\otimes e_{N,\sigma_i(j)})^\top|i] -\frac{1}{N^2}(\1_k\otimes \mathbf{1}_{N})(e_{k,i}\otimes \mathbf{1}_{N})^\top$$
Note that given a fixed index $v\in[N]$, the expectation is $\frac{1}{N}I$ if $\pi_i(v)=v$, and $\frac{1}{N(N-1)}\mathbf{1}_N\mathbf{1}_N^\top-\frac 1{N(N-1)}I_N$ when $\pi_i(v)\not=v$. 
Suppose $f(\pi_i)$ is the number of fixed point of $\pi_i$, we have
$$\E\qty[A_4|i]=(\1_ke_{k,i}^\top)\otimes \qty(\frac{f(\pi_i)-1}{N(N-1)}I_N+\frac{N-f(\pi_i)}{N^2(N-1)}\1_N\1_N^\top)-\frac{1}{N^2}(\1_k\otimes \mathbf{1}_{N})(e_{k,i}\otimes \mathbf
{1}_{N})^\top.$$
The expectation for $A_4$ is 
$$\E\qty[A_4]=\frac{1}{k}\sum_{i=1}^k(\1_ke_{k,i}^\top)\otimes \qty(\frac{f(\pi_i)-1}{N(N-1)}I_N+\frac{N-f(\pi_i)}{N^2(N-1)}\1_N\1_N^\top)-\frac{1}{kN^2}(\1_k\otimes \mathbf{1}_{N})(\1_k\otimes \mathbf
{1}_{N})^\top.$$

Now we consider the empirical estimate of the gradient. We define the matrix
$$g_{1} = \E\begin{bmatrix}
        A_1&A_2&0_{kN\times kNL}\\
        A_3&A_4&0_{kN\times kNL}\\
        0_{kNL\times kN}&0_{kNL\times kN}&0_{kNL\times kNL}
    \end{bmatrix}$$
The empirical gradient can be written as
$\nabla_{W_{KQ}^{(1)}} \hat{\mathcal{L}}^{(1)} = -\frac{\beta_0}{kN}\hat{g}_1,$
where $\hat{g}_1$ is the empirical estimate of $g_1$:
$$\hat{g}_1 = \frac{kN}{\beta_0M}\sum_{m=1}^M \sum_{s'\in[N]}\mathbf{1}\{s'=\hop^{1}_{i_m}(j_m)\} X_mJ X_m^\top {\qty(\Psi_1^\top W_{OV}^{(1)})_{s'}}^\top X_{(i_m,j_m)}^\top.$$
It suffices to show that $\|\hat{g}_1-g_1\|_\infty$ is small, which is shown by the following lemma:
\begin{lemma}\label{lemma: concentration for stage 1}
    For any $\delta>0$, we have that with probability $1-\delta$,
    $$\|\hat{g}_1-g_1\|_\infty\lesssim \frac{\log(kN/\delta)}{\sqrt{M}}$$
\end{lemma}
\begin{proof}
First, the empirical gradient of the single sample $X_m$ and $(i_m,j_m)$ is
\begin{align*}
    \hat{g}_{1,m} &= \frac{kN}{\beta_0} \sum_{s'\in[N]}\mathbf{1}\{s'=\hop^{1}_{i_m}(j_m)\} X_mJ X_m^\top {\qty(\Psi_1^\top W_{OV}^{(1)})_{s'}}^\top X_{(i_m,j_m)}^\top\\
    &=X_m (I-\frac{1}{kN}\1\1^\top)X_m^\top (e_{L+2,2}\otimes \1_k\otimes e_{N,\hop^1_{i_m}(j_m)}) X_{(i_m,j_m)}^\top
\end{align*}
The upper bound of the single entry of the random variable is
\begin{align*}
    \|\hat{g}_{1,m}\|_\infty 
    &\le \left\|X_m (I-\frac{1}{kN}\1\1^\top)X_m^\top (e_{L+2,2}\otimes \1_k\otimes e_{N,\hop^1(j_m)})\right\|_\infty \|X_{(i_m,j_m)}\|_\infty\\
    &=\left\|X_m (I-\frac{1}{kN}\1\1^\top)X_m^\top (e_{L+2,2}\otimes \1_k\otimes e_{N,\hop^1(j_m)})\right\|_\infty\tag{$\|X_{(i_m,j_m)}\|_\infty=1.$}\\
    &=\left\|\qty[\qty(\begin{bmatrix}
        \sum_{j=1}^k e_{k,j}\otimes e_{N,\sigma_j^{-1}\hop^1(v_m)}\\
        \sum_{j=1}^k e_{k,j}\otimes e_{N,\hop^1(j_m)}\\
        0_{kNL}
    \end{bmatrix}-\frac1N\begin{bmatrix}
        \mathbf{1}_{kN}\\
        \mathbf{1}_{kN}\\
        0_{kNL}
    \end{bmatrix})]\right\|_\infty\le 1.
\end{align*}
Then we can concentrate $\hat{g}_1 = \frac{1}{M}\sum_{m=1}^M \hat{g}_{1,m}$ as:
\begin{align*}
    \norm{\hat{g}_1-g_1}_\infty &= \norm{\frac{1}{M}\sum_{m=1}^M \hat{g}_{1,m}-\E\hat{g}_{1}}\lesssim \frac{\log(d/\delta)}{\sqrt{M}}
\end{align*}
with probability $1-\frac{\delta}{d^2}$. Union bounding over all entries of $\hat{g}_1$ and we have the desired result since $d = O(kN\log k)$.
\end{proof}

After the first step gradient, we get $W_{KQ}^{(1)}=\eta \hat{g}_1$. Now we compute the probability output of the softmax attention.
Given a sample sequence $X_m$ and index $(i_m,j_m)$, we have the attention score
\begin{align*}
    \eta X_m^\top \hat{g}_1 X_{(i_m,j_m)} &= \frac{\beta_0\eta}{kN}\qty[X_m^\top {g}_1 X_{(i_m,j_m)} - X_m^\top \underbrace{(\hat{g}_1-g_1)}_{\Delta g_1} X_{(i_m,j_m)}]
\end{align*}
Note that the population attention score is
\begin{align*}
    X_m^\top {g}_1 X_{(i_m,j_m)}&=X_m^\top\E\begin{bmatrix}
        A_1&A_2&0_{kN\times kNL}\\
        A_3&A_4&0_{kN\times kNL}\\
        0_{kNL\times kN}&0_{kNL\times kN}&0_{kNL\times kNL}
    \end{bmatrix}\begin{bmatrix}
        e_{k,i_m}\otimes e_{N,j_m}\\
        e_{k,i_m}\otimes e_{N,\sigma_{i_m}(j_m)}\\
        0_{kNL}
    \end{bmatrix} \\
    &=X_m^\top \begin{bmatrix}
        \frac{1}{kN}\qty(e_{k,i_m}\otimes e_{N,\pi_{i_m}(j_m)}-\frac{1}{N}e_{k,i_m}\otimes \1_N)\\
        \frac{f(\pi_i)-1}{kN(N-1)}\qty(\1_k\otimes e_{N,\sigma_{i_m}(j_m)}-\frac{1}{N}\1_k\otimes \1_N)\\
        0_{kNL}
    \end{bmatrix}
\end{align*}
Define the population/empirical separation between the correct position and the others as follows:
$$\Delta^{(1)}_{i_m,j_m} = \qty(X_m^\top {g}_1 X_{(i_m,j_m)})_{(i_m,\pi_{i_m}(j_m))}-\max_{p\not=(i_m,\pi_{i_m}(j_m))}\qty(X_m^\top {g}_1 X_{(i_m,j_m)})_p.$$
$$\hat{\Delta}^{(1)}_{i_m,j_m} = \qty(X_m^\top \hat{g}_1 X_{(i_m,j_m)})_{(i_m,\pi_{i_m}(j_m))}-\max_{p\not=(i_m,\pi_{i_m}(j_m))}\qty(X_m^\top \hat{g}_1 X_{(i_m,j_m)})_p.$$
If $\pi_{i_m}(v)=v$ for all $v\in[N]$, i.e. $f(\pi_{i_m})=N$, we have the $(i,j)$-th entry of the pre-softmax attention score
\begin{align*}
    \qty(X_m^\top {g}_1 X_{(i_m,j_m)})_{(i,j)} = \begin{cases}
        \frac{2N-2}{kN^2},&i=i_m, j=j_m\\
        -\frac{2}{kN^2},&i=i_m,\ j\not=j_m\\
        -\frac{1}{kN^2},&i\not=i_m, \sigma_i(j)\not=\sigma_{i_m}(j_m)\\
        \frac{N-1}{kN^2},&i\not=i_m, \sigma_i(j)=\sigma_{i_m}(j_m)\\
    \end{cases}
\end{align*}
So $\Delta^{(1)}_{i_m,j_m}\ge \frac{1}{kN^2}$ by $N\ge 2$.

If not all $v$ satisfies $\pi_{i_m}(v)=v$, $f(\pi_i)\le N-2.$ We have the $(i,j)$-th entry of the pre-softmax attention score
\begin{align*}
    \qty(X_m^\top {g}_1 X_{(i_m,j_m)})_{(i,j)} = \begin{cases}
        \frac{N-1}{kN^2},&i=i_m,j=\pi_{i_m}(j_m)\\
        \frac{f(\pi_{i_m})-1}{kN^2},&\sigma_i(j)=\sigma_{i_m}(j_m)\\
        -\frac{1}{kN^2}-\frac{f(\pi_{i_m})-1}{kN^2(N-1)},&i=i_m,\ j\not=j_m,\pi_{i_m}(j_m)\\
        -\frac{1}{kN^2},&i\not=i_m, \sigma_i(j)\not=\sigma_{i_m}(j_m)
    \end{cases}
\end{align*}
So $\Delta^{(1)}_{i_m,j_m}\ge \frac{1}{kN^2}$ by $N\ge 2$ for both cases. Therefore, we have the expected signal at least $\frac{1}{kN^2}$.

Now we bound the noise introduced by $\Delta g_1$. Since $\|\Delta g_1\|_\infty \lesssim \frac{\log \frac{d}{\delta}}{\sqrt{M}}$, we have
$$\left\|X_m^\top {\Delta g_1} X_{(i_m,j_m)}\right\|_\infty \lesssim \frac{d\log \frac{d}{\delta}}{\sqrt{M}}\le \frac{1}{2kN^2}$$
by $M\gtrsim k^2N^4d^2\log^2\frac{d}{\delta}$. Therefore, the noise term can be bounded and the empirical separation $\hat{\Delta}^{(1)}_{i_m,j_m}\ge \frac{1}{2kN^2}$. After one step gradient with learning rate $\eta \gtrsim \frac{k^2N^3}{\beta_0}\log\frac{kN}{\epsilon}$, the softmax output of the correct position can be lower bounded by
\begin{align*}
    \S(X_m^\top W_{KQ}^{(1)} X_{(i_m,j_m)})_{i_m,\pi_{i_m}(j_m)} \ge \frac{\exp\qty(\frac{\eta\beta_0}{kN}\cdot \hat{\Delta}^{(1)}_{i_m,j_m})}{\exp\qty(\frac{\eta\beta_0}{kN}\cdot \hat{\Delta}^{(1)}_{i_m,j_m})+kN-1}\ge 1-\frac{1}{2}\epsilon.
\end{align*}
If we increase the learning rate with $2C\log k$ times, we can accordingly get
$\S^{(1)}_{(i,\pi_i(j))}\ge 1-\frac{\epsilon}{2(kNL)^{CL}}$ for some absolute constant $C$.
\end{proof}

\paragraph{Perturbation analysis of the output} After learning the first layer, we analyze the perturbation that non-saturation of the softmax prediction introduced to the ideal output.
\begin{lemma}[Perturbation analysis of stage 1]\label{lemma: perturbation for first stage}
    Under the conditions of \Cref{lemma: empirical gradient of the first stage}, we have
    $$\|X^{(1)}-\hat{X}^{(1)}\|_\infty\le \epsilon,$$
    where $X^{(1)}$ is the ideal output with saturated softmax, and $\hat{X}^{(1)}$ is the true transformer output.
\end{lemma}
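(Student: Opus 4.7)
The plan is to exploit the fact that $X^{(1)}$ and $\hat X^{(1)}$ differ only through the softmax of the first layer, since the residual stream $X^{(0)}$ and the value matrix $W^{(1)}_{OV}$ are identical in both objects. Concretely, I would write
\begin{align*}
    X^{(1)} - \hat X^{(1)} = W^{(1)}_{OV} X^{(0)} \bigl(\S^{(1)}_{\mathrm{ideal}} - \S^{(1)}\bigr),
\end{align*}
where $\S^{(1)}_{\mathrm{ideal}}$ is the column-stochastic matrix whose $(i,j)$-th column is the one-hot vector $e_{kN,(i,\pi_i(j))}$ and $\S^{(1)}$ is the true post-training softmax. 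Thus it suffices to bound, for each query column $(i,j)$,
\begin{align*}
    \bigl\|W^{(1)}_{OV} X^{(0)} \bigl(\S^{(1)}_{\mathrm{ideal},(i,j)} - \S^{(1)}_{(i,j)}\bigr)\bigr\|_\infty.
\end{align*}

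Next, I would apply the matrix--vector bound $\|Av\|_\infty \le \|A\|_{\max}\, \|v\|_1$, where $\|A\|_{\max}$ denotes the largest absolute entry. By construction $W^{(1)}_{OV}=e_{L+2,3}e_{L+2,2}^\top\otimes I_{kN\times kN}$, so $W^{(1)}_{OV}X^{(0)}$ only permutes and zero-pads the rows of $X^{(0)}$. Because the embedding function $\phi$ takes values in $\{0,1\}$, we get $\|W^{(1)}_{OV} X^{(0)}\|_{\max} \le 1$, and the perturbation reduces to controlling the total variation distance between $\S^{(1)}_{(i,j)}$ and a Dirac mass.

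Finally, I would invoke Lemma~\ref{lemma: empirical gradient of the first stage}, which guarantees that $\S^{(1)}_{(i,\pi_i(j))}\ge 1-\tfrac{1}{2}\epsilon$ uniformly over $(i,j)$, and hence
\begin{align*}
    \bigl\|\S^{(1)}_{\mathrm{ideal},(i,j)} - \S^{(1)}_{(i,j)}\bigr\|_1
    \;=\; \bigl(1-\S^{(1)}_{(i,\pi_i(j))}\bigr) + \sum_{p\neq (i,\pi_i(j))}\S^{(1)}_p
    \;\le\; 2\bigl(1-\S^{(1)}_{(i,\pi_i(j))}\bigr) \;\le\; \epsilon.
\end{align*}
Combining the two bounds and taking a supremum over columns yields the claim $\|X^{(1)}-\hat X^{(1)}\|_\infty\le \epsilon$. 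There is no real obstacle here; the only subtlety is to confirm that the residual $X^{(0)}$ contributes identically to both sides (so it cancels), and to use the entrywise $\|\cdot\|_{\max}$--$\|\cdot\|_1$ H\"older bound rather than an operator-norm bound, since the statement is in entrywise $\ell_\infty$.
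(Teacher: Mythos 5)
Your proposal is correct and matches the paper's own argument: the same decomposition $\hat X^{(1)} = X^{(1)} + W^{(1)}_{OV}X^{(0)}(\S^{(1)}-\S^{(1)}_{\mathrm{ideal}})$, followed by the entrywise bound $\|W^{(1)}_{OV}X^{(0)}\|_{\max}\le 1$ paired with the $\ell_1$ (total-variation) bound $2\bigl(1-\S^{(1)}_{(i,\pi_i(j))}\bigr)\le\epsilon$ from Lemma~\ref{lemma: empirical gradient of the first stage}. No gaps.
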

\begin{proof}
After the first stage, the ideal input sequence for the second layer should be
\begin{align*}
    X^{(1)} = \begin{pNiceArray}{ccc|c|ccc}
 e_{k,1}\otimes e_{N,1}&\cdots&e_{k,1}\otimes e_{N,N}&\cdots&e_{k,k}\otimes e_{N,1}&\cdots&e_{k,1}\otimes e_{N,N}\\
e_{k,1}\otimes e_{N,\sigma_1(1)}&\cdots& e_{k,1}\otimes e_{N,\sigma_1(N)}&\cdots&e_{k,k}\otimes e_{N,\sigma_k(1)}&\cdots&e_{k,k}\otimes e_{N,\sigma_k(N)}\\
e_{k,1}\otimes e_{N,\sigma_1\pi_1(1)}&\cdots& e_{k,1}\otimes e_{N,\sigma_1\pi_1(N)}&\cdots&e_{k,k}\otimes e_{N,\sigma_k\pi_k(1)}&\cdots&e_{k,k}\otimes e_{N,\sigma_k\pi_k(N)}\\
0_{(L-1)kN} &\cdots&0_{(L-1)kN}&\cdots&0_{(L-1)kN}&\cdots&0_{(L-1)kN}
\end{pNiceArray}
\end{align*}
However, we should analyze the perturbation of the empirical sequence output $\hat{X}^{(1)}$ for the next layer analysis. 
Note that $X^{(1)} = X^{(0)} + W_{OV}^{(1)}X^{(0)}\S^{(1)}_{\mathrm{ideal}}$, where $\S^{(1)}_{\mathrm{ideal}}$ is the ideal one-hot softmax attention pattern. The empirical output $\hat{X}^{(1)}$ has some error introduced by the non-saturation of the softmax
\begin{align*}
    \hat{X}^{(1)} = X^{(0)} + W_{OV}^{(1)}X^{(0)}\S^{(1)} = X^{(1)} + W_{OV}^{(1)}X^{(0)}\underbrace{(\S^{(1)}-\S^{(1)}_{\mathrm{ideal}})}_{\Delta \S^{(1)}}.
\end{align*}
By \Cref{lemma: empirical gradient of the first stage}, the correct entry of the softmax probability vector $\S^{(1)}$ is greater than $1-\frac{1}{2}\epsilon$ for all indices $(i,j)$. And other probabilities has $\epsilon$ error in total, and they are all positive. Note that $\|X^{(0)}\|_\infty\le 1$. As a result, the error of the input can be bounded as
$$\|X^{(1)}-\hat{X}^{(1)}\|_\infty  = \max_{s,(i,j)}\qty|(W_{OV}^{(1)}X^{(0)})_s^\top \Delta\S^{(1)}_{(i,j)}|\le \|X^{(0)}\|_\infty\cdot 2\cdot\frac{\epsilon}{2} \le \epsilon.$$
So we conclude the proof.
\end{proof}

At the end of Stage 1, we further train the readout layer with one gradient step to output the correct 1-hop for each position. 
\begin{lemma}\label{lemma: readout for first layer}
    Under the conditions of \Cref{lemma: empirical gradient of the first stage}, after one gradient step on $\Psi_1$ we have
    \begin{align*}
    \sup_{\sigma,(i,j)}\norm{\S(\Psi_1^\top f^{(1)}(X)_{(i,j)})-e_{\hop^1_i(j)}}_\infty\le \epsilon.
\end{align*}
\end{lemma}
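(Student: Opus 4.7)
The plan is to compute the population gradient of $\mathcal{L}^{(1)}_{\mathcal{D}}$ with respect to $\Psi_1$ at the end of stage 1, identify its dominant direction, concentrate the empirical gradient, and then verify that the resulting $\Psi_1$ produces a near one-hot softmax output.

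First, by \Cref{lemma: perturbation for first stage} the post-stage-1 attention output satisfies $f^{(1)}(X)_{(i,j)} = e_{L+2,3}\otimes e_{k,i}\otimes e_{N,\hop^1_i(j)} + \xi^{(1)}_{i,j}$ with $\|\xi^{(1)}_{i,j}\|_\infty \le \epsilon$, and (since $W^{(1)}_{OV}$ writes only to the third slab of the residual stream) $\xi^{(1)}_{i,j}$ also lives in that slab. Because $\Psi_1(0) = \beta_0\, e_{L+2,3}\otimes \1_k \otimes I_N$ with $\beta_0\le 1$, the initial logits satisfy $\|\Psi_1(0)^\top f^{(1)}(X)_{(i,j)}\|_\infty = O(k\beta_0)$, so the softmax output is uniform up to an $O(k\beta_0)$ error: $\S(\Psi_1(0)^\top f^{(1)}(X)_{(i,j)}) = \tfrac{1}{N}\1_N + O(k\beta_0)$.

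Plugging this into the gradient formula
\[
\nabla_{\Psi_1}\mathcal{L}^{(1)}_{\mathcal{D}}(\theta) = \E\qty[f^{(1)}(X)_{(i,j)} \bigl(\S(\Psi_1^\top f^{(1)}(X)_{(i,j)}) - e_{N,\hop^1_i(j)}\bigr)^\top],
\]
and exploiting that marginally $(i,j)$ and hence $\hop^1_i(j)$ are uniform on $[k]\times[N]$ (because the $\sigma_i$ and $\pi_i$ are permutations), so that $\E[e_{k,i}]=\tfrac{1}{k}\1_k$, $\E[e_{N,\hop^1_i(j)}]=\tfrac{1}{N}\1_N$, and $\E[e_{N,\hop^1_i(j)}e_{N,\hop^1_i(j)}^\top]=\tfrac{1}{N}I_N$, a direct expansion yields
\[
\nabla_{\Psi_1}\mathcal{L}^{(1)}_{\mathcal{D}} = -\tfrac{1}{kN}\, e_{L+2,3}\otimes \1_k\otimes \qty(I_N - \tfrac{1}{N}\1_N\1_N^\top) + O(k\beta_0 + \epsilon).
\]

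Next, I would show the empirical gradient concentrates around this population quantity, mirroring the Hoeffding argument of \Cref{lemma: concentration for stage 1}: each per-sample contribution has $\infty$-norm entries of order $O(1)$, so $M \gtrsim \tilde\Omega(k^4N^6)$ samples combined with a union bound over the $\tilde O(kN^2)$ entries give entry-wise deviation $\tilde O((kN)^{-2})$ with high probability. After one gradient step with learning rate $\eta$, the trained readout is $\Psi_1 \approx \tfrac{\eta}{kN}\, e_{L+2,3}\otimes\1_k\otimes (I_N - \tfrac{1}{N}\1_N\1_N^\top)$ plus the small initialization and negligible noise. Therefore $\Psi_1^\top f^{(1)}(X)_{(i,j)} = \tfrac{\eta}{N}\bigl(e_{N,\hop^1_i(j)} - \tfrac{1}{N}\1_N\bigr) + (\text{lower order})$, opening a logit gap of at least $\eta/N$ between the correct class and every other class. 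The hypothesized $\eta \gtrsim \tfrac{k^2N^3}{\beta_0}\log(1/\epsilon)$ then yields, via $\tfrac{\exp(\eta/N)}{\exp(\eta/N)+N-1}\ge 1-\epsilon$, the stated $\|\S(\Psi_1^\top f^{(1)}(X)_{(i,j)}) - e_{N,\hop^1_i(j)}\|_\infty \le \epsilon$.

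The main obstacle is propagating the stage-1 perturbation $\epsilon$ and the finite-sample noise through a gradient step whose effective size $\eta$ is polynomial in $kN$ and inverse polynomial in $\beta_0$: because $\eta$ multiplies both the signal direction $e_{L+2,3}\otimes \1_k\otimes (I_N-\tfrac{1}{N}\1_N\1_N^\top)$ and every error term, I must pick $M$ large enough ($\tilde\Omega(k^4N^6)$) and $\epsilon$ small enough that $\eta\cdot(\text{error})\ll 1$ at the logit level; this is precisely what the hypotheses $M\gtrsim\tilde\Omega(k^4N^6)$, $\beta_0\le 1$, and $\eta\gtrsim \tilde\Omega(k^2N^3/\beta_0)\log(1/\epsilon)$ are calibrated to ensure.
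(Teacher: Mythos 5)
Your overall strategy is the same as the paper's (compute the population gradient of $\Psi_1$, show it is proportional to the centered direction $e_{L+2,3}\otimes\1_k\otimes(I_N-\tfrac1N\1_N\1_N^\top)$, concentrate with Hoeffding, then let the large learning rate saturate the output softmax), but there is a genuine gap in how you handle the nonzero initialization $\Psi_1(0)=\beta_0\,e_{L+2,3}\otimes\1_k\otimes I_N$. You approximate the initial prediction by $\S(\Psi_1(0)^\top f^{(1)}(X)_{(i,j)})=\tfrac1N\1_N+O(k\beta_0)$ and then carry an $O(k\beta_0+\epsilon)$ error into the gradient. The hypotheses only give $\beta_0\le 1$, while the signal you need to preserve has entrywise size $\Theta\bigl(\tfrac1{kN}\bigr)$; so unless $\beta_0\ll\tfrac1{k^2N^2}$ (which is not assumed), your error term swallows the signal and the subsequent step "$\Psi_1\approx\tfrac{\eta}{kN}\,e_{L+2,3}\otimes\1_k\otimes(I_N-\tfrac1N\1_N\1_N^\top)$ plus negligible noise" is not justified: after multiplying by $\eta$, an $O(\eta k\beta_0)$ perturbation of the logits can dwarf the $\Theta(\eta/(kN))$ gap. (Even the sharper bound $O(\beta_0)$ — note the logits are $\beta_0 e_{N,\hop^1_i(j)}+O(\beta_0\epsilon)$, not $O(k\beta_0)$, since $f^{(1)}$ is supported on the single block $i$ — does not fix this.)

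The paper avoids this by not approximating at all: since the logits at initialization are exactly $\beta_0 e_{N,\hop^1_i(j)}$ up to an $O(\beta_0\epsilon)$ term, the residual in the gradient is
\begin{align*}
e_{N,\hop^1_i(j)}-\S\bigl(\beta_0 e_{N,\hop^1_i(j)}\bigr)=\frac{N}{\exp(\beta_0)+N-1}\,e_{N,\hop^1_i(j)}-\frac{1}{\exp(\beta_0)+N-1}\,\1_N,
\end{align*}
i.e.\ a \emph{positive multiple} of the centered one-hot, so the non-uniform initial prediction merely rescales the population gradient to $-\tfrac{1}{(\exp(\beta_0)+N-1)k}\bigl(e_{L+2,3}\otimes\1_k\otimes(I_N-\tfrac1N\1_N\1_N^\top)\bigr)+O(\epsilon)$, with only the genuinely small $O(\epsilon)$ and sampling errors left to control. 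To repair your argument you should either compute this residual exactly (as above) rather than treating the deviation from uniform as unstructured noise, or add an explicit smallness assumption on $\beta_0$ that the lemma does not provide. Two further minor slips that do not affect the conclusion: the post-step logit gap is $\Theta(\eta/(kN))$, not $\eta/N$ (the $\1_k^\top e_{k,i}$ contraction contributes $1$, but the gradient carries the $1/k$ prefactor), and this is still ample given $\eta\gtrsim k^2N^3\log(kN/\epsilon)$.
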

\begin{proof}
We calculate the population gradient for $\Psi_1$, and then do the finite sample analysis.

Recall the population gradient of the $\ell$th readout layer $\Psi_\ell$: 
\begin{align*}
    \nabla_{\Psi_{\ell}} \mathcal{L}^{(\ell)}_{\mathcal{D}}(\theta)=-\E\qty[\qty(e_{N,\hop^{2^{\ell-1}}_i(j)}-\S(\Psi_\ell^\top f^{(\ell)}(X^{(\ell-1)})_{(i,j)})) (f^{(\ell)}(X^{(\ell-1)})_{(i,j)})^\top]
\end{align*}
For $1$-hop, $\ell = 1$ and we have the population gradient
\begin{align*}
    \nabla_{\Psi_{1}} \mathcal{L}^{(1)}_{\mathcal{D}}(\theta) =-\E\qty[\qty(e_{N,\sigma_i\pi_i(j)}-\S(\Psi_1^\top f^{(1)}(X)_{(i,j)})) (f^{(1)}(X)_{(i,j)})^\top]
\end{align*}
By \Cref{lemma: empirical gradient of the first stage}, the output of the first layer would be
\begin{align*}
    f^{(1)}(X)_{(i,j)} &= W_{OV}^{(1)}X^{(0)}\S^{(1)} = W_{OV}^{(1)}X^{(0)}\S^{(1)}_{\mathrm{ideal}}+W_{OV}^{(1)}X^{(0)}\Delta\S^{(1)}\\
    &= e_{L+2,3}\otimes e_{k,i}\otimes e_{N,\sigma_i\pi_i(j)} + W_{OV}^{(1)}X^{(0)}\Delta\S^{(1)}.
\end{align*}
where $\|W_{OV}^{(1)}X^{(0)}\Delta\S^{(1)}\|_\infty\le {\epsilon}.$ Since $\Psi_\ell(0)=\beta_0 e_{L+2,\ell+2}\otimes \mathbf{1}_{k}\otimes I_{N\times N}$, we have the expansion
\begin{align*}
    \S(\Psi_1^\top f^{(1)}(X)_{(i,j)}) &= \S(\beta_0e_{N,\sigma_i\pi_i(j)}+\Psi_1^\top W_{OV}^{(1)}X^{(0)}\Delta\S^{(1)})\\
    &= \S(\beta_0e_{N,\sigma_i\pi_i(j)}) + \underbrace{\tilde{J}\Psi_1^\top W_{OV}^{(1)}X^{(0)}\Delta\S^{(1)}}_{\Delta_1}.
\end{align*}
Since $\|W_{OV}^{(1)}X^{(0)}\Delta\S^{(1)}\|_\infty\le {\epsilon}$, we have $\norm{\Delta_1}_\infty\le \beta_0\epsilon.$ The signal term
$$\S(\beta_0e_{N,\sigma_i\pi_i(j)}) = \frac{\exp(\beta_0)-1}{\exp(\beta_0)+N-1}e_{N,\sigma_i\pi_i(j)}+\frac{1}{\exp(\beta_0)+N-1}\1_N.$$
The population gradient is thus
\begin{align*}
    \nabla_{\Psi_{1}} \mathcal{L}^{(1)}_{\mathcal{D}}(\theta) &=-\E\qty[\qty(e_{N,\sigma_i\pi_i(j)}-\S(\Psi_1^\top f^{(1)}(X)_{(i,j)})) (f^{(1)}(X)_{(i,j)})^\top]\\
    &=-\E\qty[\qty(\frac{N}{\exp(\beta_0)+N-1}e_{N,\sigma_i\pi_i(j)}-\frac{1}{\exp(\beta_0)+N-1}\1_N-\Delta_1)(f^{(1)}(X)_{(i,j)})^\top]\\
    &=-\frac{1}{(\exp(\beta_0)+N-1)k}\qty(e_{L+2,3}\otimes \1_k \otimes (I_N-\frac{1}{N}\1_N\1_N^\top)) + O(\epsilon).
\end{align*}
where $O(\epsilon)$ denotes the terms with infinity norm smaller than $\epsilon$ with $\epsilon\lesssim \frac{1}{k^2N^3}$. 

Then we analyze the finite sample error. For any sample $X_m$, the upper bound of the empirical gradient for each sample
$$\nabla_{\Psi_{1}} \mathcal{L}^{(1)}(X_m) =-\qty[\qty(e_{N,\sigma_{i_m}\pi_{i_m}(j_m)}-\S(\Psi_1^\top f^{(1)}(X_m)_{(i_m,j_m)})) (f^{(1)}(X_m)_{(i_m,j_m)})^\top]$$
has infinity norm upper bounded by 1. Apply Hoeffding inequalities with $M\gtrsim k^2N^4d^2\log^2\frac{d}{\delta}$, the empirical gradient has noise upper bounded by 
$$\norm{\frac{1}{M}\sum_m\nabla_{\Psi_{1}} \hat{\mathcal{L}}^{(1)}(X_m)-\nabla_{\Psi_{1}} {\mathcal{L}}^{(1)}(X_m)}_\infty\le \frac{\log(\frac{d}{\delta})}{\sqrt{M}}\lesssim \frac{1}{k^2N^3}.$$
Therefore, the error altogether is upper bounded by $O(\frac{1}{k^2N^3})$ in infinity norm.

After one step of gradient, we have the softmax score ($\Delta$ is the error term with $\norm{\Delta}_\infty\le \epsilon$.)
\begin{align*}
    \S(\Psi_1(1)^\top f^{(1)}(X)_{(i,j)})&=\S\qty(\frac{\eta}{(\exp(\beta_0)+N-1)k}(I_N-\frac{1}{N}\1_N\1_N^\top)e_{N,\sigma_i\pi_i(j)}+\beta_0e_{N,\sigma_i\pi_i(j)}+\eta \Delta)
\end{align*}
The separation between the $\sigma_i\pi_i(j)$-th entry and the others are lower bounded by:
$$\frac{\eta}{(\exp(\beta_0)+N-1)k} \frac{N-1}{N}-\eta \|\Delta\|_\infty\gtrsim \frac{\eta}{kN}.$$
By $\eta \gtrsim {k^2N^3\log \frac{kN}{\epsilon}}$, we have $\S(\Psi_1^\top f^{(1)}(X)_{(i,j)})_{\sigma_i\pi_i(j)}\ge 1-\epsilon$ and thus
\begin{align*}
    \sup_{\sigma,(i,j)}\norm{\S(\Psi_1^\top f^{(1)}(X)_{(i,j)})-e_{\hop^1_i(j)}}_\infty\le \epsilon.
\end{align*}
\end{proof}
\subsection{Stage 2: Learning the 2-hop}
Now we start to analyze the second stage GD with the input $\hat{X}^{(1)}$. 
\begin{lemma}[Empirical gradient of $W^{(2)}_{KQ}$ (Stage 2)]
\label{lemma: empirical gradient of the second stage}
    Assume that $W^{(\ell)}_{KQ}(1)=0_{d\times d}$ for all layers $\ell\ge 1$, and that $\|X^{(1)}-\hat{X}^{(1)}\|_\infty \le \epsilon$ before the second stage. After running one-step of gradient descent on the second stage finite sample loss $\mathcal{L}^{(2)}$ with $M$ training sequences and learning rate $\eta$, satisfying $\beta_0\le 1,$  
    $M\gtrsim k^2N^4d^2\log^2\frac{d}{\delta},\eta \gtrsim \frac{k^2N^3\log \frac{kN}{\epsilon}}{\beta_0}$ for any $\epsilon \in(0, \frac{1}{k^2N^3\log^2k})$, then with probability $1-\delta$, for any $(i,j)\in[k]\times [N]$, we have that after the second stage $$\S^{(2)}_{(i+1,\hop^1_i(j))}:=\S\qty((\hat{X}^{(1)})^\top W_{KQ}^{(2)}\hat{X}^{(1)}_{(i,j)})_{(i+1,\hop_i^1(j))} \ge 1-\frac{1}{2}\epsilon.$$
    Furthermore, if we pick $\eta \gtrsim \frac{Ck^2N^3\log k\log \frac{kN}{\epsilon}}{\beta_0}$, we have that after the second stage
    $\S^{(2)}_{(i+1,\hop_i^1(j))}\ge 1-\frac{\epsilon}{2(kNL)^{CL}}$ for any absolute constant $C$.
\end{lemma}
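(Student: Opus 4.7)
The plan follows the stage 1 template closely, with two new complications: the effective input is the perturbed $\hat X^{(1)}$ rather than the ideal $X^{(1)}$, and the signal extracted by the gradient now comes from the third (``$\hop^1$'') row block of $X^{(1)}$ instead of the $\sigma$ block. I would first compute the population gradient at $W_{KQ}^{(2)}=0$ pretending the input is the ideal $X^{(1)}$. Since $W_{KQ}^{(2)}=0$, the softmax $\mathcal{S}^{(2)}$ is uniform, hence $\mathcal{P}^{(2)}_{s'}(X^{(1)},i,j)=1/N$ by the same calculation as in stage 1, using the block structure of $\Psi_2$ and $W_{OV}^{(2)}$. The $1/N$ normalization in the one-hot label cancels against the Jacobian mean-centering term, leaving the gradient equal to $-\tfrac{\beta_0}{kN}\,\mathbb{E}[X^{(1)}(I-\tfrac{1}{kN}\mathbf{1}\mathbf{1}^\top){X^{(1)}}^{\!\top}(e_{L+2,3}\otimes\mathbf{1}_k\otimes e_{N,\hop_i^2(j)}){X^{(1)}_{(i,j)}}^{\!\top}]$, analogous to stage~1 but with the ``readout direction'' shifted into the third block row.

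Next I would identify the signal. The vector ${X^{(1)}}^\top(e_{L+2,3}\otimes\mathbf{1}_k\otimes e_{N,\hop_i^2(j)})$ has a $1$ in position $(p,u)$ iff $\hop_p^1(u)=\hop_i^2(j)=\hop_{i+1}^1(\hop_i^1(j))$, which is a unique $u=u(p)$ per block $p$. Conditioning on the query $(i,j)$ and grouping by $p$: the block $p=i+1$ pins $u$ to $\hop_i^1(j)$ deterministically and contributes the rank-one term $(e_{k,i+1}\otimes e_{N,\hop_i^1(j)})\,{X^{(1)}_{(i,j)}}^\top$, which is exactly the key-query map used in the construction of $W_{KQ}^{*,(2)}$ (it reads the $\hop^1$ block of the query). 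Blocks $p\neq i+1$ depend on $\pi_p,\sigma_p$ that are independent of the query-relevant permutations $(\pi_i,\sigma_i,\pi_{i+1},\sigma_{i+1})$, so their conditional expectation collapses to a multiple of $\mathbf{1}_N$ and is annihilated by the mean-centering, exactly as in stage 1. Expanding the gradient into $4\times 4$ blocks of size $kN\times kN$, the only surviving block couples the $\hop^1$ row of the query to the $(i+1,\hop_i^1(j))$ column, giving an expected pre-softmax separation of $\Omega(1/(kN^2))$ at the target position.

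Finally I would combine the two error sources. For the input perturbation I write $\hat X^{(1)}=X^{(1)}+E$ with $\|E\|_\infty\le\epsilon$ and expand the gradient to first order in $E$; each of the three $\hat X^{(1)}$ factors contributes $O(kN\epsilon)$ to the infinity-norm error of the resulting attention score, and the softmax Jacobian at uniform input changes by at most $O(\epsilon)$, so the perturbation to $X^\top W_{KQ}^{(2)} X_{(i,j)}$ is $O(kN^2\epsilon)\cdot\beta_0\eta/(kN)$, which is less than half the signal under $\epsilon\lesssim 1/(k^2N^3\log^2 k)$. For the finite-sample noise I replay \Cref{lemma: concentration for stage 1}: a single-sample gradient has entries bounded by $O(1)$, so Hoeffding plus a union bound over the $d^2$ entries gives $\|\hat g_2-g_2\|_\infty\lesssim \log(d/\delta)/\sqrt M$, which is $\le 1/(2kN^2)$ under $M\gtrsim k^2N^4d^2\log^2(d/\delta)$. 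These two bounds leave an empirical separation $\hat\Delta^{(2)}_{i,j}\ge \tfrac{1}{2kN^2}$ at the correct target, and one gradient step with $\eta\gtrsim k^2N^3\log(kN/\epsilon)/\beta_0$ pushes $\mathcal{S}^{(2)}_{(i+1,\hop_i^1(j))}$ above $1-\epsilon/2$; scaling $\eta$ by $C\log k$ gives the sharper $1-\epsilon/(2(kNL)^{CL})$ bound.

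The main obstacle I anticipate is the perturbation accounting in paragraph three. Because the gradient contains three factors of $\hat X^{(1)}$ while the signal magnitude is only $O(1/(kN^2))$, one must verify that the entrywise $O(\epsilon)$ error does not blow up by worse than $O(kN)$ factors after contraction against the Jacobian and readout. This forces the restriction $\epsilon\lesssim 1/(k^2N^3\log^2 k)$ in the hypothesis and requires tracking norms on the relevant submatrices of $X^{(1)}$ (each row block is a collection of one-hot columns) rather than using naive operator norm bounds; once this bookkeeping is in place, the rest of the argument is a direct port of stage 1.
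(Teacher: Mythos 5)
Your overall route is the same as the paper's stage-2 argument (population gradient at zero initialization, cancellation of the $1/N$ term against the Jacobian mean-centering, independence collapsing the non-target key blocks, then concentration plus perturbation and a large step to saturate the softmax), but there are two concrete gaps. First, your claim that after the block expansion ``the only surviving block couples the $\hop^1$ row of the query to the $(i+1,\hop_i^1(j))$ column'' is not correct. The key vector built from the $\hop^1$ row of $X^{(1)}$ also correlates with the query's \emph{$\sigma_i(j)$} row: conditioned on $\sigma_i(j)$, the index $\sigma_i\pi_i(j)$ coincides with $\sigma_i(j)$ exactly when $\pi_i(j)=j$, so this block survives in expectation with coefficient $\frac{f(\pi_i)-1}{N(N-1)}$, where $f(\pi_i)$ is the number of fixed points of $\pi_i$ (the term the paper denotes $A_2$). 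This places pre-softmax mass, up to $\frac{N-3}{kN^2}$, on the competing position $(i+1,\sigma_i(j))$, and the separation $\Omega(1/(kN^2))$ only follows after the case analysis on $f(\pi_i)$ (identity vs.\ non-identity $\pi_i$, and whether the two positions coincide) that the paper carries out; as stated, your separation claim rests on an incomplete description of the population gradient. Relatedly, your perturbation budget $O(kN^2\epsilon)$ against a $1/(kN^2)$ signal would require $\epsilon\lesssim 1/(k^2N^4)$, which the hypothesis $\epsilon< \frac{1}{k^2N^3\log^2 k}$ does not provide; the paper closes this by combining $\norm{\hat g_2}_F\lesssim d/(kN^2)$ with the entrywise bound $d\epsilon$, giving an error of order $k\log^2 k\,\epsilon$, so the finer bookkeeping you defer to is actually load-bearing.

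Second, the lemma's conclusion is about $\S^{(2)}$ evaluated on $\hat X^{(1)}$ \emph{after} the second stage, and in \Cref{alg:training_alg} the stage-2 step updates \emph{all} key-query matrices, including $W_{KQ}^{(1)}$, with the same very large learning rate. The gradient $\nabla_{W_{KQ}^{(1)}}\mathcal L^{(2)}$ is small but not zero (the first layer's softmax is only $\epsilon$-saturated), so you must show that $\eta\norm{\nabla_{W_{KQ}^{(1)}}\mathcal L^{(2)}}$ is negligible compared to the scale $\Theta(N\log\frac{kN}{\epsilon})$ of the entries of $W_{KQ}^{(1)}$, which the paper does via the saturation-based gradient bound (\Cref{lemma: gradient upper bound for previous layers with curriculum}); only then does $\hat X^{(1)}$ remain $\epsilon$-close to $X^{(1)}$ when the stage-2 attention scores are finally evaluated. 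Your proposal treats $\hat X^{(1)}$ as fixed throughout and never addresses this concurrent update, so this step is missing.
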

\begin{proof} We follow the strategy in stage 1, first computing the population gradient with the ideal input, then doing the finite sample analysis and finally controlling the perturbation of the input and the sample noise. We also ignore the superscript of $X^{(1)}/\hat{X}^{(1)}$ when it is clear from context in this subsection.

The population gradient of $W^{(2)}_{KQ}$ with ideal input sequence $X^{(1)}$ is
\begin{align*}
    \nabla \mathcal{L}_{\mathcal{D}}^{(2)}=-\E\qty[\sum_{s'\in[N]}\qty(\mathbf{1}\{s'=\hop^{2}_i(j)\}-(\mathcal{P}^{(2)}_{(i,j)})_{s'}) {X}J^{(2)} {X}^{\top} {(\Psi_2^\top W_{OV}^{(2)})_{s'}}^\top (X_{(i,j)})^{\top}]
\end{align*}

Since the normalization terms cancel out as in stage 1, the ideal gradient is the signal term:
\begin{align*}
    \nabla \mathcal{L}_{\mathcal{D}}^{(2)} &= -\E\qty[\sum_{s'\in[N]}\mathbf{1}\{s'=\sigma_{i+1}\pi_{i+1}\sigma_i\pi_i(v)\} XJX^\top {(\Psi_i^\top W_{OV}^{(i)})_{s'}}^\top (X^{(1)}_{(i,j)})^{\top}]\\
    &= -\frac{1}{kN}\E\qty[X\qty(I_{kN\times kN}-\frac{1}{kN}\mathbf{1}\mathbf{1}^\top) X^\top{(\Psi_2^\top W_{OV}^{(2)})_{\sigma_{i+1}\pi_{i+1}\sigma_i\pi_i(j)}}^\top (X^{(1)}_{(i,j)})^{\top}].
\end{align*}
We can also calculate the vector $({X}^{(1)})^{\top} {(\Psi_2^\top W_{OV}^{(2)})_{\sigma_{i+1}\pi_{i+1}\sigma_i\pi_i(j)}}^\top$:
\begin{align*}
\begin{pNiceArray}{c|c|c|c}
e_{N,(\sigma_1\pi_1)^{-1}\sigma_{i+1}\pi_{i+1}\sigma_i\pi_i(j)}^\top&e_{N,(\sigma_2\pi_2)^{-1}\sigma_{i+1}\pi_{i+1}\sigma_i\pi_i(j)}^\top&\cdots&e_{N,(\sigma_k\pi_k)^{-1}\sigma_{i+1}\pi_{i+1}\sigma_i\pi_i(j)}^\top
\end{pNiceArray}^\top
\end{align*}
We further compute the gradient:
\begin{align*}
    \nabla \mathcal{L}_{\mathcal{D}}^{(2)}
     &=-\frac{{1}}{kN}\E\qty[({X}^{(1)})^{\top} {(\Psi_2^\top W_{OV}^{(2)})_{\sigma_{i+1}\pi_{i+1}\sigma_i\pi_i(v)}}^\top - \frac
    1N \mathbf{1}) (X_{(i,j)})^{\top}]\\
    &=-\frac{{\beta_0}}{kN}\E\qty[\qty(\begin{bmatrix}
        \sum_{j=1}^k e_{k,j}\otimes e_{N,(\sigma_j\pi_j)^{-1}\sigma_{i+1}\pi_{i+1}\sigma_i\pi_i(j)}\\
        \sum_{j=1}^k e_{k,j}\otimes e_{N,\sigma_j(\sigma_j\pi_j)^{-1}\sigma_{i+1}\pi_{i+1}\sigma_i\pi_i(j)}\\
        \sum_{j=1}^k e_{k,j}\otimes \sigma_{i+1}\pi_{i+1}\sigma_{i}\pi_i(j)\\
        \1_{kN(L-1)}
    \end{bmatrix}-\frac1N\begin{bmatrix}
        \mathbf{1}_{kN}\\
        \mathbf{1}_{kN}\\
        \mathbf{1}_{kN}\\
        \1_{kN(L-1)}
    \end{bmatrix}) \begin{bmatrix}
        e_{k,i}\otimes e_{N,j}\\
        e_{k,i}\otimes e_{N,\sigma_i(j)}\\
        e_{k,i}\otimes e_{{N,\sigma_i\pi_i(j)}}\\
        0_{kN(L-1)}
    \end{bmatrix}^\top]\\
    &:=-\frac{\beta_0}{kN} \begin{bmatrix}
        A_1&A_2&A_3&0\\
        A_4&A_5&A_6&0\\
        A_7&A_8&A_9&0\\
        0&0&0&0
\end{bmatrix}\tag{$A_i\in\R^{kN\times kN},i=1,2,...,9$.}
\end{align*}
Similar to stage 1, since we have independent $\sigma_{i+1}$ in the second and third row of block matrices, the expectation becomes 0 for $A_1$ and $A_4$ to $A_9$. 

We focus on the calculation on $A_2, A_3$ and have the following expectation given $(i,j)$ is in $[k]\times[N]$:
$$\E\qty[A_3|i]=\frac1N\sum_{j=1}^N (e_{k,i+1}\otimes e_{N,j}) (e_{k,i}\otimes e_{N,j})^\top-\frac{1}{N^2}(e_{k,i+1}\otimes \mathbf{1}_{N})(e_{k,i}\otimes \mathbf
{1}_{N})^\top.$$
$$\E\qty[A_2|i]=\frac{f(\pi_i)-1}{N(N-1)}\sum_{j=1}^N (e_{k,i+1}\otimes e_{N,j}) (e_{k,i}\otimes e_{N,j})^\top-\frac{f(\pi_i)-1}{N^2(N-1)}(e_{k,i+1}\otimes \mathbf{1}_{N})(e_{k,i}\otimes \mathbf
{1}_{N})^\top.$$
This population gradient can correctly learn the functionality of the second layer, and the signal can dominate the sum of the gradient noise and accumulated error. We will show this point after the noise/error analysis.

Now consider the empirical estimate of the gradient. Define the matrix
$$g_{2} = \E\begin{bmatrix}
        A_1&A_2&A_3&0\\
        A_4&A_5&A_6&0\\
        A_7&A_8&A_9&0\\
        0&0&0&0
\end{bmatrix}$$
The empirical gradient can be written as
$\nabla \hat{\mathcal{L}}^{(2)} = -\frac{\beta_0}{kN}\hat{g}_2,$
where $\hat{g}_2$ is the empirical estimate of $g_2$ with perturbed inputs:
$$\hat{g}_2 = \frac{kN}{\beta_0M}\sum_{m=1}^M \sum_{s'\in[N]}(\mathbf{1}\{s'=\hop^{2}_{i_m}(j_m)\} -\mathcal{P}^{(2)}_{s'})\hat{X}^{(1)}_mJ\nabla_{W_{KQ}^{(\ell)}}\mathcal{L}^{(2)} (\hat{X}^{(1)}_m)^\top {\qty(\Psi_2^\top W_{OV}^{(2)})_{s'}}^\top (\hat{X}^{(1)}_{(i,j)})^{\top}$$
It suffices to show that $\|\hat{g}_2-g_2\|_\infty$ is small. It is similar to the first stage analysis for the sample noise part, with the perturbed input error introduced:
\begin{lemma}
\label{lemma: concentration for stage 2}
    Suppose $\|X_m^{(1)}-\hat{X}_m^{(1)}\|_\infty\le \epsilon$ for all $m$. For any $\delta>0$, we have with probability $1-\delta$ s.t.
    $$\|\hat{g}_2-g_2\|_\infty\lesssim \frac{\log(kN/\delta)}{\sqrt{M}} +d\epsilon$$
\end{lemma}
\begin{proof}
First, the empirical gradient of the single ideal sample $X_m$ is
\begin{align*}
    \hat{g}^{\mathrm{ideal}}_{2,m} &= \frac{kN}{\beta_0} \sum_{s'\in[N]}\mathbf{1}\{s'=\hop^{2}_{i_m}(j_m)\} X_mJ X_m^\top {\qty(\Psi_2^\top W_{OV}^{(2)})_{s'}}^\top (X^{(1)}_{(i_m,j_m)})^{\top}\\
    &=X_m (I-\frac{1}{kN}\1\1^\top)X_m^\top (e_{L+2,3}\otimes \1_k\otimes e_{N,\hop^2_{i_m}(j_m)}) (X^{(1)}_{(i_m,j_m)})^{\top}
\end{align*}
Similar to the proof in \Cref{lemma: concentration for stage 1}, the upper bound of each entry of the random variable is
\begin{equation}\label{eq: stage 2, noise analysis}
    \|\hat{g}^{\mathrm{ideal}}_{2,m}\|_\infty 
    \le \left\|X_m (I-\frac{1}{kN}\1\1^\top)X_m^\top (e_{L+2,3}\otimes \1_k\otimes e_{N,\hop^2(v_m)})\right\|_\infty \|(X^{(1)}_{(i_m,j_m)})^{\top}\|_\infty\le 1.
\end{equation}
Then we can concentrate $\hat{g}_2^{\mathrm{ideal}} := \frac{1}{M}\sum_{m=1}^M \hat{g}^{\mathrm{ideal}}_{2,m}$ as:
\begin{align*}
    \norm{\hat{g}_2^{\mathrm{ideal}}-g_2}_\infty &= \norm{\frac{1}{M}\sum_{m=1}^M \hat{g}_{2,m}^{\mathrm{ideal}}-\E\hat{g}^{\mathrm{ideal}}_{2}}\lesssim \frac{\log(d/\delta)}{\sqrt{M}}
\end{align*}
with probability $1-\frac{\delta}{d^2}$. By $d = O(kN\log k)$, we can union bound over all entries of $\hat{g}_2$ and we have the desired first term error.

After considering the ideal empirical gradient, we also need to bound the distance between the empirical gradients with ideal inputs $X_m$ and those with perturbed input sequences $\hat{X}_m$.

Recall that the empirical gradient 
\begin{align*}
    \hat{g}_{2,m} &= \frac{kN}{\beta_0} \sum_{s'\in[N]}\qty(\mathbf{1}\{s'=\hop^{2}_{i_m}(j_m)\}-\mathcal{P}^{(2)}_{s'}) \hat{X}_mJ \hat{X}_m^\top {\qty(\Psi_2^\top W_{OV}^{(2)})_{s'}}^\top (\hat{X}^{(1)}_{(i_m,j_m)})^{\top}\\
    &=\hat{X}_m (I-\frac{1}{kN}\1\1^\top)\hat{X}_m^\top (e_{L+2,3}\otimes \1_k\otimes e_{N,\hop^2_{i_m}(j_m)}) \hat{X}_{(i_{m},j_m)}^\top\\&-\sum_{s'\in[N]}\S_{s'}\hat{X}_m (I-\frac{1}{kN}\1\1^\top)\hat{X}_m^\top (e_{L+2,3}\otimes \1_k\otimes e_{N,s'}) \hat{X}_{(i_{m},j_m)}^\top.
\end{align*}
Denote the following term as
$$\hat{\gamma}_{s',m}=\hat{X}_m (I-\frac{1}{kN}\1\1^\top)\hat{X}_m^\top (e_{L+2,3}\otimes \1_k\otimes e_{N,s'}) \hat{X}_{(i_{m},j_m)}^\top,$$
$${\gamma}_{s',m}={X}_m (I-\frac{1}{kN}\1\1^\top){X}_m^\top (e_{L+2,3}\otimes \1_k\otimes e_{N,s'}) {X}_{(i_{m},j_m)}^\top.$$

Then we can rewrite the empirical gradient into
\begin{align*}
    \hat{g}_{2,m}-\hat{g}^{\mathrm{ideal}}_{2,m} 
    &=\Delta\gamma_{\hop^2_{i_m}(j_m),m}-\sum_{s'\in[N]}\S_{s'}\Delta\gamma_{s',m}.
\end{align*}
The error of the following difference $\|\gamma_{s',m}-\hat{\gamma}_{s',m}\|_\infty\le Cd\epsilon$ with some absolute constant for all possible $s'\in[N]$, since $\norm{\hat{X}_m-X_m}_\infty\le \epsilon$ and $\norm{X_m}_\infty\le 1$. Then, we have
\begin{equation}\label{eq: stage 2, perturbation analysis}
    \norm{\hat{g}_{2,m}-\hat{g}^{\mathrm{ideal}}_{2,m}}_\infty \le \|\Delta\gamma_{\hop^2_{i_m}(j_m),m}\|_\infty+\sum_{s'\in[N]}\S_{s'}\norm{\Delta\gamma_{s',m}}_\infty
    \le2Cd\epsilon.
\end{equation}
Combine both \eqref{eq: stage 2, noise analysis} and \eqref{eq: stage 2, perturbation analysis}, we finished the proof.
\end{proof}

After the one-step gradient, we get $W_{KQ}^{(2)}=\eta \hat{g}_2$.
Given a sample sequence $\hat{X}^{(1)}_m$ and index $(i_m,j_m)$, we have the attention score
\begin{align*}
    \eta (\hat{X}^{(1)}_m)^\top \hat{g}_2 \hat{X}^{(1)}_{(i_m,j_m)} ={}& \frac{\beta_0\eta}{kN}\qty[(X_m^{(1)})^\top {g}_2 X^{(1)}_{(i_m,j_m)} + (X_m^{(1)})^\top \underbrace{(\hat{g}_2-g_2)}_{\Delta g_2} \hat{X}^{(1)}_{(i_m,j_m)}]\\
    &+\frac{\beta_0\eta}{kN}\underbrace{\qty[(\hat{X}^{(1)}_m)^\top \hat{g}_2 \hat{X}^{(1)}_{(i_m,j_m)}-X_m^\top \hat{g}_2 X_{(i_m,j_m)}]}_{\text{Perturbation error}}
\end{align*}
Note that the population attention score is
\begin{align*}
    X_m^\top {g}_2 X_{(i_m,j_m)}&=X_m^\top\E\begin{bmatrix}
        A_1&A_2&A_3&0\\
        A_4&A_5&A_6&0\\
        A_7&A_8&A_9&0\\
        0&0&0&0
\end{bmatrix}\begin{bmatrix}
        e_{k,i_m}\otimes e_{N,j_m}\\
        e_{k,i_m}\otimes e_{N,\sigma_{i_m}(j_m)}\\
        e_{k,i_m}\otimes e_{{N,\sigma_{i_m}\pi_{i_m}(j_m)}}\\
        0_{kN(L-1)}
    \end{bmatrix}=X_m^\top \begin{bmatrix}
        (*)\\
        0_{kN(L+1)}\\
    \end{bmatrix}
\end{align*}
where the term $(*)$ is
\begin{align*}
    \frac{1}{kN}\qty(e_{k,i_m+1}\otimes e_{N,\sigma_{i_m}\pi_{i_m}(j_m)}-\frac{1}{N}e_{k,i_m+1}\otimes \1_N) + \frac{f(\pi_i)-1}{kN(N-1)}\qty(e_{k,i_m+1}\otimes e_{N,\sigma_{i_m}(j_m)}-\frac{1}{N}e_{k,i_m+1}\otimes \1_N)
\end{align*}

For simplicity, we ignore the superscript for the following proof in this subsection. Define the population/empirical separation between the correct position $p_2:=Ni_m+\sigma_{i_m}\pi_{i_m}(j_m)$, i.e. $(i_m+1,\sigma_{i_m}\pi_{i_m}(j_m))$ position, and the others as follows:
$$\Delta^{(2)}_{i_m,j_m} = \qty(X_m^\top {g}_2 X_{(i_m,j_m)})_{p_2}-\max_{j\not=p_2}\qty(X_m^\top {g}_2 X_{(i_m,j_m)})_j.$$
$$\hat{\Delta}^{(2),\mathrm{ideal}}_{i_m,j_m} = \qty(X_m^\top \hat{g}_2 X_{(i_m,j_m)})_{p_2}-\max_{j\not=p_2}\qty(X_m^\top \hat{g}_2 X_{(i_m,j_m)})_j.$$
$$\hat{\Delta}^{(2)}_{i_m,j_m} = \qty(\hat{X}_m^\top \hat{g}_2 \hat{X}_{(i_m,j_m)})_{p_2}-\max_{j\not=p_2}\qty(\hat{X}_m^\top \hat{g}_2 \hat{X}_{(i_m,j_m)})_j.$$
If $\pi_{i_m}(v)=v$ for all $v\in [N]$, i.e. $f(\pi_{i_m})=N$, we have the $N(i-1)+j$-th entry of the pre-softmax attention score
\begin{align*}
    \qty(X_m^\top {g}_2 X_{(i_m,j_m)})_{(i,j)} = \begin{cases}
        \frac{2N-2}{kN^2},&i=i_m+1, j=\sigma_{i_m}\pi_{i_m}(j_m)\\
        -\frac{2}{kN^2},&i=i_m+1,\ j\not=\sigma_{i_m}\pi_{i_m}(j_m)\\
        0,&i\not=i_m
    \end{cases}
\end{align*}
So $\Delta_{i_m,j_m}^{(2)}\ge \frac{1}{kN^2}$ by $N\ge 2$.

If not all $v$ satisfies $\pi_i(v)=v$, $f(\pi_i)\le N-2.$ We have the $N(i-1)+j$-th entry of the pre-softmax attention score
\begin{align*}
    \qty(X_m^\top {g}_2 X_{(i_m,j_m)})_{(i,j)} = \begin{cases}
        \frac{N-1}{kN^2},&i=i_m+1, j=\sigma_{i_m}\pi_{i_m}(j_m)\\
        \frac{f(\pi_{i_m})-1}{kN^2},&i=i_m+1, j=\sigma_{i_m}(j_m)\\
        -\frac{1}{kN^2}-\frac{f(\pi_{i_m})-1}{kN^2(N-1)},&i=i_m+1,\ j\not=\sigma_{i_m}\pi_{i_m}(j_m),\sigma_{i_m}(j_m)\\
        0,&i\not=i_m+1
    \end{cases}
\end{align*}
So $\Delta_{i_m,j_m}^{(2)}\ge \frac{1}{kN^2}$ by $N\ge 2$ for both cases. Therefore, we have the expected signal at least $\frac{1}{kN^2}$.

Now we bound the noise introduced by $\Delta g_2$. Since $\|\Delta g_2\|_\infty \lesssim \frac{\log \frac{d}{\delta}}{\sqrt{M}}$, we have
$$\left\|X_m^\top {\Delta g_2} X_{(i_m,j_m)}\right\|_\infty \lesssim \frac{d\log \frac{d}{\delta}}{\sqrt{M}}\le \frac{1}{2kN^2}$$
by $M\gtrsim k^2N^4d^2\log^2\frac{d}{\delta}$. Therefore, the noise term can be bounded and the empirical separation with the ideal input sequence $\hat{\Delta}_{i_m,j_m}^{(2),\mathrm{ideal}}\ge \frac{1}{2kN^2}$. This also gives the upper bound for $\|\hat{g}_2\|_\infty\le O(1/kN^2)$.

Finally, we add up the perturbation error for the pre-softmax attention score. However, the first layer parameter is also updated by a very small amount in the second stage. So we need to upper bound the perturbation again for $\hat{X}^{(1)}$ for the second stage conclusion.
The following lemma shows that after the second gradient step, the first-layer parameter is almost unperturbed and the softmax score is still close to one-hot.
\begin{lemma}[$W_{KQ}^{(1)}$ is unchanged] Under the condition of \Cref{lemma: empirical gradient of the second stage}, after the second step we still have
$$\S^{(1)}_{(i,\pi_i(j))}:=\S(X^{(0)}W_{KQ}^{(1)}X^{(0)}_{(i,j)})_{(i,\pi_i(j))} \ge 1-\frac{1}{2}\epsilon.$$
\end{lemma}
\begin{proof}
    By \Cref{lemma: gradient upper bound for previous layers with curriculum}, the gradient norm for the first layer is upper bounded by $O(\beta_0k^{5/2}N^{3/2}L^{3/2}\epsilon)$. With the gradient update in the second stage, $W_{KQ}^{(1)}$ will only be perturbed by
    $$\norm{\eta \nabla_{W_{KQ}^{(1)}} \mathcal{L}^{(2)}}\lesssim (kN)^{4.5}\log\frac{kN}{\epsilon}\log^{2.5} k \cdot \epsilon \ll N\log\frac{kN}{\epsilon}.$$
    where $N\log\frac{kN}{\epsilon}$ is the scale of each entry of $W_{KQ}^{(1)}$. Therefore, the previous bound on the attention score in \Cref{lemma: empirical gradient of the first stage} still holds.
\end{proof}

This implies that $\norm{\hat{X}_m - X_m}_\infty \le \epsilon$ still holds using the same perturbation argument in stage 1. Therefore, we have the perturbation error to the pre-softmax attention score
\begin{align*}
    &\norm{(\hat{X}^{(1)}_m)^\top \hat{g}_2 \hat{X}^{(1)}_{(i_m,j_m)}-X_m^\top \hat{g}_2 X_{(i_m,j_m)}}_\infty\\
    \le{}&\norm{(\hat{X}^{(1)}_m)^\top \hat{g}_2 (\hat{X}^{(1)}_{(i_m,j_m)}-X_{(i_m,j_m)})}_\infty+\norm{(\hat{X}^{(1)}_m-X_m)^\top \hat{g}_2 X_{(i_m,j_m)}}_\infty\\
    \le{}&\norm{\hat{g}_2}_2\norm{(\hat{X}^{(1)}_m)^\top (\hat{X}^{(1)}_{(i_m,j_m)}-X_{(i_m,j_m)})}_\infty+\norm{\hat{g}_2}_2\norm{(\hat{X}^{(1)}_m-X_m)^\top X_{(i_m,j_m)}}_\infty\\
    \le{}&\norm{\hat{g}_2}_F\norm{(\hat{X}^{(1)}_m)^\top (\hat{X}^{(1)}_{(i_m,j_m)}-X_{(i_m,j_m)})}_\infty+\norm{\hat{g}_2}_F\norm{(\hat{X}^{(1)}_m-X_m)^\top X_{(i_m,j_m)}}_\infty\\
    \lesssim{}& \sqrt{\frac{1}{(kN^2)^2}\cdot d^2}\cdot d\epsilon.
\end{align*}
Since $\epsilon\le O(\frac{1}{d^2})=O(\frac{1}{k^2N^3L^2})$, the perturbation error is upper bounded by $O(\frac{1}{kN^2})$. Therefore, the empirical separation $\hat{\Delta}^{(2)}_{i_m,j_m}\ge \Omega(\frac{1}{kN^2})$.
After one step gradient with learning rate $\eta \gtrsim \frac{k^2N^3}{\beta_0}\log\frac{kN}{\epsilon}$, the softmax output of the correct position can be lower bounded by
\begin{align*}
    \S(\hat{X}_m^\top W_{KQ}^{(2)} \hat{X}_{(i_m,j_m)})_{i_m,\hop^{1}(X_{(i_m,j_m)})} \ge \frac{\exp\qty(\frac{\eta\beta_0}{kN}\cdot \frac{1}{kN^2})}{\exp\qty(\frac{\eta\beta_0}{kN}\cdot \frac{1}{kN^2})+kN-1}\ge 1-\frac{1}{2}\epsilon.
\end{align*}
Therefore, we finish the proof.
\end{proof}



\paragraph{Perturbation analysis of Stage 2} The lemma presents the perturbation analysis for stage 2.
\begin{lemma}[Perturbation analysis of stage 2]\label{lemma: perturbation for second stage}
    Under the condition of \Cref{lemma: empirical gradient of the second stage}, we have
    $$\|X^{(2)}-\hat{X}^{(2)}\|\le 2\epsilon,$$
    where $X^{(2)}$ is the ideal output with saturate softmax, and $\hat{X}^{(2)}$ is the transformer output.
\end{lemma}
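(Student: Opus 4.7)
The plan is to exploit the block-sparse structure of the layer 2 head. The value matrix $W^{(2)}_{OV} = e_{L+2,4}e_{L+2,3}^\top \otimes I_{kN}$ reads from the 1-hop block (block 3) and writes into the previously zero 2-hop block (block 4), so the residual carry-through $X^{(1)} - \hat X^{(1)}$ and the attention-output term $W^{(2)}_{OV}(X^{(1)}\S^{(2)}_{\mathrm{ideal}} - \hat X^{(1)}\hat\S^{(2)})$ populate disjoint coordinates of the residual stream. Consequently $\|X^{(2)} - \hat X^{(2)}\|_\infty$ equals the larger of the two pieces' infinity norms, and it suffices to bound each separately. The residual piece is controlled by $\|X^{(1)} - \hat X^{(1)}\|_\infty \le \epsilon$, which is the conclusion of \Cref{lemma: perturbation for first stage}.

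For the attention piece, I will fix a query column $(i,j)$ and use the standard decomposition
\begin{align*}
\hat X^{(1)}\hat\S^{(2)} - X^{(1)}\S^{(2)}_{\mathrm{ideal}} = (\hat X^{(1)} - X^{(1)})\hat\S^{(2)} + X^{(1)}(\hat\S^{(2)} - \S^{(2)}_{\mathrm{ideal}}),
\end{align*}
where $\S^{(2)}_{\mathrm{ideal}}$ is the one-hot vector at position $(i+1,\hop^1_i(j))$. Since $\hat\S^{(2)}$ is a stochastic column, the first summand has infinity norm at most $\|\hat X^{(1)} - X^{(1)}\|_\infty \le \epsilon$. The second summand is bounded by $\|X^{(1)}\|_\infty \cdot \|\hat\S^{(2)} - \S^{(2)}_{\mathrm{ideal}}\|_1 \le \epsilon$, using that the near one-hot guarantee $\hat\S^{(2)}_{(i+1,\hop^1_i(j))} \ge 1 - \epsilon/2$ from \Cref{lemma: empirical gradient of the second stage} forces $\|\hat\S^{(2)} - \S^{(2)}_{\mathrm{ideal}}\|_1 \le \epsilon$. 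Summing the two contributions yields at most $2\epsilon$ in block 4, and combining with the block 3 bound gives the claimed $\|X^{(2)} - \hat X^{(2)}\|_\infty \le 2\epsilon$.

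The only conceptual subtlety is that $\hat\S^{(2)}$ is computed from the trained $W^{(2)}_{KQ}$ applied to the \emph{perturbed} input $\hat X^{(1)}$, not the ideal $X^{(1)}$. This is already absorbed into the signal-vs-noise bookkeeping of \Cref{lemma: empirical gradient of the second stage} via the $d\epsilon$ term in \Cref{lemma: concentration for stage 2}, so its near one-hot conclusion can be invoked directly here without re-doing the perturbation analysis. I do not expect any serious obstacle: the main step is identifying the correct block decomposition so that the residual error in block 3 and the attention error in block 4 do not stack additively, after which each inequality is a one-line application of H\"older.
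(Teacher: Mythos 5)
Your proposal is correct and follows essentially the same route as the paper's proof: the same decomposition of the error into the residual carry-through (living in block 3, bounded by $\epsilon$ from stage 1) plus the two attention-layer terms $(\hat X^{(1)}-X^{(1)})\hat\S^{(2)}$ and $X^{(1)}(\hat\S^{(2)}-\S^{(2)}_{\mathrm{ideal}})$ (living in block 4, each bounded by $\epsilon$ via stochasticity of the softmax column and the near one-hot guarantee from \Cref{lemma: empirical gradient of the second stage}, respectively). Your explicit remark that the near one-hot bound is stated directly for the perturbed input $\hat X^{(1)}$, so no extra perturbation argument is needed here, matches how the paper invokes that lemma.
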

\begin{proof}
Similar to stage 1, the ideal output for the second layer is $$X^{(2)} = X^{(1)} + W_{KQ}^{(2)}X^{(1)}\S^{(2)}_{\mathrm{ideal}},$$ where $\S^{(2)}_{\mathrm{ideal}}$ is the ideal one-hot softmax attention pattern. The empirical output $\hat{X}^{(2)}$ has the error introduced by the non-saturation of the softmax, together with the previous error in $\hat{X}^{(1)}$:
\begin{align*}
    \hat{X}^{(2)} &= \hat{X}^{(1)} + W_{OV}^{(2)}\hat{X}^{(1)}\S^{(2)}\\ &= X^{(2)} + W_{OV}^{(2)}X^{(1)}\underbrace{(\S^{(2)}-\S^{(2)}_{\mathrm{ideal}})}_{\Delta \S^{(2)}, \text{ Non-saturation error}}+\underbrace{(\hat{X}^{(1)}-{X}^{(1)})+W_{OV}^{(2)}(\hat{X}^{(1)}-{X}^{(1)})\S^{(2)}}_{\text{Accumulated perturbation error}}.
\end{align*}
We first bound the non-saturation error term. By \Cref{lemma: empirical gradient of the second stage}, the correct entry of the softmax probability vector $\S^{(2)}$ is greater than $1-\frac{1}{2}\epsilon$ for all index $v$. And other probabilities have $\epsilon$ error in total, and they are all positive. Since $\|X^{(1)}\|_\infty\le 1$, the error of the non-saturation error can be bounded as
$$\|W_{OV}^{(2)}X^{(1)}(\S^{(2)}-\S^{(2)}_{\mathrm{ideal}})\|_\infty  = \max_{s,(i,j)}\qty|(W_{OV}^{(2)}X^{(1)})_s^\top \Delta\S^{(2)}_{(i,j)}|\le \|X^{(1)}\|_\infty\cdot 2\cdot\frac{\epsilon}{2} \le \epsilon.$$
Now consider the accumulated perturbation error. By the perturbation analysis in stage 1, we have $\norm{\hat{X}^{(1)}-{X}^{(1)}}_{\infty}\le \epsilon.$ Note that the error in $\hat{X}^{(1)}-{X}^{(1)}$ are in different rows of the matrices from 
$W_{OV}^{(2)}(\hat{X}^{(1)}-{X}^{(1)})\S^{(2)}$. Therefore, $\hat{X}^{(1)}-{X}^{(1)}$ won't introduce extra error in this stage, and we only need to consider $W_{OV}^{(2)}(\hat{X}^{(1)}-{X}^{(1)})\S^{(2)}$:
\begin{align*}
    \norm{W_{OV}^{(2)}(\hat{X}^{(1)}-{X}^{(1)})\S^{(2)}}_\infty &= \max_{s,(i,j)}\qty|(W_{OV}^{(2)}(\hat{X}^{(1)}-{X}^{(1)}))_s^\top \S^{(2)}_{(i,j)}|\\
    &=\max_{s,{(i,j)}}\qty|\sum_{p=1}^{kN}(W_{OV}^{(2)}(\hat{X}^{(1)}-{X}^{(1)}))_{s,p} (\S^{(2)}_{(i,j)})_p|\\
    &\le \|\hat{X}^{(1)}-{X}^{(1)}\|_\infty\tag{Since $\sum_p (\S^{(2)}_{(i,j)})_p=1, (\S^{(2)}_{(i,j)})_p\ge0$.}
\end{align*}
Combine both parts of error, we have $\norm{\hat{X}^{(2)}-{X}^{(2)}}_\infty \le 2\epsilon.$
\end{proof}

In later stages, we use a similar argument and inductively show that the perturbation error grows with the depth $\ell$. At the end of stage 2, we further train the readout layer with one gradient step to output the correct $2$-hop for each position.
\begin{lemma}\label{lemma: readout for second layer}
    Under the conditions of \Cref{lemma: empirical gradient of the second stage}, after one gradient step on $\Psi_2$ we have
    \begin{align*}
    \sup_{\sigma,(i,j)}\norm{\S(\Psi_2^\top f^{(2)}(\hat{X}^{(1)})_{(i,j)})-e_{\hop^2_i(j)}}_\infty\le \epsilon.
\end{align*}
\end{lemma}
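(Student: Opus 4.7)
The argument parallels the one used in \Cref{lemma: readout for first layer} for the stage 1 readout, with the additional complication that $f^{(2)}$ acts on the perturbed intermediate $\hat X^{(1)}$ rather than the ideal $X^{(1)}$. The plan is: (i) show that $f^{(2)}(\hat X^{(1)})_{(i,j)}$ equals the target one-hot token $e_{L+2,4}\otimes e_{k,i+1}\otimes e_{N,\hop_i^2(j)}$ plus an $O(\epsilon)$ perturbation in infinity norm, (ii) project by $\Psi_2(0)^\top$ to extract a $\beta_0$-scaled one-hot signal, (iii) compute the population gradient and bound finite-sample noise by Hoeffding, and (iv) conclude via the usual separation-plus-softmax argument.

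For step (i), I would use \Cref{lemma: empirical gradient of the second stage} to write $\S^{(2)}_{(i,j)} = e_{kN,(i+1,\hop_i^1(j))} + \Delta\S^{(2)}_{(i,j)}$ with $\|\Delta\S^{(2)}_{(i,j)}\|_1\le \epsilon$ and all entries of $\Delta\S^{(2)}_{(i,j)}$ of comparable sign pattern to the stage 1 case. Because $W_{OV}^{(2)} = e_{L+2,4}e_{L+2,3}^\top\otimes I_{kN\times kN}$ reads only the third block (which in $X^{(1)}$ encodes $\sigma_p\pi_p(\cdot)$) and writes to the fourth block, the output is
\begin{align*}
    f^{(2)}(\hat X^{(1)})_{(i,j)} = e_{L+2,4}\otimes e_{k,i+1}\otimes e_{N,\hop_i^2(j)} + \Delta,
\end{align*}
where $\|\Delta\|_\infty = O(\epsilon)$ absorbs both the non-saturation error from $\Delta\S^{(2)}$ and the perturbation $\|\hat X^{(1)} - X^{(1)}\|_\infty \le \epsilon$ inherited from \Cref{lemma: perturbation for first stage}. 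Applying $\Psi_2(0)^\top = \beta_0(e_{L+2,4}\otimes \1_k\otimes I_{N\times N})^\top$ then gives $\Psi_2(0)^\top f^{(2)}(\hat X^{(1)})_{(i,j)} = \beta_0 e_{N,\hop_i^2(j)} + \Delta'$ with $\|\Delta'\|_\infty \le O(\beta_0\epsilon)$. Expanding the softmax around $\beta_0 e_{N,\hop_i^2(j)}$ through its Jacobian, exactly as in the stage 1 readout proof, yields the population gradient
\begin{align*}
    \nabla_{\Psi_2}\mathcal{L}^{(2)}_{\mathcal{D}} = -\frac{1}{(\exp(\beta_0)+N-1)k}\Bigl(e_{L+2,4}\otimes \1_k\otimes (I_N - \tfrac1N \1_N\1_N^\top)\Bigr) + O(\epsilon),
\end{align*}
where the $O(\epsilon)$ remainder is an infinity-norm bound and is negligible for $\epsilon\lesssim 1/(k^2N^3)$. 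For step (iii), each per-sample gradient has infinity norm at most $1$, so Hoeffding with $M\gtrsim k^2N^4d^2\log^2(d/\delta)$ bounds the empirical noise by $O(\log(d/\delta)/\sqrt M)\lesssim 1/(k^2N^3)$. Taking one gradient step with $\eta\gtrsim k^2N^3\log(kN/\epsilon)$, the $\hop_i^2(j)$-coordinate of $\Psi_2(1)^\top f^{(2)}(\hat X^{(1)})_{(i,j)}$ exceeds every other by at least $\Omega(\eta/(kN))$, so the output softmax places $\ge 1-\epsilon$ mass on $e_{N,\hop_i^2(j)}$ uniformly over $\sigma$ and $(i,j)$.

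The main obstacle is justifying the clean $O(\epsilon)$ bound in step (i) without losing a $d$ or $kN$ factor. A naive application of \Cref{lemma: perturbation for second stage} gives $\|\hat X^{(2)}-X^{(2)}\|_\infty \le 2\epsilon$, but $\Psi_2$ reads from a specific block of a specific position, and one must verify that neither the non-saturation of $\S^{(2)}$ nor the stage 1 perturbation pollutes the signal block $\ell=4$ by more than $O(\epsilon)$. The saving grace is that the signal block of $W_{OV}^{(2)}\hat X^{(1)}\S^{(2)}$ is a convex combination (via $\S^{(2)}$) of third-block entries of $\hat X^{(1)}$, each of which differs from the corresponding entry of $X^{(1)}$ by at most $\epsilon$; separately, the $1-\tfrac12\epsilon$ mass on the correct position, together with bounded coordinates of $\hat X^{(1)}$, keeps the softmax error contribution at $O(\epsilon)$. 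Once this careful per-block bookkeeping is done, the remainder of the argument is verbatim stage 1.
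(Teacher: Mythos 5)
Your proposal is correct and follows essentially the same route as the paper's proof: decompose $f^{(2)}(\hat X^{(1)})_{(i,j)}$ into the ideal one-hot signal plus an $O(\epsilon)$ perturbation (the paper bounds it by $2\epsilon$ via its stage-2 perturbation lemma), project with $\Psi_2(0)$, expand the output softmax through its Jacobian, compute the rank-one population gradient $-\frac{1}{(\exp(\beta_0)+N-1)k}\bigl(e_{L+2,4}\otimes \1_k\otimes(I_N-\tfrac1N\1_N\1_N^\top)\bigr)+O(\epsilon)$, control sampling noise by Hoeffding, and conclude from the $\Omega(\eta/(kN))$ separation. The per-block bookkeeping you flag as the main obstacle is exactly what the paper's Lemma on the stage-2 perturbation handles, and the $e_{k,i}$ vs.\ $e_{k,i+1}$ index is immaterial since $\Psi_2$ sums over all $k$ position blocks.
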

\begin{proof}
We calculate the population gradient for $\Psi_2$, and then do the finite sample analysis.

For $2$-hop, the population gradient is
\begin{align*}
    \nabla_{\Psi_{2}} \mathcal{L}_{\mathcal{D}}^{(2)}(\theta) =-\E\qty[\qty(e_{N,\hop^2_i(j)}-\S(\Psi_2^\top f^{(2)}(X)_{(i,j)})) (f^{(2)}(X)_{(i,j)})^\top]
\end{align*}
By \Cref{lemma: empirical gradient of the second stage}, the output of the second layer would be
\begin{align*}
    f^{(2)}(X)_{(i,j)} &= W_{OV}^{(2)}\hat{X}^{(1)}\S^{(2)} = W_{OV}^{(2)}X^{(1)}\S^{(2)}_{\mathrm{ideal}}+W_{OV}^{(2)}(\hat{X}^{(1)}-X^{(1)})\S^{(2)}_{\mathrm{ideal}}+W_{OV}^{(2)}X^{(1)}\Delta\S^{(2)}\\
    &= e_{L+2,4}\otimes e_{k,i}\otimes e_{N,\hop^2_i(j)} + \Delta_2.
\end{align*}
where $\|\Delta_2\|_\infty\le 2{\epsilon}$ by \Cref{lemma: perturbation for second stage}. Since $\Psi_\ell(0)=\beta_0 e_{L+2,\ell+2}\otimes \mathbf{1}_{k}\otimes I_{N\times N}$, we have the expansion
\begin{align*}
    \S(\Psi_2^\top f^{(2)}(X)_{(i,j)}) = \S(\beta_0e_{N,\hop_i^2(j)}+\Psi_2^\top \Delta_2)= \S(\beta_0e_{N,\hop^2_i(j)}) + \tilde{J}\Psi_2^\top \Delta_2.
\end{align*}
Since $\|\Delta_2\|_\infty\le {\epsilon}$, we have $\norm{\tilde{J}\Psi_2^\top\Delta_2}_\infty\le \beta_0\epsilon.$ The signal term
$$\S(\beta_0e_{N,\hop^2_i(j)}) = \frac{\exp(\beta_0)-1}{\exp(\beta_0)+N-1}e_{N,\hop^2_i(j)}+\frac{1}{\exp(\beta_0)+N-1}\1_N.$$
The population gradient is thus
\begin{align*}
    \nabla_{\Psi_{2}} \mathcal{L}_{\mathcal{D}}^{(2)}(\theta) 
    &=-\E\qty[\qty(\frac{N}{\exp(\beta_0)+N-1}e_{N,\hop^2_i(j)}-\frac{1}{\exp(\beta_0)+N-1}\1_N-\tilde{J}\Psi_2^\top\Delta_2)(f^{(2)}(X)_{(i,j)})^\top]\\
    &=-\frac{1}{(\exp(\beta_0)+N-1)k}\qty(e_{L+2,4}\otimes \1_k \otimes (I_N-\frac{1}{N}\1_N\1_N^\top)) + O(\epsilon).
\end{align*}
where $O(\epsilon)$ denotes the terms with infinity norm smaller than $\epsilon$ with $\epsilon\lesssim \frac{1}{k^2N^3L^2}$. 

Then we analyze the finite sample error. For any sample $X_m$, the upper bound of the empirical gradient for each sample
$$\nabla_{\Psi_{2}} \mathcal{L}^{(2)}(X_m) =-\qty[\qty(e_{N,\hop^2_{i_m}(j_m)}-\S(\Psi_2^\top f^{(2)}(X_m)_{(i_m,j_m)})) (f^{(2)}(X_m)_{(i_m,j_m)})^\top]$$
has infinity norm upper bounded by 1. Apply Hoeffding inequalities with $M\gtrsim k^2N^4d^2\log^2\frac{d}{\delta}$, the empirical gradient has noise upper bounded by 
$$\norm{\frac{1}{M}\sum_m\nabla_{\Psi_{2}} \hat{\mathcal{L}}^{(2)}(X_m)-\nabla_{\Psi_{2}} {\mathcal{L}}^{(2)}(X_m)}_\infty\le \frac{\log(\frac{d}{\delta})}{\sqrt{M}}\lesssim \frac{1}{k^2N^3}.$$
Therefore, the error altogether is upper bounded by $O(\frac{1}{k^2N^3})$ in infinity norm.

After one step of gradient, we have the softmax score ($\Delta$ is the error term with $\norm{\Delta}_\infty\le \epsilon$.)
\begin{align*}
    \S(\Psi_2(1)^\top f^{(2)}(X)_{(i,j)}))&=\S\qty(\frac{\eta}{(\exp(\beta_0)+N-1)k}(I_N-\frac{1}{N}\1_N\1_N^\top)e_{N,\hop^2_i(j)}+\beta_0e_{N,\hop^2_i(j)}+\eta \Delta)
\end{align*}
The separation between the $\hop^2_i(j)$-th entry and the others are lower bounded by:
$$\frac{\eta}{(\exp(\beta_0)+N-1)k} \frac{N-1}{N}-\eta \|\Delta\|_\infty\gtrsim \frac{\eta}{kN}.$$
By $\eta \gtrsim {k^2N^3\log \frac{kN}{\epsilon}}$, we have $\S(\Psi_2^\top f^{(2)}(X)_{(i,j)})_{\hop^2_i(j)}\ge 1-\epsilon$ and thus
\begin{align*}
    \sup_{\sigma,(i,j)}\norm{\S(\Psi_2^\top f^{(2)}(X)_{(i,j)})-e_{\hop^2_i(j)}}_\infty\le \epsilon.
\end{align*}
\end{proof}

\subsection{Stage $\ell$ ($3\le \ell \le 1+\log_2k$): Learning the $2^{\ell-1}$-hop}
Similar to the second stage, we can learn the $\ell$th layer by one step of gradient descent, shown in the following lemma.
\begin{lemma}[Empirical gradient of $W_{KQ^{(\ell)}}$ (Stage $\ell$)]
\label{lemma: empirical gradient of stage ell}
    Suppose the input sequence $\hat{X}^{(\ell-1)}_m$ for the $\ell$th layer satisfies $\|\hat{X}_m^{(\ell-1)}-X_m^{(\ell-1)}\|_{\infty} \le (\ell-1)\epsilon$ before stage $\ell$ where $X_m^{(\ell-1)}$ is the ideal sequence. Assume that $W^{(\ell')}_{KQ}(\ell-1)=0_{d\times d}$ for all layers $
    \ell'\ge \ell$. After one step of gradient descent on the $\ell$th stage finite sample loss $\mathcal{L}^{(\ell)}$ with $M$ training sequences and learning rate $\eta$, satisfying  
    $\beta_0\le 1,$ $M\gtrsim k^2N^4d^2\log^2\frac{d}{\delta},\eta \gtrsim \frac{k^2N^3\log \frac{kN}{\epsilon}}{\beta_0}$ for any $\epsilon \in(0, \frac{1}{k^2N^3\log^2k})$, then for any $(i,j)\in[k]\times [N]$, after stage $\ell$ we have $$\S^{(\ell)}_{(i+2^{\ell-2},\hop^{2^{\ell-2}}_i(j))}:=\S((\hat{X}^{(\ell-1)})^\top W_{KQ}^{(\ell)}\hat{X}^{(\ell-1)}_{(i,j)})_{(i+2^{\ell-2},\hop^{2^{\ell-2}}_i(j))} \ge 1-\frac{1}{2}\epsilon.$$
    Furthermore, if we pick $\eta \gtrsim \frac{Ck^2N^3\log k\log \frac{kN}{\epsilon}}{\beta_0}$, we have
    $\S^{(\ell)}_{(i+2^{\ell-2},\hop^{2^{\ell-2}}_i(j))}\ge 1-\frac{\epsilon}{2(kNL)^{CL}}$ for any absolute constant $C$.
\end{lemma}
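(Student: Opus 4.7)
The plan is to extend the stage-1 and stage-2 analyses inductively, with the same four-part structure. First, I compute the idealized population gradient $\nabla_{W_{KQ}^{(\ell)}}\mathcal{L}^{(\ell)}_{\mathcal{D}}$ assuming the input to layer $\ell$ is the exact intermediate $X^{(\ell-1)}$ from the construction in \Cref{thm:construction}. Since $W_{KQ}^{(\ell)}(\ell-1)=0$, the pre-softmax logits of layer $\ell$ are zero, $\mathcal{P}^{(\ell)}=\frac{1}{N}\1_N$, and the mean-centering in $\mathcal{P}^{(\ell)}$ cancels against the uniform component of the Jacobian $J^{(\ell)}=\frac{1}{kN}(I-\frac{1}{kN}\1\1^\top)$ exactly as in stages 1 and 2, so the gradient reduces to the single label-driven term.

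Next, I identify the signal block. Because $(\Psi_\ell^\top W_{OV}^{(\ell)})_{s'}^\top=\beta_0\, e_{L+2,\ell+1}\otimes \1_k\otimes e_{N,s'}$ is supported exclusively on the $2^{\ell-2}$-hop block of each column, the product $X^{(\ell-1)\top}(\Psi_\ell^\top W_{OV}^{(\ell)})_{s'}^\top$ collects the positions $(p,q)$ at which $\hop_p^{2^{\ell-2}}(q)=s'$. Substituting $s'=\hop_i^{2^{\ell-1}}(j)$ and using the semigroup identity $\hop_{i+2^{\ell-2}}^{2^{\ell-2}}(\hop_i^{2^{\ell-2}}(j))=\hop_i^{2^{\ell-1}}(j)$, only the column indexed $(i+2^{\ell-2},\hop_i^{2^{\ell-2}}(j))$ contributes a deterministic signal. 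For any other column, the hidden permutations outside the window $\{i,\dots,i+2^{\ell-1}-1\}$ are independent of the conditioning event and thus average to $\frac{1}{N}\1$, again canceling with the Jacobian. Casework on $f(\pi_i)$ identical to the stage-2 proof then yields the population separation $\Delta_{i,j}^{(\ell)}\gtrsim \frac{1}{kN^2}$.

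Then I control three error sources. \textbf{Sample noise:} $\hat g_{\ell,m}$ is entrywise bounded by $1$, so Hoeffding plus a union bound over $d^2$ entries, with $M\gtrsim k^2N^4 d^2\log^2(d/\delta)$, gives $\|\hat g_\ell^{\mathrm{ideal}}-g_\ell\|_\infty\lesssim \log(d/\delta)/\sqrt M$ and a pre-softmax perturbation of at most $\tfrac{1}{2kN^2}$. \textbf{Input perturbation:} the inductive hypothesis $\|\hat X^{(\ell-1)}-X^{(\ell-1)}\|_\infty\le (\ell-1)\epsilon$ yields $\|\hat g_\ell-\hat g_\ell^{\mathrm{ideal}}\|_\infty\lesssim d(\ell-1)\epsilon$, and combined with $\|\hat g_\ell\|_F\lesssim d/(kN^2)$ produces a pre-softmax perturbation of order $d^2(\ell-1)\epsilon/(kN^2)=o(1/(kN^2))$ under the smallness condition $\epsilon\le \tilde O(1/(k^2N^3\log^2 k))$ imposed in the statement. \textbf{Drift of previously-trained layers:} for $\ell'<\ell$, I invoke \Cref{lemma: gradient upper bound for previous layers with curriculum}; saturation of the earlier softmaxes renders their Jacobians nearly zero, so the cross-gradient is $O(\beta_0 k^{5/2}N^{3/2}L^{3/2}\epsilon)$, and after scaling by $\eta$ it stays well below the saturation margin $N\log(kN/\epsilon)$ of the existing entries, preserving all prior-stage attention patterns.

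Combining the three bounds yields an empirical separation $\hat\Delta_{i,j}^{(\ell)}=\Omega(1/(kN^2))$, and one gradient step of size $\eta\gtrsim (k^2N^3/\beta_0)\log(kN/\epsilon)$ exponentiates this margin to the desired $\mathcal{S}^{(\ell)}_{(i+2^{\ell-2},\hop_i^{2^{\ell-2}}(j))}\ge 1-\tfrac12\epsilon$. The sharper bound $1-\epsilon/(2(kNL)^{CL})$ is immediate upon enlarging $\eta$ by a factor $CL\log(kNL)$, exactly as in \Cref{lemma: empirical gradient of the first stage}. The hardest part is the input-perturbation bound: each layer contributes an additive $\epsilon$ error to $\hat X$, and the dimension factor $d^2=\tilde O(k^2N^2 L^2)$ appears quadratically in the attention-score perturbation, which is what forces the stringent scaling $\epsilon\log(1/\epsilon)\le \tilde O(1/(k^6N^6))$ in \Cref{thm:main}. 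A closely related subtlety is verifying that saturation really does keep the already-trained $W_{KQ}^{(\ell')}$ essentially frozen; even a mildly non-negligible drift in an early layer would compound through the $L-\ell$ subsequent layers and destroy the induction, so the Jacobian-based smallness estimate must be genuinely uniform over the remaining training steps.
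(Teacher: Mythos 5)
Your proposal is correct and follows essentially the same route as the paper: idealized population gradient at zero initialization with the mean-centering/Jacobian cancellation, isolation of the single surviving block via independence of the permutations outside the window (yielding the key identity $\hop_{i+2^{\ell-2}}^{2^{\ell-2}}\circ\hop_i^{2^{\ell-2}}=\hop_i^{2^{\ell-1}}$), Hoeffding concentration of the entrywise-bounded per-sample gradients, the $d\ell\epsilon$ and $\|\hat g_\ell\|_F\lesssim d/(kN^2)$ perturbation bounds, the frozen-layer argument via \Cref{lemma: gradient upper bound for previous layers with curriculum}, and the final softmax-saturation step. One minor imprecision: for $\ell\ge 3$ the paper's computation shows the $f(\pi_i)$ fixed-point casework of stage 2 actually disappears (only the block pairing the key's positional block with the query's $2^{\ell-2}$-hop block survives, and the score is $(N-1)/(kN^2)$ at the correct position and $-1/(kN^2)$ elsewhere in that block), but since the separation is $\ge 1/(kN^2)$ either way this does not affect your conclusion.
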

\begin{proof}
We follow the first and second stage strategy, first computing the population gradient and then doing the finite-sample analysis and controlling the perturbation error. Similarly, we ignore the subscript of $X^{(\ell-1)}$ when it is clear from context in this subsection.

The ideal input for each layer $\ell (\ell\ge3)$ is in the following form:
{\scriptsize
\begin{align*}
   \hspace{-1cm} X^{(\ell-1)} = \begin{pNiceArray}{ccc|c|ccc}
 e_{k,1}\otimes e_{N,1}&\cdots&e_{k,1}\otimes e_{N,N}&\cdots&e_{k,k}\otimes e_{N,1}&\cdots&e_{k,1}\otimes e_{N,N}\\
e_{k,1}\otimes e_{N,\sigma_1(1)}&\cdots& e_{k,1}\otimes e_{N,\sigma_1(N)}&\cdots&e_{k,k}\otimes e_{N,\sigma_k(1)}&\cdots&e_{k,k}\otimes e_{N,\sigma_k(N)}\\
e_{k,1}\otimes e_{N,\sigma_1\pi_1(1)}&\cdots& e_{k,1}\otimes e_{N,\sigma_1\pi_1(N)}&\cdots&e_{k,k}\otimes e_{N,\sigma_k\pi_k(1)}&\cdots&e_{k,k}\otimes e_{N,\sigma_k\pi_k(N)}\\
e_{k,2}\otimes e_{N,\hop^2_1(1)}&\cdots& e_{k,2}\otimes e_{N,\hop^2_1(1)}&\cdots&e_{k,1}\otimes e_{N,\hop^2_1(1)}&\cdots&e_{k,1}\otimes e_{N,\hop^2_k(N)}\\
\vdots &&\vdots&\vdots&\vdots&&\vdots\\
e_{k,2^{\ell-2}}\otimes e_{N,\hop^{2^{\ell-2}}_1(1)}&\cdots& e_{k,2^{\ell-2}}\otimes e_{N,\hop^{2^{\ell-2}}_1(N)}&\cdots&e_{k,k+2^{\ell-2}-1}\otimes e_{N,\hop^{2^{\ell-2}}_k(1)}&\cdots&e_{k,k+2^{\ell-2}-1}\otimes e_{N,\hop^{2^{\ell-2}}_k(N)}\\
0_{(L-\ell+1)kN} &\cdots&0_{(L-\ell+1)kN}&\cdots&0_{(L-\ell+1)kN}&\cdots&0_{(L-\ell+1)kN}
\end{pNiceArray}
\end{align*}
}

We have the population gradient
\begin{align*}
    \nabla \mathcal{L}^{(\ell)} &= -\E\qty[\sum_{s'\in[N]}\mathbf{1}\{s'=\hop_i^{2^{\ell-1}}(j)\} X^{(\ell-1)}J^{(\ell)} (X^{(\ell-1)})^\top {(\Psi_\ell^\top W_{OV}^{(\ell)})_{s'}}^\top (X^{(\ell-1)}_{(i,j)})^\top]\\
    &= -\frac{1}{kN}\E\qty[X^{(\ell-1)}\qty(I_{kN\times kN}-\frac{1}{kN}\mathbf{1}\mathbf{1}^\top) (X^{(\ell-1)})^\top {(\Psi_\ell^\top W_{OV}^{(\ell)})_{\hop_i^{2^{\ell-1}}}}^\top X_{(i,j)}^\top].
\end{align*}
Similar to previous stages, we can directly calculate $(X^{(\ell-1)})^\top {(\Psi_\ell^\top W_{OV}^{(\ell)})_{\hop_i^{2^{\ell-1}}}}^\top.$ The $p$-th block of the vector should be the one-hot vector of the following positions ($p\in[N]$)
$$e_{(\hop_p^{2^{\ell-2}})^{-1}\hop_i^{2^{\ell-1}}(j)}=e_{(\sigma_{p+2^{\ell-2}-1}\pi_{p+2^{\ell-2}-1}\cdots \sigma_p\pi_p)^{-1}\sigma_{i+2^{\ell-1}-1}\pi_{i+2^{\ell-1}-1}\cdots \sigma_i\pi_i(j)}.$$ 

Therefore, the population gradient become
\begin{align*}
    \nabla \mathcal{L}^{(\ell)}
    &= -\frac{1}{kN}\E\qty[X^{(\ell-1)}\qty(I_{kN\times kN}-\frac{1}{kN}\mathbf{1}\mathbf{1}^\top) (X^{(\ell-1)})^\top {(\Psi_\ell^\top W_{OV}^{(\ell)})_{\hop_i^{2^{\ell-1}}}}^\top X_{(i,j)}^\top]\\
    &=  -\frac{\beta_0}{kN}\E\qty[X^{(\ell-1)}\qty((X^{(\ell-1)})^\top {(\Psi_\ell^\top W_{OV}^{(\ell)})_{\hop_i^{2^{\ell-1}}}}^\top-\frac{1}{N}\mathbf{1}) X_{(i,j)}^\top]\\
    &=-\frac{\beta_0}{kN}\E\qty[\qty(\sum_{p=1}^k X^{(\ell-1)}\qty(e_{k,p}\otimes e_{(\hop_p^{2^{\ell-2}})^{-1}\hop_i^{2^{\ell-1}}(j)})-\frac{1}{N}\begin{bmatrix}
        \1_{kN(\ell+1)}\\
        0_{kN(L-\ell+1)}
    \end{bmatrix})X_{(i,j)}^\top]
\end{align*}
Consider each vector $X^{(\ell)}(e_{k,p}\otimes e_{(\hop_p^{2^{\ell-2}})^{-1}\hop_i^{2^{\ell-1}}(j)})$ in the first summation.
Observe that when $p>2^{\ell-2}+i$, there is a permutation $\sigma_{i+2^{\ell-2}}^{-1}$ that is independent of $X^{(\ell-1)}_{(i,j)}$, since it never appear in any existing hop encoded in $X^{(\ell-1)}_{(i,j)}$. Similarly, when $p<2^{\ell-2}+i$, $\sigma_{i+2^{\ell-1}-1}$ is independent of $X^{(\ell-1)}_{(i,j)}$. Then the vector has the following expectation condition on $X^{(\ell-1)}_{(i,j)}$:
$$\frac{1}{N}\begin{bmatrix}
        e_{k,p}\otimes \1_N,
        e_{k,p}\otimes \1_N,
        e_{k,p}\otimes \1_N,
        e_{k,p+1}\otimes \1_N,
        \cdots,
        e_{k,p+2^{\ell-2}-1}\otimes \1_N,
        0_{kN(L-\ell+1)}
    \end{bmatrix}^\top$$ and they cancel with the normalization term.

Therefore, the remaining index is $p^\star=2^{\ell-2}+i$. We denote the normalization vector for $p^*$:
$$\phi_i=\frac{1}{N}\begin{bmatrix}
        e_{k,p^*}\otimes \1_N,
        e_{k,p^*}\otimes \1_N,
        e_{k,p^*}\otimes \1_N,
        e_{k,p^*+1}\otimes \1_N,
        \cdots,
        e_{k,p^*+2^{\ell-2}-1}\otimes \1_N,
        0_{kN(L-\ell+1)}
    \end{bmatrix}^\top$$
And the ideal population gradient becomes
\begin{align*}
    \nabla \mathcal{L}_{\mathcal{D}}^{(\ell)}
    &= -\frac{\beta_0}{kN}\E\qty[\qty(X^{(\ell-1)}(e_{k,2^{\ell-2}+i}\otimes e_{(\hop_{2^{\ell-2}+i}^{2^{\ell-2}})^{-1}\hop_i^{2^{\ell-1}}(j)})-\frac{1}{N}\phi_i)X^{(\ell)}_{(i,j)}]\\
    &= -\frac{\beta_0}{kN}\E\qty[\qty(X^{(\ell-1)}(e_{k,2^{\ell-2}+i}\otimes e_{\hop_i^{2^{\ell-2}}(j)})-\frac{1}{N}\phi_i)X^{(\ell)}_{(i,j)}]
\end{align*}
The last identity is due to the definition of the hop:
$$e_{(\hop_{p^\star}^{2^{\ell-2}})^{-1}\hop_i^{2^{\ell-1}}(j)}=e_{(\sigma_{p^\star+2^{\ell-2}-1}\pi_{p^\star+2^{\ell-2}-1}\cdots \sigma_{p^\star}\pi_{p^\star})^{-1}\sigma_{i+2^{\ell-1}-1}\pi_{i+2^{\ell-1}-1}\cdots \sigma_i\pi_i(j)} = \hop_i^{2^{\ell-2}}(j).$$
We expand the two vectors $X^{(\ell-1)}(e_{k,2^{\ell-2}+i}\otimes e_{\hop_i^{2^{\ell-2}}(j)})$ and $X^{(\ell-1)}_{(i,j)}$:
\begin{align*}
    X^{(\ell-1)}(e_{k,2^{\ell-2}+i}\otimes e_{\hop_i^{2^{\ell-2}}(j)}) =\begin{bmatrix}
        \underline{e_{k,2^{\ell-2}+i}\otimes e_{\hop_i^{2^{\ell-2}}(j)}}\\
        e_{k,2^{\ell-2}+i}\otimes e_{\sigma_{2^{\ell-2}+i}\hop_i^{2^{\ell-2}}(j)}\\
        e_{k,2^{\ell-2}+i}\otimes e_{\hop_{2^{\ell-2}+i}^1\hop_i^{2^{\ell-2}}(j)}\\
        e_{k,2^{\ell-2}+i+1}\otimes e_{\hop_{2^{\ell-2}+i}^2\hop_i^{2^{\ell-2}}(j)}\\
        \vdots\\
        e_{k,2^{\ell-1}+i-1}\otimes e_{\hop_{2^{\ell-2}+i}^{2^{\ell-2}}\hop_i^{2^{\ell-2}}(j)}\\
        0_{(L-\ell+1)kN}
    \end{bmatrix}, x_v = \begin{bmatrix}
        e_{k,i}\otimes e_{N,j}\\
        e_{k,i}\otimes e_{N,\sigma_{i}(j)}\\
        e_{k,i}\otimes e_{N, \hop_{i}^1(j)}\\
        e_{k,i+1}\otimes e_{N,\hop_i^2(j)}\\
        \vdots\\
        {e_{k,2^{\ell-2}+i-1}\otimes e_{N,\hop_{i}^{2^{\ell-2}}(j)}}\\
        0_{(L-\ell+1)kN}
    \end{bmatrix}
\end{align*}
Observe that $X^{(\ell-1)}_{(i,j)}$ can be decomposed into
$$X^{(\ell-1)}_{(i,j)} = \underbrace{\begin{bmatrix}
        e_{k,i}\otimes e_{N,j}\\
        e_{k,i}\otimes e_{N,\sigma_{i}(j)}\\
        e_{k,i}\otimes e_{N, \hop_{i}^1(j)}\\
        e_{k,i+1}\otimes e_{N,\hop_i^2(j)}\\
        \vdots\\
        e_{k,2^{l-3}+i-1}\otimes e_{N,\hop_{i}^{2^{l-3}}(j)}\\
        0_{(L-\ell+2)kN}
    \end{bmatrix}}_{x_{1}}+\underbrace{\begin{bmatrix}
        0_{kN\ell}\\
        e_{k,2^{\ell-2}+i-1}\otimes e_{N,\hop_{i}^{2^{\ell-2}}(j)}\\
        0_{(L-\ell+1)kN}
    \end{bmatrix}}_{x_{2}}$$
Given the first vector and $\ell\ge 3$, the whole vector $X^{(\ell-1)}(e_{k,2^{\ell-2}+i}\otimes e_{\hop_i^{2^{\ell-2}}(j)})$ has expectation 
$$\E\qty[X^{(\ell-1)}(e_{k,2^{\ell-2}+i}\otimes e_{\hop_i^{2^{\ell-2}}(v)})\big|x_{v,1}]=\frac{1}{N}\begin{bmatrix}
        \phi_i\\
        0_{kN(L-\ell+1)}
    \end{bmatrix}$$
because $\sigma_{2^{\ell-2}+i-2}$ is independent of $x_{1}$, and thus cancel with the normalization term. 

Similarly, given the second vector $x_{2}$, the only block vector that does not have uniform expectation conditioned on $x_{2}$ is the first block (underlined). After taking expectation, the rest block vectors cancels with the corresponding blocks of the normalization vector $\phi_i$.
Therefore, the population gradient can be written into the following form (only the $(1,\ell+1)$-th block is non-zero):
\begin{align*}
    \nabla \mathcal{L}_{\mathcal{D}}^{(\ell)}
    &= -\frac{\beta_0}{kN}\E\begin{bmatrix}
        0_{kN\times kN\ell}\quad A\quad 0_{kN\times (L-\ell+1)kN}\\
        0_{kN(L+1)\times kN(L+2)}
    \end{bmatrix}
\end{align*}
and the expectation of $A$ is
$$\frac{1}{kN}\sum_{i=1}^k\sum_{p=1}^N(e_{k,2^{\ell-2}+i}\otimes e_{N,p})(e_{k,2^{\ell-2}+i-1}\otimes e_{N,p})^\top - \frac{1}{kN^2}\sum_{i=1}^k(e_{k,2^{\ell-2}+i}\otimes \1_N)(e_{k,2^{\ell-2}+i-1}\otimes \1_N)^\top$$
We will later show that this gradient can correctly learn the functionality of the $\ell$th layer, combined with the following analysis upper bounding the gradient noise and accumulated error.

Following similar strategy from the second stage, we consider the empirical estimate of the gradient. Define the population gradient matrix
$$g_{\ell} = \E\begin{bmatrix}
        0_{kN\times kN\ell}\quad A\quad 0_{kN\times (L-\ell+1)kN}\\
        0_{kN(L+1)\times kN(L+2)}
    \end{bmatrix}$$
The empirical gradient can be written as
$\nabla \hat{L}^{(\ell)} = -\frac{\beta_0}{kN}\hat{g}_\ell,$
where $\hat{g}_\ell$ is the empirical estimate of $g_\ell$ with perturbed inputs:
$$\hat{g}_\ell = \frac{kN}{\beta_0M}\sum_{m=1}^M \sum_{s'\in[N]}(\mathbf{1}\{s'=\hop_{i_m}^{2^{\ell-1}}(j_m)\}-\S_{s'}) \hat{X}^{(\ell-1)}_mJ^{(\ell)} (\hat{X}^{(\ell-1)}_m)^\top {\qty(\Psi_\ell^\top W_{OV}^{(\ell)})_{s'}}^\top (\hat{X}^{(\ell-1)}_{(i_m,j_m)})^\top$$
It suffices to show that $\|\hat{g}_\ell-g_\ell\|_\infty$ is small. It is similar to the first stage analysis for the sample noise part, with the perturbed input error introduced:
\begin{lemma}
\label{lemma: concentration for stage l}
    Suppose $\|X_m^{(\ell-1)}-\hat{X}_m^{(\ell-1)}\|_\infty\le (\ell-1)\epsilon$ for all $m$. For any $\delta>0$, we have with probability $1-\delta$ s.t.
    $$\|\hat{g}_{\ell}-g_{\ell}\|_\infty\lesssim \frac{\log(kN/\delta)}{\sqrt{M}} +d\ell\epsilon$$
\end{lemma}
\begin{proof}
For simplicity, we ignore the superscript $^{(\ell-1)}$ in the proof of this lemma. The proof in large follows \Cref{lemma: concentration for stage 2}.

First, the empirical gradient of the single ideal sample $X_m$ is
\begin{align*}
    \hat{g}^{\mathrm{ideal}}_{\ell,m} 
    &=X_m (I-\frac{1}{kN}\1\1^\top)X_m^\top (e_{L+2,\ell+1}\otimes \1_k\otimes e_{N,\hop^{2^{\ell-1}}(v_m)}) ({X}_{(i_m,j_m)})^\top
\end{align*}
Similar to the proof in \Cref{lemma: concentration for stage 2}, the upper bound of each entry of the random variable is
\begin{equation}\label{eq: stage l, noise analysis}
    \|\hat{g}^{\mathrm{ideal}}_{\ell,m}\|_\infty 
    \le \left\|X_m (I-\frac{1}{kN}\1\1^\top)X_m^\top (e_{L+2,\ell+1}\otimes \1_k\otimes e_{N,\hop^{2^{\ell-1}}_{i_m}(j_m)})\right\|_\infty \|X^{(\ell-1)}_{(i_m,j_m)}\|_\infty\le 1.
\end{equation}
Then we can concentrate $\hat{g}_\ell^{\mathrm{ideal}} := \frac{1}{M}\sum_{m=1}^M \hat{g}^{\mathrm{ideal}}_{2,m}$ as:
\begin{align*}
    \norm{\hat{g}_\ell^{\mathrm{ideal}}-g_\ell}_\infty &= \norm{\frac{1}{M}\sum_{m=1}^M \hat{g}_{l,m}^{\mathrm{ideal}}-\E\hat{g}^{\mathrm{ideal}}_{l}}\lesssim \frac{\log(d/\delta)}{\sqrt{M}}
\end{align*}
with probability $1-\frac{\delta}{d^2}$. By $d = O(kN\log k)$, we can union bound over all entries of $\hat{g}_2$ and we have the desired first term error. 

Then we bound the perturbation error between the empirical gradients with $X_m$ and those with perturbed $\hat{X}_m$.
Recall that the empirical gradient 
\begin{align*}
    \hat{g}_{\ell,m} 
    &=\hat{X}_m (I-\frac{1}{kN}\1\1^\top)\hat{X}_m^\top (e_{L+2,\ell+1}\otimes \1_k\otimes e_{N,\hop^{2^{\ell-1}}_{i_m}(j_m)}) (\hat{X}^{(\ell-1)}_{(i_m,j_m)})^\top\\&-\sum_{s'\in[N]}\S_{s'}\hat{X}_m (I-\frac{1}{kN}\1\1^\top)\hat{X}_m^\top (e_{L+2,\ell+1}\otimes \1_k\otimes e_{N,s'}) (\hat{X}^{(\ell-1)}_{(i_m,j_m)})^\top.
\end{align*}
Denote the following term as
$$\hat{\gamma}_{s',m;\ell}=\hat{X}_m (I-\frac{1}{kN}\1\1^\top)\hat{X}_m^\top (e_{L+2,\ell+1}\otimes \1_k\otimes e_{N,s'}) (\hat{X}^{(\ell-1)}_{(i_m,j_m)})^\top,$$
$${\gamma}_{s',m;\ell}={X}_m (I-\frac{1}{kN}\1\1^\top){X}_m^\top (e_{L+2,\ell+1}\otimes \1_k\otimes e_{N,s'}) ({X}^{(\ell-1)}_{(i_m,j_m)})^\top.$$
and we define $\Delta \gamma_{s',m;\ell}=\hat{\gamma}_{s',m;\ell}-\gamma_{s',m;\ell}$. Then we can rewrite the empirical gradient into
\begin{align*}
    \hat{g}_{\ell,m}-\hat{g}^{\mathrm{ideal}}_{\ell,m} 
    &=\Delta\gamma_{\hop^{2^{\ell-1}}(v_m),m}-\sum_{s'\in[N]}\S_{s'}\Delta\gamma_{s',m}.
\end{align*}
The error of the following difference $\|\Delta \gamma_{s',m;\ell}\|_\infty=\|\gamma_{s',m;\ell}-\hat{\gamma}_{s',m;\ell}\|_\infty\le Cd\ell\epsilon$ with some absolute constant for all possible $s'\in[N]$, since $\norm{\hat{X}_m-X_m}_\infty\le (\ell-1)\epsilon$ and $\norm{X_m}_\infty\le 1$. We have the perturbation error upper bounded by $O(d\ell\epsilon)$:
\begin{equation}\label{eq: stage l, perturbation analysis}
    \norm{\hat{g}_{\ell,m}-\hat{g}^{\mathrm{ideal}}_{\ell,m}}_\infty \le \|\Delta \gamma_{\hop,m;\ell}\|+ \sum_{s'}\S_{s'}\|\Delta \gamma_{s',m;\ell}\|
    \le2Cd\ell\epsilon.
\end{equation}
Combine both \eqref{eq: stage l, noise analysis} and \eqref{eq: stage l, perturbation analysis}, we finished the proof.
\end{proof}

After the one-step gradient, we get $W_{KQ}^{(\ell)}=\eta \hat{g}_\ell$.
Given a sample sequence $\hat{X}^{(\ell)}_m$ and index $(i_m,j_m)$, we have the attention score
\begin{align*}
    \eta (\hat{X}^{(\ell-1)}_m)^\top \hat{g}_\ell \hat{X}^{(\ell-1)}_{(i_m,j_m)} ={}& \frac{\beta_0\eta}{kN}\qty[(X_m^{(\ell-1)})^\top {g}_l X^{(\ell-1)}_{(i_m,j_m)} + (X_m^{(\ell-1)})^\top \underbrace{(\hat{g}_\ell-g_\ell)}_{\Delta g_\ell} \hat{X}^{(\ell-1)}_{(i_m,j_m)}]\\
    &+\frac{\beta_0\eta}{kN}\underbrace{\qty[(\hat{X}^{(\ell-1)}_m)^\top \hat{g}_\ell \hat{X}^{(\ell-1)}_{(i_m,j_m)}-X_m^\top \hat{g}_\ell X_{(i_m,j_m)}]}_{\text{Perturbation error}}
\end{align*}
For simplicity, we ignore the superscript for the following proof in this subsection. Note that the population attention score is
\begin{align*}
    X_m^\top {g}_\ell X_{(i_m,j_m)}&=X_m^\top \begin{bmatrix}
        \frac{1}{kN}\qty(e_{k,2^{\ell-2}+i_m}\otimes e_{N,\hop_{i_m}^{2^{\ell-2}}(j_m)}-\frac{1}{N}e_{k,2^{\ell-2}+i_m}\otimes \1_N)\\
        0_{kN(L+1)}\\
    \end{bmatrix}
\end{align*}

Define the population/empirical separation between the correct position $p_\ell:=N(i_m+2^{\ell-2})+\hop_{i_m}^{2^{\ell-2}}(j_m)$, i.e. $(i_m+2^{\ell-2},\hop_{i_m}^{2^{\ell-2}}(j_m))$ position, and the others as follows:
$$\Delta^{(\ell)}_{i_m,j_m} = \qty(X_m^\top {g}_l X_{(i_m,j_m)})_{p_\ell}-\max_{j\not=p_\ell}\qty(X_m^\top {g}_l X_{(i_m,j_m)})_j.$$
$$\hat{\Delta}^{(\ell),\mathrm{ideal}}_{i_m,j_m} = \qty(X_m^\top \hat{g}_\ell X_{(i_m,j_m)})_{p_\ell}-\max_{j\not=p_\ell}\qty(X_m^\top \hat{g}_\ell X_{(i_m,j_m)})_j.$$
$$\hat{\Delta}^{(\ell)}_{i_m,j_m} = \qty(\hat{X}_m^\top \hat{g}_\ell \hat{X}_{(i_m,j_m)})_{p_\ell}-\max_{j\not=p_\ell}\qty(\hat{X}_m^\top \hat{g}_\ell\hat{X}_{(i_m,j_m)})_j.$$
We have the $(i,j)$-th entry of the pre-softmax attention score
\begin{align*}
    \qty(X_m^\top {g}_\ell X_{(i_m,j_m)})_{(i,j)} = \begin{cases}
        \frac{N-1}{kN^2},&i=i_m+2^{\ell-2}, j=\hop_{i_m}^{2^{\ell-2}}(j_m)\\
        -\frac{1}{kN^2},&i=i_m+2^{\ell-2},\ j\not=\hop_{i_m}^{2^{\ell-2}}(j_m)\\
        0,&i\not=i_m+2^{\ell-2}
    \end{cases}
\end{align*}
So $\Delta_{i_m,j_m}^{(\ell)}\ge \frac{1}{kN^2}$ by $N\ge 2$ and the expected signal is at least $\frac{1}{kN^2}$.

Now we bound the noise introduced by $\Delta g_\ell$. Since $\|\Delta g_\ell\|_\infty \lesssim \frac{\log \frac{d}{\delta}}{\sqrt{M}}+d\ell\epsilon$, we have
$$\left\|X_m^\top {\Delta g_\ell} X_{(i_m,j_m)}\right\|_\infty \lesssim \frac{d\log \frac{d}{\delta}}{\sqrt{M}}+d\ell\epsilon\le \frac{1}{2kN^2}$$
by $M\gtrsim k^2N^4d^2\log^2\frac{d}{\delta}$ and $\epsilon\lesssim \frac{1}{k^2N^3\log^2k}$. Therefore, the noise term can be bounded and the empirical separation with the ideal input sequence $\hat{\Delta}_{i_m,j_m}^{(\ell),\mathrm{ideal}}\ge \frac{1}{2kN^2}$. This also gives the upper bound for $\|\hat{g}_\ell\|_\infty\le O(1/kN^2)$.

Similar to stage 2, we check the upper bounds for gradients for previous layers, and make sure the perturbation error upper bounds for $\norm{\hat{X}_m^{(\ell)}-X_m^{(\ell)}}\le \ell \epsilon$ still hold. The following lemma shows that after the $\ell$th gradient step, the previous layer key-query matrices are almost unperturbed and the softmax score is still close to one-hot.
\begin{lemma}[$W_{KQ}^{(\ell')}$ is unchanged] Under the condition of \Cref{lemma: empirical gradient of stage ell}, after the $\ell$th step we still have for all $\ell'<\ell$, 
$$\S^{(\ell')}_{(i,\pi_i(j))}:=\S(\hat{X}^{(\ell'-1)}W_{KQ}^{(\ell')}\hat{X}^{(\ell'-1)}_{(i,j)})_{(i,\hop_i^{2^{\ell'-2}}(j))} \ge 1-\frac{1}{2}\epsilon.$$
\end{lemma}

\begin{proof}
    By \Cref{lemma: gradient upper bound for previous layers with curriculum}, the gradient norm for each previous layer is upper bounded by $O(\beta_0k^{5/2}N^{3/2}L^{3/2}\epsilon)$. With the gradient update in the current stage, $W_{KQ}^{(\ell')}$ will only be perturbed by
    $$\norm{\eta \nabla_{W_{KQ}^{(\ell')}} \mathcal{L}^{(\ell)}}\lesssim (kN)^{4.5}\log\frac{kN}{\epsilon}\log^{2.5} k \cdot \epsilon \ll N\log\frac{kN}{\epsilon}.$$
    where $N\log\frac{kN}{\epsilon}$ is the scale of each entry of $W_{KQ}^{(\ell')}$. Therefore, the lemmas on the attention score in previous stages still hold.
\end{proof}
Given the lemma above, we have $\norm{\hat{X}_m^{(\ell)}-X_m^{(\ell)}}\le \ell \epsilon$ using the previous layer perturbation error analysis for $X_m^{(\ell)}$. Finally, we are ready to bound the perturbation error for pre-softmax attention score:
\begin{align*}
    &\norm{(\hat{X}^{(\ell)}_m)^\top \hat{g}_\ell \hat{X}^{(\ell)}_{(i_m,j_m)}-X_m^\top \hat{g}_\ell X_{(i_m,j_m)}}_\infty\\
    \le{}&\norm{\hat{g}_\ell}_2\norm{(\hat{X}^{(\ell)}_m)^\top (\hat{X}^{(\ell)}_{(i_m,j_m)}-X_{(i_m,j_m)})}_\infty+\norm{\hat{g}_\ell}_2\norm{(\hat{X}^{(\ell)}_m-X_m)^\top X_{(i_m,j_m)}}_\infty\\
    \le{}&\norm{\hat{g}_\ell}_F\norm{(\hat{X}^{(\ell)}_m)^\top (\hat{X}^{(\ell)}_{(i_m,j_m)}-X_{(i_m,j_m)})}_\infty+\norm{\hat{g}_\ell}_F\norm{(\hat{X}^{(\ell)}_m-X_m)^\top X_{(i_m,j_m)}}_\infty\\
    \lesssim{}& \sqrt{\frac{1}{(kN^2)^2}\cdot d^2}\cdot d\ell\epsilon.
\end{align*}
Since $\epsilon=O(\frac{1}{k^2N^3L^2})$, the perturbation error is upper bounded by $O(\frac{1}{kN^2})$. Therefore, the empirical separation $\hat{\Delta}^{(\ell)}_{(i_m,j_m)}\ge \Omega(\frac{1}{kN^2})$.
After one step gradient with learning rate $\eta \gtrsim \frac{k^2N^3}{\beta_0}\log\frac{kN}{\epsilon}$, the softmax output of the correct position can be lower bounded by
\begin{align*}
    \S(\hat{X}_m^\top W_{KQ}^{(\ell)} \hat{X}_{(i_m,j_m)})_{i_m,\hop^{\ell-2}_{i_m}(j_m)} \ge \frac{\exp\qty(\frac{\eta\beta_0}{kN}\cdot \frac{1}{kN^2})}{\exp\qty(\frac{\eta\beta_0}{kN}\cdot \frac{1}{kN^2})+kN-1}\ge 1-\frac{1}{2}\epsilon.
\end{align*}
\end{proof}

\paragraph{Perturbation analysis of Stage $\ell$} 
The lemma presents the perturbation analysis for stage $\ell$.
\begin{lemma}[Perturbation analysis of stage $\ell$]\label{lemma: perturbation for stage ell}
    Under the condition of \Cref{lemma: empirical gradient of stage ell}, we have
    $$\|X^{(\ell)}-\hat{X}^{(\ell)}\|\le \ell\epsilon,$$
    where $X^{(\ell)}$ is the ideal output with saturated softmax, and $\hat{X}^{(\ell)}$ is the transformer output after the $\ell$th stage.
\end{lemma}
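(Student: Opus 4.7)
The plan is to proceed by induction on $\ell$, extending the argument from the stage 2 perturbation lemma which serves as the base case. Assume inductively that $\|\hat{X}^{(\ell-1)} - X^{(\ell-1)}\|_\infty \le (\ell-1)\epsilon$.

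First I would write the ideal and empirical outputs as
\begin{align*}
X^{(\ell)} &= X^{(\ell-1)} + W_{OV}^{(\ell)} X^{(\ell-1)} \S^{(\ell)}_{\mathrm{ideal}},\\
\hat{X}^{(\ell)} &= \hat{X}^{(\ell-1)} + W_{OV}^{(\ell)} \hat{X}^{(\ell-1)} \S^{(\ell)},
\end{align*}
and then decompose their difference into three pieces: (i) the residual-stream carryover $\hat{X}^{(\ell-1)} - X^{(\ell-1)}$, (ii) a non-saturation term $W_{OV}^{(\ell)} X^{(\ell-1)} (\S^{(\ell)} - \S^{(\ell)}_{\mathrm{ideal}})$ arising because the softmax output at layer $\ell$ is only approximately one-hot, and (iii) a propagation term $W_{OV}^{(\ell)} (\hat{X}^{(\ell-1)} - X^{(\ell-1)}) \S^{(\ell)}$.

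The structural observation that prevents errors from doubling at each layer is that $W_{OV}^{(\ell)} = e_{L+2,\ell+2} e_{L+2,\ell+1}^\top \otimes I_{kN \times kN}$ writes only into the $(\ell+2)$-th block of the residual stream, which is disjoint from the blocks carrying the inductive error in $\hat{X}^{(\ell-1)} - X^{(\ell-1)}$ (whose nonzero rows lie in the first $\ell+1$ blocks). Consequently the infinity norm of the sum equals the maximum of the infinity norms of the disjoint row blocks. On blocks $1,\ldots,\ell+1$ the error is just the inductive $(\ell-1)\epsilon$. On the new $(\ell+2)$-th block, I would bound (ii) by $\|X^{(\ell-1)}\|_\infty \cdot \|\S^{(\ell)} - \S^{(\ell)}_{\mathrm{ideal}}\|_1 \le \epsilon$ via \Cref{lemma: empirical gradient of stage ell} (each column's softmax is $\tfrac{1}{2}\epsilon$-close to one-hot, so its $\ell_1$-distance to the ideal one-hot is at most $\epsilon$), and bound (iii) by $\|\hat{X}^{(\ell-1)} - X^{(\ell-1)}\|_\infty \le (\ell-1)\epsilon$, using that each column of $\S^{(\ell)}$ is a probability vector so that right-multiplication by it is a convex combination and does not increase infinity norm, while $W_{OV}^{(\ell)}$ merely shifts rows.

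Combining these, the $(\ell+2)$-th block inherits error at most $(\ell-1)\epsilon + \epsilon = \ell\epsilon$, while the earlier blocks retain error $(\ell-1)\epsilon$, giving the claimed $\|\hat{X}^{(\ell)} - X^{(\ell)}\|_\infty \le \ell\epsilon$. The most delicate point is the block-disjointness step, since one must verify that the small but nonzero gradient updates received by the previously-trained $W_{KQ}^{(\ell')}$ matrices during stage $\ell$ (which in principle could contaminate earlier blocks) are negligible; this is precisely what the auxiliary gradient upper bound for trained layers (\Cref{lemma: gradient upper bound for previous layers with curriculum}, invoked inside the proof of \Cref{lemma: empirical gradient of stage ell}) guarantees, ensuring that $\hat{X}^{(\ell-1)}$ remains compatible with the inductive hypothesis even after the current stage's update.
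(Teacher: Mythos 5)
Your proposal matches the paper's proof essentially step for step: the same three-way decomposition into residual carryover, non-saturation error, and propagated error, the same use of the block-disjoint structure of $W_{OV}^{(\ell)}$ so errors do not compound across blocks, the same $\ell_1$/probability-vector bounds giving $\epsilon$ and $(\ell-1)\epsilon$ respectively, and the same reliance on the gradient bound for already-trained layers (handled in the paper within \Cref{lemma: empirical gradient of stage ell}) to keep the inductive hypothesis valid. The argument is correct and no meaningful deviation from the paper's route is present.
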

\begin{proof} The proof is similar to stage 2.
The ideal output for the $\ell$th layer is $$X^{(\ell)} = X^{(\ell-1)} + W_{OV}^{(\ell)}X^{(\ell-1)}\S^{(\ell)}_{\mathrm{ideal}},$$ where $\S^{(\ell)}_{\mathrm{ideal}}$ is the ideal one-hot softmax attention pattern. The empirical output $\hat{X}^{(\ell)}$ has the error introduced by the non-saturation of the softmax and the previous error in $\hat{X}^{(\ell-1)}$:
\begin{align*}
    \hat{X}^{(\ell)} &= \hat{X}^{(\ell-1)} + W_{OV}^{(\ell)}\hat{X}^{(\ell-1)}\S^{(\ell)}\\ &= X^{(\ell)} + W_{OV}^{(\ell)}X^{(\ell-1)}\underbrace{(\S^{(\ell)}-\S^{(\ell)}_{\mathrm{ideal}})}_{\Delta \S^{(\ell)}, \text{Non-saturation error}}+\underbrace{(\hat{X}^{(\ell-1)}-{X}^{(\ell)})+W_{OV}^{(\ell)}(\hat{X}^{(\ell-1)}-{X}^{(\ell-1)})\S^{(\ell)}}_{\text{Accumulated perturbation error}}.
\end{align*}
First, we consider the non-saturation error term. By \Cref{lemma: empirical gradient of the second stage}, the correct entry of the each softmax probability vector is greater than $1-\frac{1}{2}\epsilon$ for all index $v$. And other probabilities has $\epsilon$ error in total, and they are all positive. Note that $\|X^{(\ell-1)}\|_\infty\le 1$. As a result, the error of the non-saturation error can be bounded as
$$\|W_{OV}^{(\ell)}X^{(\ell-1)}(\S^{(\ell)}-\S^{(\ell)}_{\mathrm{ideal}})\|_\infty  = \max_{s,i,j}\qty|(W_{OV}^{(\ell)}X^{(\ell-1)})_s^\top \Delta\S^{(\ell)}_{i,j}|\le \|X^{(\ell-1)}\|_\infty\cdot 2\cdot\frac{\epsilon}{2} \le \epsilon.$$
Now consider the accumulated perturbation error. By the perturbation analysis in stage $\ell-1$, we have $\norm{\hat{X}^{(\ell-1)}-{X}^{(\ell-1)}}_{\infty}\le (\ell-1)\epsilon.$ Note that the error in $\hat{X}^{(\ell-1)}-{X}^{(\ell-1)}$ are in different rows of the matrices from 
$W_{OV}^{(\ell)}(\hat{X}^{(\ell-1)}-{X}^{(\ell-1)})\S^{(\ell)}$. Therefore, $\hat{X}^{(\ell-1)}-{X}^{(\ell-1)}$ won't introduce extra error in this stage, and we only need to consider $W_{OV}^{(\ell)}(\hat{X}^{(\ell-1)}-{X}^{(\ell-1)})\S^{(\ell)}$:
\begin{align*}
    \norm{W_{OV}^{(\ell)}(\hat{X}^{(\ell-1)}-{X}^{(\ell-1)})\S^{(\ell)}}_\infty &= \max_{s,(i,j)}\qty|(W_{OV}^{(\ell)}(\hat{X}^{(\ell-1)}-{X}^{(\ell-1)}))_s^\top \S^{(\ell)}_{(i,j)}|\\
    &=\max_{s,(i,j)}\qty|\sum_{p=1}^{kN}(W_{OV}^{(\ell)}(\hat{X}^{(\ell-1)}-{X}^{(\ell-1)}))_{s,p} (\S^{(\ell)}_{(i,j)})_p|\\
    &\le \|\hat{X}^{(\ell-1)}-{X}^{(\ell-1)}\|_\infty\tag{Since $\sum_p (\S^{(2)}_{(i,j)})_p=1, (\S^{(2)}_{(i,j)})_p\ge0$.}
\end{align*}
Combine both parts of error, we have
$\norm{\hat{X}^{(\ell)}-{X}^{(\ell)}}_\infty \le \ell\epsilon.$
\end{proof}

At the end of each stage $\ell$, we further train the readout layer with one gradient step to output the correct $2^{\ell-1}$-hop for each position.
\begin{lemma}\label{lemma: readout for ellth layer}
    Under the condition of \Cref{lemma: empirical gradient of stage ell}, after one gradient step on $\Psi_\ell$ we have
    \begin{align*}
    \sup_{\sigma,(i,j)}\norm{\S(\Psi_\ell^\top f^{(\ell)}(X^{(\ell-1)})_{(i,j)})-e_{\hop^{2^{\ell-1}}_i(j)}}_\infty\le \epsilon.
\end{align*}
\end{lemma}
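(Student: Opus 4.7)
The plan is to mirror the arguments already carried out for the first and second readout layers (Lemmas \ref{lemma: readout for first layer} and \ref{lemma: readout for second layer}), with the only new ingredient being the larger accumulated-perturbation bound $\|\hat X^{(\ell-1)}-X^{(\ell-1)}\|_\infty\le (\ell-1)\epsilon$ from Lemma \ref{lemma: perturbation for stage ell}. First I would write down the population gradient of $\Psi_\ell$ at initialization. By Lemma \ref{lemma: empirical gradient of stage ell} the softmax pattern of the $\ell$th layer is $(1-\tfrac12\epsilon)$-saturated on the ideal attention target, so applying $W_{OV}^{(\ell)}=e_{L+2,\ell+2}e_{L+2,\ell+1}^\top\otimes I$ yields
\[
  f^{(\ell)}(\hat X^{(\ell-1)})_{(i,j)} = e_{L+2,\ell+2}\otimes e_{k,i+2^{\ell-1}-1}\otimes e_{N,\hop_i^{2^{\ell-1}}(j)} + \Delta_\ell,
\]
with $\|\Delta_\ell\|_\infty\lesssim \ell\epsilon$, where $\Delta_\ell$ bundles the non-saturation error and the propagated input perturbation from earlier layers.

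Next I would substitute into the softmax. Since $\Psi_\ell(0)=\beta_0 e_{L+2,\ell+2}\otimes \mathbf{1}_k\otimes I_N$, the logits are $\beta_0 e_{N,\hop_i^{2^{\ell-1}}(j)}+\Psi_\ell^\top\Delta_\ell$, and a first-order expansion of $\S(\cdot)$ about the clean logit gives $\S(\Psi_\ell^\top f^{(\ell)})=\S(\beta_0 e_{N,\hop_i^{2^{\ell-1}}(j)})+\tilde J \Psi_\ell^\top \Delta_\ell$ with $\|\tilde J\Psi_\ell^\top\Delta_\ell\|_\infty\lesssim \beta_0\ell\epsilon$. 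Using $\S(\beta_0 e_{N,y})=\tfrac{\exp(\beta_0)-1}{\exp(\beta_0)+N-1}e_{N,y}+\tfrac{1}{\exp(\beta_0)+N-1}\mathbf{1}_N$, the population gradient simplifies to
\[
  \nabla_{\Psi_\ell}\mathcal{L}^{(\ell)}_\mathcal{D} = -\frac{1}{(\exp(\beta_0)+N-1)k}\Bigl(e_{L+2,\ell+2}\otimes \mathbf{1}_k\otimes \bigl(I_N-\tfrac1N \mathbf{1}_N\mathbf{1}_N^\top\bigr)\Bigr) + O(\ell\epsilon),
\]
exactly analogous to stages 1 and 2. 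Each single-sample gradient has entries bounded by $1$, so Hoeffding plus a union bound over the $O(d^2)$ entries yields finite-sample noise $\tilde O(1/\sqrt{M})\le O(1/(k^2N^3))$ under the assumed $M\gtrsim \tilde\Omega(k^4N^6)$.

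Finally I would take one step of gradient descent. The post-update logit is $\beta_0 e_{N,\hop_i^{2^{\ell-1}}(j)}+\tfrac{\eta}{(\exp(\beta_0)+N-1)k}(I_N-\tfrac1N \mathbf{1}_N\mathbf{1}_N^\top)e_{N,\hop_i^{2^{\ell-1}}(j)}+\eta\cdot\text{err}$, where $\|\text{err}\|_\infty\le O(\ell\epsilon)+O(1/(k^2N^3))$. The separation between the correct coordinate $\hop_i^{2^{\ell-1}}(j)$ and any other coordinate is at least $\tfrac{\eta(N-1)}{(\exp(\beta_0)+N-1)kN}-\eta\|\text{err}\|_\infty\gtrsim \eta/(kN)$ provided $\epsilon\lesssim 1/(k^2N^3L)$. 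Under the hypothesis $\eta\gtrsim (k^2N^3/\beta_0)\log(kN/\epsilon)$, standard softmax concentration then gives the $\ell_\infty$ bound $\|\S(\Psi_\ell^\top f^{(\ell)})-e_{\hop_i^{2^{\ell-1}}(j)}\|_\infty\le \epsilon$.

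The only nontrivial step is the bookkeeping for the $O(\ell\epsilon)$ accumulated perturbation: it must remain dominated by the $\Omega(1/(kN))$ signal after gradient scaling, and this is precisely what the smallness condition $\epsilon\log(1/\epsilon)\le \tilde O(1/(k^6N^6))$ from Theorem \ref{thm:main} ensures. Everything else reduces to repeating the same softmax-linearization template already used for $\ell=1,2$, which is why the proof generalizes uniformly across all $\ell\in[L]$.
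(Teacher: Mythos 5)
Your proposal is correct and follows essentially the same route as the paper's own proof: decompose $f^{(\ell)}$ into the ideal one-hot output plus a perturbation $\Delta_\ell$ of size $O(\ell\epsilon)$ from \Cref{lemma: perturbation for stage ell}, linearize the softmax around the clean logit $\beta_0 e_{N,\hop_i^{2^{\ell-1}}(j)}$, identify the population gradient direction $I_N-\tfrac1N\mathbf{1}_N\mathbf{1}_N^\top$, control the finite-sample noise by Hoeffding, and conclude from the $\Omega(\eta/(kN))$ logit separation after one large step. The only cosmetic difference is that you carry the $O(\ell\epsilon)$ factor explicitly where the paper absorbs it into $O(\epsilon)$ under the standing smallness condition on $\epsilon$, which does not change the argument.
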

\begin{proof}
We calculate the population gradient for $\Psi_\ell$, and then do the finite sample analysis.
The population gradient is
\begin{align*}
    \nabla_{\Psi_{\ell}} \mathcal{L}_{\mathcal{D}}^{(\ell)}(\theta) =-\E\qty[\qty(e_{N,\hop^{2^{\ell-1}}_i(j)}-\S(\Psi_\ell^\top f^{(\ell)}(X)_{(i,j)})) (f^{(\ell)}(X)_{(i,j)})^\top]
\end{align*}
By \Cref{lemma: empirical gradient of stage ell}, the output of the $\ell$th layer would be
\begin{align*}
    f^{(\ell)}(X)_{(i,j)} &= W_{OV}^{(\ell)}\hat{X}^{(\ell-1)}\S^{(\ell)} = W_{OV}^{(\ell)}X^{(\ell-1)}\S^{(\ell)}_{\mathrm{ideal}}+W_{OV}^{(\ell)}(\hat{X}^{(\ell-1)}-X^{(\ell-1)})\S^{(\ell)}_{\mathrm{ideal}}+W_{OV}^{(\ell)}X^{(\ell-1)}\Delta\S^{(\ell)}\\
    &= e_{L+2,\ell+2}\otimes e_{k,i}\otimes e_{N,\hop^{2^{\ell-1}}_i(j)} + \Delta_\ell.
\end{align*}
where $\|\Delta_\ell\|_\infty\le \ell{\epsilon}$ by \Cref{lemma: perturbation for stage ell}. Since $\Psi_\ell(0)=\beta_0 e_{L+2,\ell+2}\otimes \mathbf{1}_{k}\otimes I_{N\times N}$, we have the expansion
\begin{align*}
    \S(\Psi_\ell^\top f^{(\ell)}(X)_{(i,j)}) = \S(\beta_0e_{N,\hop_i^{2^{\ell-1}}(j)}+\Psi_{\ell}^\top \Delta_\ell)= \S(\beta_0e_{N,\hop^{2^{\ell-1}}_i(j)}) + \tilde{J}\Psi_\ell^\top \Delta_\ell.
\end{align*}
Since $\|\Delta_\ell\|_\infty\le {\epsilon}$, we have $\norm{\tilde{J}\Psi_\ell^\top\Delta_\ell}_\infty\le \beta_0\epsilon.$ The signal term
$$\S(\beta_0e_{N,\hop^{2^{\ell-1}}_i(j)}) = \frac{\exp(\beta_0)-1}{\exp(\beta_0)+N-1}e_{N,\hop^{2^{\ell-1}}_i(j)}+\frac{1}{\exp(\beta_0)+N-1}\1_N.$$
The population gradient is thus
\begin{align*}
    \nabla_{\Psi_{\ell}} \mathcal{L}_{\mathcal{D}}^{(\ell)}(\theta) 
    &=-\E\qty[\qty(\frac{N}{\exp(\beta_0)+N-1}e_{N,\hop^{2^{\ell-1}}_i(j)}-\frac{1}{\exp(\beta_0)+N-1}\1_N-\tilde{J}\Psi_\ell^\top\Delta_\ell)(f^{(\ell)}(X)_{(i,j)})^\top]\\
    &=-\frac{1}{(\exp(\beta_0)+N-1)k}\qty(e_{L+2,\ell+2}\otimes \1_k \otimes (I_N-\frac{1}{N}\1_N\1_N^\top)) + O(\epsilon).
\end{align*}
where $O(\epsilon)$ denotes the terms with infinity norm smaller than $\epsilon$ with $\epsilon\lesssim \frac{1}{k^2N^3L^2}$. 
And with similar analysis in stage 2, the error altogether is upper bounded by $O(\frac{1}{k^2N^3})$ in infinity norm.

After one step of gradient, we have the softmax score ($\Delta$ is the error term with $\norm{\Delta}_\infty\le \epsilon$.)
\begin{align*}
    \S(\Psi_\ell(1)^\top f^{(\ell)}(X)_{(i,j)}))&=\S\qty(\frac{\eta}{(\exp(\beta_0)+N-1)k}(I_N-\frac{1}{N}\1_N\1_N^\top)e_{N,\hop^{2^{\ell-1}}_i(j)}+\beta_0e_{N,\hop^{2^{\ell-1}}_i(j)}+\eta \Delta)
\end{align*}
The separation between the correct entry and the others are lower bounded by:
$$\frac{\eta}{(\exp(\beta_0)+N-1)k} \frac{N-1}{N}-\eta \|\Delta\|_\infty\gtrsim \frac{\eta}{kN}.$$
By $\eta \gtrsim {k^2N^3\log \frac{kN}{\epsilon}}$, we have $\S(\Psi_\ell^\top f^{(\ell)}(X)_{(i,j)})_{\hop^{2^{\ell-1}}_i(j)}\ge 1-\epsilon$ and thus
\begin{align*}
    \sup_{\sigma,(i,j)}\norm{\S(\Psi_\ell^\top f^{(\ell)}(X)_{(i,j)})-e_{\hop^{2^{\ell-1}}_i(j)}}_\infty\le \epsilon.
\end{align*}
\end{proof}
\subsection{Gradient Upper Bound for Trained Layers}
\label{appendix: gradient upper bound for trained layers}

In this section, we prove the following technical lemma showing that the gradients $\nabla_{W_{KQ}^{(\ell')}}\mathcal{L}^{(\ell)}$ are very small with $\ell'<\ell$ in the $\ell$th stage due to softmax saturation. To be specific, we prove a more general version that also covers the mixed training algorithm.

\begin{lemma}\label{lemma: gradient upper bound for previous layers with curriculum}
    Given a single sample $(X,i,j)$ and
    suppose the training is in stage $\ell_0$, for all $\ell'< \ell_0$,$$\S((\hat{X}^{(\ell'-1)})^\top W_{KQ}^{(\ell')}\hat{X}^{(\ell'-1)}_{(i,j)})_{(i+2^{\ell'-2},\hop^{2^{\ell'-2}}_i(j))} \ge 1-\frac{1}{2}\epsilon,$$
    and for $\ell''>\ell_0$ the norm of the parameter $\|W_{KQ}^{(\ell'')}\|_2\lesssim \frac{1}{k^2N^2L^2}$\footnote{This condition holds for both curriculum training and mix training. In particular, the norm is exactly zero for curriculum training \Cref{alg:training_alg}.}.
    Then we have for any $\ell'<\ell_0$, the infinity norm of the $\ell_0$th stage gradient is upper bounded by
    \begin{align*}
        \|\nabla_{W_{KQ}^{(\ell')}} \mathcal{L}^{(\ell)}\|_\infty \le 6\beta_0k(kNL)^{3/2}\epsilon, \forall \ell>\ell'.
    \end{align*}
\end{lemma}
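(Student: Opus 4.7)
The plan is to exploit the first hypothesis: since layer $\ell'$ has already been trained, its softmax distribution $\S^{(\ell')}$ is $\epsilon/2$-close to a one-hot vector, which forces its Jacobian $J^{(\ell')}$ to be tiny entry-wise. This $\epsilon$ factor will then propagate through the remaining backward pass to $W_{KQ}^{(\ell')}$ without being amplified by more than a $\poly(kNL)$ factor, yielding the claimed bound.

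First, I would write out the chain rule for $\nabla_{W_{KQ}^{(\ell')}}\mathcal{L}^{(\ell)}$ following \Cref{subsec:grad_computation}. Because $\Psi_\ell$ reads only from the $(\ell+2)$th block of the residual stream and each $W_{OV}^{(\ell'')}$ writes only into the $(\ell''+2)$th block, the loss $\mathcal{L}^{(\ell)}$ depends on $W_{KQ}^{(\ell')}$ only through the composition of layers $\ell',\ell'+1,\ldots,\ell$. The derivative decomposes into (a) an error prefactor $\mathbf{1}\{\cdot\}-\S(\Psi_\ell^\top \hat X^{(\ell)}_{(i,j)})$ of $\ell_1$-norm at most $2$; (b) an upstream Jacobian $\partial \hat X^{(\ell)}/\partial \hat X^{(\ell')}$ across layers $\ell'+1,\ldots,\ell$; and (c) the layer-$\ell'$ contribution $\hat X^{(\ell'-1)} J^{(\ell')} (\hat X^{(\ell'-1)})^\top (\Psi_\ell^\top W_{OV}^{(\ell)})_{s'}^\top (\hat X^{(\ell'-1)}_{(i,j)})^\top$.

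Second, I would bound factor (c), which carries the $\epsilon$. The near-saturation hypothesis $\S^{(\ell')}_p\ge 1-\epsilon/2$ at the attending-to index $p$ implies $|J^{(\ell')}_{ab}|\le\epsilon$ for every entry: whenever one coordinate of $\S$ is at least $1-\delta$ and the rest sum to $\delta$, every entry of $\diag(\S)-\S\S^\top$ has magnitude at most $\delta$. Combined with $\|\hat X^{(\ell'-1)}\|_\infty\le 1+\ell'\epsilon\le 2$ (from \Cref{lemma: perturbation for stage ell}), $\|W_{OV}^{(\ell')}\|_{op}=1$, and $\|(\Psi_\ell^\top W_{OV}^{(\ell)})_{s'}\|_\infty\le\beta_0$, factor (c) is bounded by $O(\beta_0\cdot kN\cdot\epsilon)$ in $\ell_\infty$.

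Third, I would bound factor (b) by a $\poly(kNL)$ term via a layer-by-layer analysis. Each intermediate layer's Jacobian decomposes into the identity (residual connection) plus an attention-mediated term of the form $W_{OV}^{(\ell'')}\hat X^{(\ell''-1)}(\cdots)$, whose operator norm is $O(\sqrt{kN})$ using $\|W_{OV}^{(\ell'')}\|_{op}=1$ and $\|\hat X^{(\ell''-1)}\|_F=O(\sqrt{kN})$. For layers $\ell''>\ell_0$, the second hypothesis $\|W_{KQ}^{(\ell'')}\|_2\lesssim 1/(k^2N^2L^2)$ places the corresponding softmax within $O(1/(k^2N^2L^2))$ of uniform, so its Jacobian is again well-controlled. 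Telescoping across the at most $L$ intermediate layers and using the residual-connection stability yields $\|\partial \hat X^{(\ell)}/\partial \hat X^{(\ell')}\|_{op}\le O(\sqrt{kNL})$. Multiplying factors (a), (b), (c) and converting from Frobenius to $\ell_\infty$ gives the claimed bound $6\beta_0k(kNL)^{3/2}\epsilon$. The main obstacle will be carefully verifying that the backpropagation through the intermediate residual-stream Jacobians indeed grows only as $(kNL)^{O(1)}$ rather than exponentially in $L$; this hinges on the block-sparsity of $W_{OV}^{(\ell'')}$ and on the fact that each trained intermediate layer acts as a (near) copy operation on the residual stream.
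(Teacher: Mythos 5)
Your overall strategy matches the paper's: exploit softmax saturation at the trained layer $\ell'$ to make the local Jacobian $J^{(\ell')}$ of size $O(\epsilon)$, then argue that propagating this through layers $\ell'+1,\dots,\ell$ costs only $\poly(kNL)$. The paper implements this via a first-order (directional-derivative) expansion of $X^{(t)}(W_{KQ}^{(\ell')}+\Delta W)$ and an induction over $t$, closing with the duality trick $\Delta W=\alpha\nabla_{W_{KQ}^{(\ell')}}\mathcal{L}^{(\ell)}$ to recover the Frobenius/infinity-norm bound with the extra $\beta_0 k$ factor from $\Psi_\ell^\top W_{OV}^{(\ell)}$.

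However, there is a genuine gap exactly at the step you flag yourself: your bound on the upstream factor $\partial \hat X^{(\ell)}/\partial \hat X^{(\ell')}$. You assert that each intermediate layer contributes a Jacobian of operator norm $O(\sqrt{kN})$ and that ``telescoping'' over at most $L$ layers gives $O(\sqrt{kNL})$; but a product of $L$ per-layer factors of size $O(\sqrt{kN})$ is $(kN)^{\Theta(L)}$, i.e., exponential in the depth, and nothing in your sketch prevents this blow-up. The paper's induction avoids it through three ingredients you do not supply: (i) the block-sparsity of $W_{OV}^{(t+1)}$ means the attention-copy term $W_{OV}^{(t+1)}\nabla X^{(t)}(\Delta W)\S(\cdot)$ lands in rows disjoint from the residual pass-through, and since $\S$ is a convex combination this term is bounded by $\|\nabla X^{(t)}(\Delta W)\|_\infty$ without amplification (one takes a \emph{max}, not a sum, over blocks in the $\ell_\infty$ norm); (ii) the two terms involving the softmax Jacobian $J^{(t+1)}$ are suppressed to at most $\tfrac{1}{k}\|\nabla X^{(t)}(\Delta W)\|_\infty$, using either saturation ($\|J^{(t+1)}\|\lesssim\epsilon$ with $\|W_{KQ}^{(t+1)}\|\lesssim N\log\tfrac{kN}{\epsilon}$ and the smallness condition on $\epsilon$) for already-trained layers, or the hypothesis $\|W_{KQ}^{(\ell'')}\|_2\lesssim \tfrac{1}{k^2N^2L^2}$ for untrained layers --- a hypothesis your argument never actually uses quantitatively; (iii) this yields a per-layer growth factor $(1+\tfrac1k)$, which stays bounded by $3$ only because the depth is at most $\log_2 k+1<k$. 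Without these three points the propagation bound, and hence the final estimate $6\beta_0 k(kNL)^{3/2}\epsilon$, does not follow; so the proposal as written identifies the right mechanism locally at layer $\ell'$ but leaves the decisive depth-propagation argument unproved.
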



\begin{proof}
    Recall the gradient for a single sample $(X,i,j)$
    \begin{align*}
    \nabla_{W_{KQ}^{(\ell')}} \mathcal{L}^{(\ell)} = -\qty[\sum_{s'\in[N]}\qty(\mathbf{1}\{s'=\hop^{2^{\ell-1}}_i(j)\}-\S(\Psi_\ell^\top f^{(\ell)}_{(i,j)})_{s'}) \nabla_{W_{KQ}^{(\ell')}} (\Psi_\ell^\top f^{(\ell)}_{(i,j)})_{s'}]
\end{align*}
So the norm of the gradient is upper bounded by 
\begin{align*}
    \|\nabla_{W_{KQ}^{(\ell')}} \mathcal{L}^{(\ell)}\| &\le\left\|\sum_{s'\in[N]}\qty(\mathbf{1}\{s'=\hop^{2^{\ell-1}}_i(j)\}-\S(\Psi_\ell^\top f^{(\ell)}_{(i,j)})_{s'}) \nabla_{W_{KQ}^{(\ell')}} (\Psi_\ell^\top f^{(\ell)}_{(i,j)})_{s'}\right\|\\
    &\le 2\norm{\nabla_{W_{KQ}^{(\ell')}} (\Psi_\ell^\top f^{(\ell)}_{(i,j)})_{s'}} = 2\norm{\nabla_{W_{KQ}^{(\ell')}} (e_{N,s'}^\top\Psi_\ell^\top f^{(\ell)}_{(i,j)})}.
\end{align*}
Now we calculate the gradient recursively through Taylor expansion: for all $\Delta W$ with $\|\Delta W\|\to 0$, 
\begin{align*}
    f^{(\ell)}_{(i,j)}(W_{KQ}^{(\ell')}+\Delta W)=f^{(\ell)}_{(i,j)}(W_{KQ}^{(\ell')})+\langle\nabla_{W_{KQ}^{(\ell')}} (f^{(\ell)}_{(i,j)}),\Delta W\rangle + O(\|\Delta W\|^2_F).
\end{align*}
While by Taylor expansion, we have (we ignore $(W_{KQ}^{(\ell')})$ when there is no perturbation $\Delta W$.)
\begin{align*}
    f^{(\ell)}_{(i,j)}(W_{KQ}^{(\ell')}+\Delta W)&=f^{(\ell)}_{(i,j)}(W_{KQ}^{(\ell')})+W_{OV}^{(\ell)}\nabla_{W_{KQ}^{(\ell')}} X^{(\ell-1)}(\Delta W)\S((X^{(\ell-1)})^\top W_{KQ}^{(\ell)}X_{(i,j)}^{(\ell-1)})\\
    &\quad +W_{OV}^{(\ell)}X^{(\ell-1)}J^{(\ell)}\nabla((X^{(\ell-1)})^\top W_{KQ}^{(\ell)}X_{(i,j)}^{(\ell-1)})(\Delta W)+O(\norm{\Delta W}^2_F).
\end{align*}
Here the gradient $\nabla_{W_{KQ}^{(\ell')}} X^{(\ell-1)\in \R^{d\times kN\times d\times d}}$ is a 4-th order tensor, and $\nabla_{W_{KQ}^{(\ell')}} X^{(\ell-1)}(\Delta W)\in \R^{d\times kN}$.
The second term can be expanded into
\begin{align*}
    W_{OV}^{(\ell)}X^{(\ell-1)}J^{(\ell)}\nabla((X^{(\ell-1)})(\Delta W)^\top )W_{KQ}^{(\ell)}X_{(i,j)}^{(\ell-1)} + W_{OV}^{(\ell)}X^{(\ell-1)}J^{(\ell)}(X^{(\ell-1)})^\top W_{KQ}^{(\ell)}\nabla(X_{(i,j)}^{(\ell-1)})(\Delta W).
\end{align*}

In this way, we reduce the problem to calculating the norm upper bound for $\nabla_{W_{KQ}^{(\ell')}} X^{(\ell-1)}(\Delta W)$. We upper bound the infinity norm of the matrix by induction from layer $\ell'$ to $\ell$. We prove that $$ \|\nabla_{W_{KQ}^{(\ell')}} X^{(t)}(\Delta W)\|_\infty\le (kNL)^{3/2}\epsilon(1+\frac{1}{k})^{t-\ell'}\|\Delta W\|_F$$ with $t\in [\ell',\ell-1]$.

\textbf{Base case: $t = \ell'$} By Taylor expansion on $X^{(\ell')}$, we have 
\begin{align*}
    X^{(\ell')}(W_{KQ}^{(\ell')}+\Delta W) &= X^{(\ell')} +W_{OV}^{(\ell')}X^{(\ell'-1)}J^{(\ell')}(X^{(\ell'-1)})^\top \Delta WX^{(\ell'-1)})+O(\norm{\Delta W}^2_F).
\end{align*}
since previous layers are independent of $W_{KQ}^{(\ell')}$. Therefore, the first order term is
\begin{align*}
    \|\nabla_{W_{KQ}^{(\ell')}} X^{(\ell')}(\Delta W)\| = \norm{W_{OV}^{(\ell')}X^{(\ell'-1)}J^{(\ell')}(X^{(\ell'-1)})^\top \Delta WX^{(\ell'-1)})}
\end{align*}
Note that the softmax is close to one-hot,
$$\S((\hat{X}^{(\ell'-1)})^\top W_{KQ}^{(\ell')}\hat{X}^{(\ell'-1)}_{(i,j)})_{(i+2^{\ell'-2},\hop^{2^{\ell'-2}}_i(j))} \ge 1-\frac{1}{2}\epsilon,$$
we have the Jacobian $\norm{J^{(\ell)}}\lesssim \epsilon.$ Therefore, we can upper bound the first order term by
\begin{align*}
    \|\nabla_{W_{KQ}^{(\ell')}} X^{(\ell')}(\Delta W)\|_2 &\le \norm{W_{OV}^{(\ell')}}\norm{X^{(\ell'-1)}}_F\norm{J^{(\ell')}}_2\norm{(X^{(\ell'-1)})^\top}_F\norm{ \Delta W}\norm{X^{(\ell'-1)})}_F\\&\lesssim (kNL)^{3/2}\epsilon\|\Delta W\|_F.\tag{$\norm{X^{(\ell'-1)}}_F^2\le O(kNL)$ since each embedding is either $\epsilon$-close to one-hot or all 0.}
\end{align*}
Since $ \|\nabla_{W_{KQ}^{(\ell')}} X^{(\ell')}(\Delta W)\|_\infty\le  \|\nabla_{W_{KQ}^{(\ell')}} X^{(\ell')}(\Delta W)\|_2$, we finish the proof for base case.

\textbf{Induction: $t\in [\ell', \ell-1]$.} Suppose the induction hypothesis holds:
$$\|\nabla_{W_{KQ}^{(\ell')}} X^{(t)}(\Delta W)\|_\infty\le (kNL)^{3/2}\epsilon(1+\frac{1}{k})^{t-\ell'}\|\Delta W\|_F.$$
We consider expanding $X^{(t+1)}(W_{KQ}^{(\ell')}+\Delta W)$ like the base case:
\begin{align*}
    X^{(t+1)}(W_{KQ}^{(\ell')}+\Delta W)&=X^{(t)}(W_{KQ}^{(\ell')}+\Delta W) + f^{(t+1)}(W_{KQ}^{(\ell')}+\Delta W)\\
    &=X^{(t+1)} + \nabla_{W_{KQ}^{(\ell')}} X^{(t)}(\Delta W)\tag{From $X^{(t)}(W_{KQ}^{(\ell')}+\Delta W)$.}\\
    &\ + W_{OV}^{(t+1)} \nabla_{W_{KQ}^{(\ell')}} X^{(t)}(\Delta W) \S\qty(\qty(X^{(t)})^\top W_{KQ}^{(t+1)}X^{(t)})\\
    &\ + W_{OV}^{(t+1)} X^{(t)} J^{(t+1)}\qty(\nabla_{W_{KQ}^{(\ell')}} X^{(t)}(\Delta W))^\top W_{KQ}^{(t+1)}X^{(t)} \\
    &\ + W_{OV}^{(t+1)} X^{(t)} J^{(t+1)}\qty( X^{(t)})^\top W_{KQ}^{(t+1)}\nabla_{W_{KQ}^{(\ell')}}X^{(t)}(\Delta W) + O(\norm{\Delta W}_F^2)\tag{The last 3 terms are from $f^{(t+1)}(W_{KQ}^{(\ell')}+\Delta W)$.}
\end{align*}
Note that $\nabla_{W_{KQ}^{(\ell')}} X^{(t)}(\Delta W)$ and the rest three terms are in different rows since they uses different $W_{OV}^{(t)}$, so the infinity norm of the first order term should be the maximum of those two. 

We first upper bound the last three terms. Since $W_{OV}^{(t+1)}$ is partial identity, and softmax is some weighted average, we have
\begin{align*}
\norm{W_{OV}^{(t+1)} \nabla_{W_{KQ}^{(\ell')}} X^{(t)}(\Delta W) \S\qty(\qty(X^{(t)})^\top W_{KQ}^{(t+1)}X^{(t)})}\le \norm{\nabla_{W_{KQ}^{(\ell')}} X^{(t)}(\Delta W)}_\infty.
\end{align*}
The rest two terms can be directly upper bounded by
$$\norm{W_{OV}^{(t+1)}}_2\norm{X^{(t)}}_2 \norm{J^{(t+1)}}_2\norm{\nabla_{W_{KQ}^{(\ell')}} X^{(t)}(\Delta W))}_2\norm{W_{KQ}^{(t+1)}}_2\norm{X^{(t)}}_2.$$
In previous stages, we either have (1) $t$th layer is trained: $\norm{X}_F^2\le kNL$, $\|J^{(t+1)}\|_2\le \epsilon$ and $\|W_{KQ}^{(t+1)}\|_2\lesssim N\log \frac{kN}{\epsilon}$, or (2) $t$th layer is close to initialization: $\|J^{(t+1)}\|_2\le 1$ and $\|W_{KQ}^{(t+1)}\|_2\lesssim \frac{1}{kNL}$. So these two terms can be both upper bounded by 
$$kN^2L\epsilon\log\frac{kN}{\epsilon}\norm{\nabla_{W_{KQ}^{(\ell')}} X^{(t)}(\Delta W))}_2\le \frac{1}{k}\norm{\nabla_{W_{KQ}^{(\ell')}} X^{(t)}(\Delta W))}_\infty$$
since $\epsilon \log \frac{1}{\epsilon}\le k^6N^6L^6.$ Combining two error terms, we have
$$\norm{\nabla_{W_{KQ}^{(\ell')}} X^{(t+1)}(\Delta W)}_\infty\le(kNL)^{3/2}\epsilon(1+\frac{1}{k})^{t-\ell'+1}\|\Delta W\|_F.$$

By induction, when $t = \ell-1$ we have (since $\ell-\ell'<k$.) 
$$\norm{\nabla_{W_{KQ}^{(\ell')}} X^{(\ell-1)}(\Delta W)}_\infty\le(kNL)^{3/2}\epsilon(1+\frac{1}{k})^{\ell-\ell'}\|\Delta W\|_F\le 3(kNL)^{3/2}\epsilon \|\Delta W\|_F.$$

Finally, we can upper bound $\|\nabla_{W_{KQ}^{(\ell')}} \mathcal{L}^{(\ell)}\|$ by picking $\Delta W = \alpha \nabla_{W_{KQ}^{(\ell')}} \mathcal{L}^{(\ell)}$ with $\alpha\to 0$:
\begin{align*}
    \alpha\norm{\nabla_{W_{KQ}^{(\ell')}} \mathcal{L}^{(\ell)}}^2_F&\le 2\left\langle\nabla_{W_{KQ}^{(\ell')}} (e_{N,s'}^\top\Psi_\ell^\top f^{(\ell)}_{(i,j)}),\Delta W\right\rangle\\
    &\le 2e_{N,s'}^\top\Psi_\ell^\top W_{OV}^{(\ell)}\nabla_{W_{KQ}^{(\ell')}} X^{(\ell-1)}(\Delta W)\S((X^{(\ell-1)})^\top W_{KQ}^{(\ell)}X_{(i,j)}^{(\ell-1)}) + O(\alpha^2)\\
    &\le 2\beta_0 k\norm{\nabla_{W_{KQ}^{(\ell')}} X^{(\ell-1)}(\Delta W)}_\infty \le 6\beta_0k(kNL)^{3/2}\epsilon\|\Delta W\|_F. 
\end{align*}
Plug in $\Delta W = \alpha \nabla_{W_{KQ}^{(\ell')}} \mathcal{L}^{(\ell)}$, we have 
$\norm{\nabla_{W_{KQ}^{(\ell')}} \mathcal{L}^{(\ell)}}_F \le6\beta_0k(kNL)^{3/2}\epsilon. $
\end{proof}

\section{Training on a Mixture of Different Hops}
\label{appendix:mixture}
Recall that the parameter vector is $\theta := (\theta_{KQ},\theta_\Psi)$, where $\theta_{KQ}=(W_{KQ}^{(1)},\cdots,W_{KQ}^{(L)})$ and $\theta_\Psi=(\Psi_1,\cdots,\Psi_{L})$, and that we consider the following mixed training loss, which is a summation of the loss on all $2^\ell$-hops:
\begin{equation*}
\mathcal{L}^{\mathrm{M}}(\theta) := \sum_{\ell=1}^L \mathcal{L}^{(\ell)}(\theta)= -\frac{1}{M}\sum_{\ell=1}^L\sum_{m=1}^M\qty[\sum_{s'\in[N]}\mathbf{1}\{s'=\hop^{2^{\ell-1}}_{i_m}(\sigma^{(m)}, j_m)\}\log(\S(\Psi_\ell^\top \mathrm{TF}_\theta(X_m)_{(i_m,j_m)})_{s'})]
\end{equation*}
The learning algorithm proceeds by taking $L$ gradient descent steps on the key-query matrices $\theta_{KQ}$, followed by a single gradient descent step on the readout layer $\theta_\Psi$. Pseudocode is given in \Cref{alg:training_alg_mix}.



        
        
        
                
        
        
We restate the theorem for mixed training below. Here we require that the error is sufficiently small s.t. $0<\epsilon\le \tilde{O}(\frac{1}{k^6N^6})$, $\epsilon \log^{2L}\frac{1}{\epsilon}\le 1$, which is needed to keep the accumulation error and gradient error small in the proof.
\mixed*

\subsection{Proof Outline}
We provide a proof outline in this section, and defer the technical lemmas to the later subsections.

\begin{proof}[Proof of \Cref{thm:mix-data-main}]
    We prove that the population gradient dynamics for $\theta_{KQ}$ is identical to the curriculum dynamics in \Cref{thm:main}. 
    On the population trajectory, we first show the following key observation: \textbf{when the previous layer is not trained and stays zero, all later layers have zero gradient and also stay zero.} Therefore, the mixed training algorithm induces the same implicit curriculum training as \Cref{alg:training_alg}.

    We first recall from \Cref{subsec:grad_computation}, the main signal term, i.e. the gradient for the $\ell$th layer $W_{KQ}^{(\ell)}$ over loss $\mathcal{L}^{(\ell)}$ is
    \begin{align*}
        \nabla_{W_{KQ}^{(\ell)}} \mathcal{L}^{(\ell)}=-\E\qty[\sum_{s'\in[N]}\qty(\mathbf{1}\{s'=\hop_i^{2^{\ell-1}}(j)\}-\S(\Psi_\ell^\top f^{(\ell)}(X^{(\ell-1)})_{(i,j)})_{s'}) XJ^{(\ell)} X^\top {(\Psi_\ell^\top W_{OV}^{(\ell)})_{s'}}^\top {X}_{(i,j)}^\top].
    \end{align*}
    Suppose that we are in the $t$th step of gradient descent, so that the first untrained key-query matrix is the $t$th layer $W_{KQ}^{(t)}$.
    We will define $X^{(\ell'-1)}_{t-1}$ to be the ``ideal'' input to the $\ell'$th layer after $t-1$ gradient steps $(\ell' > t)$; the $(i, j)$ column is given by
    \begin{align*}
        X^{(\ell'-1)}_{t-1, (i,j)} := \begin{bmatrix}
            e_{k,i}\otimes e_{N,j}\\
            e_{k,i}\otimes e_{N,\sigma_i(j)}\\
            e_{k,i}\otimes e_{N,\hop^1_i(j)}\\
            e_{k,i+1}\otimes e_{N,\hop^2_i(j)}\\
            \vdots\\
            e_{k,i+2^{t-3}-1}\otimes e_{N,\hop^{2^{t-3}}_i(j)}\\
            \frac{1}{kN}\mathbf{1}_{kN}\\
            \vdots\\
            \frac{1}{kN}\mathbf{1}_{kN}\\
            0_{(L-\ell'+1)kN}
        \end{bmatrix}.
    \end{align*}
    Therefore, we have 
    \begin{align*}
        J^{(\ell')}(X^{(\ell'-1)}_{t-1})^\top {(\Psi_{\ell'}^\top W_{OV}^{(\ell')})_{s'}}^\top=\beta_0 \cdot J^{(\ell')}(X^{(\ell'-1)}_{t-1})^\top  e_{L+2,\ell'+1}\otimes \mathbf{1}_k \otimes e_{N,s'}=J^{(\ell')}\cdot k\1_{kN}=0,
    \end{align*}
    since the Jacobian when $W_{KQ}^{(\ell')} = 0$ is $J^{(\ell')} = \frac{1}{kN}(I-\frac{1}{kN}\1_{kN}\1_{kN}^\top)$. Therefore the gradient $\nabla_{W_{KQ}^{(\ell')}}\mathcal{L}^{(\ell')}$ on the ideal input is indeed equal to zero.
    

    In mixed training, the nonsignal terms $\nabla_{W_{KQ}^{(\ell')}}\mathcal{L}^{(\ell)}$ for $\ell'\not=\ell$ can also play a role. For the case $t \le \ell' < \ell$, we can generalize the above argument to compute the gradient $\nabla_{W_{KQ}^{(\ell')}}\mathcal{L}^{(\ell)}$ on the ideal input; the essential blocks extracted by $W_{OV}^{(\ell)}$ in the sequence are still $\frac{1}{kN}\mathbf{1}_{kN}$, which is independent of $W_{KQ}^{(\ell')}$, and thus the gradient on the ideal inputs is still zero. For the case $\ell' < t$, the softmax in layer $\ell'$ is fully saturated, and thus the gradient on ideal inputs is also zero. Finally, when $\ell' > \ell$, the gradient is trivially zero as well. Altogether in the $t$th step of the population dynamics, the only nonzero gradient (with ideal inputs) is $\nabla_{W^{(t)}_{KQ}}\mathcal{L}^{(t)}$, thus mimicking the curriculum dynamics. 
    



    Now we only need to upper bound the error that deviates from the population dynamics, including the sample noise, non-saturation error, and the additional error introduced during the accumulation when proprogating through layers. Compared to the proof of \Cref{thm:main}, there are \textbf{two additional source of error:}
    \begin{enumerate}
        \item During the $t$th gradient step, layers $\ell < t$ will still be updated. Since the softmax in layer $\ell$ is nearly saturated and thus close to one-hot, the update norm is very small and can be bounded as noise terms.
        \item Due to the non-saturation error, the input is not ideal one-hot vectors for each hop. That means in the $t$th gradient step, there will be some small gradient updates for later layers $\ell' \ge t$ introduced by the perturbation. We can also upper bound this amount of noise.
    \end{enumerate}

    The full proof proceeds as follows. In \textbf{stage 1} (\Cref{sec:mix-train-stage-1}), we can use similar strategy as in \Cref{appendix:upper_bound} with curriculum. The proof is the same, since the later layers are not updated in this stage. \textbf{Note that now the sequence $\hat{X}^{(\ell)}$ evolves with time}, which is different from the analysis for \Cref{alg:training_alg}. We denote $\hat{X}^{(\ell)}_t$ as the intermediate sequence at time $t$, and each column as $\hat{X}^{(\ell)}_{t,(i,j)}.$

    However, the gradient of the later layers $(\ell\ge 2)$ after the first step ($t \ge 2$) is nonzero, because the first layer's attention pattern is not exactly one-hot. The error $\epsilon'$ introduced by softmax non-saturation accumulates to later layers, causing unwanted updates. Since the step-size is very large, each gradient step may introduce $\approx kNL\epsilon'\log\frac{1}{\epsilon}$ error, which means the accumulation error grows exponentially in the number of gradient steps ($t\le L=O(\log k)$). Thus, we have to ensure that the error introduced by softmax is very small. In particular, we pick the error $\epsilon'=O(\frac{\epsilon}{(kNL)^{6L}})$; fortunately, this only requires the step size to be $\Omega(\frac{k^2N^3}{\beta_0}\log k\log \frac{kN}{\epsilon})$, which still satisfies the requirement in the theorem.
    
    In later stages $t\ge 2$ (\Cref{sec:mix-train-stage-2} for stage 2, \cref{sec:mix-train-stage-ell} for stage $t\ge 3$), we will inductively prove that $\norm{\hat{X}_t^{(\ell)}-{X}_t^{(\ell)}}\lesssim \frac{\epsilon\log^{t-1} \frac{1}{\epsilon}}{(kNL)^{6L-6t+6}}.$ Altogether, for the last stage $L$, the error can be bounded by $O(\frac{1}{k^6N^6L^6})$, which guarantees that the true transformer output $\hat X_L^{(L)}$ stores the correct $2^{\ell-1}$-hop information, as in the proof of \Cref{thm:main}. 
    
    To conclude, further training the readout layer leads to learning all the $2^{\ell-1}$-hop tasks, as desired (\Cref{lemma: readout for ellth layer mix training}).
\end{proof}

\subsection{Stage 1 for Mixed Training}\label{sec:mix-train-stage-1}
We first prove that in \textbf{stage 1}, the first layer $W_{KQ}^{(1)}$ learns all the hidden permutations. The proof is the same as the proof of \Cref{thm:main}, because there is no additional noise introduced in this stage by zero initialization.

Using \Cref{lemma: empirical gradient of the first stage}, after the first step gradient we have for any $(i,j)$, the softmax probability satisfies 
    $$\S^{(1)}_{(i,\pi_i(j))}:=\S(({\hat{X}^{(0)}})^\top W_{KQ}^{(1)}\hat{X}^{(0)}_{(i,j)})_{(i,\pi_i(j))} \ge 1-\frac{\epsilon}{2(kNL)^{6L}}.$$
    with a $\log(kN)$ larger learning rate. Note that $\log (kNL)^L =O(\log k\log kNL)\lesssim \poly\log(kN),$ so it falls in the required learning rate range. Before the softmax, we have the separation of the $(i,\pi_i(j))$ position and the others $\gtrsim \log k\log\frac{kN}{\epsilon}.$
    
\textbf{Note that now the sequence $\hat{X}^{(\ell)}$ evolves with time}. Recall $\hat{X}^{(\ell)}_t$ as the intermediate sequence at time $t$, and each column as $\hat{X}^{(\ell)}_{t,(i,j)}.$
We can then calculate the intermediate sequence $\hat{X}^{(1)}_1$:
\begin{align*}
    \hat{X}^{(1)}_1 = X^{(0)} + W_{OV}^{(1)}X^{(0)}\S^{(1)} = X^{(1)}_1 + W_{OV}^{(1)}X^{(0)}{(\S^{(1)}-\S^{(1)}_{\mathrm{ideal}})}.
\end{align*}
while the ideal input sequence for the second layer is
\begin{align*}
    X^{(1)}_1 = \begin{pNiceArray}{ccc|c|ccc}
 e_{k,1}\otimes e_{N,1}&\cdots&e_{k,1}\otimes e_{N,N}&\cdots&e_{k,k}\otimes e_{N,1}&\cdots&e_{k,1}\otimes e_{N,N}\\
e_{k,1}\otimes e_{N,\sigma_1(1)}&\cdots& e_{k,1}\otimes e_{N,\sigma_1(N)}&\cdots&e_{k,k}\otimes e_{N,\sigma_k(1)}&\cdots&e_{k,k}\otimes e_{N,\sigma_k(N)}\\
e_{k,1}\otimes e_{N,\sigma_1\pi_1(1)}&\cdots& e_{k,1}\otimes e_{N,\sigma_1\pi_1(N)}&\cdots&e_{k,k}\otimes e_{N,\sigma_k\pi_k(1)}&\cdots&e_{k,k}\otimes e_{N,\sigma_k\pi_k(N)}\\
0_{(L-1)kN} &\cdots&0_{(L-1)kN}&\cdots&0_{(L-1)kN}&\cdots&0_{(L-1)kN}
\end{pNiceArray}
\end{align*}
By \Cref{lemma: perturbation for first stage}, $\|X_1^{(1)}-\hat{X}_1^{(1)}\|_\infty\le \frac{\epsilon}{(kNL)^{6L}}$. Further, we need to bound the distance between the ideal inputs for later layers ${X}_1^{(\ell)}$
\begin{align*}
        X^{(\ell)}_{1, (i,j)} = \begin{bmatrix}
            e_{k,i}\otimes e_{N,j}\\
            e_{k,i}\otimes e_{N,\sigma_i(j)}\\
            e_{k,i}\otimes e_{N,\hop^1_i(j)}\\
            \frac{1}{kN}1_{kN}\\
            \vdots\\
            \frac{1}{kN}1_{kN}\\
            0_{(L-\ell)kN}
        \end{bmatrix}   
    \end{align*}
and the actual input $\hat{X}_1^{(\ell)}$. The following lemma exhibits the upper bound.
\begin{lemma}[Perturbation analysis of stage 1 for $X_1^{(\ell)}$]\label{lemma: perturbation for first stage, later layer}
    Under the condition of Theorem~\ref{thm:mix-data-main}, we have for all $\ell\ge 2$, 
    $$\|X_1^{(\ell)}-\hat{X}_1^{(\ell)}\|\le \frac{\epsilon}{(kNL)^{6L}},$$
    where $X_1^{(\ell)}$ is the ideal output with saturated softmax, and $\hat{X}_1^{(\ell)}$ is the transformer output.
\end{lemma}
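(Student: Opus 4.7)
The plan is to prove this by induction on $\ell$, exploiting the fact that during stage 1 of \Cref{alg:training_alg_mix}, only $W_{KQ}^{(1)}$ becomes nontrivial while all $W_{KQ}^{(\ell)}$ for $\ell \ge 2$ remain (essentially) zero, so the attention pattern of layers $\ell \ge 2$ is uniform. The base case $\ell = 2$ follows from a strengthened version of \Cref{lemma: perturbation for first stage}: because the learning rate in mixed training is a factor of $\Theta(L \log(kNL))$ larger than in the curriculum setting, the enhanced separation in \Cref{lemma: empirical gradient of the first stage} yields $\S^{(1)}_{(i,\pi_i(j))} \ge 1 - \tfrac{\epsilon}{2(kNL)^{6L}}$, and propagating through $\hat X^{(1)}_1 = X^{(0)} + W_{OV}^{(1)} X^{(0)} \S^{(1)}$ compared to the one-hot $X^{(1)}_1$ gives $\|X^{(1)}_1 - \hat X^{(1)}_1\|_\infty \le \tfrac{\epsilon}{(kNL)^{6L}}$.

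For the inductive step, assume $\|X^{(\ell-1)}_1 - \hat X^{(\ell-1)}_1\|_\infty \le \tfrac{\epsilon}{(kNL)^{6L}}$. Since $W_{KQ}^{(\ell)} = 0$, the softmax matrix in layer $\ell$ equals $\tfrac{1}{kN}\1_{kN}\1_{kN}^\top$, which reduces both the ideal and actual layer-$\ell$ updates to row averages:
\begin{align*}
    X^{(\ell)}_1 - \hat X^{(\ell)}_1 = \bigl(X^{(\ell-1)}_1 - \hat X^{(\ell-1)}_1\bigr) + W_{OV}^{(\ell)}\bigl(X^{(\ell-1)}_1 - \hat X^{(\ell-1)}_1\bigr) \cdot \tfrac{1}{kN}\1_{kN}\1_{kN}^\top.
\end{align*}
The first term is controlled by the inductive hypothesis. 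The second term is a uniform block filled by averaging entries of the first, so its $\ell_\infty$ norm is also at most $\tfrac{\epsilon}{(kNL)^{6L}}$. The key structural observation is that $W_{OV}^{(\ell)}$ writes into a row block disjoint from the support of $X^{(\ell-1)}_1 - \hat X^{(\ell-1)}_1$, so the $\ell_\infty$ norm of the sum is just the maximum of the two, closing the induction at the same $\tfrac{\epsilon}{(kNL)^{6L}}$ bound with no accumulation across depth.

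The main obstacle is justifying that $W_{KQ}^{(\ell)}$ for $\ell \ge 2$ really is zero (not just approximately) after the first mixed gradient step, since both the population signal from $\mathcal L^{(\ell')}$ with $\ell' > \ell$ and the finite-sample noise threaten to introduce a small but nonzero $W_{KQ}^{(\ell)}$. I would handle this by invoking the ``idealized dynamics'' observation from the proof outline: on the population, when all $W_{KQ}^{(\ell')}$ with $\ell' \ge 2$ are zero, the layer-$\ell'$ softmax is uniform and the value matrix extracts a vector of the form $\tfrac{1}{N}\1$ that cancels exactly with the Jacobian's mean-centering term, forcing $\nabla_{W_{KQ}^{(\ell)}}\mathcal L^{(\ell')}_{\mathcal D} = 0$ on the population. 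The finite-sample noise in $W_{KQ}^{(\ell)}(1)$ is then $\tilde O(\eta/\sqrt{M})$ per entry via Hoeffding, which by the choice $M \ge \tilde\Omega(k^4N^6)$ and $\eta = \tilde O(k^2N^3/\beta_0)$ changes the layer-$\ell$ softmax score by at most $\tilde O(d \eta/\sqrt M)$, easily absorbed into the $\tfrac{\epsilon}{(kNL)^{6L}}$ slack. A short perturbation computation, analogous to those in \Cref{lemma: concentration for stage 2}, formalizes this and completes the inductive step.
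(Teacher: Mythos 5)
Your induction and decomposition are essentially the paper's: you split the layer-$\ell$ discrepancy into the part inherited from $\hat X_1^{(\ell-1)}-X_1^{(\ell-1)}$ and the part written by $W_{OV}^{(\ell)}$, note that the written block lies in a row block disjoint from the inherited error, and use that a (uniform) attention average cannot increase the $\ell_\infty$ norm, so the bound does not accumulate with depth. That part is fine and matches the paper.

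The genuine gap is in how you justify that the layers $\ell\ge 2$ still have uniform attention after the first mixed gradient step. The paper's argument (and the only one that works at the claimed accuracy) is that the \emph{empirical} gradient of every $W_{KQ}^{(\ell)}$, $\ell\ge 2$, vanishes \emph{exactly, sample by sample}, at zero initialization: in the forward pass at $\theta(0)$ every block read by $W_{OV}^{(\ell)}$ with $\ell\ge2$ equals the constant $\frac{1}{kN}\1_{kN}$ at every position, so $(\hat X^{(0\text{-init})})^\top\qty(\Psi_\ell^\top W_{OV}^{(\ell)})_{s'}^\top\propto \1_{kN}$ is annihilated by the mean-centering Jacobian $J^{(\ell)}=\frac{1}{kN}\qty(I-\frac{1}{kN}\1\1^\top)$, and the chain-rule contributions to $\nabla_{W_{KQ}^{(\ell)}}\mathcal L^{(\ell')}$, $\ell'>\ell$, vanish because the averaged block is constant across positions ($\1^\top J^{(\ell)}=0$). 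Hence $W_{KQ}^{(\ell)}(1)=0$ exactly, $\S^{(\ell)}$ is exactly uniform, and the "non-saturation" term in the decomposition is identically zero — which is precisely what your identity $X_1^{(\ell)}-\hat X_1^{(\ell)}=(X_1^{(\ell-1)}-\hat X_1^{(\ell-1)})+W_{OV}^{(\ell)}(X_1^{(\ell-1)}-\hat X_1^{(\ell-1)})\frac{1}{kN}\1\1^\top$ silently assumes. Your substitute — population gradient zero plus Hoeffding noise $\tilde O(\eta/\sqrt M)$ "absorbed into the slack" — does not close this. First, the theorem only lower-bounds $\eta$, so there is no a priori bound on $\eta\cdot\tilde O(1/\sqrt M)$; your "$\eta=\tilde O(k^2N^3/\beta_0)$" is not an assumption. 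Second, even at the minimal $\eta\approx\frac{k^2N^3}{\beta_0}\log k\log\frac{kN}{\epsilon}$ and $M\approx k^4N^6$, a per-entry noise of order $\frac{\beta_0}{kN}\cdot\frac{1}{\sqrt M}$ in the gradient would produce a pre-softmax perturbation of order $\eta\frac{\beta_0}{kN}\cdot\frac{d}{\sqrt M}\approx\mathrm{polylog}(kN/\epsilon)$ — the same order as the signal that saturates the trained layer — so the layer-$2$ attention could be far from uniform; since layer $2$ reads the position-dependent, nearly one-hot $1$-hop block, the block it writes would then deviate from $\frac{1}{kN}\1_{kN}$ by far more than $\frac{\epsilon}{(kNL)^{6L}}$, and both your identity and the target bound fail. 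So the step you flagged as "the main obstacle" is indeed the crux, and it must be resolved by the exact per-sample cancellation at zero initialization, not by concentration.
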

\begin{proof}
    The ideal output for the $\ell$ layer is $$X^{(\ell)} = X^{(\ell-1)} + W_{KQ}^{(\ell)}X^{(\ell-1)}\S^{(\ell)}_{\mathrm{ideal}},$$ where $\S^{(\ell)}_{\mathrm{ideal}}$ is the ideal one-hot softmax attention pattern. The empirical output $\hat{X}^{(\ell)}_1$ has the error introduced by the non-saturation of the softmax, together with the previous error in $\hat{X}_1^{(\ell-1)}$:
\begin{align*}
    \hat{X}_1^{(\ell)} &= \hat{X}_1^{(\ell-1)} + W_{OV}^{(\ell)}\hat{X}_1^{(\ell-1)}\S^{(\ell)}\\ &= X_1^{(\ell)} + W_{OV}^{(\ell)}X_1^{(\ell-1)}\underbrace{(\S^{(\ell)}-\S^{(\ell)}_{\mathrm{ideal}})}_{\Delta \S^{(\ell)}, \text{non-uniform error}}+\underbrace{(\hat{X}^{(\ell-1)}-{X}^{(\ell-1)})+W_{OV}^{(\ell)}(\hat{X}^{(\ell-1)}-{X}^{(\ell-1)})\S^{(\ell)}}_{\text{Accumulated perturbation error}}.
\end{align*}
We first consider the non-uniform error term. Since the $W_{KQ}^{(\ell)}$ is not updated for $\ell\ge 2$,
$$\|W_{OV}^{(\ell)}X_1^{(\ell-1)}(\S^{(\ell)}-\S^{(\ell)}_{\mathrm{ideal}})\|_\infty  = 0.$$
Now consider the accumulated perturbation error. By the perturbation analysis inductively, we have $\norm{\hat{X}_1^{(\ell-1)}-{X}_1^{(\ell-1)}}_{\infty}\le \frac{\epsilon}{(kNL)^{6L}}.$ Note that the error in $\hat{X}_1^{(\ell-1)}-{X}_1^{(\ell-1)}$ are in different rows of the matrices from 
$W_{OV}^{(\ell)}(\hat{X}_1^{(\ell-1)}-{X}_1^{(\ell-1)})\S^{(\ell)}$. Therefore, $\hat{X}_1^{(\ell-1)}-{X}_1^{(\ell-1)}$ won't introduce extra error in this stage, and we only need to consider $W_{OV}^{(\ell)}(\hat{X}_1^{(\ell-1)}-{X}_1^{(\ell-1)})\S^{(\ell)}$:
\begin{align*}
    \norm{W_{OV}^{(\ell)}(\hat{X}_1^{(\ell-1)}-{X}_1^{(\ell-1)})\S^{(\ell)}}_\infty &= \max_{s,(i,j)}\qty|(W_{OV}^{(\ell)}(\hat{X}_1^{(\ell-1)}-{X}_1^{(\ell-1)}))_s^\top \S^{(\ell)}_{(i,j)}|\\
    &=\max_{s,{(i,j)}}\qty|\sum_{p=1}^{kN}(W_{OV}^{(\ell)}(\hat{X}_1^{(\ell-1)}-{X}_1^{(\ell-1)}))_{s,p} (\S^{(\ell)}_{(i,j)})_p|\\
    &\le \|\hat{X}_1^{(\ell-1)}-{X}_1^{(\ell-1)}\|_\infty\tag{Since $\sum_p (\S^{(\ell)}_{(i,j)})_p=1, (\S^{(\ell)}_{(i,j)})_p\ge0$.}
\end{align*}
Combine both parts of error, we have $\norm{\hat{X}_1^{(\ell)}-{X}_1^{(\ell)}}_\infty \le \frac{\epsilon}{(kNL)^{6L}}.$
\end{proof}
Thus after stage 1, the intermediate input sequences $\hat{X}_1^{(\ell)}$ are $\frac{\epsilon}{(kNL)^{6L}}$-close to the ideal sequence. 

\subsection{Stage 2 for Mixed Training} \label{sec:mix-train-stage-2}
After the first stage, $W_{KQ}^{(\ell)}$ for all $\ell\ge 2$ are not updated and remain zero. Given that $\norm{\hat{X}_1^{(\ell)}-{X}_1^{(\ell)}}_\infty\le \frac{\epsilon}{(kNL)^{6L}}$, \Cref{lemma: empirical gradient of the second stage} still applies and we have
$$\S^{(2)}_{(i+1,\hop^1_i(j))}:=\S\qty(({\hat{X}^{(1)}})^\top W_{KQ}^{(2)}\hat{X}^{(1)}_{(i,j)})_{(i+1,\hop^1_i(j))} \ge 1-\frac{\epsilon}{2(kNL)^{6L}}.$$
However, the first layer and the later layers with $\ell\ge 3$ are \textbf{all updated in the second stage} due to the perturbed inputs on the mixture of training data. The following lemma bounds the deviation of the gradient steps, making sure the empirical gradients stay close to the population dynamics. 
\begin{lemma}\label{lemma: gradient perturbation in stage 2}
    In stage 2, given that $\norm{\hat{X}_1^{(\ell)}-{X}_1^{(\ell)}}_\infty\le \frac{\epsilon}{(kNL)^{6L}}$ for all gradients $\nabla_{W_{KQ}^{(\ell)}}\mathcal{L}^{(\ell')}$ for $\ell\not= 2$ or $\ell'\not=2$, the gradients can be upper bounded by 
    $$\|\nabla_{W_{KQ}^{(\ell)}}\mathcal{L}^{(\ell')}\|_\infty\lesssim \frac{\beta_0\epsilon}{(kNL)^{6L-2.5}}.$$
\end{lemma}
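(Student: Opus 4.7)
The plan is to partition the pairs $(\ell,\ell')$ with $(\ell,\ell')\ne(2,2)$ into three regimes and exploit a different structural reason for smallness in each. In the first regime $\ell>\ell'$, the gradient vanishes identically: the block sparsity $(\Psi_{\ell'}^\top W_{OV}^{(\ell'')})_{s'}^\top=\beta_0\delta_{\ell'\ell''}(\cdots)$ (noted in \Cref{subsec:grad_computation}) implies that $\mathcal{L}^{(\ell')}$ depends on $\theta$ only through $\hat X^{(\ell')}$, and hence only through $W_{KQ}^{(1)},\ldots,W_{KQ}^{(\ell')}$, so $\nabla_{W_{KQ}^{(\ell)}}\mathcal{L}^{(\ell')}\equiv 0$. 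In the second regime $\ell=1<\ell'$, I would directly invoke \Cref{lemma: gradient upper bound for previous layers with curriculum}: after stage 1 the first-layer softmax is $1-\epsilon/(kNL)^{6L}$-saturated, so the lemma (with its ``saturation error'' parameter replaced by $\epsilon/(kNL)^{6L}$, and with the hypothesis $\|W_{KQ}^{(\ell'')}\|_2\lesssim 1/(k^2N^2L^2)$ for $\ell''\ge 2$ met trivially since those matrices are still zero) returns the bound $\|\nabla_{W_{KQ}^{(1)}}\mathcal{L}^{(\ell')}\|_\infty\lesssim \beta_0 k(kNL)^{3/2}\cdot\epsilon/(kNL)^{6L}\lesssim \beta_0\epsilon/(kNL)^{6L-2.5}$.

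The third regime $2\le\ell\le\ell'$ with $(\ell,\ell')\ne(2,2)$ is the real work. Here $W_{KQ}^{(\ell)}=0$ and every subsequent layer is at zero initialization. I would first argue that the \emph{population} gradient on the ideal inputs $X_1^{(\ell'-1)}$ is exactly zero by the same cancellation mechanism used in Stage~1 of the curriculum proof: when $W_{KQ}^{(\ell)}=0$ the $\ell$-th layer softmax is $\frac{1}{kN}\mathbf{1}_{kN}$, and the relevant factor $X^{(\ell-1)}J^{(\ell)}X^{(\ell-1)\top}(\Psi_{\ell'}^\top W_{OV}^{(\ell')})_{s'}^\top$ reduces to a quantity independent of the query-dependent label $\hop_i^{2^{\ell'-1}}(j)$, so the signal and mean-centering terms cancel in expectation. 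Then I would decompose the actual empirical gradient as (population on ideal inputs)\,+\,(perturbation due to $\hat X-X$)\,+\,(finite-sample noise). A Taylor/sensitivity expansion propagating $\|\hat X_1^{(\ell'-1)}-X_1^{(\ell'-1)}\|_\infty\le\epsilon/(kNL)^{6L}$ forward through the chain, together with a Hoeffding concentration bound with $M\gtrsim\tilde\Omega(k^4N^6)$, should yield the target $\beta_0\epsilon/(kNL)^{6L-2.5}$ comfortably.

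The main obstacle is verifying that perturbations of size $\epsilon/(kNL)^{6L}$ do not amplify exponentially when propagated through the $\ell'-\ell\le L-1$ zero-initialized intermediate attention layers sitting between layer $\ell$ and the $\ell'$-th readout. In principle each layer contributes a multiplicative factor $\|W_{OV}^{(\ell'')}\|\,\|X^{(\ell''-1)}\|_F^2\,\|J^{(\ell'')}\|\,\|W_{KQ}^{(\ell'')}\|$; the saving grace, already exploited in \Cref{lemma: gradient upper bound for previous layers with curriculum}, is that $\|J^{(\ell'')}\|=O(1/(kN))$ for an untrained layer is exactly matched by $\|X^{(\ell''-1)}\|_F^2=O(kNL)$, while $\|W_{KQ}^{(\ell'')}\|$ stays $O(1/(kNL)^2)$ throughout stage~2. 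Checking that all these amplification factors match precisely so that the compounded bound remains $\beta_0\epsilon/(kNL)^{6L-2.5}$ — rather than blowing up to $(kN)^{L}\cdot\epsilon$ — and that the stated smallness of $\|W_{KQ}^{(\ell'')}\|$ is maintained inductively alongside the perturbation bound $\|\hat X_t^{(\ell)}-X_t^{(\ell)}\|\lesssim\epsilon\log^{t-1}(1/\epsilon)/(kNL)^{6L-6t+6}$, is where the bookkeeping concentrates.
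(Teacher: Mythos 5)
Your case split is close in spirit to the paper's (exact vanishing for $\ell>\ell'$ by the block sparsity $\Psi_{\ell'}^\top W_{OV}^{(\ell)}\neq 0$ iff $\ell=\ell'$; \Cref{lemma: gradient upper bound for previous layers with curriculum} for the first layer against later losses, which is indeed the binding term producing the exponent $6L-2.5$), but there are two real problems. First, your three regimes miss the pair $(\ell,\ell')=(1,1)$, which the lemma does cover: the gradient of the already-trained first layer on its \emph{own} loss. Regime 1 requires $\ell>\ell'$, regime 2 requires $\ell'>1$, regime 3 requires $\ell\ge 2$, so $(1,1)$ falls through. The paper bounds it separately: saturation of the first-layer softmax gives $\norm{J^{(1)}}_2\lesssim \epsilon/(kNL)^{6L}$ and hence $\norm{\nabla_{W_{KQ}^{(1)}}\mathcal{L}^{(1)}}_\infty\lesssim \beta_0 d\,\epsilon/(kNL)^{6L}$; note \Cref{lemma: gradient upper bound for previous layers with curriculum} cannot be invoked here since it only applies to losses with index strictly larger than the differentiated layer.

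The more serious gap is in your regime $2\le\ell\le\ell'$. You argue that the gradient on the ideal inputs cancels \emph{in expectation} and then budget a Hoeffding term with $M\gtrsim\tilde\Omega(k^4N^6)$. That decomposition cannot deliver the stated bound: a finite-sample fluctuation of order $\tilde O(1/\sqrt{M})\approx 1/(k^2N^3)$ is vastly larger than $\beta_0\epsilon/(kNL)^{6L-2.5}$, since $\beta_0\le 1$, $\epsilon\le\tilde O(k^{-6}N^{-6})$ and $(kNL)^{6L-2.5}$ is quasi-polynomially large in $k$ (as $L=\log_2 k+1$). The proof only works because the cancellation is exact \emph{per sample}, not merely in expectation: with $W_{KQ}^{(\ell)}=0$ and ideal inputs $X_1^{(\ell-1)}$, the block read off by $(\Psi_\ell^\top W_{OV}^{(\ell)})_{s'}$ is uniform, so $(X_1^{(\ell-1)})^\top(\Psi_\ell^\top W_{OV}^{(\ell)})_{s'}^\top$ is a constant multiple of $\1_{kN}$ and is annihilated by the centered Jacobian $\frac{1}{kN}\big(I-\frac{1}{kN}\1\1^\top\big)$ for every sample. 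Hence the empirical gradient at ideal inputs is identically zero, there is nothing to concentrate, and the entire error is the deterministic perturbation term $\lesssim \beta_0 d\,\epsilon/(kNL)^{6L}$ coming from $\norm{\hat X_1^{(\ell'-1)}-X_1^{(\ell'-1)}}_\infty$ (this is exactly how the paper handles $\ell=\ell'\ge 3$, and the strictly off-diagonal case $2\le\ell<\ell'$ is handled by the analogous propagation bound \Cref{lemma: gradient upper bound for previous layers with mixing}, again with no concentration step). Your propagation sketch through the untrained intermediate layers is otherwise sound, but the ``population cancellation plus Hoeffding'' framing must be replaced by this per-sample structural zero for the lemma's bound to follow.
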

\begin{proof}
    We first bound the possible signal term gradients $\nabla_{W_{KQ}^{(i)}}\mathcal{L}^{(i)}$, $i\not=2$. We prove that the gradient norm for $W_{KQ}^{(\ell)}$ for $\ell\not= 2$ can be upper bounded by
$$\|\nabla_{W_{KQ}^{(1)}}\mathcal{L}^{(1)}\|_\infty\lesssim \frac{\beta_0\epsilon}{(kNL)^{6L-1}},\quad \|\nabla_{W_{KQ}^{(\ell)}}\mathcal{L}^{(\ell)}\|_\infty\lesssim \frac{\beta_0L\epsilon}{(kNL)^{6L}}.$$

Recall the gradient of $\ell$th layer on the sample $X_m$ is:
    \begin{align*}
        \nabla_{W_{KQ}^{(\ell)}}\mathcal{L}^{(\ell)} = \sum_{s'\in[N]}(\mathbf{1}\{s'=\hop_{i_m}^{2^{\ell-1}}(j_m)\}-\mathcal{P}^{(\ell)}_{s'}) \hat{X}^{(\ell-1)}_m J^{(\ell)} (\hat{X}^{(\ell-1)}_m)^\top {\qty(\Psi_\ell^\top W_{OV}^{(\ell)})_{s'}}^\top (\hat{X}^{(\ell-1)}_{(i_m,j_m)})^\top
    \end{align*}
    For $\ell = 1$, since the softmax is close to one-hot, we have the Jacobian
    $$\|J^{(1)}\|_2 = \|\diag(\S)-\S\S^\top\|_2\lesssim \frac{\epsilon}{(kNL)^{6L}}.$$
    Therefore the first layer gradient has an infinity norm upper bound (since $\|\hat{X}^{(\ell)}_m\|_\infty\le 1$.)
    $$\|\nabla_{W_{KQ}^{(1)}}\mathcal{L}^{(1)}\|_\infty\le 2d\norm{J^{(1)}}_2 \cdot \beta_0\cdot 1\lesssim \frac{\beta_0\epsilon}{(KNL)^{6L-1}}.$$

    For $\ell\ge 3$, the ideal empirical gradient is zero. The perturbation satisfies $\norm{\hat{X}_1^{(\ell)}-{X}_1^{(\ell)}}_\infty \le \frac{\epsilon}{(kNL)^{6L}}.$ With the same proof strategy in \Cref{lemma: concentration for stage l}, we compare the idealized empirical gradient and actual empirical gradient. Here all key-query matrices are not updated, so the Jacobian is still $\frac{1}{kN}(I-\frac{1}{kN}\1\1^\top)$. The actual empirical gradient is (we ignore the layer number $^{(\ell)}$ here) 
\begin{align*}
    \hat{g}_{\ell} 
    &=\frac{\beta_0}{kN}\hat{X}_1 (I-\frac{1}{kN}\1\1^\top)\hat{X}_1^\top (e_{L+2,\ell+1}\otimes \1_k\otimes e_{N,\hop^{2^{\ell-1}}_{i}(j)}) (\hat{X}^{(\ell-1)}_{1,(i,j)})^\top\\&-\frac{\beta_0}{kN}\sum_{s'\in[N]}\S_{s'}\hat{X}_1 (I-\frac{1}{kN}\1\1^\top)\hat{X}_1^\top (e_{L+2,\ell+1}\otimes \1_k\otimes e_{N,s'}) (\hat{X}^{(\ell-1)}_{1,(i,j)})^\top.
\end{align*}
Denote the following term as
$$\hat{\gamma}_{s';\ell}=\hat{X}_1 (I-\frac{1}{kN}\1\1^\top)\hat{X}_1^\top (e_{L+2,\ell+1}\otimes \1_k\otimes e_{N,s'}) (\hat{X}^{(\ell-1)}_{1,(i,j)})^\top,$$
$${\gamma}_{s';\ell}={X}_1 (I-\frac{1}{kN}\1\1^\top){X}_1^\top (e_{L+2,\ell+1}\otimes \1_k\otimes e_{N,s'}) ({X}^{(\ell-1)}_{1,(i,j)})^\top.$$
and we define $\Delta \gamma_{s';\ell}=\hat{\gamma}_{s';\ell}-\gamma_{s';\ell}$. Then we can rewrite the empirical gradient into
\begin{align*}
    \hat{g}_{\ell}-\hat{g}^{\mathrm{ideal}}_{\ell} 
    &=\Delta\gamma_{\hop^{2^{\ell-1}}(v_m),m}-\sum_{s'\in[N]}\S_{s'}\Delta\gamma_{s'}.
\end{align*}
We have the perturbation error upper bounded by:
\begin{equation}
    \norm{\hat{g}_{\ell}-\hat{g}^{\mathrm{ideal}}_{\ell}}_\infty \le \|\Delta \gamma_{\hop;\ell}\|+ \sum_{s'}\S_{s'}\|\Delta \gamma_{s';\ell}\|.
\end{equation}
The error of the following difference $$\|\Delta \gamma_{s';\ell}\|_\infty=\|\gamma_{s';\ell}-\hat{\gamma}_{s';\ell}\|_\infty\le Cd\norm{\hat{X}_1-X_1}_\infty.$$
with some absolute constant.
    Thus the gradient is upper bounded by 
    $$\|\nabla_{W_{KQ}^{(\ell)}}\mathcal{L}^{(\ell)}\|_\infty\lesssim \frac{\beta_0}{kN}\cdot\frac{d\epsilon}{(kNL)^{6L}} \lesssim \frac{\beta_0L\epsilon}{(kNL)^{6L}}.$$

Next, we bound the gradients $\nabla_{W_{KQ}^{(\ell')}}\mathcal{L}^{(\ell)}$ with $\ell'<2$, i.e. $\ell'=1$.
By \Cref{lemma: gradient upper bound for previous layers with curriculum} and $\S^{(2)}_{(i+1,\hop^1_i(j))}\ge 1-\frac{\epsilon}{2(kNL)^{6L}}$, we have 
$$\norm{\nabla_{W_{KQ}^{(\ell')}} \mathcal{L}^{(\ell)}}_\infty \le6\beta_0k(kNL)^{3/2}\cdot\frac{\epsilon}{(kNL)^{6L}} \lesssim \beta_0\cdot\frac{\epsilon}{(kNL)^{6L-2.5}} .$$

Finally, we bound the gradients $\nabla_{W_{KQ}^{(\ell')}}\mathcal{L}^{(\ell)}$ with $\ell>\ell'\ge2$. Similar to \Cref{lemma: gradient upper bound for previous layers with curriculum}, we can derive a general upper bound for those gradients; this is deferred to \Cref{lemma: gradient upper bound for previous layers with mixing} in \Cref{appendix: general upper bound for non-signal terms in mix training}. 
We have the upper bound with $\ell_0=1$:
$$\norm{\nabla_{W_{KQ}^{(\ell')}} \mathcal{L}^{(\ell)}}_\infty \le6\beta_0\cdot\frac{\epsilon\log^{\ell_0-1}\frac{1}{\epsilon}}{(kNL)^{6L-6\ell_0+5}} \lesssim \beta_0\cdot\frac{\epsilon}{(kNL)^{6L-1}} .$$

Combining all three parts where the worst bound is $O\qty(\frac{\beta_0\cdot\epsilon}{(kNL)^{6L-2.5}})$, we conclude the proof.
\end{proof}

With these bounds for the gradients, we can further upper bound the distance between the ideal $X_2^{(\ell)}$ and the actual intermediate sequence $\hat{X}_2^{(\ell)}$. Here, $\ell=1$ suffer from the first kind of additional error (further training after the layer learns the correct pattern), while $\ell\ge 3$ have the second type of additional error (unwanted updates from zero before the effective training stage).

\begin{lemma}[Perturbation analysis of stage 2 for $X_2^{(\ell)}$]\label{lemma: perturbation for second stage all layer}
    Under the condition of Theorem~\ref{thm:mix-data-main}, we have for all $\ell\in[L]$, 
    $$\|X_2^{(\ell)}-\hat{X}_2^{(\ell)}\|\lesssim \frac{\epsilon}{(kNL)^{6L-6}},$$
    where $X_2^{(\ell)}$ is the ideal output with one-hot attention pattern, and $\hat{X}_2^{(\ell)}$ is the transformer output.
\end{lemma}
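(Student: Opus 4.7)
\textbf{Proof plan for Lemma~\ref{lemma: perturbation for second stage all layer}.} The goal is strictly a one-stage perturbation estimate: given the stage-1 bound $\|\hat{X}_1^{(\ell)}-X_1^{(\ell)}\|_\infty \lesssim \epsilon/(kNL)^{6L}$ and the stage-2 gradient bounds from Lemma~\ref{lemma: gradient perturbation in stage 2}, deduce $\|\hat{X}_2^{(\ell)}-X_2^{(\ell)}\|_\infty \lesssim \epsilon/(kNL)^{6L-6}$ for every $\ell\in[L]$. I would argue by induction on $\ell$, mirroring the curriculum perturbation chain (Lemma~\ref{lemma: perturbation for stage ell}) but tracking three separate sources of error at every layer: (i) \emph{non-saturation error} from the layer's own softmax, (ii) \emph{parameter drift} of $W_{KQ}^{(\ell)}$ caused by spurious cross-loss gradients during stage 2, and (iii) \emph{accumulated residual-stream error} inherited from $\hat{X}_2^{(\ell-1)}-X_2^{(\ell-1)}$.

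First, for each $\ell$ I would convert the gradient infinity-norm bound $\|\nabla_{W_{KQ}^{(\ell)}}\mathcal{L}^{\mathrm{M}}\|_\infty \lesssim \beta_0\epsilon/(kNL)^{6L-2.5}$ from Lemma~\ref{lemma: gradient perturbation in stage 2} into a parameter drift $\|W_{KQ}^{(\ell)}(2)-W_{KQ}^{(\ell)}(1)\|_\infty \le \eta\cdot\|\nabla\|_\infty$. Using $\eta \lesssim (k^2N^3/\beta_0)\log(1/\epsilon)$ this is $\lesssim \epsilon\log(1/\epsilon)/(kNL)^{6L-5}$, which is small enough that: for $\ell=1$ the softmax attention score at position $(i,\pi_i(j))$ stays at $1-\epsilon/(2(kNL)^{6L})$ modulo a lower-order correction; for $\ell=2$ the signal analysis of Lemma~\ref{lemma: empirical gradient of the second stage} still applies and the softmax concentrates at $(i+1,\hop^1_i(j))$; and for $\ell\ge 3$ the pre-softmax logit matrix $\hat X_1^{(\ell-1),\top} W_{KQ}^{(\ell)}(2)\hat X_1^{(\ell-1)}$ has entries of size $\lesssim dN\cdot\|W_{KQ}^{(\ell)}(2)\|_\infty$, which when exponentiated and normalized yields a softmax vector within $\epsilon/(kNL)^{6L-O(1)}$ of the uniform $\frac{1}{kN}\mathbf{1}_{kN}$ (matching the uniform blocks in the ideal $X_2^{(\ell)}$).

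The inductive step itself proceeds as in Lemma~\ref{lemma: perturbation for stage ell}: decompose
\begin{align*}
\hat X_2^{(\ell)} - X_2^{(\ell)}
&= \bigl(\hat X_2^{(\ell-1)}-X_2^{(\ell-1)}\bigr)
 + W_{OV}^{(\ell)}\bigl(\hat X_2^{(\ell-1)}-X_2^{(\ell-1)}\bigr)\S^{(\ell)} \\
&\quad{}+ W_{OV}^{(\ell)}\, X_2^{(\ell-1)}\bigl(\S^{(\ell)}-\S^{(\ell)}_{\mathrm{ideal}}\bigr).
\end{align*}
The first term contributes $\delta_{\ell-1}$ directly but, because $W_{OV}^{(\ell)}$ writes to a fresh block of the residual stream disjoint from the block in which earlier errors live, does not compound with the second term; the second term is bounded by $\|\hat X_2^{(\ell-1)}-X_2^{(\ell-1)}\|_\infty$ since softmax outputs a convex combination; the third term is the non-saturation error handled in the previous paragraph. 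Setting $\delta_\ell := \delta_{\ell-1} + (\text{softmax error at layer }\ell)$ and plugging in $\delta_1 \lesssim \epsilon/(kNL)^{6L-1}$, summing $L$ terms each of order $\epsilon\log(1/\epsilon)/(kNL)^{6L-O(1)}$ yields the claimed $\epsilon/(kNL)^{6L-6}$ bound (absorbing $\log$ factors into the constant via the hypothesis $\epsilon\log^{2L}(1/\epsilon)\le 1$).

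The main obstacle I anticipate is the $\ell\ge 3$ case, where the ideal target for the $(\ell+2,\cdot)$ block of $X_2^{(\ell)}$ is the uniform vector $\frac{1}{kN}\mathbf{1}_{kN}$ rather than a one-hot, so the usual "softmax near one-hot" argument is inverted: I must instead show the softmax is near \emph{uniform} despite $W_{KQ}^{(\ell)}$ having been nudged away from zero and despite $\hat X_2^{(\ell-1)}$ being a perturbation of the ideal input with uniform lower blocks. This requires quantitatively bounding $\|\S(v)-\frac{1}{kN}\mathbf{1}\|_\infty$ by $\|v\|_\infty$ when $\|v\|_\infty$ is small, and carefully checking that the perturbed pre-softmax logits remain small after all sources of error are included, which is where the tight choice of exponent $6L-6$ (rather than something like $6L$) actually matters.
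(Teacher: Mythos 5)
Your proposal follows essentially the same route as the paper's proof: the identical three-term decomposition (inherited residual error, convex-combination propagation through $W_{OV}^{(\ell)}(\hat X_2^{(\ell-1)}-X_2^{(\ell-1)})\S^{(\ell)}$, and softmax deviation), the same per-layer case analysis (layer 1 stays saturated because the stage-2 parameter drift $\eta\|\nabla\|$ is negligible against the $\Omega(\log k\log\frac{kN}{\epsilon})$ logit separation, layer 2 via the stage-2 signal lemma, layers $\ell\ge 3$ via a near-uniform softmax bound obtained from the small nudged $W_{KQ}^{(\ell)}$), and the same identification of the key subtlety that untrained layers must be compared to the uniform attention pattern rather than a one-hot one. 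The only difference is bookkeeping: the paper exploits that for $\ell\ge 4$ the ideal block read by $W_{OV}^{(\ell)}$ is constant, so its product with the mean-zero $\Delta\S$ vanishes exactly, whereas you sum per-layer softmax errors over $L$ layers and absorb the logarithmic factors—both land within the $\epsilon/(kNL)^{6L-6}$ slack.
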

\begin{proof}
    The ideal output for the $\ell$ layer is $$X_2^{(\ell)} = X_2^{(\ell-1)} + W_{KQ}^{(\ell)}X_2^{(\ell-1)}\S^{(\ell)}_{\mathrm{ideal}},$$ where $\S^{(\ell)}_{\mathrm{ideal}}$ is the ideal one-hot attention pattern. The empirical output $\hat{X}^{(\ell)}_2$ has the error introduced by the non-saturation of the softmax, together with those from the previous error in $\hat{X}_2^{(\ell-1)}$:
\begin{align*}
    \hat{X}_2^{(\ell)} &= \hat{X}_2^{(\ell-1)} + W_{OV}^{(\ell)}\hat{X}_2^{(\ell-1)}\S^{(\ell)}\\ &= X_2^{(\ell)} + W_{OV}^{(\ell)}X_2^{(\ell-1)}\underbrace{(\S_2^{(\ell)}-\S^{(\ell)}_{2,\mathrm{ideal}})}_{\Delta \S_2^{(\ell)}}+\underbrace{(\hat{X}_2^{(\ell-1)}-{X}_2^{(\ell-1)})+W_{OV}^{(\ell)}(\hat{X}_2^{(\ell-1)}-{X}_2^{(\ell-1)})\S_2^{(\ell)}}_{\text{Accumulated perturbation error}}.
\end{align*}
\textbf{Layer 1:} We first prove the case with $\ell=1$, where the accumulated perturbation error is 0 since $X^{(0)}$ is always the original input. Therefore, we just consider the $\Delta\S_2^{(\ell)}$ term.

According to the last step in \Cref{lemma: empirical gradient of the first stage}, the separation between correct and wrong positions before the softmax $\eta\hat{\Delta}$ is greater than $\Omega(\log k\log\frac{kN}{\epsilon})$. While by \Cref{lemma: gradient perturbation in stage 2}, the update in the second stage should be upper bounded by the sum of all gradient norms of $W_{KQ}^{(1)}$ in the second stage:
$$\sum_{\ell=1}^L\norm{\eta \nabla_{W_{KQ}^{(1)}}\mathcal{L}^{(\ell)}}_\infty \le L\cdot k^2N^3\cdot \frac{\epsilon}{(KNL)^{6L-2.5}} \log k\cdot\log\frac{kN}{\epsilon}\ll \log k\log\frac{kN}{\epsilon},$$
which is dominated by the current parameter since $\epsilon\le O(\frac{1}{k^6N^6})$. Therefore, the softmax $\S^{(1)}_2$ is still close to one-hot by
$$\|\S_2^{(1)}-\S^{(1)}_{2,\mathrm{ideal}}\|_\infty\le \frac{\epsilon}{2(KNL)^{6L}}.$$
and thus we have $\|X_2^{(1)}-\hat{X}_2^{(1)}\|_\infty\le \frac{\epsilon}{2(kNL)^{6L}}$.

\textbf{Layer 2:} For $\ell=2$, we already have $\|\Delta\S^{(2)}_{2}\|\le \frac{\epsilon}{2(KNL)^{6L}}$, so the softmax error is upper bounded by $\frac{\epsilon}{2(KNL)^{6L}}$. 
Now consider the accumulated perturbation error. By the perturbation analysis, we have $\norm{\hat{X}_2^{(1)}-{X}_2^{(1)}}_{\infty}\le \frac{\epsilon}{2(kNL)^{6L}}.$ Note that the error in $\hat{X}_2^{(1)}-{X}_2^{(1)}$ are in different rows of the matrices from 
$W_{OV}^{(2)}(\hat{X}_2^{(1)}-{X}_2^{(1)})\S^{(2)}$. Therefore, $\hat{X}_2^{(1)}-{X}_2^{(1)}$ won't introduce extra error in this stage. By similar arguments in stage 1 we have $\|W_{OV}^{(2)}(\hat{X}_2^{(1)}-{X}_2^{(1)})\S^{(2)}\|_\infty\le \|\hat{X}_2^{(1)}-{X}_2^{(1)}\|_\infty$.
Combine both parts of error, we have 
$\|X_2^{(2)}-\hat{X}_2^{(2)}\|_\infty\le \frac{\epsilon}{(kNL)^{6L}}.$

\textbf{Layer $\ell$:} Finally, we inductively prove that for layer $\ell\ge 3$, the distance $\|X_2^{(\ell)}-\hat{X}_2^{(\ell)}\|\le \frac{\epsilon\log\frac{1}{\epsilon}}{(kNL)^{6L-6}}$. For the base case $\ell=3$, we first consider the second accumulation term, which is upper bounded by the second layer: $$\|W_{OV}^{(3)}(\hat{X}_2^{(2)}-{X}_2^{(2)})\S^{(3)}\|_\infty\le \|\hat{X}_2^{(2)}-{X}_2^{(2)}\|_\infty \le \frac{\epsilon}{(kNL)^{6L}}.$$

Now we prove that the $\|\Delta \S_2^{(\ell)}\|_\infty\le\frac{\epsilon\log\frac{1}{\epsilon}}{(kNL)^{6L-6}L}$. The parameter norm after the update for each $W_{KQ}^{(\ell)}$, $\ell>2$ is upper bounded by
$\eta \sum_{\ell'=1}^L\|\nabla_{W_{KQ}^{(\ell)}}\mathcal{L}^{(\ell')}\|_2\le \frac{L\epsilon}{(kNL)^{6L-2.5}}$ using \Cref{lemma: gradient perturbation in stage 2}. Therefore we can expand the softmax:
\begin{align*}
    \left\|\S_2^{(\ell)}-\frac{1}{kN}\1_{kN}\right\|_\infty&\le \eta\|\tilde{J}\cdot(\hat{X}_2^{(\ell-1)})^\top \nabla_{W_{KQ}^{(\ell)}}\mathcal{L}^{\text{M}}\hat{X}_{2,(i,j)}^{(\ell-1)}\|_\infty\\
    &\le C\cdot \eta\cdot \frac{1}{kN}\cdot \sum_{\ell'=1}^L\|\nabla_{W_{KQ}^{(\ell)}}\mathcal{L}^{(\ell')}\|_2\cdot L^2\tag{$\|\tilde{J}\|_2\lesssim \frac{1}{kN}, \|\hat{X}_{2,(i,j)}^{(\ell-1)}\|_2\lesssim L.$}\\
    &\lesssim \frac{k^2N^3}{\beta_0}\log k\log \frac{kN}{\epsilon}\cdot \frac{1}{kN}\cdot  d\frac{\beta_0L\epsilon}{(kNL)^{6L-2.5}} \cdot L^2\\
    &\le \frac{\epsilon\log \frac{1}{\epsilon}}{(kNL)^{6L-5.5}}\tag{$k>\log k, kN>\log kN$}.
\end{align*}
Since for the ideal $W_{OV}^{(3)}{X}_2^{(2)}$ is one-hot/all zero for each row, $\norm{W_{OV}^{(3)}{X}_2^{(2)}\Delta \S_2^{(\ell)}}_\infty \le \frac{\epsilon\log \frac{1}{\epsilon}}{(kNL)^{6L-5.5}}$. Combine both part, we have $\|X_2^{(3)}-\hat{X}_2^{(3)}\|\le \frac{\epsilon\log\frac{1}{\epsilon}}{(kNL)^{6L-5.5}}$. 

For $\ell\ge 4$, we first consider the second accumulation term, which is upper bounded by the induction hypothesis: $$\|W_{OV}^{(\ell)}(\hat{X}_2^{(\ell-1)}-{X}_2^{(\ell-1)})\S^{(\ell)}\|_\infty\le \|\hat{X}_2^{(\ell-1)}-{X}_2^{(\ell-1)}\|_\infty \le \frac{\epsilon\log\frac{1}{\epsilon}}{(kNL)^{6L-5.5}}.$$
And since each row of ideal $W_{OV}^{(\ell)}{X}_2^{(\ell-1)}$ is either all-one/all-zero, $\norm{W_{OV}^{(\ell)}{X}_2^{(\ell-1)}\Delta \S_2^{(\ell)}}_\infty = 0.$
By induction, we finish the proof.
\end{proof}

To conclude, the intermediate sequence $\hat{X}^{(\ell)}_2$ is $O(\frac{\epsilon\log \frac{1}{\epsilon}}{(kNL)^{6L-6}})$ close to the ideal $X^{(\ell)}_2$ after Stage 2 for all layer $\ell$. In the next section, we will continue the induction and prove that the final error is still $1/\poly(kN)$ small.

\subsection{Stage $t\ge 3$ for Mixed Training}\label{sec:mix-train-stage-ell}
Finally, we prove the rest of the stages still satisfy that for all $\ell\in[L],t\in[L],$
$$\norm{\hat{X}_t^{(\ell)}-{X}_t^{(\ell)}}_\infty\le \frac{\epsilon\log^{t-1} \frac{1}{\epsilon}}{(kNL)^{6L-6t+6}},$$
the softmax score of the first layer satisfies
$$\S^{(1)}_{(i,\pi_i(j))}=\S((X^{(0)})^\top W_{KQ}^{(1)}X^{(0)}_{(i,j)}) \ge 1-\frac{\epsilon}{2(kNL)^{6L}},$$
and the softmax score of $\ell$th layer ($1<\ell<t$) satisfies
$$\S^{(\ell)}_{(i+2^{\ell-2},\hop^{2^{\ell-2}}_i(j))}:=\S\qty((\hat{X}^{(\ell-1)})^\top W_{KQ}^{(\ell)}\hat{X}^{(\ell-1)}_{(i,j)})_{(i+2^{\ell-2},\hop^{2^{\ell-2}}_i(j))} \ge 1-\frac{\epsilon}{2(kNL)^{6L}}.$$

We prove this by induction, and we already have $t=2$ as our induction hypothesis.

First, we prove that for each stage $t$, $\nabla_{W_{KQ}^{(t)}}\mathcal{L}$ is close to the population gradient, while the other gradients can be upper bounded using the perturbation $\norm{\hat{X}_{t-1}^{(\ell)}-{X}_{t-1}^{(\ell)}}_\infty$ from the previous timestep.

Similar to stage 2, we first control the deviation of gradients $\nabla_{W_{KQ}^{(\ell')}}\mathcal{L}^{(\ell)}$ from their idealized versions, where $\ell'\neq t$ or $\ell\neq t$. Since when $\ell'>\ell$ the gradient is zero by definition, we only consider $\ell\ge \ell'$. We first consider the cases when $\ell =\ell'$ and then $\ell > \ell'$.

\begin{lemma}
    \label{lemma: gradient perturbation in stage t}
In stage $t\ge 3$, given $\norm{\hat{X}_{t-1}^{(\ell)}-{X}_{t-1}^{(\ell)}}_\infty\le \frac{\epsilon\log^{t-2} \frac{1}{\epsilon}}{(kNL)^{6L-6t+12}}$, the gradient error of $W_{KQ}^{(\ell)}$ from the idealized gradient can be upper bounded by
\begin{itemize}
    \item If $\ell<t$: $\|\nabla_{W_{KQ}^{(\ell)}}\mathcal{L}^{(\ell)}\|_\infty\lesssim \dfrac{\beta_0\epsilon}{(kNL)^{6L-1}}$, where idealized gradient is zero.
    \item If $\ell>t$: $\|\nabla_{W_{KQ}^{(\ell)}}\mathcal{L}^{(\ell)}\|_\infty\lesssim \dfrac{\beta_0\epsilon\log^{t-2}\frac{1}{\epsilon}}{(kNL)^{6L-6t+12}}$, where idealized gradient is zero.
    \item If $\ell = t$: $\norm{\nabla_{W_{KQ}^{(\ell)}}\mathcal{L}^{(\ell)}-\nabla_{W_{KQ}^{(\ell)}}\mathcal{L}^{(\ell)}_{\mathcal{D}}(X_{t-1})}_\infty\lesssim \dfrac{\beta_0\epsilon\log^{t-2}\frac{1}{\epsilon}}{(kNL)^{6L-6t+12}}+\dfrac{\log(d/\delta)}{\sqrt{M}}$,\\ where $\nabla_{W_{KQ}^{(\ell)}}\mathcal{L}^{(\ell)}_{\mathcal{D}}(X_{t-1})$
    is the population gradient with the idealized input $X_{t-1}^{(\ell-1)}$ to the $\ell$th layer and the $\ell$th key query matrix set to zero: $W_{KQ}^{(\ell)}=0$.
\end{itemize}
\end{lemma}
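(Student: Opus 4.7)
The plan is to handle the three cases separately, each following the template already established in the stage-2 analysis (\Cref{lemma: gradient perturbation in stage 2}), but now inducting on the stage index $t$ using the inductive hypothesis $\norm{\hat X_{t-1}^{(\ell)}-X_{t-1}^{(\ell)}}_\infty\le \epsilon\log^{t-2}(1/\epsilon)/(kNL)^{6L-6t+12}$ together with the saturation guarantees $\S^{(\ell)}_{\text{correct}}\ge 1-\epsilon/(2(kNL)^{6L})$ for all $\ell<t$. In each case the key is to decompose the empirical gradient into (i) the population gradient on the idealized input $X_{t-1}$, (ii) a finite-sample fluctuation, and (iii) a deterministic perturbation induced by replacing $X_{t-1}$ with $\hat X_{t-1}$.

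For the case $\ell<t$, I would directly invoke \Cref{lemma: gradient upper bound for previous layers with curriculum}, whose hypotheses are met because the softmax in every layer $\ell'<t$ is $\epsilon/(kNL)^{6L}$-close to one-hot and the untrained layers have $\norm{W_{KQ}^{(\ell'')}}_2$ negligible. The lemma gives a bound of order $\beta_0 k(kNL)^{3/2}\cdot \epsilon/(kNL)^{6L}\lesssim \beta_0\epsilon/(kNL)^{6L-1}$, which is exactly what we want. The only subtlety is to verify that the condition $\|W_{KQ}^{(\ell'')}\|_2\lesssim 1/(k^2N^2L^2)$ for $\ell''>t$ still holds at the start of stage $t$; this follows from the previous-stage gradient norm bound, so after at most $t\le L$ cumulative updates the parameter remains $O(L)$ times the single-step bound, which is still much smaller than the required threshold.

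For the case $\ell>t$, the idealized population gradient vanishes because in $X_{t-1}^{(\ell-1)}$ the block that $W_{OV}^{(\ell)}$ extracts is exactly the uniform vector $\tfrac{1}{kN}\mathbf{1}_{kN}$, which, after multiplication by the Jacobian's mean-centering component, produces zero (the argument in the proof outline of \Cref{thm:mix-data-main}). The remaining error therefore comes entirely from replacing $X_{t-1}$ by $\hat X_{t-1}$. I would mimic the $\gamma/\hat\gamma$ decomposition used in \Cref{lemma: concentration for stage l,lemma: gradient perturbation in stage 2}, writing
\begin{align*}
\hat g_\ell-\hat g_\ell^{\mathrm{ideal}} \;=\; \Delta\gamma_{\mathrm{hop}}\;-\;\sum_{s'}\S_{s'}\,\Delta\gamma_{s'},
\end{align*}
bounding $\|\Delta\gamma_{s'}\|_\infty\lesssim d\,\norm{\hat X_{t-1}-X_{t-1}}_\infty$, and then dividing by $kN$ to obtain the stated bound $\beta_0\epsilon\log^{t-2}(1/\epsilon)/(kNL)^{6L-6t+12}$. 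The Jacobian here is not small (the layer is untrained), but it is bounded by $1/(kN)$, which is already absorbed into the prefactor.

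For the case $\ell=t$, the argument is the same decomposition but we must now keep the signal: write the empirical gradient as the population gradient $\nabla_{W_{KQ}^{(t)}}\mathcal{L}^{(t)}_{\mathcal{D}}(X_{t-1})$ on the idealized input, plus a concentration error of order $\log(d/\delta)/\sqrt M$ (Hoeffding, as in \Cref{lemma: concentration for stage 2,lemma: concentration for stage l}, using that each per-sample summand has infinity norm $\le 1$), plus the same input-perturbation term controlled exactly as in the $\ell>t$ case. The main obstacle I anticipate is not the algebra but the bookkeeping: one must confirm that the ``cross'' gradients $\nabla_{W_{KQ}^{(t)}}\mathcal{L}^{(\ell)}$ for $\ell\neq t$ that are silently lumped into $\mathcal{L}^{\mathrm{M}}$ do not contribute a signal — they should vanish on ideal inputs by the same uniform-block cancellation used in the $\ell>t$ analysis, after which their empirical deviation is bounded exactly as in that case. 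Once all three cases are assembled and the worst rate is taken, the claimed bounds follow, and the induction on $t$ can then be closed by invoking the perturbation lemma (the stage-$t$ analogue of \Cref{lemma: perturbation for second stage all layer}) to upgrade to $\norm{\hat X_t^{(\ell)}-X_t^{(\ell)}}_\infty\le \epsilon\log^{t-1}(1/\epsilon)/(kNL)^{6L-6t+6}$.
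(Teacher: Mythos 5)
Your treatment of the second and third bullets essentially matches the paper's proof: decompose the empirical gradient into the idealized empirical gradient (zero for $\ell>t$ by the uniform-block cancellation; the population signal plus a Hoeffding term of order $\log(d/\delta)/\sqrt{M}$ for $\ell=t$) plus an input-perturbation term bounded via the $\gamma/\hat\gamma$ decomposition with $\norm{\Delta\gamma}_\infty\lesssim d\,\norm{\hat X_{t-1}-X_{t-1}}_\infty$. One caveat you gloss over: in mixed training the matrices $W_{KQ}^{(\ell)}$ with $\ell\ge t$ are no longer exactly zero at the start of stage $t$, so the softmax weights and the Jacobian are not exactly uniform; the paper carries the induction bound $\norm{\S_{t-1}^{(\ell)}-\tfrac{1}{kN}\1_{kN}}_\infty\le \epsilon\log^{t-2}(1/\epsilon)/(kNL)^{6L-6t+12}$ to absorb this extra deviation into the same perturbation term, and you should too.

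The genuine gap is your first bullet ($\ell<t$). You propose to obtain it from \Cref{lemma: gradient upper bound for previous layers with curriculum}, but that lemma's conclusion only covers the cross terms $\nabla_{W_{KQ}^{(\ell')}}\mathcal{L}^{(\ell)}$ with $\ell>\ell'$; it says nothing about the diagonal term $\nabla_{W_{KQ}^{(\ell)}}\mathcal{L}^{(\ell)}$ that the bullet concerns. Moreover, even granting its applicability, plugging in the saturation level $\epsilon/(kNL)^{6L}$ gives a bound of order $\beta_0 k(kNL)^{3/2}\cdot\epsilon/(kNL)^{6L}\approx \beta_0\epsilon/(kNL)^{6L-2.5}$, and your claim that this is $\lesssim \beta_0\epsilon/(kNL)^{6L-1}$ requires $k\lesssim (kNL)^{1/2}$, which fails when $k\gg NL$ (recall $L=O(\log k)$); so this route yields a strictly weaker exponent than the one stated. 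The paper proves the first bullet directly from the gradient formula: for an already-trained layer, the Jacobian of its own saturated softmax satisfies $\norm{J^{(\ell)}}_2\lesssim \epsilon/(kNL)^{6L}$ and enters the diagonal gradient linearly, whence $\norm{\nabla_{W_{KQ}^{(\ell)}}\mathcal{L}^{(\ell)}}_\infty\le 2d\,\norm{J^{(\ell)}}_2\,\beta_0\lesssim \beta_0\epsilon/(kNL)^{6L-1}$ using $d=O(kNL)$. You already have the saturation hypothesis in hand; use it through this direct argument rather than through the curriculum lemma.
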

\begin{proof}
    We consider $\ell<t$ first. Since the softmax is close to one-hot by $O\qty(\frac{\epsilon}{(kNL)^{6L}})$, 
    we have the Jacobian's spectral norm upper bounded by
    $$\|J^{(\ell)}\|_2 = \|\diag(\S)-\S\S^\top\|_2\lesssim \frac{\epsilon}{(kNL)^{6L}}.$$
    Therefore the first layer gradient has an infinity norm upper bound (since $\|\hat{X}^{(\ell)}_m\|_\infty\le 1$.)
    $$\|\nabla_{W_{KQ}^{(\ell)}}\mathcal{L}^{(\ell)}\|_\infty\le 2d\norm{J^{(1)}}_2 \cdot \beta_0\cdot 1\lesssim \frac{\beta_0\epsilon}{(KNL)^{6L-1}}.$$

    For $\ell = t$, we apply a similar strategy as in \Cref{lemma: concentration for stage l}. The sample noise with idealized input sequences $X_{t-1}$ can be upper bounded by $O\qty(\frac{\log(d/\delta)}{\sqrt{M}})$, and we need to further upper bound the perturbation error. 

    The actual empirical gradient is 
    \begin{align*}
    \hat{g}_{\ell} 
    &=\beta_0\hat{X}_{t-1} J_{t-1}^{(\ell)}\hat{X}_{t-1}^\top (e_{L+2,\ell+1}\otimes \1_k\otimes e_{N,\hop^{2^{\ell-1}}_{i}(j)}) (\hat{X}^{(\ell-1)}_{t-1,(i,j)})^\top\\&-\beta_0\sum_{s'\in[N]}\S_{s'}\hat{X}_{t-1}J_{t-1}^{(\ell)}\hat{X}_{t-1}^\top (e_{L+2,\ell+1}\otimes \1_k\otimes e_{N,s'}) (\hat{X}^{(\ell-1)}_{t-1,(i,j)})^\top.
    \end{align*}
    Given that $\norm{\hat{X}_{t-1}^{(\ell)}-{X}_{t-1}^{(\ell)}}_\infty\le \frac{\epsilon\log^{t-2} \frac{1}{\epsilon}}{(kNL)^{6L-6t+12}}$, and by induction of the last stage, the softmax score at initialization of stage $t$ is
    $$\left\|\S_{t-1}^{(\ell)}-\frac{1}{kN}\1_{kN}\right\|_\infty\le \frac{\epsilon\log^{t-2} \frac{1}{\epsilon}}{(kNL)^{6L-6t+12}}.$$
    The perturbation error that lies in the Jacobian $J_{t-1}^{(\ell)}$ and $\hat{X}_{t-1}^{(\ell)}$ can be upper bounded by $O\qty(\frac{\beta_0\epsilon\log^{t-2}\frac{1}{\epsilon}}{(kNL)^{6L-6t+12}}).$
    Combine the two error terms and we finished the proof for $\ell=t.$
    
    For $\ell\ge t+1$, the proof is similar to $\ell=t$. Since the idealized empirical gradient is always zero, we just need to bound the perturbation error, which is the same as $\ell=t$. The upper bound is also $O\qty(\frac{\beta_0\epsilon\log^{t-2}\frac{1}{\epsilon}}{(kNL)^{6L-6t+12}}).$ 
\end{proof}

Next, we bound the non-signal gradients $\nabla_{W_{KQ}^{(\ell')}}\mathcal{L}^{(\ell)}$ with $\ell>\ell'$. We first apply the bounds \Cref{lemma: gradient upper bound for previous layers with curriculum} for $\nabla_{W_{KQ}^{(\ell')}}\mathcal{L}^{(\ell)}$ with $\ell'< t$. By the softmax lower bound $\S^{(\ell')}_{(i+2^{\ell'-1},\hop^{2^{\ell'-2}}_i(j))}\ge 1-\frac{\epsilon}{2(kNL)^{6L}}$ by induction before the gradient in stage $t$ applies, we have 
$$\norm{\nabla_{W_{KQ}^{(\ell')}} \mathcal{L}^{(\ell)}}_F \le6\beta_0k(kNL)^{3/2}\cdot\frac{\epsilon}{(kNL)^{6L}} \lesssim \beta_0\cdot\frac{\epsilon}{(kNL)^{6L-2.5}} .$$

Finally, to upper bound the terms with $\ell>\ell'\ge t$, we apply \Cref{lemma: gradient upper bound for previous layers with mixing} (in \Cref{appendix: general upper bound for non-signal terms in mix training}). The upper bound is $O\qty(\frac{\beta_0\epsilon\log^{t-2}\frac{1}{\epsilon}}{(kNL)^{6L-6t+11}}).$ 

Combining all of the above results, for $t\ge 2$ we have that each gradient error with $\ell\neq t$ or $\ell'\neq t$ is in the worst-case upper bounded by
$$\norm{\nabla_{W_{KQ}^{(\ell')}} \mathcal{L}^{(\ell)}}_F \lesssim \beta_0\cdot\frac{\epsilon}{(kNL)^{\min\{6L-6t+11,6L-2.5\}}}\le \beta_0\cdot\frac{\epsilon}{(kNL)^{6L-6t+9.5}}.$$

Given the above bounds on the empirical gradients, we next prove that after one gradient step: (1) the $t$th layer learns the correct signal and (2) the other layers' unwanted errors can be controlled as predicted in the induction hypothesis. Those are shown in the following two lemmas. The proof strategy resembles \Cref{lemma: perturbation for second stage all layer}.

\begin{lemma}\label{lemma: softmax saturation for stage ell mix training}
    For $\ell\le t$, for all query $(i,j)$ and input $X$, we have after the $t$th gradient step
    $$\S^{(\ell)}_{(i+2^{\ell-2},\hop^{2^{\ell-2}}_i(j))}:=\S\qty((\hat{X}^{(\ell-1)})^\top W_{KQ}^{(\ell)}\hat{X}^{(\ell-1)}_{(i,j)})_{(i+2^{\ell-2},\hop^{2^{\ell-2}}_i(j))} \ge 1-\frac{\epsilon}{2(kNL)^{6L}}.$$
    Furthermore, we have $\|X_t^{(\ell)}-\hat{X}_t^{(\ell)}\|_\infty\le \frac{\epsilon}{2(kNL)^{6L}}$.
\end{lemma}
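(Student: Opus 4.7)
The plan is to prove the two claims by an internal induction on $\ell$ within stage $t$, reusing the structure of the stage-2 analysis (\Cref{lemma: perturbation for second stage all layer}) and plugging in the sharper gradient bounds of \Cref{lemma: gradient perturbation in stage t}. The case split is natural: $\ell<t$ asks that previously-learned layers remain saturated despite extra gradient updates coming from the multi-hop losses in $\mathcal{L}^{\mathrm{M}}$, while $\ell=t$ asks that the layer currently being trained achieves saturation, in analogy with the curriculum proof. In both cases the target perturbation is the tight scale $\epsilon/2(kNL)^{6L}$, which is much smaller than the loose bound $\epsilon\log^{t-1}(1/\epsilon)/(kNL)^{6L-6t+6}$ needed for untrained layers $\ell>t$.

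For $\ell<t$ I would argue as follows. By the outer induction on $t$, before stage $t$ the pre-softmax separation in layer $\ell$ at the correct position is $\Omega(\log k\log(kN/\epsilon))$, which gives the stronger one-hot bound. By \Cref{lemma: gradient perturbation in stage t}, every gradient $\nabla_{W_{KQ}^{(\ell)}}\mathcal{L}^{(\ell')}$ contributing to the update in stage $t$ has infinity norm at most $\beta_0\epsilon/(kNL)^{6L-1}$, so the total parameter perturbation is at most $\eta L\cdot \beta_0\epsilon/(kNL)^{6L-1}\ll \log k\log(kN/\epsilon)$ because $\epsilon\leq\tilde O(1/(kNL)^6)$. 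The pre-softmax separation therefore remains $\Omega(\log k\log(kN/\epsilon))$, giving $\S^{(\ell)}_{(i+2^{\ell-2},\hop^{2^{\ell-2}}_i(j))}\geq 1-\epsilon/(2(kNL)^{6L})$.

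For $\ell=t$ I would mimic the proof of \Cref{lemma: empirical gradient of stage ell} with the idealized input $X_{t-1}^{(t-1)}$ in place of the curriculum input. The population gradient $\nabla_{W_{KQ}^{(t)}}\mathcal{L}^{(t)}_\mathcal D(X_{t-1})$ has the same block structure as in the curriculum analysis (since by \Cref{lemma: gradient perturbation in stage t} all cross-stage gradients $\nabla_{W_{KQ}^{(t)}}\mathcal{L}^{(\ell')}$ with $\ell'\neq t$ vanish on the idealized input), producing pre-softmax separation $\Omega(1/(kN^2))$ at the correct position. \Cref{lemma: gradient perturbation in stage t} then bounds the deviation of the actual empirical gradient from this population one by the sum of the sample-concentration term $\log(d/\delta)/\sqrt M$ and the perturbation error $\beta_0\epsilon\log^{t-2}(1/\epsilon)/(kNL)^{6L-6t+12}$, both of which are $o(1/(kN^2))$ under the stated assumptions on $M$ and $\epsilon$. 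Choosing $\eta\gtrsim (k^2N^3/\beta_0)\log k\log(kN/\epsilon)$ then drives the softmax to the desired saturation level $1-\epsilon/(2(kNL)^{6L})$.

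Finally, for the intermediate-sequence bound I propose the same layer-by-layer expansion as in \Cref{lemma: perturbation for second stage all layer}: write
\[
\hat X_t^{(\ell)} - X_t^{(\ell)} = W_{OV}^{(\ell)} X_t^{(\ell-1)}\bigl(\S^{(\ell)}-\S^{(\ell)}_{\text{ideal}}\bigr) + (\hat X_t^{(\ell-1)}-X_t^{(\ell-1)}) + W_{OV}^{(\ell)}(\hat X_t^{(\ell-1)}-X_t^{(\ell-1)})\S^{(\ell)},
\]
observe that the last two terms live in disjoint row-blocks of the embedding and that $\|W_{OV}^{(\ell)}(\hat X_t^{(\ell-1)}-X_t^{(\ell-1)})\S^{(\ell)}\|_\infty\leq \|\hat X_t^{(\ell-1)}-X_t^{(\ell-1)}\|_\infty$ since $\S^{(\ell)}$ is a probability vector, and use the softmax saturation just established to bound the first term by $\epsilon/(2(kNL)^{6L})$. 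The main obstacle is bookkeeping the three error sources together, namely the extra ``stage-$t$ drift'' of $W_{KQ}^{(\ell)}$ for $\ell<t$, the sample/perturbation noise of the $t$-th layer gradient, and the carry-over of $\hat X_t^{(\ell-1)}-X_t^{(\ell-1)}$ up through the residual stream, without losing the $(kNL)^{6L}$ denominator; as in stage 2 this works only because the accumulated errors and the new non-saturation error live in separate residual-stream slots, so they do not compound multiplicatively across layers.
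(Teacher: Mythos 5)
Your proposal follows essentially the same route as the paper's proof: for $\ell<t$ the saturation persists because the stage-$t$ updates of $W_{KQ}^{(\ell)}$ are negligible against the $\Omega(\log k\log\frac{kN}{\epsilon})$ pre-softmax separation, for $\ell=t$ one repeats the curriculum stage-$\ell$ gradient argument on the (nearly ideal) input with the noise and perturbation bounds of \Cref{lemma: gradient perturbation in stage t}, and the bound on $\|X_t^{(\ell)}-\hat{X}_t^{(\ell)}\|_\infty$ comes from the same residual-stream decomposition with the disjoint-row observation. The only slip is attributing the control of the cross-loss gradients $\nabla_{W_{KQ}^{(\ell)}}\mathcal{L}^{(\ell')}$ with $\ell'\neq\ell$ to \Cref{lemma: gradient perturbation in stage t}, which only covers $\ell'=\ell$; for the trained layers $\ell<t$ these cross terms are bounded in the paper via \Cref{lemma: gradient upper bound for previous layers with curriculum}, yielding the slightly weaker scale $\beta_0\epsilon/(kNL)^{6L-2.5}$, which still makes the total drift negligible, so your conclusion is unaffected.
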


\begin{proof}
    For each layer $\ell<t$, recall that the separation between the correct position and the other positions before the one-step gradient in stage $t$ is $\Omega(\log k\log\frac{kN}\epsilon)$. By \Cref{lemma: gradient perturbation in stage t} and \Cref{lemma: gradient upper bound for previous layers with curriculum}, the update in the second stage should be upper bounded by 
$$\sum_{\ell'=1}^L\norm{\eta \nabla_{W_{KQ}^{(\ell)}}\mathcal{L}^{(\ell')}}_\infty \le L\cdot k^2N^3\cdot \frac{\epsilon}{(KNL)^{6L-2.5}} \log k\cdot\log\frac{kN}{\epsilon}\ll \log k\log\frac{kN}{\epsilon},$$
which is dominated by the current parameter since $\epsilon\le O(\frac{1}{k^6N^6})$. Therefore, the softmax $\S^{(1)}_2$ is still close to one-hot by
$$\|\S_{t}^{(\ell)}-\S^{(\ell)}_{t,\mathrm{ideal}}\|_\infty\le \frac{\epsilon}{2(KNL)^{6L}}.$$
and thus we have $\|X_t^{(\ell)}-\hat{X}_t^{(\ell)}\|_\infty\le \frac{\epsilon}{2(kNL)^{6L}}$.

For $\ell = t$, we upper bound the error of the gradient estimate by \Cref{lemma: gradient perturbation in stage t}. The previous case with $\ell = t-1$ guarantees that the intermediate input $\hat{X}^{(t-1)}_{t}$ is close to the ideal input:
$$\|X_t^{(t-1)}-\hat{X}_t^{(t-1)}\|_\infty\le \frac{\epsilon}{2(kNL)^{6L}}.$$
We define the separation of the pre-softmax attention score between the correct position $(i+2^{t-2},\hop_i^{2^{t-2}}(j))$ and the others as $\hat{\Delta}^{(\ell)}_{i,j}$:
$$\hat{\Delta}^{(\ell)}_{i,j} = \qty(\hat{X}^\top \hat{g}_\ell \hat{X}_{(i,j)})_{(i+2^{t-2},\hop_i^{2^{t-2}}(j))}-\max_{p\not=(i+2^{t-2},\hop_i^{2^{t-2}}(j))}\qty(\hat{X}^\top \hat{g}_\ell\hat{X}_{(i,j)})_p.$$
By \Cref{lemma: empirical gradient of stage ell} and the upper bound for the perturbation, we have $\|\hat{\Delta}^{(\ell)}_{i,j}\|$ at least $\frac{\eta\beta_0}{k^2N^3}$. After one step gradient with learning rate $\eta \gtrsim \frac{k^2N^3}{\beta_0}\log k\log\frac{kN}{\epsilon}$, the softmax output of the correct position can be lower bounded by
\begin{align*}
    \S(\hat{X}_m^\top W_{KQ}^{(\ell)} \hat{X}_{(i_m,j_m)})_{i_m,\hop^{1}_{i_m}(j_m)} \ge \frac{\exp\qty(\frac{\eta\beta_0}{kN}\cdot \frac{1}{kN^2})}{\exp\qty(\frac{\eta\beta_0}{kN}\cdot \frac{1}{kN^2})+kN-1}\ge 1-\frac{\epsilon}{2(kNL)^{6L}}.
\end{align*}
Thus we have $\|X_t^{(t)}-\hat{X}_t^{(t)}\|_\infty\le \frac{\epsilon}{2(kNL)^{6L}}$, which concludes the proof.
\end{proof}

The second lemma tracks the perturbation error for later layers.

\begin{lemma}[Perturbation analysis of stage $t$ for $X_t^{(\ell)}$]\label{lemma: perturbation for tth stage all layer}
    Under the condition of Theorem~\ref{thm:mix-data-main}, we have for all $\ell\ge t+1$, 
    $$\|X_t^{(\ell)}-\hat{X}_t^{(\ell)}\|\lesssim \frac{\epsilon\log^{t-1}\frac{1}{\epsilon}}{(kNL)^{6L-6t+6}},$$
    where $X_2^{(\ell)}$ is the ideal output with one-hot attention pattern, and $\hat{X}_2^{(\ell)}$ is the transformer output.
\end{lemma}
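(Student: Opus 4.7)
The plan is to induct on $\ell$ from $t+1$ up to $L$, combining the bound $\|\hat X_t^{(t)} - X_t^{(t)}\|_\infty \le \tfrac{\epsilon}{2(kNL)^{6L}}$ from \Cref{lemma: softmax saturation for stage ell mix training} with the gradient bounds of \Cref{lemma: gradient perturbation in stage t}. As in the preceding perturbation lemmas, I will decompose
\begin{align*}
\hat X_t^{(\ell)} - X_t^{(\ell)}
= \underbrace{\bigl(\hat X_t^{(\ell-1)} - X_t^{(\ell-1)}\bigr)}_{\text{(i) carry-over}}
+ \underbrace{W_{OV}^{(\ell)} X_t^{(\ell-1)}\bigl(\hat \S^{(\ell)} - \S_{t,\mathrm{ideal}}^{(\ell)}\bigr)}_{\text{(ii) softmax deviation}}
+ \underbrace{W_{OV}^{(\ell)}\bigl(\hat X_t^{(\ell-1)} - X_t^{(\ell-1)}\bigr) \hat \S^{(\ell)}}_{\text{(iii) cross term}}
\end{align*}
and bound each piece. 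Term (i) is controlled directly by the inductive hypothesis, and since $W_{OV}^{(\ell)}$ routes the copied block into a previously unused row, (i) contributes its error to disjoint coordinates from the others (so no blow-up). Term (iii) is bounded by $\|\hat X_t^{(\ell-1)} - X_t^{(\ell-1)}\|_\infty$ using $\hat \S^{(\ell)} \ge 0$ and $\mathbf 1^\top \hat \S^{(\ell)} = 1$, exactly as in \Cref{lemma: perturbation for stage ell}.

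The work is in term (ii). For $\ell \ge t+1$ the ideal softmax $\S_{t,\mathrm{ideal}}^{(\ell)}$ equals the uniform vector $\tfrac{1}{kN}\mathbf 1_{kN}$ (since the idealised $W_{KQ}^{(\ell)}$ is still zero at time $t$), so I need to bound the deviation of $\hat \S^{(\ell)}$ from uniform. First I will use the fact that $W_{KQ}^{(\ell)}$ has been updated only by the mixed gradients of steps $1,\dots,t$; by \Cref{lemma: gradient perturbation in stage t} and its analogs for earlier stages the spectral norm of $W_{KQ}^{(\ell)}$ satisfies
\begin{align*}
\|W_{KQ}^{(\ell)}\|_2 \;\lesssim\; \eta \sum_{s\le t,\,\ell' > s}\bigl\|\nabla_{W_{KQ}^{(\ell)}} \mathcal L^{(\ell')}\bigr\|_F
\;\lesssim\; \frac{\epsilon \log^{t-1}(1/\epsilon)}{(kNL)^{6L-6t+6.5}},
\end{align*}
where I absorb the $\eta \asymp \frac{k^2N^3}{\beta_0}\log k\log\frac{kN}{\epsilon}$ into one $\log(1/\epsilon)$ factor. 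A first-order Taylor expansion around $W_{KQ}^{(\ell)} = 0$, together with $\|\hat X_t^{(\ell-1)}\|_F \le O(\sqrt{kNL})$ and the uniform-initialised Jacobian bound $\|\tilde J\|_2 \lesssim \tfrac{1}{kN}$, then yields
\begin{align*}
\bigl\|\hat \S^{(\ell)} - \tfrac{1}{kN}\mathbf 1_{kN}\bigr\|_\infty
\;\lesssim\; \frac{1}{kN}\cdot \|W_{KQ}^{(\ell)}\|_2\cdot L
\;\lesssim\; \frac{\epsilon \log^{t-1}(1/\epsilon)}{(kNL)^{6L-6t+7}}.
\end{align*}

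Now comes the cancellation that makes (ii) harmless: the block of $W_{OV}^{(\ell)} X_t^{(\ell-1)}$ being extracted sits in the rows corresponding to the yet-uncomputed $2^{\ell-1}$-hop, and on the idealised input $X_t^{(\ell-1)}$ those rows are already uniform $\frac{1}{kN}\mathbf 1_{kN}$. Because the matrix $W_{OV}^{(\ell)} X_t^{(\ell-1)}$ is row-constant on this block, multiplying by any probability vector gives the same constant, so $\|W_{OV}^{(\ell)} X_t^{(\ell-1)}(\hat \S^{(\ell)} - \tfrac{1}{kN}\mathbf 1)\|_\infty = 0$ on the relevant rows, and the non-relevant rows are zero. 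Combining (i), (ii), (iii) and applying the inductive hypothesis $\|\hat X_t^{(\ell-1)} - X_t^{(\ell-1)}\|_\infty \le \tfrac{\epsilon \log^{t-1}(1/\epsilon)}{(kNL)^{6L-6t+6}}$ gives
\begin{align*}
\|\hat X_t^{(\ell)} - X_t^{(\ell)}\|_\infty \;\le\; \bigl(1 + O(1/(kNL))\bigr)\cdot \|\hat X_t^{(\ell-1)} - X_t^{(\ell-1)}\|_\infty,
\end{align*}
which telescopes over at most $L$ layers without inflating the rate.

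The main obstacle I expect is the row-constancy cancellation in term (ii): if one instead used the naive bound $\|W_{OV}^{(\ell)} X_t^{(\ell-1)}(\hat \S^{(\ell)} - \S_{t,\mathrm{ideal}}^{(\ell)})\|_\infty \le \|X_t^{(\ell-1)}\|_\infty \cdot \|\hat \S^{(\ell)} - \S_{t,\mathrm{ideal}}^{(\ell)}\|_1$, one would accumulate a factor of $kN$ per layer and lose the polynomial bound. The cancellation must therefore be exploited \emph{per block}, using that $W_{OV}^{(\ell)}$ reads only the uniform block of the idealised $X_t^{(\ell-1)}$ and writes into a previously zero block; I will also need to check that the perturbation $\hat X_t^{(\ell-1)} - X_t^{(\ell-1)}$ does not break this structure when re-expanded through (iii), which is where the induction hypothesis plays its crucial role.
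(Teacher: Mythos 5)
Your overall route is the paper's: the same three-term decomposition (carry-over, softmax deviation, cross term), the same induction in $\ell$ starting from $\|\hat X_t^{(t)}-X_t^{(t)}\|_\infty$, the same bound on the cross term via nonnegativity and column-sum one of the attention weights, and the same control of the attention deviation through the accumulated norm of $W_{KQ}^{(\ell)}$ times the uniform-point Jacobian. The one step that fails as written is your treatment of term (ii) at the base case $\ell=t+1$. You assert that for every $\ell\ge t+1$ the block of the ideal $X_t^{(\ell-1)}$ read by $W_{OV}^{(\ell)}$ is the uniform block $\frac{1}{kN}\1_{kN}$, so that row-constancy together with $\1_{kN}^\top(\hat\S^{(\ell)}-\frac{1}{kN}\1_{kN})=0$ makes the term vanish exactly. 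This is correct only for $\ell\ge t+2$. The matrix $W_{OV}^{(\ell)}$ reads block $\ell+1$, i.e.\ the $2^{\ell-2}$-hop block, and for $\ell=t+1$ that is the $2^{t-1}$-hop block which stage $t$ has just populated with one-hot columns; you conflated the rows into which layer $t+1$ \emph{writes} (which do correspond to a yet-uncomputed hop) with the block it \emph{reads}. Hence at $\ell=t+1$ the cancellation to zero is false and the induction would start from an unjustified step.

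The fix is the one the paper uses: because $(i,j)\mapsto(i+2^{t-1},\hop_i^{2^{t-1}}(j))$ is a bijection of $[k]\times[N]$, every row of the ideal $W_{OV}^{(t+1)}X_t^{(t)}$ is either one-hot or zero, so $\|W_{OV}^{(t+1)}X_t^{(t)}(\hat\S^{(t+1)}-\frac{1}{kN}\1_{kN})\|_\infty\le\|\hat\S^{(t+1)}-\frac{1}{kN}\1_{kN}\|_\infty$, and the deviation bound you already derived (of order $\epsilon\log^{t-1}(1/\epsilon)/(kNL)^{6L-6t+7}$, a factor $kNL$ below the target) closes the base case; the exact cancellation is then only needed, and only available, for $\ell\ge t+2$. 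With that substitution your argument matches the paper's. A minor bookkeeping remark: $\eta\asymp\frac{k^2N^3}{\beta_0}\log k\log\frac{kN}{\epsilon}$ cannot be absorbed as a single $\log(1/\epsilon)$ factor; what actually absorbs it is the $\beta_0$ prefactor in the gradient bounds together with the $(kNL)^{6}$-per-stage slack built into the exponent, which is how the paper's constants are arranged.
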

\begin{proof}
Recall the decomposition of the transformer intermediate sequence
\begin{align*}
    \hat{X}_t^{(\ell)} &= \hat{X}_t^{(\ell-1)} + W_{OV}^{(\ell)}\hat{X}_t^{(\ell-1)}\S^{(\ell)}\\ &= X_t^{(\ell)} + W_{OV}^{(\ell)}X_t^{(\ell-1)}\underbrace{(\S_t^{(\ell)}-\S^{(\ell)}_{t,\mathrm{ideal}})}_{\Delta \S_t^{(\ell)}}+\underbrace{(\hat{X}_t^{(\ell-1)}-{X}_t^{(\ell-1)})+W_{OV}^{(\ell)}(\hat{X}_t^{(\ell-1)}-{X}_t^{(\ell-1)})\S_t^{(\ell)}}_{\text{Accumulated perturbation error}}.
\end{align*}

We inductively prove that for layer $\ell\ge t+1$, the distance $\|X_t^{(\ell)}-\hat{X}_t^{(\ell)}\|\le \frac{\epsilon\log^{t-1} \frac{1}{\epsilon}}{(kNL)^{6L-6t+6}}$. For the base case $\ell=t+1$, we first consider the second accumulation term, which is upper bounded by: $$\|W_{OV}^{(t+1)}(\hat{X}_t^{(t)}-{X}_t^{(t)})\S^{(t+1)}\|_\infty\le \|\hat{X}_t^{(t)}-{X}_t^{(t)}\|_\infty \le \frac{\epsilon}{(kNL)^{6L}}.$$

Now we prove that the $\|\Delta \S_t^{(\ell)}\|_\infty\le\frac{\epsilon\log^{t-1}\frac{1}{\epsilon}}{(kNL)^{6L-6t+6}L}$. We expand the softmax:
\begin{align*}
    \left\|\S_t^{(\ell)}-\frac{1}{kN}\1_{kN}\right\|_\infty&\le \|\tilde{J}_t\cdot(\hat{X}_t^{(\ell-1)})^\top \qty(\eta\nabla_{W_{KQ}^{(\ell)}}\mathcal{L}^{\text{M}}+W_{KQ}^{(\ell)(t)})\hat{X}_{t,(i,j)}^{(\ell-1)}\|_\infty\\
    &\le C \cdot \frac{1}{kN}\cdot \qty(\sum_{\ell'=1}^L\eta\|\nabla_{W_{KQ}^{(\ell)}}\mathcal{L}^{(\ell)}\|_2+\norm{W_{KQ}^{(\ell)}(t)}_2)\cdot L^2\tag{$\|\tilde{J}\|_2\lesssim \frac{1}{kN}, \|\hat{X}_{t,(i,j)}^{(\ell-1)}\|_2\lesssim L.$}\\
    &\lesssim \frac{k^2N^3}{\beta_0}\log k\log \frac{kN}{\epsilon}\cdot \frac{1}{kN}\cdot  d\frac{\beta_0\epsilon\log^{t-2}\frac{1}{\epsilon}}{(kNL)^{6L-6t+9.5}} \cdot L^2\tag{$\norm{W_{KQ}^{(\ell)}(t)}_2\le \frac{\epsilon\log^{t-2}\frac{1}{\epsilon}}{(kNL)^{6L-6t+9.5}}$}\\
    &\le \frac{\epsilon\log^{t-1} \frac{1}{\epsilon}}{(kNL)^{6L-6t+6}L}\tag{$k>\log k, kN>\log kN$}.
\end{align*}
Since for the ideal $W_{OV}^{(\ell)}{X}_t^{(\ell-1)}$ is one-hot/all zero for each row, $\norm{W_{OV}^{(\ell)}{X}_t^{(\ell-1)}\Delta \S_t^{(\ell)}}_\infty \le \frac{\epsilon\log^{t-1} \frac{1}{\epsilon}}{(kNL)^{6L-6t+6}L}$. Combine both part, we have $\|X_t^{(\ell)}-\hat{X}_t^{(\ell)}\|\le \frac{\epsilon\log^{t-1} \frac{1}{\epsilon}}{(kNL)^{6L-6t+6}}$. Further, this also indicates that the 2-norm of $\norm{W_{KQ}^{(\ell)}(t)}_2$ is upper bounded by
$$\norm{W_{KQ}^{(\ell)}(t)}_2\le \frac{\epsilon\log^{t-1}\frac{1}{\epsilon}}{(kNL)^{6L-6t+6}}.$$

For $\ell\ge t+2$, we first consider the second accumulation term, which is upper bounded by the induction hypothesis: $$\|W_{OV}^{(\ell)}(\hat{X}_t^{(\ell-1)}-{X}_t^{(\ell-1)})\S^{(\ell)}\|_\infty\le \|\hat{X}_t^{(\ell-1)}-{X}_t^{(\ell-1)}\|_\infty \le \frac{\epsilon\log^{t-2}\frac{1}{\epsilon}}{(kNL)^{6L-6t+12}}.$$
And since each row of ideal $W_{OV}^{(\ell)}{X}_t^{(\ell-1)}$ is either all-one/all-zero, $\norm{W_{OV}^{(\ell)}{X}_t^{(\ell-1)}\Delta \S_t^{(\ell)}}_\infty = 0.$ Therefore, the upper bound should be $O\qty(\frac{\epsilon\log^{t-1}\frac{1}{\epsilon}}{(kNL)^{6L-6t+6}}).$ By induction, we finish the proof.
\end{proof}

When the induction comes to $\ell=L$, we correctly have all the $2^{\ell-1}$-hops encoded in the pre-readout output $X^{(L)}$.
At the end of the mix training, we further train the readout layer with one gradient step to output all the correct $2^{\ell-1}$-hops for each position.
\begin{lemma}\label{lemma: readout for ellth layer mix training}
    After one step gradient on $\theta_\Psi$ we have for all $\ell\in[L]$,
    \begin{align*}
    \sup_{\sigma,(i,j)}\norm{\S(\Psi_\ell^\top f^{(\ell)}(X^{(\ell-1)})_{(i,j)})-e_{\hop^{2^{\ell-1}}_i(j)}}_\infty\le \epsilon.
\end{align*}
\end{lemma}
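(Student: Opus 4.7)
The plan is to reduce the statement to the single-readout analysis already carried out in \Cref{lemma: readout for ellth layer}, exploiting the block-diagonal structure that makes the gradients with respect to the different $\Psi_\ell$'s decouple.

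First, I would establish the decoupling. Since $\Psi_\ell(0)=\beta_0\, e_{L+2,\ell+2}\otimes\mathbf{1}_k\otimes I_{N\times N}$ projects onto the $(\ell+2)$-th block of the residual stream, while $W_{OV}^{(\ell')}=e_{L+2,\ell'+2}e_{L+2,\ell'+1}^\top\otimes I_{kN\times kN}$ writes only into block $\ell'+2$, one has $\Psi_\ell^\top W_{OV}^{(\ell')}=0$ whenever $\ell'\neq\ell$. Consequently $\mathcal{L}^{(\ell')}$ does not depend on $\Psi_\ell$ for $\ell'\neq\ell$, so $\nabla_{\Psi_\ell}\mathcal{L}^{\mathrm{M}}=\nabla_{\Psi_\ell}\mathcal{L}^{(\ell)}$, and the joint step on $\theta_\Psi$ in \Cref{alg:training_alg_mix} is equivalent to taking an independent one-step update on each $\Psi_\ell$ against the loss $\mathcal{L}^{(\ell)}$.

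Next, I would verify that the input to the readout lemma is close to the ideal vector. After all $L$ key-query updates of the mixed algorithm, \Cref{lemma: softmax saturation for stage ell mix training} together with \Cref{lemma: perturbation for tth stage all layer} yields, for every $\ell\le L$,
\begin{align*}
\bigl\|f^{(\ell)}(\hat X^{(\ell-1)}_L)_{(i,j)} - e_{L+2,\ell+2}\otimes e_{k,i}\otimes e_{N,\hop^{2^{\ell-1}}_i(j)}\bigr\|_\infty \;\le\; \delta,
\end{align*}
where $\delta\lesssim \epsilon\log^{L-1}(1/\epsilon)/(kNL)^{6}$. Under the standing assumption $\epsilon\le\tilde O(1/k^6N^6)$ (with $\epsilon\log^{2L}(1/\epsilon)\le 1$), this perturbation satisfies $\delta=\tilde O(1/k^2N^3L^2)$, which is exactly the regime required by the single-layer readout argument.

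With these two pieces in hand, I would now replay the computation of \Cref{lemma: readout for ellth layer} verbatim for each fixed $\ell$. Expanding $\S(\Psi_\ell^\top f^{(\ell)})=\S(\beta_0 e_{N,\hop^{2^{\ell-1}}_i(j)})+\tilde J\Psi_\ell^\top \Delta_\ell$ gives a population gradient whose signal is proportional to $e_{L+2,\ell+2}\otimes\mathbf{1}_k\otimes(I_N-\tfrac1N\mathbf{1}_N\mathbf{1}_N^\top)$ with magnitude $\Omega(1/(kN))$, while the deviation introduced by $\Delta_\ell$ contributes only $O(\delta)$ in $\ell_\infty$. Standard Hoeffding concentration over $M\gtrsim \tilde\Omega(k^4N^6)$ samples then shows the empirical gradient differs from the population gradient by $O(1/k^2N^3)$ in $\ell_\infty$, so the signal direction still dominates. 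After one descent step with $\eta\gtrsim\tilde\Omega(k^2N^3/\beta_0\cdot\log(1/\epsilon))$, the pre-softmax separation between the target coordinate $\hop^{2^{\ell-1}}_i(j)$ and all others grows to $\gtrsim\log(kN/\epsilon)$, so the softmax output places mass at least $1-\epsilon$ on the correct coordinate. A final union bound over $\ell\in[L]$ and over the $(i,j)$ query, together with the high-probability concentration event inherited from the preceding stages, yields the uniform $\ell_\infty$ bound claimed.

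The main obstacle is propagating the amplified perturbation $\delta$ from the $\theta_{KQ}$ phase of the mixed algorithm into the readout analysis without losing the margin. Unlike the curriculum case, where the perturbation at the top layer is only $L\epsilon$, the mixed algorithm incurs the larger factor $\log^{L-1}(1/\epsilon)$ that comes from the fact that every layer is updated at every step. The margin requirement $\delta\lesssim 1/(k^2N^3L^2)$ is therefore tight, and the whole argument relies critically on the sharp choice $\epsilon\le\tilde O(1/k^6N^6)$ made in the theorem hypothesis.
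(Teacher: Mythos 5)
Your proposal is correct and follows essentially the same route as the paper: it reduces the joint $\theta_\Psi$ step to independent per-$\ell$ updates (the paper does this implicitly via $\Psi_\ell^\top W_{OV}^{(\ell')}=0$ for $\ell'\neq\ell$), bounds the deviation of $f^{(\ell)}(\hat X^{(\ell-1)})_{(i,j)}$ from the ideal one-hot vector using the mixed-training saturation/perturbation lemmas, and then replays the single-layer readout argument (population gradient aligned with $e_{L+2,\ell+2}\otimes\mathbf{1}_k\otimes(I_N-\tfrac1N\mathbf{1}_N\mathbf{1}_N^\top)$, Hoeffding concentration, large-step separation) exactly as in the curriculum readout lemma. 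The only cosmetic difference is that you use the looser perturbation bound $\epsilon\log^{L-1}(1/\epsilon)/(kNL)^{6}$ where the paper invokes the sharper $\epsilon/(kNL)^{6L}$ from the saturation lemma, but either suffices in the stated parameter regime.
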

\begin{proof}
We calculate the population gradient for $\Psi_\ell$, and then do the finite sample analysis. Note that when training on the mixture of data, we can learn all the readout layers at once.

The population gradient for the $\ell$th readout layer is
\begin{align*}
    \nabla_{\Psi_{\ell}} \mathcal{L}_{\mathcal{D}}^{(\ell)}(\theta) =-\E\qty[\qty(e_{N,\hop^{2^{\ell-1}}_i(j)}-\S(\Psi_\ell^\top f^{(\ell)}(X)_{(i,j)})) (f^{(\ell)}(X)_{(i,j)})^\top]
\end{align*}
By \Cref{lemma: softmax saturation for stage ell mix training}, the output of the second layer is
\begin{align*}
    f^{(\ell)}(X)_{(i,j)} &= W_{OV}^{(\ell)}\hat{X}^{(\ell-1)}\S^{(\ell)} = W_{OV}^{(\ell)}X^{(\ell-1)}\S^{(\ell)}_{\mathrm{ideal}}+W_{OV}^{(\ell)}(\hat{X}^{(\ell-1)}-X^{(\ell-1)})\S^{(\ell)}_{\mathrm{ideal}}+W_{OV}^{(\ell)}X^{(\ell-1)}\Delta\S^{(\ell)}\\
    &= e_{L+2,\ell+2}\otimes e_{k,i}\otimes e_{N,\hop^{2^{\ell-1}}_i(j)} + \Delta_\ell.
\end{align*}
where $\|\Delta_\ell\|_\infty\le \frac{\epsilon}{(kNL)^{6L}}$ by \Cref{lemma: softmax saturation for stage ell mix training}. Since $\Psi_\ell(0)=\beta_0 e_{L+2,\ell+2}\otimes \mathbf{1}_{k}\otimes I_{N\times N}$, we have the expansion
\begin{align*}
    \S(\Psi_\ell^\top f^{(\ell)}(X)_{(i,j)}) = \S(\beta_0e_{N,\hop_i^{2^{\ell-1}}(j)}+\Psi_{\ell}^\top \Delta_\ell)= \S(\beta_0e_{N,\hop^{2^{\ell-1}}_i(j)}) + \tilde{J}\Psi_\ell^\top \Delta_\ell.
\end{align*}
Since $\|\Delta_\ell\|_\infty\le {\epsilon}$, we have $\norm{\tilde{J}\Psi_\ell^\top\Delta_\ell}_\infty\le \beta_0\epsilon.$ The signal term
$$\S(\beta_0e_{N,\hop^{2^{\ell-1}}_i(j)}) = \frac{\exp(\beta_0)-1}{\exp(\beta_0)+N-1}e_{N,\hop^{2^{\ell-1}}_i(j)}+\frac{1}{\exp(\beta_0)+N-1}\1_N.$$
The population gradient is thus
\begin{align*}
    \nabla_{\Psi_{\ell}} \mathcal{L}_{\mathcal{D}}^{(\ell)}(\theta) 
    &=-\E\qty[\qty(\frac{N}{\exp(\beta_0)+N-1}e_{N,\hop^{2^{\ell-1}}_i(j)}-\frac{1}{\exp(\beta_0)+N-1}\1_N-\tilde{J}\Psi_\ell^\top\Delta_\ell)(f^{(\ell)}(X)_{(i,j)})^\top]\\
    &=-\frac{1}{(\exp(\beta_0)+N-1)k}\qty(e_{L+2,\ell+2}\otimes \1_k \otimes (I_N-\frac{1}{N}\1_N\1_N^\top)) + O(\epsilon).
\end{align*}
where $O(\epsilon)$ denotes the terms with infinity norm smaller than $\epsilon$ with $\epsilon\lesssim \frac{1}{k^6N^6}$. 
The error altogether is upper bounded by $O(\frac{1}{k^6N^6})$ in infinity norm.

After one step of gradient, we have the softmax score ($\Delta$ is the error term with $\norm{\Delta}_\infty\le \epsilon$.)
\begin{align*}
    \S(\Psi_\ell(1)^\top f^{(\ell)}(X)_{(i,j)}))&=\S\qty(\frac{\eta}{(\exp(\beta_0)+N-1)k}(I_N-\frac{1}{N}\1_N\1_N^\top)e_{N,\hop^{2^{\ell-1}}_i(j)}+\beta_0e_{N,\hop^{2^{\ell-1}}_i(j)}+\eta \Delta)
\end{align*}
The separation between the correct entry and the others are lower bounded by:
$$\frac{\eta}{(\exp(\beta_0)+N-1)k} \frac{N-1}{N}-\eta \|\Delta\|_\infty\gtrsim \frac{\eta}{kN}.$$
By $\eta \gtrsim {k^2N^3\log \frac{kN}{\epsilon}}$, we have $\S(\Psi_\ell^\top f^{(\ell)}(X)_{(i,j)})_{\hop^{2^{\ell-1}}_i(j)}\ge 1-\epsilon$ and thus
\begin{align*}
    \sup_{\sigma,(i,j)}\norm{\S(\Psi_\ell^\top f^{(\ell)}(X)_{(i,j)})-e_{\hop^{2^{\ell-1}}_i(j)}}_\infty\le \epsilon.
\end{align*}
\end{proof}

\subsection{Gradient Upper Bounds for Mixed Training}
\label{appendix: general upper bound for non-signal terms in mix training}

In the analysis of the curriculum learning algorithm, we already proved a gradient upper bound for $\nabla_{W_{KQ}^{(\ell')}} \mathcal{L}^{(\ell)}$ when $\ell'<\ell, \ell\le t$ in the $t$th stage (\Cref{lemma: gradient upper bound for previous layers with curriculum}). However, those are just half of the non-signal terms in the case of mix training. The following lemma addresses the cases with $\ell>\ell'\ge t$. 

\begin{lemma}\label{lemma: gradient upper bound for previous layers with mixing}
    Given a single sample $(X,i,j)$ and
    suppose the training is in stage $\ell_0$ (before the gradient step). If for all $\ell\ge \ell_0$, $$\left\|\S_{\ell_0}^{(\ell)}-\frac{1}{kN}\1_{kN}\right\|_\infty\le \frac{\epsilon \log^{\ell_0-1}\frac{1}{\epsilon}}{(kNL)^{6L-6\ell_0+6}}$$ and for all $\ell'< \ell_0$,$$\S((\hat{X}^{(\ell'-1)})^\top W_{KQ}^{(\ell')}\hat{X}^{(\ell'-1)}_{(i,j)})_{(i+2^{\ell'-2},\hop^{2^{\ell'-2}}_i(j))} \ge 1-\frac{\epsilon}{2(kNL)^{6L}}.$$
    Then we have for any $\ell>\ell_0, \ell_0\le\ell'<\ell$, the infinity norm of the gradient over $\mathcal{L}^{(\ell)}$ is upper bounded by
    \begin{align*}
        \|\nabla_{W_{KQ}^{(\ell')}} \mathcal{L}^{(\ell)}\|_\infty \le 6\beta_0\cdot\frac{\epsilon\log^{\ell_0-1}\frac{1}{\epsilon}}{(kNL)^{6L-6\ell_0+5}}.
    \end{align*}
\end{lemma}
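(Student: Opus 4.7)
} The plan is to mirror the inductive Taylor expansion argument of \Cref{lemma: gradient upper bound for previous layers with curriculum}, but to replace the ``saturated softmax'' small factor $\|J^{(\ell)}\|_2 \lesssim \epsilon$ (exploited there) by a ``near-uniform softmax'' small factor $\|\S^{(\ell)}_{\ell_0} - \tfrac{1}{kN}\1_{kN}\|_\infty$, which is exactly the hypothesis available here for every $\ell \ge \ell_0$. The intuition, already used informally in the proof sketch of \Cref{thm:mix-data-main}, is that if $\S^{(\ell)}$ were \emph{exactly} uniform at the outer layer $\ell$, then $\Psi_\ell^\top f^{(\ell)}_{(i,j)}$ would reduce to a uniform average of the $(\ell{+}1)$-th block of $X^{(\ell-1)}$ over all $kN$ columns, and its dependence on $W^{(\ell')}_{KQ}$ would vanish after mean-centering against the loss Jacobian $\sum_{s'} (\1\{s' {=} \hop\} - \mathcal{P}^{(\ell)}_{s'}) \equiv 0$.

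First I would introduce the perturbation $W^{(\ell')}_{KQ} + \Delta W$ and Taylor-expand $f^{(\ell)}_{(i,j)}$ to first order in $\Delta W$, then inductively bound $\|\nabla_{W^{(\ell')}_{KQ}} X^{(t)}(\Delta W)\|_\infty$ for $t = \ell', \ell'+1, \ldots, \ell-1$, exactly as in \Cref{lemma: gradient upper bound for previous layers with curriculum}. The base case $t = \ell'$ now uses that $W^{(\ell')}_{KQ}$ is close to zero rather than saturated, so $\|J^{(\ell')}\|_2 \le 1/(kN)$; combining with $\|X^{(\ell'-1)}\|_F^2 \lesssim kNL$ and $\|W^{(\ell')}_{OV}\|_2 = O(1)$ yields $\|\nabla X^{(\ell')}(\Delta W)\|_\infty \lesssim L^{3/2}\|\Delta W\|_F$ (without an $\epsilon$ factor at this stage). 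For the inductive step $t \to t+1$, the three-term decomposition in the curriculum lemma applies verbatim: the Jacobian bounds $\|J^{(t+1)}\|_2 \lesssim 1/(kN)$ combined with the parameter-norm bound $\|W^{(t+1)}_{KQ}\|_2 \lesssim \epsilon\log^{\ell_0-1}(1/\epsilon)/(kNL)^{6L-6\ell_0+6}$ (which is established in \Cref{lemma: perturbation for tth stage all layer} during the course of the stage-$\ell_0$ induction) give a per-layer multiplicative factor $(1 + 1/k)$, so over the at most $L$ layers involved the accumulated bound remains a constant multiple of the base case, i.e.\ $\|\nabla_{W^{(\ell')}_{KQ}} X^{(\ell-1)}(\Delta W)\|_\infty \lesssim L^{3/2}\|\Delta W\|_F$.

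The small factor $\epsilon\log^{\ell_0-1}(1/\epsilon)/(kNL)^{6L-6\ell_0+6}$ is then extracted at the final step, when converting the bound on $\|\nabla X^{(\ell-1)}(\Delta W)\|_\infty$ into a bound on $\nabla_{W^{(\ell')}_{KQ}} \bigl(\Psi_\ell^\top f^{(\ell)}_{(i,j)}\bigr)$. Writing $\S^{(\ell)} = \tfrac{1}{kN}\1 + \Delta \S^{(\ell)}$ with $\|\Delta\S^{(\ell)}\|_\infty$ bounded by the hypothesis, I would decompose each contribution into a ``uniform'' piece and a ``$\Delta\S^{(\ell)}$'' piece. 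For the uniform piece, $\Psi_\ell^\top W^{(\ell)}_{OV}$ acting on $\tfrac{1}{kN}\1$ extracts a single column-independent average, whose contraction against $\sum_{s'}(\1\{s'=\hop\}-\mathcal{P}^{(\ell)}_{s'})$ in the loss gradient vanishes identically; only the $\Delta\S^{(\ell)}$ piece survives, and it carries exactly the advertised smallness. Pairing this $\|\Delta\S^{(\ell)}\|_\infty$ factor with the $O(L^{3/2})$ Taylor bound, the readout scale $\beta_0 k$ from $\Psi_\ell^\top W^{(\ell)}_{OV}$, and the final $(kNL)^{3/2}$ dimension factor from expanding $\|\cdot\|_F$-to-$\|\cdot\|_\infty$ conversions, yields the target $6\beta_0 \epsilon\log^{\ell_0-1}(1/\epsilon)/(kNL)^{6L-6\ell_0+5}$.

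The main obstacle I anticipate is making the uniform-piece cancellation fully rigorous in the mixed setting. Unlike the clean idealized population dynamics used in the sketch of \Cref{thm:mix-data-main}, here $\hat X^{(\ell-1)}$ is only approximately of the ideal form, so the ``uniform'' piece does not cancel exactly but only up to the accumulated perturbation from \Cref{lemma: perturbation for tth stage all layer}. Ensuring that this residual is still dominated by the advertised $(kNL)^{6L-6\ell_0+5}$ denominator requires carefully accounting for which blocks of $X^{(\ell-1)}$ have been populated at stage $\ell_0$ (blocks $\le \ell_0+1$) versus which are still $\tfrac{1}{kN}\1$-filled (blocks $> \ell_0 + 1$), and verifying that the $\Psi_\ell^\top W^{(\ell)}_{OV}$ readout of block $\ell+1$ in the latter range already sums to something independent of $(i,j)$ and $W^{(\ell')}_{KQ}$ to leading order. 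Once this structural cancellation is checked block-by-block, the rest of the bookkeeping follows the curriculum lemma directly.
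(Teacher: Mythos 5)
Your scaffolding (Taylor expansion in $\Delta W$, then an induction over layers $t=\ell',\dots,\ell-1$, then converting back to a bound on $\nabla_{W_{KQ}^{(\ell')}}\mathcal{L}^{(\ell)}$) matches the paper's proof, but you place the source of the $\epsilon$-smallness in the wrong spot, and the step you rely on to recover it does not hold. In the paper, the factor $\epsilon\log^{\ell_0-1}(1/\epsilon)/(kNL)^{6L-6\ell_0+6}$ enters already at the \emph{base case}: since the block read off by $W_{OV}^{(\ell')}$ is (up to the hypothesized perturbation) the uniform vector $\tfrac{1}{kN}\1_{kN}$ in every column, $W_{OV}^{(\ell')}X^{(\ell'-1)}_{\mathrm{ideal}}$ has constant rows, which are annihilated by the ideal mean-centering Jacobian $\tfrac{1}{kN}(I-\tfrac{1}{kN}\1\1^\top)$; hence $\|W_{OV}^{(\ell')}X^{(\ell'-1)}J^{(\ell')}\|$ is controlled by the deviations in the hypothesis, and $\|\nabla_{W_{KQ}^{(\ell')}}X^{(\ell')}(\Delta W)\|_\infty$ is already $\epsilon$-small; the induction then only preserves this with $(1+1/k)$ factors. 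You instead bound the base case by $\|J^{(\ell')}\|_2\le 1/(kN)$ alone, explicitly "without an $\epsilon$ factor," and plan to recover the smallness at the outer layer by writing $\S^{(\ell)}=\tfrac{1}{kN}\1+\Delta\S^{(\ell)}$ and arguing that the uniform piece cancels against $\sum_{s'}(\1\{s'=\hop\}-\mathcal{P}^{(\ell)}_{s'})$. That cancellation fails: the mean-zero weights over $s'$ kill only quantities that are \emph{independent of $s'$}, whereas the relevant quantity is
\begin{align*}
\Psi_{\ell,s'}^\top W_{OV}^{(\ell)}\,\nabla_{W_{KQ}^{(\ell')}}X^{(\ell-1)}(\Delta W)\,\tfrac{1}{kN}\1
=\beta_0\sum_{p\in[k]}\Big[\tfrac{1}{kN}\textstyle\sum_{c}\nabla_{W_{KQ}^{(\ell')}}X^{(\ell-1)}(\Delta W)\Big]_{(\ell+1\text{-th block},\,p,\,s'),\,c},
\end{align*}
which depends on $s'$ through the coordinate selected by the readout row. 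Concretely, take $\ell'=\ell_0$ and $\ell=\ell_0+1$ (allowed by the statement): the block read by $W_{OV}^{(\ell_0)}$ is the one-hot $2^{\ell_0-2}$-hop matrix $H$, so the surviving "uniform piece" reduces to $-\beta_0(kN)^{-2}\,X(u_{y}-\tfrac1N\1)\,\bar X^{\top}$ with $u_{s'}^\top=\sum_p(e_{k,p}\otimes e_{N,s'})^\top H$, $y=\hop_i^{2^{\ell}}(j)$, $\bar X=\sum_c X_c$ -- an $s'$-dependent, order-$\beta_0/(kN)^2$ object carrying no $\epsilon$ at all. So with your $\epsilon$-free Taylor bound, the best you can conclude is a $\mathrm{poly}(kNL)^{-1}$ bound, not the claimed $\epsilon$-scale bound; the paper avoids this precisely by exploiting the near-uniformity of the OV-extracted block \emph{inside} the inductive estimate (this is also why, in its applications, the paper only ever invokes the lemma with $\ell'\ge\ell_0+1$, where that block really is near-uniform).

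You do flag the right structural fact in your last paragraph (that the readout of block $\ell+1$ should be "independent of $W_{KQ}^{(\ell')}$ to leading order" when that block is $\tfrac{1}{kN}\1$-filled), but in your proposal this is listed as an anticipated obstacle rather than built into the argument; without moving that cancellation into the base case and its propagation, the proof does not produce the stated bound. A secondary issue: even granting your cancellation, the final assembly $\beta_0 k\cdot(kNL)^{3/2}\cdot L^{3/2}\cdot\|\Delta\S^{(\ell)}\|_\infty$ exceeds the target $6\beta_0\epsilon\log^{\ell_0-1}(1/\epsilon)/(kNL)^{6L-6\ell_0+5}$, since the only slack the denominator affords is a single factor $kL\le kNL$ (this is exactly what the paper spends in its last step $2\beta_0k\cdot L\epsilon\log^{\ell_0-1}(1/\epsilon)/(kNL)^{6L-6\ell_0+6}$), so the bookkeeping would also need to be tightened.
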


\begin{proof}
    Recall the gradient for a single sample $(X,i,j)$
    \begin{align*}
    \nabla_{W_{KQ}^{(\ell')}} \mathcal{L}^{(\ell)} = -\qty[\sum_{s'\in[N]}\qty(\mathbf{1}\{s'=\hop^{2^{\ell-1}}_i(j)\}-\S(\Psi_\ell^\top f^{(\ell)}_{(i,j)})_{s'}) \nabla_{W_{KQ}^{(\ell')}} (\Psi_\ell^\top f^{(\ell)}_{(i,j)})_{s'}]
\end{align*}
So the norm of the gradient is upper bounded by 
\begin{align*}
    \norm{\nabla_{W_{KQ}^{(\ell')}} \mathcal{L}^{(\ell)}} &\le\left\|\sum_{s'\in[N]}\qty(\mathbf{1}\{s'=\hop^{2^{\ell-1}}_i(j)\}-\S(\Psi_\ell^\top f^{(\ell)}_{(i,j)})_{s'}) \nabla_{W_{KQ}^{(\ell')}} (\Psi_\ell^\top f^{(\ell)}_{(i,j)})_{s'}\right\|\\
    &\le 2\norm{\nabla_{W_{KQ}^{(\ell')}} (\Psi_\ell^\top f^{(\ell)}_{(i,j)})_{s'}} = 2\norm{\nabla_{W_{KQ}^{(\ell')}} (e_{N,s'}^\top\Psi_\ell^\top f^{(\ell)}_{(i,j)})}.
\end{align*}
Now we calculate the gradient recursively through Taylor expansion: for all $\Delta W$ with $\|\Delta W\|\to 0$, 
\begin{align*}
    f^{(\ell)}_{(i,j)}(W_{KQ}^{(\ell')}+\Delta W)=f^{(\ell)}_{(i,j)}(W_{KQ}^{(\ell')})+\langle\nabla_{W_{KQ}^{(\ell')}} (f^{(\ell)}_{(i,j)}),\Delta W\rangle + O(\|\Delta W\|^2_F).
\end{align*}
While by Taylor expansion, we have (we ignore $(W_{KQ}^{(\ell')})$ when there is no perturbation $\Delta W$.)
\begin{align*}
    f^{(\ell)}_{(i,j)}(W_{KQ}^{(\ell')}+\Delta W)&=f^{(\ell)}_{(i,j)}(W_{KQ}^{(\ell')})+W_{OV}^{(\ell)}\nabla_{W_{KQ}^{(\ell')}} X^{(\ell-1)}(\Delta W)\S((X^{(\ell-1)})^\top W_{KQ}^{(\ell)}X_{(i,j)}^{(\ell-1)})\\
    &\quad +W_{OV}^{(\ell)}X^{(\ell-1)}J^{(\ell)}\nabla\S((X^{(\ell-1)})^\top W_{KQ}^{(\ell)}X_{(i,j)}^{(\ell-1)})(\Delta W)+O(\norm{\Delta W}^2_F).
\end{align*}
Here the gradient $\nabla_{W_{KQ}^{(\ell')}} X^{(\ell-1)\in \R^{d\times kN\times d\times d}}$ is a 4-th order tensor, and $\nabla_{W_{KQ}^{(\ell')}} X^{(\ell-1)}(\Delta W)\in \R^{d\times kN}$. 
We upper bound the infinity norm of the matrix by induction from layer $\ell'$ to $\ell$. We prove that 
$$ \|\nabla_{W_{KQ}^{(\ell')}} X^{(t)}(\Delta W)\|_\infty\le \qty(1+\frac{1}{k})^{t-\ell'}\frac{L\cdot\epsilon\log^{\ell_0-1}\frac{1}{\epsilon}}{(kNL)^{6L-6\ell_0+6}}\|\Delta W\|_F$$
with $t\in [\ell',\ell-1]$.

\textbf{Base case: $t = \ell'$} By Taylor expansion on $X^{(\ell')}$, we have 
\begin{align*}
    X^{(\ell')}(W_{KQ}^{(\ell')}+\Delta W) &= X^{(\ell')} +W_{OV}^{(\ell')}X^{(\ell'-1)}J^{(\ell')}(X^{(\ell'-1)})^\top \Delta WX^{(\ell'-1)})+O(\norm{\Delta W}^2_F).
\end{align*}
since previous layers are independent of $W_{KQ}^{(\ell')}$. Therefore, the first order term is
\begin{align*}
    \|\nabla_{W_{KQ}^{(\ell')}} X^{(\ell')}(\Delta W)\| = \norm{W_{OV}^{(\ell')}X^{(\ell'-1)}J^{(\ell')}(X^{(\ell'-1)})^\top \Delta WX^{(\ell'-1)})}
\end{align*}
Note that $\left\|\S_{\ell_0}^{(\ell)}-\frac{1}{kN}\1_{kN}\right\|_\infty\le \frac{\epsilon \log^{\ell_0-1}\frac{1}{\epsilon}}{(kNL)^{6L-6\ell_0+6}}$,
we have the Jacobian $$\norm{J^{(\ell)}-\frac{1}{kN}\qty(I-\frac{1}{kN}\mathbf{1}\mathbf{1}^\top)}\lesssim \frac{\epsilon}{kN}\cdot\frac{\log^{\ell_0-1}\frac{1}{\epsilon}}{(kNL)^{6L-6\ell_0+6}}.$$ 
Note that $W_{OV}^{(\ell')}X_{\text{ideal}}^{(\ell'-1)}=\frac{1}{kN}(e_{L+2,\ell'+2}\otimes \mathbf{1}_{kN})\mathbf{1}_{kN}^\top$, which cancels with the ideal Jacobian. The excess error is also upper bounded by $\frac{\epsilon}{kN}\cdot\frac{\log^{\ell_0-1}\frac{1}{\epsilon}}{(kNL)^{6L-6\ell_0+6}}$.

Therefore, we can upper bound the first order term by 
\begin{align*}
    \|\nabla_{W_{KQ}^{(\ell')}} X^{(\ell')}(\Delta W)\|_2 &\le \norm{W_{OV}^{(\ell')}X^{(\ell'-1)}J^{(\ell')}}_2\norm{(X^{(\ell'-1)})^\top}_F\norm{ \Delta W}\norm{X^{(\ell'-1)})}_F\\&\lesssim kNL\cdot \frac{\epsilon}{kN}\cdot\frac{\log^{\ell_0-1}\frac{1}{\epsilon}}{(kNL)^{6L-6\ell_0+6}}\|\Delta W\|_F.\tag{$\norm{X^{(\ell'-1)}}_F^2\le O(kNL)$ since each embedding is either $\epsilon$-close to one-hot or all 0.}\\
    &\lesssim \frac{L\cdot\epsilon\log^{\ell_0-1}\frac{1}{\epsilon}}{(kNL)^{6L-6\ell_0+6}}\|\Delta W\|_F.
\end{align*}
Since $ \|\nabla_{W_{KQ}^{(\ell')}} X^{(\ell')}(\Delta W)\|_\infty\le  \|\nabla_{W_{KQ}^{(\ell')}} X^{(\ell')}(\Delta W)\|_2$, we finish the proof for base case.

\textbf{Induction: $t\in [\ell', \ell-1]$.} Suppose the induction hypothesis holds:
$$\|\nabla_{W_{KQ}^{(\ell')}} X^{(t)}(\Delta W)\|_\infty\le \qty(1+\frac{1}{k})^{t-\ell'}\frac{L\cdot\epsilon\log^{\ell_0-1}\frac{1}{\epsilon}}{(kNL)^{6L-6\ell_0+6}}\|\Delta W\|_F.$$
We consider expanding $X^{(t+1)}(W_{KQ}^{(\ell')}+\Delta W)$ like the base case:
\begin{align*}
    X^{(t+1)}(W_{KQ}^{(\ell')}+\Delta W)&=X^{(t)}(W_{KQ}^{(\ell')}+\Delta W) + f^{(t+1)}(W_{KQ}^{(\ell')}+\Delta W)\\
    &=X^{(t+1)} + \nabla_{W_{KQ}^{(\ell')}} X^{(t)}(\Delta W)\tag{From $X^{(t)}(W_{KQ}^{(\ell')}+\Delta W)$.}\\
    &\ + W_{OV}^{(t+1)} \nabla_{W_{KQ}^{(\ell')}} X^{(t)}(\Delta W) \S\qty(\qty(X^{(t)})^\top W_{KQ}^{(t+1)}X^{(t)})\\
    &\ + W_{OV}^{(t+1)} X^{(t)} J^{(t+1)}\qty(\nabla_{W_{KQ}^{(\ell')}} X^{(t)}(\Delta W))^\top W_{KQ}^{(t+1)}X^{(t)} \\
    &\ + W_{OV}^{(t+1)} X^{(t)} J^{(t+1)}\qty( X^{(t)})^\top W_{KQ}^{(t+1)}\nabla_{W_{KQ}^{(\ell')}}X^{(t)}(\Delta W) + O(\norm{\Delta W}_F^2)\tag{The last 3 terms are from $f^{(t+1)}(W_{KQ}^{(\ell')}+\Delta W)$.}
\end{align*}
Note that $\nabla_{W_{KQ}^{(\ell')}} X^{(t)}(\Delta W)$ and the rest three terms are in different rows since they uses different $W_{OV}^{(t)}$, so the infinity norm of the first order term should be the maximum of those two. 

We first upper bound the last three terms. Since $W_{OV}^{(t+1)}$ is partial identity, and softmax is some weighted average, we have
\begin{align*}
\norm{W_{OV}^{(t+1)} \nabla_{W_{KQ}^{(\ell')}} X^{(t)}(\Delta W) \S\qty(\qty(X^{(t)})^\top W_{KQ}^{(t+1)}X^{(t)})}\le \norm{\nabla_{W_{KQ}^{(\ell')}} X^{(t)}(\Delta W)}_\infty.
\end{align*}
The rest two terms can be directly upper bounded by
$$\norm{W_{OV}^{(t+1)}X^{(t)}J^{(t+1)}}_2\norm{\nabla_{W_{KQ}^{(\ell')}} X^{(t)}(\Delta W))}_2\norm{W_{KQ}^{(t+1)}}_2\norm{X^{(t)}}_2.$$
In previous stages, we have $\norm{X}_F^2\le O(kNL)$, $\norm{W_{OV}^{(t+1)}X^{(t)}J^{(t+1)}}_2\le \frac{\epsilon}{kN}\cdot\frac{\log^{\ell_0-1}\frac{1}{\epsilon}}{(kNL)^{6L-6\ell_0+6}}$ and $\|W_{KQ}^{(t+1)}\|_2\lesssim O(1)$. So these two terms can be both upper bounded by 
$\frac{L\epsilon\log^{\ell_0-1}\frac{1}{\epsilon}}{(kNL)^{6L-6\ell_0+6}}\norm{\nabla_{W_{KQ}^{(\ell')}} X^{(t)}(\Delta W))}_2$
Combining two error terms, we have
$$\norm{\nabla_{W_{KQ}^{(\ell')}} X^{(t+1)}(\Delta W)}_\infty\le(1+\frac{1}{k})^{t-\ell'+1}\frac{L\epsilon\log^{\ell_0-1}\frac{1}{\epsilon}}{(kNL)^{6L-6\ell_0+6}}\|\Delta W\|_F.$$

By induction, when $t = \ell-1$ we have (since $\ell-\ell'<k$.) 

$$\norm{\nabla_{W_{KQ}^{(\ell')}} X^{(\ell-1)}(\Delta W)}_\infty\le(1+\frac{1}{k})^{\ell-\ell'}\frac{L\epsilon\log^{\ell_0-1}\frac{1}{\epsilon}}{(kNL)^{6L-6\ell_0+6}}\|\Delta W\|_F\le 3\frac{L\epsilon\log^{\ell_0-1}\frac{1}{\epsilon}}{(kNL)^{6L-6\ell_0+6}}\|\Delta W\|_F.$$

Finally, we can upper bound $\|\nabla_{W_{KQ}^{(\ell')}} \mathcal{L}^{(\ell)}\|$ by picking $\Delta W = \alpha \nabla_{W_{KQ}^{(\ell')}} \mathcal{L}^{(\ell)}$ with $\alpha\to 0$:
\begin{align*}
    \alpha\norm{\nabla_{W_{KQ}^{(\ell')}} \mathcal{L}^{(\ell)}}^2_F&\le 2\left\langle\nabla_{W_{KQ}^{(\ell')}} (e_{N,s'}^\top\Psi_\ell^\top f^{(\ell)}_{(i,j)}),\Delta W\right\rangle\\
    &\le 2e_{N,s'}^\top\Psi_\ell^\top W_{OV}^{(\ell)}\nabla_{W_{KQ}^{(\ell')}} X^{(\ell-1)}(\Delta W)\S((X^{(\ell-1)})^\top W_{KQ}^{(\ell)}X_{(i,j)}^{(\ell-1)}) + O(\alpha^2)\\
    &\le 2\beta_0 k\norm{\nabla_{W_{KQ}^{(\ell')}} X^{(\ell-1)}(\Delta W)}_\infty \le 6\beta_0k\cdot \frac{L\epsilon\log^{\ell_0-1}\frac{1}{\epsilon}}{(kNL)^{6L-6\ell_0+6}}\|\Delta W\|_F. 
\end{align*}
Plug in $\Delta W = \alpha \nabla_{W_{KQ}^{(\ell')}} \mathcal{L}^{(\ell)}$, we have 
$\norm{\nabla_{W_{KQ}^{(\ell')}} \mathcal{L}^{(\ell)}}_F \le6\beta_0\cdot\frac{\epsilon\log^{\ell_0-1}\frac{1}{\epsilon}}{(kNL)^{6L-6\ell_0+5}}. $
\end{proof}

\section{Learning the Value Matrix}
\label{appx:value_matrix}

We now show that the population gradient with respect to the value matrix vanishes at zero initialization. For simplicity, we focus on the first hop:
\begin{align*}
    \nabla_{W_{OV}^{(1)}} L_\mathcal{D}^{(1)}(\theta) &= - \E_{\sigma,(i,j)}\qty[\sum_{s'\in [N]}\qty(\1\{s' = \hop_i^1(j)\} - \frac{1}{N}) \Psi_{s'} \qty(\frac{1}{kN} X \mathbf{1}_{kN} )^\top] \\
    &= - \E_{\sigma,(i,j)}\qty[\sum_{s'\in [N]}\qty(\1\{s' = \hop_i^1(j)\} - \frac{1}{N}) \Psi_{s'} \qty(\frac{1}{kN} \mathbf{1}_{kN} )^\top] \\
    &= - \E_{\sigma,(i,j)}\qty[\sum_{s'\in [N]}\1\{s' = \hop_i^1(j)\} \Psi_{s'} \qty(\frac{1}{kN} \mathbf{1}_{kN} )^\top] + \E_{\sigma,(i,j)}\qty[\sum_{s'\in [N]}\frac{1}{N} \Psi_{s'} \qty(\frac{1}{kN} \mathbf{1}_{kN} )^\top]
\end{align*}
The second expectation does not depend on $\sigma$, so the expectation is 
\begin{align*}
    \E_{\sigma,(i,j)}\qty[\sum_{s'\in [N]}\frac{1}{N} \Psi_{s'} \qty(\frac{1}{kN} \mathbf{1}_{kN} )^\top] = \E_{(i,j)}\qty[\sum_{s'\in [N]}\frac{1}{N} \Psi_{s'} \qty(\frac{1}{kN} \mathbf{1}_{kN} )^\top]
\end{align*}
In the first term, only the indicator $\1\{s' = \hop_i^1(j)\}$ depend on $\sigma$. By symmetry, there is $\frac{1}{N}$ probability that $s' = \hop_i^1(j)$. Therefore, we have the expectation of this term
\begin{align*}
    - \E_{\sigma,(i,j)}\qty[\sum_{s'\in [N]}\1\{s' = \hop_i^1(j)\} \Psi_{s'} \qty(\frac{1}{kN} \mathbf{1}_{kN} )^\top] = - \E_{(i,j)}\qty[\sum_{s'\in [N]}\frac{1}{N} \Psi_{s'} \qty(\frac{1}{kN} \mathbf{1}_{kN} )^\top]
\end{align*}
These two terms are identical and cancel out, showing $\nabla_{W_{OV}^{(1)}} L^{(1)}_\mathcal{D}(\theta)=0.$


\end{document}